\newcommand{\size}{\mathrm{size}}
\newcommand{\depth}{\mathrm{depth}}
\newcommand{\pca}{\mathrm{PCA}}
\newcommand{\oset}[3][0ex]{%
  \mathrel{\mathop{#3}\limits^{
    \vbox to#1{\kern-2\ex@
    \hbox{$\scriptstyle#2$}\vss}}}}
\newcommand{\simiid}{\oset[.6ex]{iid}{\sim}}
\newcommand{\dx}{{d_{\cX}}}
\newcommand{\dy}{{d_{\cY}}}
\renewcommand{\Im}{\mathrm{Im}}
\renewcommand{\tilde}{\widetilde}
\renewcommand{\hat}{\widehat}
\newcommand{\Id}{\mathrm{Id}}
\newcommand{\id}{\mathrm{id}}
\newcommand{\opt}{\mathrm{opt}}
\DeclareMathOperator*{\essinf}{ess\,inf}
\newcommand{\tr}{{\mathrm{tr}}}
\newcommand{\Err}{\widehat{\mathscr{E}}}
\newcommand{\Span}{\mathrm{span}}
\newcommand{\im}{\mathrm{Im}}
\newcommand{\embeds}{{\hookrightarrow}}
\renewcommand{\bar}{\overline}
\newcommand{\explain}[2]{\overset{\mathclap{\underset{\downarrow}{#2}}}{#1}}
\newcommand{\slot}{{\,\cdot\,}}
\newcommand{\supp}{\mathrm{supp}}
\newcommand{\T}{\mathbb{T}}
\newcommand{\R}{\mathbb{R}}
\newcommand{\C}{\mathbb{C}}
\newcommand{\E}{\mathbb{E}}
\newcommand{\N}{\mathbb{N}}
\newcommand{\Z}{\mathbb{Z}}
\newcommand{\Prob}{\mathrm{Prob}}
\renewcommand{\div}{{\mathrm{div}}}
\newcommand{\Lip}{\mathrm{Lip}}
\renewcommand{\P}{\mathbb{P}}
\newcommand{\cC}{\mathcal{C}}
\newcommand{\cD}{\mathcal{D}}
\newcommand{\cE}{\mathcal{E}}
\newcommand{\cF}{\mathcal{F}}
\newcommand{\cG}{\mathcal{G}}
\newcommand{\cH}{\mathcal{H}}
\newcommand{\cJ}{\mathcal{J}}
\newcommand{\cK}{\mathcal{K}}
\newcommand{\cL}{\mathcal{L}}
\newcommand{\cN}{\mathcal{N}}
\newcommand{\cP}{\mathcal{P}}
\newcommand{\cR}{\mathcal{R}}
\newcommand{\cX}{\mathcal{X}}
\newcommand{\cY}{\mathcal{Y}}
\newcommand{\define}{\textbf}
\renewcommand{\hat}{\widehat}
\newcommand{\set}[2]{{\left\{ #1 \,\middle|\, #2 \right\}}}
\newcommand{\shrink}{\mathrm{shrink}}
\newcommand{\dist}{\mathrm{dist}}
\newtheorem{claim}{Claim}
\newcommand{\NL}{\mathscr{N\!\!L}}
\newcommand{\rev}[1]{#1}
\begin{document}

\title{Operator learning with PCA-Net: \\
upper and lower complexity bounds}

\author{\name Samuel Lanthaler \email slanth@caltech.edu \\
       \addr Computing and Mathematical Sciences\\
       California Institute of Technology\\
       Pasadena, CA 91125, USA}
\editor{Aarti Singh}

\maketitle

\begin{abstract}
PCA-Net is a recently proposed neural operator architecture which combines principal component analysis (PCA) with neural networks to approximate operators between infinite-dimensional function spaces. The present work develops approximation theory for this approach, improving and significantly extending previous work in this direction: First, a novel universal approximation result is derived, under minimal assumptions on the underlying operator and the data-generating distribution. Then, two potential obstacles to efficient operator learning with PCA-Net are identified, and made precise through lower complexity bounds; the first relates to the complexity of the output distribution, measured by a slow decay of the PCA eigenvalues. The other obstacle relates to the inherent complexity of the space of operators between infinite-dimensional input and output spaces, resulting in a rigorous and quantifiable statement of \rev{a ``curse of parametric complexity'', an infinite-dimensional analogue of the well-known curse of dimensionality encountered in high-dimensional approximation problems.} In addition to these lower bounds, upper complexity bounds are finally derived. A suitable smoothness criterion is shown to ensure an algebraic decay of the PCA eigenvalues. Furthermore, it is shown that PCA-Net can overcome the general curse for specific operators of interest, arising from the Darcy flow and the Navier-Stokes equations.
\end{abstract}

\begin{keywords}
  neural operator, operator learning, curse of dimensionality, principal component analysis
\end{keywords}

\section{Introduction}
The application of neural networks \citep{DLbook} to computational science and engineering is receiving growing interest. At their core, many problems of scientific interest involve the approximation of an underlying operator, which defines a mapping between two infinite-dimensional spaces of functions. \emph{Neural operators} \citep{anandkumar_neural_2020,pca,lu2021learning,kovachki2021neural} are a generalization of neural networks to such an infinite-dimensional setting. They aim to approximate, or ``learn'', operators from data given in the form of input and output pairs. Neural operators hold promise as surrogate models to accelerate and complement traditional numerical methods in many-query problems, and they can be used for data-driven discovery of the underlying input-output map, even when no mathematical model is available.

Recent years have seen the emergence of several neural operator architectures. This includes deep operator networks (DeepONet) \citep{lu2021learning}, building on early work on operator learning in \citep{chen1995universal}. We also mention subsequent extensions of DeepONets, for example \citep{mionet,prasthofer2022variable,seidman2022nomad,LMHM2022,patel2022variationally}. 
DeepONets have been deployed with success in a variety of applications \citep{di_leoni_deeponet_2021,mao_deepmandmnet_2021,cai_deepmmnet_2021}. Another popular approach is based on a class of neural operators introduced in  \citep{anandkumar_neural_2020,kovachki2021neural}. Here, neural operators are defined in close analogy with conventional neural networks, where the weight matrices in the hidden layers are generalized to integral operators. Special cases of this framework include the graph neural operator \citep{anandkumar_neural_2020,li_multipole_2020} and the Fourier neural operator (FNO) \citep{li_fourier_2021}. In this context, we also mention related frameworks in \citep{raonic2023,tripura2023,gupta2021multiwavelet}. Another notable, and somewhat distinct, approach to operator learning is the operator-valued random feature model proposed in \citep{nelsen2021random}.

Universal approximation results for many of these frameworks are known in a variety of settings; the universality of DeepONets is established in \citep{chen1995universal,lu2021learning,thdeeponet}, FNOs are shown to be universal in \citep{thfno}, another universal approximation result for neural operators is derived in \citep{kovachki2021neural}. We also mention recent work \citep{LLS2023}, which proves a general universal approximation result for the so-called ``averaging neural operator'' (ANO), a minimal architecture that is at the core of many other frameworks, thereby allowing to unify much of the analysis of this emerging zoo of neural operator architectures. 

Going beyond universality, it is crucial to improve our understanding of the required computational complexity of neural operators in order to assess when the methods will be effective. Pertinent numerical experiments may be found in \citep{de2022cost}. Relevant analysis of linear problems from this point of view has been given in \citep{boulle2022learning,de2021convergence}. The required complexity of neural network-based methods for specific PDE operators of interest are studied from a approximation theoretic point of view, in e.g. 
\citep{thfno,SchwabZech2019,Kutyniok2022,thdeeponet,ryck2022generic}. 
A focus of these papers is on beating the \emph{``curse of dimensionality''}. 

Since the input and output spaces are infinite-dimensional in these problems, clarification may be needed as to the meaning of beating the curse of dimensionality: it is interpreted as identifying conditions under which the required size (number of tunable parameters) of the operator approximation grows only algebraically 
with the inverse of the desired error.
\rev{This notion of the "curse of dimensionality" is specific to operator learning and somewhat distinct from the conventional meaning of this term in the context of high-dimensional problems. To disambiguate the well-known curse of dimensionality in high-dimensional approximation problems from the analogue of this curse in infinite-dimensional approximation (i.e. operator learning), we will refer to the latter as the ``curse of parametric complexity''. The present work will provide further clarification and motivation for the use of this term in the context of operator learning, and will connect the curse of dimensionality with this curse of parametric complexity.}

The focus of the present work is the so-called \emph{PCA-Net}, a methodology which combines ideas from principal component analysis with neural networks \citep{hesthaven_nonintrusive_2018,pca}. 
Principal component analysis (PCA) is a standard tool for dimension reduction in high-dimensional statistics and unsupervised learning \citep{jolliffe2002principal}. In \citep{pca}, a combination of PCA with neural networks has been proposed as a data-driven operator learning framework. As indicated above, the goal of operator learning is to approximate an unknown operator $\Psi^\dagger: \cX \to \cY$, mapping between two infinite-dimensional spaces $\cX$ and $\cY$. Given data in the form of pairs of inputs and outputs, we seek to determine an accurate, data-driven approximation of $\Psi^\dagger$. The PCA-Net operator learning architecture achieves this goal by (i) using PCA to reduce the dimensions of the input and output spaces and (ii) approximating a map between the resulting finite-dimensional latent spaces \citep{pca}. First analysis, including a universal approximation result, have been derived in \citep{pca}. Furthermore, in the same work, the efficacy of the proposed architecture has been demonstrated empirically for prototypical problems, including the solution operator of the viscous Burgers equation and the Darcy flow equation. However, so far, a detailed mathematical analysis providing a theoretical underpinning for this empirically observed efficiency of PCA-Net, has been outstanding.

The present work fills this gap by developing relevant approximation theory for PCA-Net. The main contributions of this paper are the following:
\begin{itemize}
\item \textbf{Universal approximation:} We prove a novel universal approximation theorem for PCA-Net, Theorem \ref{thm:universal}, under significantly relaxed conditions on the distribution of the data-generating measure and the underlying operator $\Psi^\dagger$ compared to previous work; the universality of PCA-Net is here shown under natural minimal conditions, which are in fact necessary for PCA to be well-defined on the input and output spaces. 
\item \textbf{\rev{Curse of parametric complexity:}} A rigorous result is proven which demonstrates that the \rev{curse of parametric complexity, an infinite-dimensional scaling limit of the conventional curse of dimensionality,} cannot be overcome by PCA-Net in general (cp. Theorem \ref{thm:cod}); more precisely, this result shows that it is impossible to derive algebraic complexity bounds when considering general classes of operators, such as the class of all Lipschitz- or even $\cC^k$-continuous operators. Hence, we conclude that at this level of generality, the curse is unavoidable. 
\item \textbf{Overcoming the \rev{curse of parametric complexity}:} Given the negative result on the general curse, we argue that a central challenge in operator learning is to identify the relevant class of operators which \emph{do allow} for efficient approximation by a given operator learning framework. To gain further insight into the relevant mathematical structure that can be leveraged by PCA-Net, we restrict attention to two prototypical PDE operators of interest arising from the Darcy flow and Navier-Stokes equations. In both cases, we show that PCA-Net can overcome the general curse of parametric complexity; algebraic error and complexity estimates are established in Theorems \ref{thm:darcy} and \ref{thm:ns}, demonstrating that these operators belong to a restricted class which is efficiently approximated by PCA-Net.
\end{itemize}

\subsection{Overview}
In section \ref{sec:pcanet}, relevant background on PCA and the PCA-Net methodology is provided; First,  PCA and empirical PCA are reviewed in section \ref{sec:pcadef}, and two error estimates for the PCA projection error are stated in section \ref{sec:pcaproj}. Next, it is explained how PCA-Net combines PCA with a neural network, resulting in an operator learning architecture. 

In Section \ref{sec:approx}, we develop approximation theory for PCA-Net. A new universal approximation result for PCA-Net is derived in section \ref{sec:universal}.
In section \ref{sec:quant}, two potential obstacles to the efficacy of PCA-Net are identified through rigorous lower complexity bounds.
Upper complexity bounds are the subject of the remaining sections: In section \ref{sec:quant-enc}, a smoothness criterion is shown to rule out the first potential obstacle to efficient operator learning.
We then finally show, in section \ref{sec:overcoming}, how PCA-Net can overcome the curse of parametric complexity for two prototypical PDE operators arising in the context of the Darcy flow and Navier-Stokes equations, respectively. Conclusions and perspectives for future work are summarized in Section \ref{sec:conclusion}.

\subsection{Notation}
Throughout the following discussion, $\cH,\cX, \cY$ denote separable Hilbert spaces. We will use $\Vert \slot \Vert_{\cH}$ and $\langle \slot, \slot \rangle_\cH$ to denote the norm and inner product on $\cH$; if it is clear from the context, we may occasionally omit the subscript to aid readability. On finite-dimensional Euclidean spaces, $|\slot|$ is used for the Euclidean norm, and $|\slot|_\infty$ denotes the maximum-norm. The space of probability measures on $\cH$ is denoted $\cP(\cH)$. We denote by $u\sim \mu$ a random variable distributed according to probability measure $\mu$. $\E_{u\sim \mu}[F(u)]$ denotes the expectation of $F$ with respect to $\mu$. We consistently use $\Psi^\dagger$ to denote the underlying (truth) operator, and $\Psi$ will denote a (PCA-Net) approximation of $\Psi^\dagger$. For two numbers $a,b$, we will write $a\sim b$ for equivalence up to constants, i.e. there exists $C>0$ such that $C^{-1} a \le b \le Ca$. Similarly $a\lesssim b$, $a\gtrsim b$ denotes inequality up to a constant, i.e. $a \le C b$ and $Ca \ge b$, respectively. On occasion, we write a subscript $a\lesssim_k b$ to emphasize the dependence of the implied constant $C = C(k)$ on a given parameter $k$. We follow the convention that constants in estimates can change their value from line to line; their dependence on the relevant parameters will always be indicated. Other notation is introduced as needed.

\section{PCA-Net methodology}
\label{sec:pcanet}

PCA-Net combines principal component analysis (PCA) for dimension reduction of the input and output spaces, with a neural network mapping between the resulting finite-dimensional latent spaces. Before summarizing PCA-Net, we first review PCA in subsection \ref{sec:pcadef}, and derive two high-probability estimates for the PCA projection error in subsection \ref{sec:pcaproj}. Next, we summarize how PCA-Net combines PCA with a neural network, resulting in an operator learning architecture in subsection \ref{sec:pca-net}.

\subsection{Principal Component Analysis}
\label{sec:pcadef}

In the present section, we provide necessary background material on PCA, and we prove a high-probability estimate for the PCA projection error, building on previous results \citep{reisswahl2020,milbradt2020high}.

\paragraph{PCA.}
Given a Hilbert space $\cH$, a probability measure $\mu$ on $\cX$, and a projection dimension $d$, PCA aims to minimize the average reconstruction error $\E_{u\sim \mu}[\Vert u - P u \Vert^2]$ over the set $\Pi_d$ of orthogonal projections $P: \cH \to \cH$ of rank $d$. It is well-known (e.g. \citep{jolliffe2002principal,pca,reisswahl2020}) that this can be achieved by considering the covariance operator $\Sigma = \E_{u\sim \mu}[u \otimes u]$, which is diagonalizable; i.e., there exists a sequence ${\lambda}_1\ge {\lambda}_2 \ge \dots \ge 0$ of eigenvalues and corresponding orthonormal basis of eigenvectors ${\phi}_1,{\phi}_2,\dots \in \cH$, such that $\Sigma {\phi}_j = {\lambda}_j {\phi}_j$ for all $j$. The optimal PCA projection of dimension $d$, ${P}_{\le d}: \cH \to \cH$, can then be written as a composition ${P}_{\le d} = \cD_\cH^\opt \circ \cE_\cH^\opt$, where the optimal PCA encoder $\cE_\cH^\opt$ is given by,
\begin{align}
\label{eq:EH}
\cE_\cH^\opt: \cH \to \R^d, \quad \cE_\cH^\opt(u) := (\langle u, \phi_1\rangle, \dots, \langle u, \phi_d\rangle),
\end{align}
and the corresponding PCA decoder $\cD_\cH^\opt$ is the mapping,
\begin{align}
\label{eq:DH}
\cD_\cH^\opt: \R^d \to \cH, \quad \cD_\cH^\opt(\eta) = \sum_{j=1}^d \eta_j \phi_j.
\end{align}
It can be shown that ${P}_{\le d}$ defines an orthogonal projection on $\cH$, such that
\[
\E_{u\sim \mu}[\Vert u - {P}_{\le d} u \Vert_{\cH}^2]
=
\min_{P\in \Pi_d} \E_{u\sim \mu}[\Vert u - P u \Vert_{\cH}^2].
\]
In the following we denote by 
\begin{align}
\label{eq:Rd}
\cR^\opt_d(\mu) := \min_{P\in \Pi_d} \E_{u\sim \mu}[\Vert u - P u \Vert_{\cH}^2],
\end{align}
this optimal PCA projection error. One can show, e.g. \citep[Thm. 3.8]{thdeeponet}, that $\cR^\opt_d(\mu)$ is related to the PCA eigenvalues by
\begin{align}
\label{eq:Rd1}
\cR^\opt_d(\mu) = \sum_{j>d} \lambda_j.
\end{align}

 \paragraph{Empirical PCA.}
Empirical PCA applies the above procedure to the empirical distribution $\mu_N = \frac1N \sum_{k=1}^N \delta_{u_k}$, obtained by sampling from $\mu$: Given a finite number of independent and identically distributed (i.i.d.) samples $u_1,\dots, u_N \simiid \mu$, define the covariance operator ${\Sigma}_N$, by 
\begin{align}
\label{eq:empirical-cov}
{\Sigma}_N = \frac{1}{N} \sum_{k=1}^N u_k \otimes u_k.
\end{align}
Letting $\hat{\lambda}_{1} \ge \hat{\lambda}_{2} \ge \dots \ge 0$ and $\hat{\phi}_{1}, \hat{\phi}_{2}, \dots \in \cH$ denote the eigenvalues and corresponding orthonormal eigenbasis. The empirical PCA projection $\hat{P}_{\le d}$ of dimension $d$ is given by $\hat{P}_{\le d} = \cD_\cH \circ \cE_\cH$, where $\cE_\cH$, $\cD_\cH$ are defined as in \eqref{eq:EH}, \eqref{eq:DH}, but replacing the eigenvectors $\phi_j$ by their empirical counterparts $\hat{\phi}_j$.

\subsection{Projection error of empirical PCA}
\label{sec:pcaproj}

We first note that the empirical PCA projection error approximates the optimal projection error, provided that a sufficient amount of data is available. We state the following result in high probability.
\begin{proposition}
\label{prop:pcaproj-qual}
Let $\cH$ be a separable Hilbert space. Let $\mu\in \cP(\cH)$ be a probability measure with finite second moments, $\E_{u\sim \mu}[\Vert u \Vert_{\cH}^2] < \infty$. Then for any $\delta, \epsilon>0$, there exists a requisite amount of data $N_0 = N_0(\mu,d,\delta,\epsilon)$, such that the encoding error for empirical PCA with dimension $d$, and based on $N\ge N_0$ samples $u_1,\dots, u_N \simiid \mu$, satisfies
\begin{align}
\label{eq:pcaproj-qual}
\E_{u\sim \mu}\left[ 
\Vert u - \cD_\cH\circ \cE_\cH(u) \Vert_{\cH}^2
\right] 
\le \cR^\opt_d(\mu) + \epsilon,
\end{align}
with probability at least $1-\delta$.
\end{proposition}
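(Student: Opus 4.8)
The plan is to reduce the assertion to the convergence, in trace norm, of the empirical covariance operator $\Sigma_N$ to the true covariance operator $\Sigma$, and then to invoke the strong law of large numbers for Banach-space-valued random variables.

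\textbf{Reduction to a perturbation bound.} For any orthogonal projection $P\in\Pi_d$ and any $\nu\in\cP(\cH)$ with finite second moments and covariance operator $\Sigma_\nu = \E_{u\sim\nu}[u\otimes u]$, the elementary identity $\Vert u - Pu\Vert_\cH^2 = \langle (I-P)u, u\rangle_\cH$ together with cyclicity of the trace gives $\E_{u\sim\nu}[\Vert u - Pu\Vert_\cH^2] = \Tr\big(\Sigma_\nu(I-P)\big)$. Applied to $\nu = \mu_N$, this shows that the empirical PCA projection $\hat{P}_{\le d} = \cD_\cH\circ\cE_\cH$ is a minimizer of $P\mapsto\Tr(\Sigma_N(I-P))$ over $\Pi_d$ (the optimality of PCA applied to the empirical measure $\mu_N$), while $P_{\le d}$ is a minimizer of $P\mapsto\Tr(\Sigma(I-P))$ with minimal value $\cR^\opt_d(\mu)$, cf.\ \eqref{eq:Rd}. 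Decomposing $\Tr(\Sigma(I-\hat{P}_{\le d})) = \Tr\big((\Sigma-\Sigma_N)(I-\hat{P}_{\le d})\big) + \Tr(\Sigma_N(I-\hat{P}_{\le d}))$, using $\Tr(\Sigma_N(I-\hat{P}_{\le d}))\le\Tr(\Sigma_N(I-P_{\le d}))$ by optimality, and rewriting $\Tr(\Sigma_N(I-P_{\le d})) = \Tr\big((\Sigma_N-\Sigma)(I-P_{\le d})\big) + \cR^\opt_d(\mu)$, I arrive at
\[
\E_{u\sim\mu}\big[\Vert u - \cD_\cH\circ\cE_\cH(u)\Vert_\cH^2\big] \;\le\; \cR^\opt_d(\mu) + 2\,\Vert\Sigma_N - \Sigma\Vert_{S_1},
\]
where $\Vert\slot\Vert_{S_1}$ is the trace (nuclear) norm; here I used the duality bound $|\Tr(AQ)|\le\Vert A\Vert_{S_1}\Vert Q\Vert_{\mathrm{op}}$ with $\Vert I-P\Vert_{\mathrm{op}}\le 1$. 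This estimate is deterministic in the samples, uniform in $d$, and --- importantly --- needs no spectral gap between $\lambda_d$ and $\lambda_{d+1}$.

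\textbf{Law of large numbers in the trace class.} It then remains to prove $\Vert\Sigma_N - \Sigma\Vert_{S_1}\to 0$ almost surely. I would regard $\Sigma_N = \tfrac1N\sum_{k=1}^N u_k\otimes u_k$ as the empirical average of the i.i.d.\ random elements $u_k\otimes u_k$ with values in the separable Banach space $S_1(\cH)$ of trace-class operators on $\cH$. Since $\Vert u\otimes u\Vert_{S_1} = \Vert u\Vert_\cH^2$, the hypothesis $\E_{u\sim\mu}[\Vert u\Vert_\cH^2] < \infty$ says precisely that $u\otimes u$ is Bochner integrable with Bochner mean $\Sigma$, so Mourier's strong law of large numbers for i.i.d.\ random elements in a separable Banach space gives $\Sigma_N\to\Sigma$ almost surely in $\Vert\slot\Vert_{S_1}$. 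Almost-sure convergence implies convergence in probability, so for the given $\delta,\epsilon>0$ there is $N_0 = N_0(\mu,\delta,\epsilon)$ (a fortiori $N_0(\mu,d,\delta,\epsilon)$) such that $\P\big(2\Vert\Sigma_N-\Sigma\Vert_{S_1} > \epsilon\big)<\delta$ for all $N\ge N_0$, which combined with the displayed estimate proves \eqref{eq:pcaproj-qual}.

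\textbf{Main obstacle.} The one nontrivial ingredient is the second paragraph: the duality bound in the reduction forces control of $\Sigma_N-\Sigma$ in the trace norm rather than the more familiar Hilbert--Schmidt or operator norm, and one must verify that the vector-valued strong law applies --- separability of $S_1(\cH)$, and that its integrability requirement coincides with the stated second-moment assumption $\E_{u\sim\mu}[\Vert u\Vert_\cH^2]<\infty$. A more elementary alternative avoiding the trace norm is to fix a cutoff $M\ge d$, write $\Sigma = P_{\le M}\Sigma P_{\le M} + \Sigma_{\mathrm{tail}}$ with $\Tr(\Sigma_{\mathrm{tail}}) = \sum_{j>M}\lambda_j$ chosen $<\epsilon/4$, and carry out a Hilbert--Schmidt perturbation argument on the finite-dimensional block $P_{\le M}\Sigma P_{\le M}$; this reaches the same conclusion and links directly to the quantitative reconstruction-error bounds of \cite{reisswahl2020,milbradt2020high}.
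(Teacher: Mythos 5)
Your argument is correct, but it follows a genuinely different route from the paper. The paper does not re-derive the perturbation bound: it quotes the known excess-risk estimate $\cE^\pca_{\le d}\le \sqrt{2d}\,\Vert \Sigma-\hat{\Sigma}\Vert_{HS}$ (see \eqref{eq:epca}, citing \cite{reisswahl2020}), and its stochastic ingredient is a truncation argument, splitting $u_k\otimes u_k$ according to $\Vert u_k\Vert\le M$ or $>M$, bounding the truncated part by a standard Monte-Carlo estimate in the Hilbert space of Hilbert--Schmidt operators and the tail by the finite second moment; this yields the result first \emph{in expectation} (Lemma \ref{lem:pcaproj-qual}), which is then converted to the high-probability statement by Markov's inequality. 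You instead prove a deterministic, dimension-free excess-risk bound $\cE^\pca_{\le d}\le 2\Vert\Sigma_N-\Sigma\Vert_{S_1}$ via trace duality (using that $\Vert I-P\Vert_{\mathrm{op}}\le 1$ and that $\Sigma-\Sigma_N$ is trace class), and then apply Mourier's strong law of large numbers in the separable Banach space $S_1(\cH)$, which is legitimate since $\Vert u\otimes u\Vert_{S_1}=\Vert u\Vert_{\cH}^2$ makes Bochner integrability exactly the assumed second-moment condition. What each buys: your route gives an $N_0$ independent of $d$ (and in fact almost-sure convergence), avoiding the $\sqrt{2d}$ factor, at the cost of working in the stronger trace norm; the paper's route stays in the more familiar Hilbert--Schmidt setting, produces the expectation version of the statement as a by-product, which is reused later (e.g.\ in Steps 1 and 4 of the proof of Proposition \ref{prop:universal}), and its truncated Monte-Carlo estimate is structurally closer to the quantitative bound of Proposition \ref{prop:pcaproj}. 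Your sketched alternative via a finite-dimensional cutoff of $\Sigma$ would also work, but is not needed given the soft law-of-large-numbers argument.
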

The proof of Proposition \ref{prop:pcaproj-qual} relies on a well-known bound on the excess risk for empirical PCA in terms of the Hilbert-Schmidt distance $\Vert \Sigma - \hat{\Sigma} \Vert_{2}$ of the true and empirical covariance operators \citep[e.g. Sect. 2.2]{reisswahl2020}. This is combined with a general Monte-Carlo estimate to prove Proposition \ref{prop:pcaproj-qual}. We present the details in Appendix \ref{app:pcaproj}.

The result of Proposition \ref{prop:pcaproj-qual} is purely qualitative, as it doesn't give us any estimate on the required amount of data $N$. Our next goal is to establish a \emph{quantitative} bound, under additional assumptions on $\mu$. We will call a probability measure $\mu\in \cP(\cH)$ \define{sub-Gaussian}, if there exists $K_\mu \ge 0$, such that
\begin{align}
\label{eq:momentbd}
\E_{u\sim \mu}\left[
\Vert u \Vert^p_{\cH}
\right]^{1/p}
\le
K_\mu \sqrt{p}, \quad \forall\, p\ge 1.
\end{align}
According to this definition \eqref{eq:momentbd}, $\mu$ is sub-Gaussian if the real-valued random variable $\Vert u \Vert_{\cH}$, with $u \sim \mu$, is sub-Gaussian in the conventional sense. The moment bound \eqref{eq:momentbd} is one of many equivalent characterizations of real-valued sub-Gaussian random variables (see e.g. \citep[Sect. 2.5.1]{vershynin2018high}). We then have:

\begin{proposition}
\label{prop:pcaproj}
Let $\cH$ be a separable Hilbert space, and let $\mu$ be a sub-Gaussian probability measure on $\cH$. Fix $\delta \in (0,1/2)$. The encoding error for empirical PCA with dimension $d$, and based on $N \ge \log(2/\delta)$ samples $u_1,\dots, u_N\simiid \mu$, satisfies the following upper bound,
\begin{align}
\label{eq:pcaproj}
\E_{u\sim \mu}\left[
\Vert u - \cD_\cH\circ \cE_\cH(u) \Vert_{\cH}^2
\right]
\le 
\cR^\opt_d(\mu) + \sqrt{\frac{Qd \log(2/\delta)}{N}}.
\end{align}
with probability at least $1-\delta$. Here, $Q = Q(K_\mu)$ depends only on the constant $K_\mu$ in \eqref{eq:momentbd}.
\end{proposition}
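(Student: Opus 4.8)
The plan is to separate the argument into a deterministic perturbation bound for empirical PCA and a probabilistic concentration estimate for the empirical covariance operator in Hilbert--Schmidt norm. For the deterministic part, note that $\cD_\cH\circ \cE_\cH(u) = \hat{P}_{\le d}u$, so the left-hand side of \eqref{eq:pcaproj} equals $\Tr((\Id - \hat{P}_{\le d})\Sigma) = \Tr(\Sigma) - \Tr(\hat{P}_{\le d}\Sigma)$, while \eqref{eq:Rd1} gives $\cR^\opt_d(\mu) = \Tr(\Sigma) - \Tr(P_{\le d}\Sigma)$; hence the excess risk of empirical PCA is $\Tr(P_{\le d}\Sigma) - \Tr(\hat{P}_{\le d}\Sigma)$. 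Using that $\hat{P}_{\le d}$ maximizes $P\mapsto \Tr(P\hat{\Sigma})$ over rank-$d$ orthogonal projections (the defining optimality of empirical PCA), so that $\Tr(\hat{P}_{\le d}\hat{\Sigma}) - \Tr(P_{\le d}\hat{\Sigma}) \ge 0$, one adds this nonnegative quantity and regroups to obtain
\[
\Tr(P_{\le d}\Sigma) - \Tr(\hat{P}_{\le d}\Sigma) \le \Tr\big((P_{\le d} - \hat{P}_{\le d})(\Sigma - \hat{\Sigma})\big) \le \Vert P_{\le d} - \hat{P}_{\le d}\Vert_2\, \Vert \Sigma - \hat{\Sigma}\Vert_2 \le 2\sqrt{d}\,\Vert \Sigma - \hat{\Sigma}\Vert_2,
\]
where the middle step is Cauchy--Schwarz for the Hilbert--Schmidt inner product and the last uses $\Vert P_{\le d}\Vert_2 = \Vert \hat{P}_{\le d}\Vert_2 = \sqrt{d}$. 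This ``gap-free'' bound (cp. \cite{reisswahl2020}) avoids any assumption on the spectral gaps of $\Sigma$, matching the generality of the statement.

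It then suffices to prove that, for $N \ge \log(2/\delta)$, with probability at least $1-\delta$ one has $\Vert \Sigma - \hat{\Sigma}\Vert_2 \le \tfrac12\sqrt{Q\log(2/\delta)/N}$. Write $\hat{\Sigma} - \Sigma = \tfrac1N\sum_{k=1}^N X_k$ with i.i.d. mean-zero summands $X_k = u_k\otimes u_k - \Sigma$, regarded as elements of the Hilbert space of Hilbert--Schmidt operators. Since $\Vert X_k\Vert_2 \le \Vert u_k\Vert_{\cH}^2 + \Vert \Sigma\Vert_2$, the sub-Gaussianity \eqref{eq:momentbd} of $\Vert u\Vert_{\cH}$ makes $\Vert X_k\Vert_2$ sub-exponential with $\psi_1$-scale $\lesssim K_\mu^2$; in particular $\E[\Vert X_k\Vert_2^2] \le \E[\Vert u\Vert_{\cH}^4] \le 16 K_\mu^4$, whence $\E[\Vert \hat{\Sigma} - \Sigma\Vert_2] \le 4K_\mu^2/\sqrt{N}$ by the standard variance identity for i.i.d. Hilbert-space sums. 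A Bernstein-type concentration inequality for Hilbert-space-valued sums with sub-exponential summands --- obtained, e.g., by truncating at level $\sim K_\mu^2\log(N/\delta)$, applying the bounded Bernstein inequality in Hilbert space, and controlling the discarded bias by the sub-exponential tail, or alternatively quoted from the literature on empirical covariance operators \cite{reisswahl2020,milbradt2020high} --- then yields $\Vert \hat{\Sigma} - \Sigma\Vert_2 \lesssim K_\mu^2\big(\sqrt{\log(2/\delta)/N} + \log(2/\delta)/N\big)$ with probability $\ge 1-\delta$, where $\delta < 1/2$ ensures $\log(2/\delta)\ge 1$. Invoking $N\ge \log(2/\delta)$ to absorb $\log(2/\delta)/N \le \sqrt{\log(2/\delta)/N}$, these two terms collapse to $\lesssim K_\mu^2\sqrt{\log(2/\delta)/N}$; substituting into the deterministic bound and collecting all constants into $Q = Q(K_\mu)$ completes the proof.

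The main obstacle is the probabilistic step. Because $\Vert u\Vert_{\cH}$ is only sub-Gaussian, $\Vert u\otimes u\Vert_2 = \Vert u\Vert_{\cH}^2$ is merely sub-exponential and the summands $X_k$ are unbounded, so the bounded Hoeffding/Bernstein inequalities in Hilbert space do not apply directly. Taming these tails --- whether via a truncation argument, in which both the bias and the discarded mass must be kept below the target rate $\sqrt{\log(2/\delta)/N}$ for $N\gtrsim \log(2/\delta)$, or by invoking a ready-made sub-exponential Bernstein inequality for Hilbert-space-valued sums --- is the only genuine difficulty; everything else is bookkeeping.
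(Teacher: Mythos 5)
Your proposal is correct and follows essentially the same route as the paper: the gap-free excess-risk bound $\cE^\pca_{\le d}\lesssim \sqrt{d}\,\Vert \Sigma-\hat{\Sigma}\Vert_{HS}$ (which the paper cites from Reiss--Wahl with the slightly sharper constant $\sqrt{2d}$ rather than re-deriving it as you do), followed by a Bernstein inequality for Hilbert--Schmidt-valued sums exploiting that sub-Gaussianity of $\Vert u\Vert_{\cH}$ makes $\Vert u\otimes u\Vert_{HS}=\Vert u\Vert_{\cH}^2$ sub-exponential, and finally the hypothesis $N\ge \log(2/\delta)$ to absorb the $\log(2/\delta)/N$ term into the square-root term. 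The paper executes the concentration step exactly as in your ``quote from the literature'' alternative, by applying Pinelis's vector-valued Bernstein inequality on $HS(\cH)$; note only that your truncation variant at level $\sim K_\mu^2\log(N/\delta)$ would introduce extra logarithmic factors that need not be absorbable in the extreme regime $N\approx\log(2/\delta)$, so the ready-made sub-exponential inequality is the safer of your two options.
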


Proposition \ref{prop:pcaproj} above is a natural high-probability analogue of a previous result on empirical PCA of \citep{pca}, derived in expectation in that work. Proposition \ref{prop:pcaproj} uses the same bound on the PCA excess risk as Proposition \ref{prop:pcaproj-qual}, but combines it with a general Bernstein concentration bound for $\cH$-valued random variables to derive quantitative rates. We include the details in Appendix \ref{app:pcaproj}.

\begin{remark}
\label{rmk:pcaproj}
We note that under more fine-grained information on the underlying measure $\mu$, considerable improvements to the upper bound of Proposition \ref{prop:pcaproj} are possible; this has e.g. been achieved in \citep{milbradt2020high} for a different notion of a ``sub-Gaussian'' distribution, requiring that 
\begin{align}
\label{eq:momentbd2}
\sup_{p\in \N} \frac{\E_{u\sim\mu}[|\langle u, v \rangle_{\cH}|^p]^{1/p} }{\sqrt{p}}
\le K'_\mu \E_{u\sim\mu}[|\langle u, v \rangle_{\cH}|^2]^{1/2}, \quad \forall \, v \in \cH,
\end{align}
for a constant $K_\mu'$ depending only on $\mu$.
The condition \eqref{eq:momentbd2} is stronger than \eqref{eq:momentbd}; Assuming \eqref{eq:momentbd2}, it can in fact be shown that the empirical PCA projection error is of order $\mathcal{O}\left(\cR^\opt_d(\mu)\right)$ whenever $N \gtrsim d \log(1/\delta)$, thereby achieving essentially optimal PCA convergence rates. However, in the present context, PCA will be applied for dimension reduction on both the input and output spaces under a non-linear mapping $\Psi^\dagger$. Unfortunately, even if $\mu$ satisfies \eqref{eq:momentbd2}, it is unclear whether this property is preserved under the push-forward by $\Psi^\dagger$, i.e. whether a bound of the form \eqref{eq:momentbd2} continues to hold for $\Psi^\dagger_\#\mu$. In contrast, the bound \eqref{eq:momentbd} is robust under such a push-forward. Therefore, we contend ourselves with the more pessimistic bound of Proposition \ref{prop:pcaproj}, and leave potential improvements, possibly building on \citep{milbradt2020high}, as a challenge for future work.
\end{remark}

\subsection{PCA-Net architecture}
\label{sec:pca-net}

We next recall the PCA-Net architecture proposed in \citep{pca}, which combines empirical PCA with a neural network mapping, to approximate an underlying operator $\Psi^\dagger: \cX \to \cY$. In the following, let $\cX$ and $\cY$ be separable Hilbert spaces and let $\Psi^\dagger: \cX \to \cY$ be a non-linear operator. The goal of the PCA-Net methodology is to approximate $\Psi^\dagger$ from a finite number of input-/output-samples $\{u_k,\Psi^\dagger(u_k)\}_{k=1}^N$. To this end, PCA-Net combines an encoding $\cE_\cX: \cX \to \R^\dx$, a neural network $\psi: \R^\dx \to \R^\dy$ and a decoding $\cD_\cY: \R^{\dy} \to \cY$ (cp. Figure \ref{fig:pcanet}). 

\begin{figure}[H]
\centering
\begin{tikzcd}
\cX \arrow{r}{\Psi^\dagger} \arrow[swap,squiggly]{d}{\cE_\cX} 
& \cY  \\%
\R^{\dx} \arrow[squiggly]{r}{\psi}& \R^{\dy} \arrow[squiggly,swap]{u}{\cD_\cY}
\end{tikzcd}
\caption{Diagrammatic illustration of PCA-Net based on a PCA encoder $\cE_\cX$, a neural network $\psi$, and a PCA decoder $\cD_\cY$.}
\label{fig:pcanet}
\end{figure}
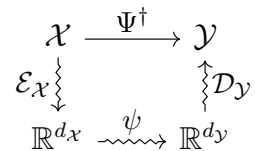  

Here, $\cE_\cX$ and $\cD_\cY$ are chosen as an empirical PCA encoder and decoder, respectively. A precise definition is given below, see equations \eqref{eq:encX}, \eqref{eq:decY}. Given these ingredients, the resulting PCA-Net is defined as the mapping
\begin{align}
\label{eq:pcanet}
\Psi: \cX \to \cY, \quad \Psi(u) := \cD_\cY \circ \psi \circ \cE_\cX(u).
\end{align}
The encoder $\cE_\cX$ and decoder $\cD_\cY$ perform a dimension reduction on the input and output spaces. The neural network $\psi$ approximates a mapping on the resulting finite-dimensional latent spaces. 

\rev{Associated with the encoder $\cE_\cX$, we will define a natural PCA decoder $\cD_\cX$ on $\cX$. Similarly $\cD_\cY$ is accompanied by an encoder $\cE_\cY$ on $\cY$.}
The intuition is that these underlying encoder/decoder pairs, and the neural network $\psi$, should satisfy the following approximate identities \citep{pca}:
\[
\cD_\cX \circ \cE_\cX \approx \id_{\cX}, 
\quad 
\cD_\cY \circ \cE_\cY \approx \id_{\cY},
\quad
\cD_\cY \circ \psi \circ \cE_\cX \approx \Psi^\dagger.
\]
Here, $\id_\cX$ and $\id_{\cY}$ denote the identity mappings on $\cX$ and $\cY$, respectively. Our analysis of the PCA-methodology aims to quantify the accuracy of these approximations when the encoders/decoders are defined by PCA and $\psi$ is chosen to be a neural network.

\paragraph{PCA encoding and decoding.} We will now specify the particular choice of $\cE_\cX$ and $\cD_\cY$. In the following, we assume that the input samples $\{u_k\}_{k=1}^N$ are i.i.d. samples from a probability measure $\mu$ on $\cX$. Note that the output-samples $\{\Psi^\dagger(u_k)\}_{k=1}^N$ are then i.i.d. with respect to the corresponding push-forward measure $\Psi^\dagger_\#\mu$ on $\cY$. For a given choice of latent dimensions $\dx$ and $\dy$, we apply empirical PCA to the samples $\{u_k\}_{k=1}^N \subset \cX$ and $\{\Psi^\dagger(u_k)\}_{k=1}^N\subset \cY$. In the following, we denote by 
\begin{align}
\phi^\cX_{1}, \dots, \phi^\cX_{\dx} \in \cX,
\quad
\phi^\cY_{1}, \dots, \phi^\cY_{\dy} \in \cY,
\end{align}
the empirical PCA bases on $\cX$ and $\cY$, respectively. We emphasize that the empirical PCA bases are themselves random variables, as they depend on the random input-/output-samples $\{u_k,\Psi^\dagger(u_k)\}_{k=1}^N$. The first basis, $\{\phi^\cX_j\}_{j=1}^\dx$, defines an encoder on $\cX$,
\begin{align}
\label{eq:encX}
\cE_\cX: \cX \to \R^\dx, \quad \cE_\cX(u) := (\langle u, \phi^\cX_{1} \rangle, \dots, \langle u, \phi^\cX_{\dx} \rangle),
\end{align}
with corresponding decoder,
$\cD_\cX: \R^\dx \to \cX$, $\cD_\cX(\xi) := \sum_{j=1}^\dx \xi_j \phi^\cX_{j}$.
Similarly, $\phi^\cY_1,\dots, \phi^\cY_\dy$ defines an encoder on $\cY$,
$\cE_\cY: \cY \to \R^\dy$, $\cE_\cY(v) := (\langle v, \phi^\cY_{j} \rangle)_{j=1}^\dy$,
with corresponding decoder,
\begin{align}
\label{eq:decY}
\cD_\cY: \R^\dy \to \cY, \quad \cD_\cY(\eta) := \sum_{j=1}^\dy \eta_j \phi^\cY_{j}.
\end{align}

\paragraph{Neural networks.}
\label{sec:neuralnetwork}
\rev{Given a depth $L\in \N$, layer widths $d_k$ ($k=0,\dots, L+1$), and weights and biases $A_k\in \R^{d_{k+1}\times d_{k}}, b_k\in \R^{d_{k+1}}$, $k=0,\dots, L$, a \emph{(deep) neural network} (DNN) $\psi$ is a mapping $\xi\mapsto \psi(\xi)$, defined as a composition of non-linear layers,
\begin{gather}
\label{eq:neuralnetwork}
\left\{
\begin{aligned}
\xi_0 &:= \xi, \\
\xi_{k+1} &:= \sigma(A_k \xi_{k} + b_k), \quad \text{for } k=0,\dots, L-1, \\
\psi(\xi) &:= A_L \xi_L + b_L.
\end{aligned}
\right.
\end{gather}
}
The non-linear layers are expressed in terms of an activation function $\sigma: \R \to \R$ which is applied componentwise. In the following, we will restrict attention to the ReLU activation function $\sigma(\xi) := \max(\xi,0)$. Extension of our results to more general choices of $\sigma$ is possible. 

Given a DNN $\psi$, we define $\size(\psi) := \sum_{k=0}^L (\Vert A_k \Vert_0 + \Vert b_k \Vert_0)$ as the total number of non-zero weights and biases in the architecture, and we define $\depth(\psi) := L$ as the number of layers. With these definitions, $\size(\psi)$ and $\depth(\psi)$ provide a measure of the complexity of the DNN.

\rev{
\begin{remark}
In the present work, we employ the above notion of size as a measure of the complexity of a neural network. This is sufficient for our present purposes, but it should be pointed out that this notion of complexity does not take into account the number of bits which would be required to encode each parameter in a practical implementation. Recent work such as \cite{yarotsky2020phase,siegel2022optimal}, and references therein, has shown that it is possible for very carefully constructed ReLU neural networks to achieve unexpectedly high convergence rates for the approximation of Sobolev functions. This comes at the expense of requiring a large number of bits to encode the individual weights, e.g. \cite[discussion on page 5]{siegel2022optimal}. 
\end{remark}
}

\paragraph{Neural network training.} 
Ideally, the neural network would be chosen as a minimizer of the expected loss
\begin{align*}
\cL(\psi) := \E_{(\xi,\eta)}\left[ \vert \psi(\xi) - \eta\vert^2 \right],
\end{align*}
where the expectation is over pairs $(\xi,\eta) \in \R^{\dx}\times \R^{\dy}$ of encoded input-/output-pairs of the form $\xi = \cE_\cX(u)$, $\eta = \cE_\cY(\Psi^\dagger(u))$, with $u\sim \mu$.
In practice, the neural network $\psi$ in \eqref{eq:pcanet} is usually trained to minimize the following empirical loss,
\begin{align*}
\hat{\cL}(\psi) := \frac{1}{N} \sum_{k=1}^N \vert \psi(\xi_k) - \eta_k \vert^2,
\end{align*}
where $\xi_k := \cE_\cX(u_k) \in \R^\dx$ and $\eta_k := \cE_{\cY}(\Psi^\dagger(u_k))\in \R^\dy$ are the encoded input and output samples.

\begin{remark}
\label{rem:training}
In the present work, we will not address the practical training of the neural network $\psi$, analysis of which is a notoriously difficult problem. Instead, we will contend ourselves with an ``approximation theoretic'' approach; it will be shown that certain $\psi$ exists, but no a priori guarantee is given that this $\psi$ can be found by the numerical optimization of the empirical loss $\hat{\cL}$. The samples $u_1,\dots, u_N$ will, however, enter our analysis of the empirical PCA encoder $\cE_\cX$ and decoder $\cD_\cY$.
\end{remark}

\section{Approximation theory}
\label{sec:approx}

This section develops approximation theory for PCA-Net, and is divided into four subsections. First, in Section \ref{sec:universal}, we discuss a new universality theorem for PCA-Net. Next, we point out two potential obstacles to efficient operator learning with PCA-Net in Section \ref{sec:quant}. As explained there, one such obstacle relates to the complexity of the output distribution on $\cY$, which could in principle lead to an arbitrarily slow decay of the PCA-Net error with the PCA-dimension $\dy$ (cp. Proposition \ref{prop:lower-bd}). The second potential obstacle to efficient operator learning relates to the inherent complexity of the space of operators between infinite-dimensional spaces (cp. Theorem \ref{thm:cod}); this makes rigorous a notion of ``curse of dimensionality'' for operator learning intuited in earlier work \citep{thfno,thdeeponet}. The lower bounds of Section \ref{sec:quant} are complemented by upper bounds in the subsequent sections; In Section \ref{sec:quant-enc}, we show that a suitable smoothness condition rules out the first obstacle, ensuring an algebraic decay of the PCA encoding error on $\cY$. Finally, in Section \ref{sec:overcoming}, we demonstrate that PCA-Net can leverage additional structure for two operators of interest, allowing it to overcome the general curse of dimensionality when approximating the solution operator of the Darcy flow and Navier-Stokes equations. 

\subsection{Universal approximation}
\label{sec:universal}

We first state the following universal approximation theorem in high probability:
\begin{theorem}[Universal approximation]
\label{thm:universal}
Let $\cX,\cY$ be separable Hilbert spaces and let $\mu \in \cP(\cX)$ be a probability measure on $\cX$. Let $\Psi^\dagger: \cX \to \cY$ be a $\mu$-measurable mapping. Assume the following moment conditions,
\[
\E_{u\sim \mu}[\Vert u \Vert_{\cX}^2], \;\; \E_{u\sim \mu}[\Vert \Psi^\dagger(u) \Vert_{\cY}^2] < \infty.
\]
Then for any $\delta, \epsilon > 0$, there are dimensions $\dx= \dx(\epsilon, \delta)$, $\dy = \dy(\epsilon, \delta)$, a requisite amount of data $N = N(\dx,\dy,\mu,\Psi^\dagger)$, a neural network $\psi = \psi(\slot;\{u_k\})$ \rev{depending on this data,} such that the PCA-Net, $\Psi = \cD_{\cY} \circ \psi \circ \cE_{\cX}$, satisfies
\[
\E_{u\sim \mu} \left[ \Vert \Psi^\dagger(u) - \Psi(u;\{u_k\}) \Vert^2_{\cY}  \right]
\le
\epsilon,
\]
with probability at least $1-\delta$ in the input data $u_1,\dots, u_N \sim \mu$.
\end{theorem}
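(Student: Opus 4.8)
The plan is to decompose the PCA-Net error into three contributions, matching the three approximate identities displayed before Figure \ref{fig:pcanet}: a PCA projection error on the output space $\cY$, a PCA projection error on the input space $\cX$ (more precisely, the error incurred by the fact that $\psi$ only sees the encoded input $\cE_\cX(u)$ rather than $u$ itself), and a neural network approximation error on the finite-dimensional latent spaces. Concretely, write $\Psi^\dagger(u) - \Psi(u) = (\Psi^\dagger(u) - \cD_\cY \cE_\cY \Psi^\dagger(u)) + \cD_\cY(\cE_\cY \Psi^\dagger(u) - \psi(\cE_\cX(u)))$, and since $\cD_\cY$ is a partial isometry the second term has $\cY$-norm equal to $|\cE_\cY\Psi^\dagger(u) - \psi(\cE_\cX(u))|$. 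Using $\Vert a+b\Vert^2 \le 2\Vert a\Vert^2 + 2\Vert b\Vert^2$, it thus suffices to make $\E_{u\sim\mu}[\Vert \Psi^\dagger(u) - \cD_\cY\cE_\cY\Psi^\dagger(u)\Vert_\cY^2]$ and $\E_{u\sim\mu}[|\cE_\cY\Psi^\dagger(u) - \psi(\cE_\cX(u))|^2]$ each small.

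For the first term, apply Proposition \ref{prop:pcaproj-qual} to the push-forward measure $\Psi^\dagger_\#\mu \in \cP(\cY)$, which has finite second moment by the hypothesis $\E_{u\sim\mu}[\Vert\Psi^\dagger(u)\Vert_\cY^2] < \infty$. By \eqref{eq:Rd1}, $\cR_d^\opt(\Psi^\dagger_\#\mu) = \sum_{j>d}\lambda_j^\cY \to 0$ as $d\to\infty$, so we may first fix $\dy$ large enough that $\cR_{\dy}^\opt(\Psi^\dagger_\#\mu) \le \epsilon/100$, and then Proposition \ref{prop:pcaproj-qual} gives, for $N$ large, the empirical output-PCA error below $\epsilon/50$ with probability $\ge 1-\delta/4$. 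An entirely analogous argument applied to $\mu$ on $\cX$ fixes $\dx$ with $\cR_{\dx}^\opt(\mu)\le\epsilon/100$ and controls the empirical input-PCA projection error with probability $\ge 1-\delta/4$.

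The remaining task is the latent-space approximation: we must find a ReLU network $\psi:\R^{\dx}\to\R^{\dy}$ with $\E_{u\sim\mu}[|\cE_\cY\Psi^\dagger(u) - \psi(\cE_\cX(u))|^2]$ small. Here the natural target is the measurable map $f := \cE_\cY \circ \Psi^\dagger \circ \cD_\cX : \R^{\dx}\to\R^{\dy}$, composed with the push-forward measure $\nu := (\cE_\cX)_\#\mu$ on $\R^{\dx}$; note $\cE_\cX \cD_\cX = \id_{\R^{\dx}}$, so $f(\cE_\cX(u)) = \cE_\cY\Psi^\dagger(\cD_\cX\cE_\cX(u))$, which differs from $\cE_\cY\Psi^\dagger(u)$ only through the already-controlled input projection error — this requires a little care, since $\Psi^\dagger$ need not be continuous, so one should rather argue that $u\mapsto \cE_\cY\Psi^\dagger(u)$ is an $\R^{\dy}$-valued function in $L^2(\mu)$ and approximate it directly as a function of the finite-dimensional data $\cE_\cX(u)$ after first conditioning; the cleanest route is to approximate the conditional expectation $g(\xi) := \E_{u\sim\mu}[\cE_\cY\Psi^\dagger(u)\mid \cE_\cX(u)=\xi]$, which is the $L^2(\nu)$-orthogonal projection and hence the best possible $\psi\circ\cE_\cX$, and then invoke density of continuous functions in $L^2(\nu)$ together with the fact that $\nu$ is a Borel probability measure on $\R^{\dx}$ which is therefore inner regular, so $g$ can be approximated in $L^2(\nu)$ by a continuous (indeed Lipschitz) function on a large compact set, and finally apply the quantitative ReLU universal approximation theorem for continuous functions on compact sets. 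The main obstacle is precisely this step: handling the lack of regularity of $\Psi^\dagger$ (only $\mu$-measurability is assumed) means one cannot talk about approximating $\Psi^\dagger$ pointwise, and one must instead work with $L^2$ projections and conditional expectations and be careful that the network only has access to $\cE_\cX(u)$; once $\psi$ is chosen to approximate $g$ to within $\epsilon/50$ in $L^2(\nu)$, collecting the three bounds and a union bound over the two high-probability events yields the claim with probability $\ge 1-\delta$.
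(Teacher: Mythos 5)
Your setup (decompose into an output-reconstruction error plus a latent-space error, bound the first via Proposition \ref{prop:pcaproj-qual} applied to $\Psi^\dagger_\#\mu$, then approximate a finite-dimensional map by a ReLU network) matches the paper's skeleton, but there is a genuine gap at exactly the step you flag as "the main obstacle", and your conditional-expectation device does not close it. Writing $g(\xi) := \E_{u\sim\mu}[\cE_\cY\Psi^\dagger(u)\mid \cE_\cX(u)=\xi]$ and approximating $g$ in $L^2(\nu)$ only controls the distance from $\psi\circ\cE_\cX$ to the \emph{best} function of the encoded input; by the Pythagorean identity the latent error splits as $\E_{u\sim\mu}\bigl[\vert \cE_\cY\Psi^\dagger(u) - g(\cE_\cX(u))\vert^2\bigr] + \Vert g - \psi\Vert^2_{L^2(\nu)}$, and the first (irreducible, conditional-variance) term is never bounded in your argument. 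It is not controlled by the $\cX$-projection error $\E_{u\sim\mu}[\Vert u - \cD_\cX\cE_\cX(u)\Vert_\cX^2]$ for a merely $\mu$-measurable $\Psi^\dagger$ — that transfer is precisely what requires continuity — and "best possible" is not "small": for fixed $\dx$ the encoding discards information on which $\Psi^\dagger(u)$ may depend arbitrarily. One could try to argue that the conditional variance vanishes as $\dx\to\infty$ via martingale convergence along the nested encoders, but that is an additional nontrivial argument (made delicate by the fact that the empirical PCA basis, hence the conditioning $\sigma$-algebra, is itself random), and you have not supplied it; also, your $\epsilon/100$ budget for the $\cX$-projection error is fixed before any Lipschitz constant is known, so it could not be recycled for this purpose anyway.

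The paper resolves this differently: before doing anything on the latent space, it replaces $\Psi^\dagger$ by a globally Lipschitz, bounded $\Psi^\star$ with $\E_{u\sim\mu}[\Vert\Psi^\dagger(u)-\Psi^\star(u)\Vert_\cY^2]^{1/2}\le\epsilon/4$, built via Lusin's theorem on $\cX$ together with a Lipschitz partition-of-unity extension (Lemmas \ref{lem:lip} and \ref{lem:lip2}). With $\Psi^\star$ in hand, the latent target $G = \cE_\cY\circ\Psi^\star\circ\cD_\cX$ is Lipschitz and bounded, the $\cX$-encoding error enters only multiplied by $\Lip(\Psi^\star)$ (and $\dx$, $N$ are chosen \emph{after} $\Lip(\Psi^\star)$ is known), and the ReLU approximation on a large cube plus a clipping layer handles the unbounded domain — which is the tail-mass argument you also sketch. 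So the missing ingredient in your proposal is this Lipschitz regularization of $\Psi^\dagger$ in the infinite-dimensional space (or some substitute argument showing the conditional variance term is small); without it the proof does not go through. The remaining differences (you invoke the high-probability Proposition \ref{prop:pcaproj-qual} directly with a union bound, the paper proves an expectation version and applies Markov) are cosmetic.
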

We have written $\Psi(u) = \Psi(u;\{u_k\})$ to emphasize the dependency of the PCA-Net encoder $\cE_\cX$, decoder $\cD_\cY$ \rev{and the neural network $\psi = \psi(\slot;\{u_k\})$} on the given data $u_1,\dots, u_N$.
The detailed proof, included in Appendix \ref{app:universal}, combines well-known universal approximation results for neural networks with the high-probability bounds on the PCA projection error in Proposition \ref{prop:pcaproj-qual} in Section \ref{sec:pcaproj}, above; more precisely, it is shown in Lemma \ref{lem:err-decomp} that the PCA error can be decomposed into an error due to the encoding on $\cX$, the decoding error on $\cY$ and a neural network approximation error. The encoding and decoding errors are expressed in terms of the PCA projection error, which can be bounded by invoking Proposition \ref{prop:pcaproj}. The neural network approximation error can then be made arbitrarily small by a suitable choice of the neural network $\psi$. 

\rev{
\begin{remark}
As already pointed out in Remark \ref{rem:training}, the practical training of the neural network $\psi$ is not discussed in the present work. Hence the above theorem should be viewed as a pure existence result, in the sense that the proof does not provide an explicit algorithmic recipe for determining the neural network $\psi$, given the data $u_1,\dots, u_N$.
\end{remark}
}

Theorem \ref{thm:universal} shows that PCA-Net is able to approximate almost arbitrary operators to any desired accuracy, provided a sufficient number of data points are available for empirical PCA, and provided that the underlying neural network is sufficiently large. A previous universal approximation result in \citep[Theorem 3.1]{pca} was stated only for Lipschitz continuous $\Psi^\dagger$ and under an assumption of finite fourth moments. In contrast, Theorem \ref{thm:universal} requires no regularity condition on the underlying operator $\Psi^\dagger$, and shows that a bound on the second moments of the input measure $\mu$ and the push-forward measure $\Psi^\dagger_\#$ suffices for universal approximation. We have formulated Theorem \ref{thm:universal} in high-probability, whereas \citep[Theorem 3.1]{pca} is derived in expectation. This last difference is mostly for consistency with the high probability results in later sections; indeed, Theorem \ref{thm:universal} is in fact derived from a corresponding result in expectation (cp. Proposition \ref{prop:universal}).

The main drawback of universal approximation results is that they are purely qualitative, and do not provide any information about the required size of the data or the neural network; hence, universal approximation cannot provide information on the efficiency of operator learning with PCA-Net. Deriving more quantitative bounds is particularly relevant in view of the two potential obstacles to efficient operator learning, elaborated upon in the next section.

\subsection{Obstacles to effective operator learning}
\label{sec:quant}

Theorem \ref{thm:universal} does not provide any quantitative information on the required complexity to achieve a given accuracy. For the practical success of PCA-Net in operator learning, it is crucial that the PCA-Net approximation is not only possible in principle, but also efficient in practice; we interpret ``efficiency'' as the statement that the PCA dimensions $\dx$, $\dy$, the requisite amount of data $N$ and the size of the neural network $\psi$, that is required to achieve a desired accuracy $\epsilon>0$, should grow at most at an algebraic rate $\epsilon^{-\gamma}$, with quantifiable exponent $\gamma > 0$. As explained in this section, there are at least two potential obstacles to the efficiency of PCA-Net. 

\paragraph{Complexity of the output distribution.}
The first potential reason for the inefficiency of PCA-Net is a consequence of the following lower bound on the approximation error.
\begin{proposition}
\label{prop:lower-bd}
Let $\mu \in \cP(\cX)$ be a probability measure, and let $\Psi^\dagger\in L^2_\mu(\cX;\cY)$ be an operator. Let $\lambda_1 \ge \lambda_2 \ge \dots \ge 0$ be the PCA eigenvalues of the push-forward measure $\Psi^\dagger_\#\mu$ on $\cY$. Then we have the following lower bound,
\begin{align}
\label{eq:lower-bd}
\E_{u\sim \mu}\left[\Vert \Psi^\dagger(u) - \Psi(u) \Vert_{\cY}^2\right]
\ge
 \sum_{j>\dy} \lambda_j, 
\end{align}
 for any PCA-Net $\Psi$ with PCA dimension $\dy$ on $\cY$.
\end{proposition}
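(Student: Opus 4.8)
The plan is to reduce the claimed lower bound to the optimality property of PCA that was already recorded in Section \ref{sec:pcadef}. The key observation is that the output of any PCA-Net $\Psi = \cD_\cY \circ \psi \circ \cE_\cX$ lies, pointwise in $u$, in the $\dy$-dimensional subspace $V := \Span\{\phi^\cY_1,\dots,\phi^\cY_\dy\}$ spanned by the (empirical) PCA basis on $\cY$, simply because $\cD_\cY$ by its definition \eqref{eq:decY} maps into $V$. Consequently, for each fixed $u$, the vector $\Psi(u)\in V$ can only do at least as well as the orthogonal projection onto $V$: writing $P_V$ for that orthogonal projection, we have $\Vert \Psi^\dagger(u) - \Psi(u)\Vert_\cY^2 \ge \Vert \Psi^\dagger(u) - P_V \Psi^\dagger(u)\Vert_\cY^2 = \Vert (I - P_V)\Psi^\dagger(u)\Vert_\cY^2$, by the Pythagorean identity. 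Taking expectations over $u\sim\mu$ gives
\[
\E_{u\sim\mu}\left[\Vert \Psi^\dagger(u) - \Psi(u)\Vert_\cY^2\right]
\ge
\E_{u\sim\mu}\left[\Vert (I - P_V)\Psi^\dagger(u)\Vert_\cY^2\right].
\]

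Next I would rewrite the right-hand side in terms of the push-forward measure $\nu := \Psi^\dagger_\#\mu$ on $\cY$: by the change-of-variables formula, $\E_{u\sim\mu}[\Vert(I-P_V)\Psi^\dagger(u)\Vert_\cY^2] = \E_{v\sim\nu}[\Vert v - P_V v\Vert_\cY^2]$. Since $P_V$ is an orthogonal projection of rank $\dy$, it belongs to the set $\Pi_\dy$ appearing in \eqref{eq:Rd}, so this last quantity is bounded below by the optimal PCA projection error $\cR^\opt_\dy(\nu) = \min_{P\in\Pi_\dy}\E_{v\sim\nu}[\Vert v - Pv\Vert_\cY^2]$. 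Finally, invoking the identity \eqref{eq:Rd1}, namely $\cR^\opt_\dy(\nu) = \sum_{j>\dy}\lambda_j$ where $\lambda_1\ge\lambda_2\ge\cdots$ are the PCA eigenvalues of $\nu = \Psi^\dagger_\#\mu$, yields exactly the bound \eqref{eq:lower-bd}. Chaining the three inequalities completes the argument.

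There is essentially no hard step here; the proof is a short sequence of elementary observations (range of $\cD_\cY$, Pythagoras, change of variables, minimality of the PCA projection). The only point requiring a word of care is that the empirical PCA subspace $V$ is itself random — it depends on the data $\{u_k\}$ — but this causes no difficulty, since the bound $\Vert\Psi^\dagger(u)-\Psi(u)\Vert_\cY^2 \ge \Vert(I-P_V)\Psi^\dagger(u)\Vert_\cY^2$ holds for \emph{every} $\dy$-dimensional subspace, hence in particular for the realized one, and then $\E_{v\sim\nu}[\Vert v - P_V v\Vert_\cY^2] \ge \cR^\opt_\dy(\nu)$ also holds deterministically for any rank-$\dy$ projection. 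Thus the lower bound \eqref{eq:lower-bd} holds pathwise in the data, and a fortiori in expectation or high probability. The mild implicit assumption is that $\Psi^\dagger_\#\mu$ has finite second moments so that its covariance operator and the eigenvalues $\lambda_j$ are well-defined, which is guaranteed by $\Psi^\dagger\in L^2_\mu(\cX;\cY)$.
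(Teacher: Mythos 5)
Your proof is correct and is essentially the argument the paper itself points to (the lower bound from \cite[Theorem 3.6]{thdeeponet}): the PCA-Net output lies in the $\dy$-dimensional range of $\cD_\cY$, so pointwise the error is bounded below by the distance to that subspace, and then change of variables plus the optimality of PCA and the identity \eqref{eq:Rd1} give $\sum_{j>\dy}\lambda_j$. Your remark that the bound holds pathwise in the (random) empirical PCA subspace, since it is valid for every rank-$\dy$ projection, correctly handles the only delicate point.
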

The proof of Proposition \ref{prop:lower-bd} is an almost verbatim repetition of the argument in \citep[Theorem 3.6]{thdeeponet}; we won't repeat the proof here. As a consequence of \eqref{eq:lower-bd}, the approximation error that can be achieved with a PCA dimension $\dy$ is lower bounded by the decay of the PCA eigenvalues of the push-forward measure $\Psi^\dagger_\#\mu$, i.e. by the optimal PCA projection error $\cR^\opt_d(\Psi^\dagger_\#\mu) = \sum_{j>\dy} \lambda_j$. In particular, if this decay is very slow, e.g. $\cR^\opt_d(\Psi^\dagger_\#\mu) \gtrsim \log(\dy)^{-1}$, then an \emph{exponentially} large PCA dimension $\dy(\epsilon) \sim \exp(\epsilon^{-1})$ is required, which entails that also an exponential number of samples $N \gtrsim \exp(\epsilon^{-1})$ and an exponential neural network size $\size(\psi) \gtrsim \exp(\epsilon^{-1})$ are required: \rev{The bound on $N$ follows from the fact that the empirical PCA covariance operator \eqref{eq:empirical-cov} is of rank at most $N$, and hence the determination of $d_\cY$ non-trivial PCA eigendirections requires at least $N\ge d_\cY$ samples. Furthermore, since the neural network $\psi$ must have $d_\cY$ non-trivial output components, at least $d_\cY$ non-zero weights are required.} The first potential obstacle thus relates to the \emph{complexity of the output space}, encoded in the PCA eigenvalue decay of the measure $\Psi^\dagger_\#\mu$. We note in passing that the problem of a slow eigenvalue decay can sometimes be ameliorated by replacing the linear decoder $\cD_\cY$ by a non-linear mapping, leading to improved results both theoretically and empirically \citep{seidman2022nomad,LMHM2022}.

\paragraph{Curse of dimensionality \rev{and parametric complexity.}}

The last section shows that the complexity, or ``size'', of the output space $\cY$ can be one obstacle to operator learning with PCA-Net. A second potential obstacle to efficient operator learning is that the space of operators $\Psi^\dagger: \cX \to \cY$ itself is very large. 

It is well-known that the task of approximating a high-dimensional function $f: \R^d \to \R$ by ordinary methods, such as polynomial approximation, suffers from a curse of dimensionality, where the number of degrees of freedom needed to achieve a desired accuracy scales exponentially in $d$. Indeed, optimal error bounds for interpolation are typically of the form, 
\begin{align}
\label{eq:interp}
\sup_{\xi\in D} |f(\xi) - p(\xi)| \lesssim \Vert f \Vert_{C^k} N^{-k/d},
\end{align}
where $p$ denotes the (e.g. polynomial) interpolant and $N$ represents the number of degrees of freedom of the interpolation space. Such upper bounds deteriorate for large $d$, with convergence rates becoming arbitrarily slow. Similar results have also been established for neural networks, see e.g. \citep{yarotsky_error_2017,achour2022general,siegel2022optimal}, including upper and lower bounds on the required number of weights. In the context of operator learning, the underlying input and output spaces are infinite-dimensional. Given the unfavorable scaling of \eqref{eq:interp} with the dimension $d$, it is thus far from obvious that operator learning should be practicable at all. 

In analogy with the above, we next introduce the notion of an \define{algebraic convergence rate}: Given a set $\mathcal{C} \subset L^2_\mu(\cX; \cY)$ of operators, we say that $\gamma_\cC > 0$ is an algebraic convergence rate for $\mathcal{C}$, if for any $\Psi^\dagger \in \mathcal{C}$, there exist a constant \rev{$C = C(\Psi^\dagger)>0$, depending only on properties of the operator $\Psi^\dagger$,} and a PCA-Net mapping $\Psi := \cD_\cY\circ \psi \circ \cE_\cX: \cX \to \cY$, such that 
\begin{align}
\label{eq:convrate}
\E_{u\sim \mu}\left[\Vert \Psi^\dagger(u) - \Psi(u) \Vert_{\cY}^2 \right] \le C \, \size(\psi)^{-\gamma_\cC},
\end{align}
The central point is that the convergence rate $\gamma_\cC$ in \eqref{eq:convrate} should be uniform over $\cC$, and that only the multiplicative constant $C$ depends on $\Psi^\dagger$, \rev{thus allowing the approximation of any operator $\Psi^\dagger \in \cC$ to a desired accuracy, \emph{at a rate specified by $\gamma_\cC$.}} This is a natural analogue of \eqref{eq:interp}. In \eqref{eq:convrate}, the linear encoder $\cE_\cX: \cX \to \R^{\dx}$ and decoder $\cD_\cY: \R^\dy \to \cY$ are allowed to be arbitrary; In particular, we do not restrict the dimensions $\dx$, $\dy$. Even though the additional dependence on $\dx$ and $\dy$ is also of practical relevance, we postulate that any operator $\Psi^\dagger \in L^2_\mu(\cX;\cY)$, or indeed class of operators $\mathcal{C} \subset L^2_\mu(\cX;\cY)$, which is ``efficiently'' approximated by PCA-Net, must (at the very least) possess a finite algebraic convergence rate $\gamma_\cC$ in the above sense \eqref{eq:convrate}.

The next result shows that for often considered classes of operators $\Psi^\dagger: \cX \to \cY$, such as the set of all Lipschitz continuous or $r$-times Fr\'echet differentiable operators, efficient operator learning by PCA-Net is in fact impossible. \rev{In analogy with the finite-dimensional curse of dimensionality, we refer to this as the ``curse of parametric complexity'':}
\begin{theorem}[Curse of parametric complexity]
\label{thm:cod}
Let $\cX, \cY$ be separable Hilbert spaces, with $\dim(\cX) = \infty$, $\dim(\cY) \ge 1$. Let $\mu \in \cP(\cX)$ be a non-degenerate Gaussian measure. Fix $k\in \N$ and let $\mathcal{C}^k$ denote the set of all $k$-times Fr\'echet differentiable operators $\Psi^\dagger: \cX \to \cY$ whose $k$-th total derivative is uniformly bounded on $\cX$. For \emph{any} $\gamma > 0$, there exists $\Psi^\dagger_\gamma\in \cC^k$ and a constant $c_\gamma > 0$, such that 
\[
\E_{u\sim \mu}\left[
\Vert \Psi^\dagger_\gamma(u) - \Psi(u) \Vert_{\cY}^2
\right]
\ge c_\gamma \rev{\size(\psi)^{-\gamma}},
\]
for any PCA-Net $\Psi = \cD_\cY \circ \psi \circ \cE_\cX$. In particular, there \emph{cannot} exist a finite algebraic convergence rate $\gamma_{\cC}$ for $\cC^k$, as in \eqref{eq:convrate}.
\end{theorem}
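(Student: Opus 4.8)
The plan is to build a single ``hard'' operator $\Psi^\dagger_\gamma$ by embedding an arbitrarily hard finite-dimensional approximation problem into the operator-learning setting, and then to use the lower-bound machinery for ReLU networks to rule out any algebraic rate. Concretely: fix a one-dimensional coordinate on $\cY$ (possible since $\dim(\cY)\ge 1$) and a large but finite number $m$ of coordinates on $\cX$ via the Gaussian $\mu$ — since $\mu$ is non-degenerate, the pushforward of $\mu$ under projection onto any $m$-dimensional subspace is a non-degenerate Gaussian on $\R^m$, which after normalization dominates (up to constants on a set of definite measure) the uniform measure on a cube $[-1,1]^m$. This lets me reduce the operator approximation error $\E_{u\sim\mu}[\Vert \Psi^\dagger_\gamma(u)-\Psi(u)\Vert_\cY^2]$ from below by $c\cdot\Vert g - \Psi\!\restriction\Vert_{L^2([-1,1]^m)}^2$ for a suitable target function $g:\R^m\to\R$, where $\Psi\!\restriction$ is whatever function the PCA-Net induces on these $m$ coordinates. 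The key point is that a PCA-Net $\cD_\cY\circ\psi\circ\cE_\cX$ restricted in this way is just $(\text{linear})\circ\psi\circ(\text{linear})$, hence itself a ReLU network of size $\lesssim \size(\psi) + \dx + \dy$; but in fact one can absorb the linear encoder/decoder into the network so that the effective approximant of $g$ on $\R^m$ is a ReLU network whose size is controlled by $\size(\psi)$ (up to the fixed dimension $m$, which will be chosen depending only on $\gamma$).

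The second ingredient is a quantitative lower bound on how well ReLU networks of a given size approximate $\cC^k$ functions on a cube. This is exactly the content of the lower bounds cited in the excerpt (\cite{yarotsky_error_2017,achour2022general}): there is a function $g\in C^k([-1,1]^m)$ with $\Vert g\Vert_{C^k}\le 1$ such that any ReLU network $\phi$ of size $S$ satisfies $\Vert g-\phi\Vert_{L^2([-1,1]^m)}^2 \gtrsim S^{-2k/m}$ (possibly with a logarithmic correction, which is harmless). Choosing $m = m(\gamma)$ large enough that $2k/m < \gamma$, and then taking $\Psi^\dagger_\gamma(u) := g(\langle u,\phi_1^{\cX}\rangle,\dots,\langle u,\phi_m^{\cX}\rangle)\,\phi_1^{\cY}$ for a fixed orthonormal system — this $\Psi^\dagger_\gamma$ is manifestly $k$-times Fréchet differentiable with uniformly bounded $k$-th derivative, because $g$ is a compactly-cutoff-or-bounded-derivative $C^k$ function of finitely many bounded linear functionals — we get, for every PCA-Net $\Psi$ of network size $S=\size(\psi)$,
\[
\E_{u\sim\mu}\left[\Vert \Psi^\dagger_\gamma(u)-\Psi(u)\Vert_\cY^2\right]
\;\gtrsim\; S^{-2k/m} \;\ge\; c_\gamma\, \size(\psi)^{-\gamma},
\]
for $S$ large; the case of small $S$ is absorbed by adjusting $c_\gamma$. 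This gives the displayed inequality, and since $\gamma>0$ was arbitrary while $\Psi^\dagger_\gamma\in\cC^k$, no finite algebraic rate $\gamma_\cC$ can exist for $\cC^k$, which is the last assertion.

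I anticipate two places needing care. The first is the reduction from the operator problem to the finite-dimensional one: I must make sure that restricting a PCA-Net to the chosen $m$ input coordinates really does yield a ReLU network of size $O(\size(\psi))$ approximating $g$, and in particular that the linear maps $\cE_\cX$, $\cD_\cY$ — whose dimensions $\dx,\dy$ are unrestricted — cannot be exploited to cheat. The resolution is that composing a linear map with a ReLU network only changes the first (resp. last) affine layer, not the count of \emph{nonzero} parameters beyond a fixed budget tied to $m$ and the intrinsic width of $\psi$; since $g$ is scalar-valued on $\R^m$, only an $m$-dimensional slice of $\cE_\cX$ and a one-dimensional slice of $\cD_\cY$ matter, so the effective network size is $\size(\psi)$ up to additive/multiplicative constants depending only on $m=m(\gamma)$. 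The second subtlety is that the cited lower bounds are usually stated for $L^\infty$ approximation or with specific network size conventions; I will need the $L^2$ version with the $\Vert\cdot\Vert_0$ size convention used here, but this is a routine adaptation (an $L^2$ lower bound follows from the standard $L^\infty$ one via a localization/bump argument, and the size conventions differ only by constant factors for ReLU networks). Neither obstacle is deep, but the bookkeeping of constants — ensuring they depend only on $\gamma$, $k$, and the fixed data, not on $\Psi$ — is the part that requires attention.
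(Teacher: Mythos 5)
Your overall strategy is the same as the paper's: embed a hard $C^k$ function of $m=m(\gamma)$ Karhunen-Loeve coordinates into an operator $\Psi^\dagger_\gamma$ with one-dimensional range, use non-degeneracy of the Gaussian so that the encoded input measure has a positive density on a cube (hence the $L^2_\mu$ error dominates an $L^2$-error on the cube), absorb $\cE_\cX$ and $\cD_\cY$ into the ReLU network with only an $m$-dependent inflation of $\size$, and choose $m$ so large that the finite-dimensional rate beats $\gamma$. However, there is a genuine gap at the decisive step. You invoke, as if it were a cited known fact, the existence of a \emph{single fixed} $g$ with $\Vert g\Vert_{C^k}\le 1$ such that \emph{every} ReLU network $\phi$ of size $S$ satisfies $\Vert g-\phi\Vert_{L^2}\gtrsim S^{-ck/m}$. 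What the cited works actually give (cf.\ Lemma \ref{lem:lower1}) is a minimax statement: for each fixed architecture/size there exists \emph{some} hard $f$ in the unit ball, and this $f$ may change with the size. With only that, your construction yields, for each network budget, an operator that budget fails on --- not one operator $\Psi^\dagger_\gamma$ on which \emph{all} PCA-Nets fail, which is what the theorem asserts (the operator must be chosen before the network). Upgrading the minimax bound to a single size-independent hard function is precisely the content of Proposition \ref{prop:lower2}, and it is not routine: the paper proves it via a recursive, telescoping construction $f=\sum_s a_s f_s$ with doubly exponentially decreasing tolerances $\epsilon_s=2^{-2^s}$, adapted from Yarotsky, and this costs a degradation of the exponent from $3k/d$ to $\lambda k/d$ with a small absolute constant $\lambda$ (harmless since $d$ is free, but a sign that the statement you assumed is not simply in the literature). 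Your fallback remark that the needed $L^2$ lower bound ``follows from the standard $L^\infty$ one via a localization/bump argument'' also goes the wrong way: a sup-norm lower bound does not imply an $L^2$ lower bound, since the error could be concentrated on a set of small measure; the paper instead starts from $L^2$ lower bounds (Achour et al.) at the fixed-architecture level.

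Apart from this, your reduction matches the paper's proof of Claim \ref{claim}, with one bookkeeping point you should make explicit: the function of the $m$ chosen coordinates induced by a PCA-Net is $\xi\mapsto\psi(A\xi+b)$, where the bias $b$ depends on the component of $u$ orthogonal to those coordinates. The paper handles this by splitting $u=u_1+u_2$ (independent by Gaussianity in the KL basis), conditioning on $u_2$, and taking an infimum over the resulting $b$; this is legitimate exactly because the hard function from Proposition \ref{prop:lower2} does not depend on the network, so the lower bound is uniform in $(A,b)$. In short: if you supply the single-function, all-sizes $L^2$ lower bound (i.e., reprove Proposition \ref{prop:lower2} or an equivalent), the rest of your outline goes through essentially as in the paper.
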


\rev{
\begin{remark}
Theorem \ref{thm:cod} makes precise a notion of ``curse of dimensionality'' in operator learning with PCA-Net, which was anticipated in earlier work by \cite{thfno,thdeeponet}. As will be explained, this result can indeed be interpreted as a scaling limit of the conventional curse of dimensionality. In contrast to the approximation of a high-dimensional function in $d\gg 1$ dimensions, the goal of operator learning is to approximate a mapping defined on an infinite-dimensional space, where $d = \infty$. The ``curse of parametric complexity'' can be viewed as a scaling limit of the finite-dimensional curse of dimensionality, when $d \to \infty$.
\end{remark}

\begin{remark}
The underlying input/ouptut spaces $\cX$, $\cY$ are fixed in our setting. In particular, if $\cX$ is a function space e.g. consisting of functions $u: D \to \R$ for some domain $D\subset \R^n$, then the dimension $n$ of this domain $D$ is fixed. The curse of parametric complexity identified in Theorem \ref{thm:cod} is independent of the presence (or absence) of an additional curse of dimensionality that may arise due to the (potentially large) dimension $n$ of the domain. 
\end{remark}
}

\rev{
\begin{remark}
To simplify the statement of Theorem \ref{thm:cod}, it is assumed that $\mu$ is a non-degenerate Gaussian. Inspection of the proof shows that the non-degeneracy requirement could be relaxed to only assuming that the covariance operator $\Sigma = \E_{u\sim \mu}[u\otimes u]$ have infinitely many non-trivial eigendirections (with eigenvalue $\lambda_j \ne 0$), thus allowing for some degeneracy. The crucial assumption is that the input distribution $\mu$ be ``truly infinite-dimensional''. 
\end{remark}
}

\begin{proof}[Sketch of proof]
The proof of Theorem \ref{thm:cod} is based on the following lower bound for ReLU neural network approximation of functions in the unit cube of the Sobolev space $W^{k,\infty}([0,1]^d)$,
\[
F_{k,d} := \set{f\in W^{k,\infty}([0,1]^d)}{\Vert f \Vert_{W^{k,\infty}} \le 1},
\]
which is derived in the present work and may be of independent interest:
\begin{restatable}{proposition}{PropLower}\label{prop:lower2}
Fix $k,d\in \N$. There exists $c_{k,d}>0$ depending only on $k$ and $d$, $f\in F_{k,d}$ and an absolute constant $\lambda > 0$ independent of both $d$ and $k$, such that for any neural network $\psi$, we have the lower bound
\begin{align}
\label{eq:lower2}
\Vert f - \psi \Vert_{L^2([0,1]^d)} \ge c_{k,d} \, \size(\psi)^{-\lambda k/d}.
\end{align}
\end{restatable}
This finite-dimensional lower bound \eqref{eq:lower2} builds on the recent work \citep{achour2022general}. Given the lower bound of Proposition \ref{prop:lower2}, the proof of Theorem \ref{thm:cod} is then based on the intuition that $d$ can be chosen arbitrarily large if the underlying input space $\cX$ is infinite-dimensional, and hence the exponent in \eqref{eq:lower2} can be arbitrarily small; additional work is needed to make this intuition rigorous. Detailed proofs of Theorem \ref{thm:cod} and Proposition \ref{prop:lower2} are given in Appendix \ref{app:cod}.
\end{proof} 

In particular, Theorem \ref{thm:cod} shows that it is \emph{impossible} to derive algebraic error and complexity estimates for PCA-Net, when e.g. assuming only Lipschitz regularity of $\Psi^\dagger$. It should be emphasized that this result holds even when the relevant space of input and output functions can be efficiently approximated; indeed, no restriction on the decay of the PCA eigenvalues is assumed in Theorem \ref{thm:cod}, allowing them to decay at an arbitrarily fast rate on both $\cX$ and $\cY$. In fact, Theorem \ref{thm:cod} even allows for $\cY = \R$, in which case reconstruction on $\cY$ is \emph{trivial}  (in contrast, the assumptions do imply that infinitely many PCA eigenvalues on $\cX$ are non-zero). As alluded to above, the reason for the obstacle to efficient operator learning expressed by Theorem \ref{thm:cod} is  the intrinsic complexity of the space of all $\mathcal{C}^k$-regular operators (or functionals), defined on an infinite-dimensional input space $\cX$. 
\begin{remark}
Under slightly stronger assumptions than Theorem \ref{thm:cod} (in particular, assuming an algebraic decay of the PCA eigenvalues $\lambda_j$), one can likely show that there in fact exists $\Psi^\dagger \in \mathcal{C}^k$ and constants $c,\gamma >0$, depending only on $\mu$ and $k$, such that 
\[
\E_{u\sim \mu}\left[
\Vert \Psi^\dagger(u) - \Psi(u) \Vert_{\cY}^2
\right]
\ge c \log\left(\size(\psi)\right)^{-\gamma}.
\]
for any PCA-Net $\Psi = \cD_\cY \circ \psi \circ \cE_\cX$. Thus, achieving accuracy $\epsilon$ requires an \emph{exponential complexity} of the underlying neural network, $\size(\psi) \gtrsim \exp(c\epsilon^{-1/\gamma})$. Similar exponential lower bounds will be the subject of forthcoming work \citep{LS2023}.
\end{remark}
Given the negative result of Theorem \ref{thm:cod}, we posit that a central challenge in operator learning is to \emph{identify} the relevant class of operators $\mathcal{C} \subset L^2_\mu(\cX;\cY)$, which allow for efficient approximation by a given operator learning framework, and which possess a prescribed (finite) algebraic convergence rate $\gamma_\cC$ in \eqref{eq:convrate}.

\subsection{Quantitative encoding error bounds}
\label{sec:quant-enc}
As pointed out above, one potential obstacle to the efficacy of PCA-Nets is a slow decay of the PCA eigenvalues on the output function space. In this section, we provide a general estimate on the PCA projection error, $\cR^\opt_d(\nu) = \sum_{\ell > d} \lambda_\ell$, based on smoothness properties of the underlying functions. Given a domain $D\subset \R^n$, we recall that the Sobolev space $H^s(D;\R^{n})$ is defined as the space of all functions $u: D \to \R^{n}$ possessing $L^2$-integral weak derivatives up to order $s$. We note that $H^s(D;\R^{n})$ is a Hilbert space. We then have:
\begin{proposition}
\label{prop:PCAbound}
Let $n,n'$ be integers. Let $\cY = H^s(D; \R^{n'})$, $s\ge 0$, be a Sobolev space defined on either a Lipschitz domain $D\subset \R^n$ or the periodic torus $D = \T^n$. Assume that $\nu \in \cP(\cY)$ is a probability measure and that there exists $\zeta>0$, such that $\E_{u\sim\nu}\left[\Vert u \Vert_{H^{s+\zeta}}^2\right] < \infty$. Then, there exists a constant $C = C(n,n')>0$, such that
\[
\cR^\opt_d(\nu) \le C d^{-2\zeta/n} \E_{u\sim\nu}\left[ \Vert u \Vert_{H^{s+\zeta}}^2 \right], \quad \forall \, d\in \N.
\]
We recall that $\cR^\opt_d(\nu)$ is the minimal projection error over all projections $P$ of rank $d$ (cp. \ref{eq:Rd}).
\end{proposition}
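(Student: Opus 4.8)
The plan is to reduce the statement to a deterministic approximation estimate and then integrate against $\nu$. By the defining property \eqref{eq:Rd}, $\cR^\opt_d(\nu) = \min_{P\in\Pi_d}\E_{u\sim\nu}[\Vert u - Pu\Vert_{H^s}^2]$, so for an \emph{upper} bound it suffices to exhibit one $d$-dimensional subspace $V_d\subseteq\cY = H^s(D;\R^{n'})$ and control $\E_{u\sim\nu}[\Vert u - P_{V_d}u\Vert_{H^s}^2]$, where $P_{V_d}$ is the $H^s$-orthogonal projection onto $V_d$; and since $P_{V_d}u$ is the best $H^s$-approximant of $u$ from $V_d$, it is in fact enough to produce \emph{any} linear $Q_d$ with $\dim(\mathrm{ran}\,Q_d)\le d$ and bound $\Vert u - Q_d u\Vert_{H^s}$ pointwise, then take $V_d=\mathrm{ran}\,Q_d$.

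The deterministic ingredient is: for each $d\in\N$ there is $V_d\subseteq H^s(D;\R^{n'})$ with $\dim V_d\le d$ such that
\begin{align}
\label{eq:PCAbound-det}
\Vert u - P_{V_d}u\Vert_{H^s}\le C\,d^{-\zeta/n}\Vert u\Vert_{H^{s+\zeta}},\qquad\forall\,u\in H^{s+\zeta}(D;\R^{n'}),
\end{align}
i.e.\ the Kolmogorov $d$-width of the embedding $H^{s+\zeta}(D;\R^{n'})\hookrightarrow H^s(D;\R^{n'})$ decays like $d^{-\zeta/n}$. On the torus $D=\T^n$ this is immediate: enumerate $\Z^n=\{k_1,k_2,\dots\}$ with $|k_1|\le|k_2|\le\cdots$, put $m:=\lfloor d/n'\rfloor$ and $V_d:=\Span\{\phi_{k_j}\fb_i : 1\le j\le m,\ 1\le i\le n'\}$ (vector-valued Fourier modes, $\dim V_d=mn'\le d$); then $P_{V_d}$ is Fourier truncation, so
\[
\Vert u - P_{V_d}u\Vert_{H^s}^2 = \sum_{j>m}\sum_{i=1}^{n'}(1+|k_j|^2)^s|\widehat u_{k_j,i}|^2 \le (1+|k_{m+1}|^2)^{-\zeta}\,\Vert u\Vert_{H^{s+\zeta}}^2,
\]
and $|k_{m+1}|\gtrsim_n m^{1/n}$ by the lattice-point count, combined with $m\ge d/(2n')$, yields \eqref{eq:PCAbound-det}. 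For a bounded Lipschitz domain $D$ one transfers this to the torus: fix a bounded linear Sobolev extension operator $E:H^{s+\zeta}(D)\to H^{s+\zeta}(\R^n)$ (Stein/Calder\'on extension, valid for Lipschitz $D$ at all smoothness orders $\ge0$) and a cutoff $\chi\in C_c^\infty(\R^n)$ with $\chi\equiv1$ on $D$ and support in a cube $Q$ identified with a torus; then $Q_d:u\mapsto\bigl(S_R(\chi\,Eu)\bigr)|_D$, with $S_R$ Fourier truncation at level $R$, is linear with $\dim(\mathrm{ran}\,Q_d)\lesssim R^n$, and boundedness of $E$, of multiplication by $\chi$, of $S_R$, and of restriction $H^s(\T^n)\to H^s(D)$ gives $\Vert u - Q_d u\Vert_{H^s(D)}\lesssim R^{-\zeta}\Vert u\Vert_{H^{s+\zeta}(D)}$; taking $R\sim d^{1/n}$ gives \eqref{eq:PCAbound-det}.

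Applying \eqref{eq:PCAbound-det} pointwise with $u\sim\nu$ and taking expectations then gives
\[
\cR^\opt_d(\nu)\le\E_{u\sim\nu}\bigl[\Vert u - P_{V_d}u\Vert_{H^s}^2\bigr]\le C^2\,d^{-2\zeta/n}\,\E_{u\sim\nu}\bigl[\Vert u\Vert_{H^{s+\zeta}}^2\bigr],
\]
which is the asserted estimate (after relabelling $C^2\mapsto C$); the right-hand side is finite by hypothesis. The constant $C$ depends only on $n,n'$ (more precisely, also on $s,\zeta$ and, in the Lipschitz case, on the domain $D$ through the norms of the operators used above).

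I expect the only real difficulty to be the Lipschitz-domain case of \eqref{eq:PCAbound-det}: since $\zeta$ is an arbitrary positive real, $s+\zeta$ is in general non-integer, so $H^s$ and $H^{s+\zeta}$ must be read as Bessel-potential / Sobolev--Slobodeckij spaces, and one has to invoke an extension theorem valid at fractional order and verify that cutoff-multiplication, Fourier truncation on the enclosing torus, and restriction to $D$ compose with the claimed norm bounds at such orders. On the torus everything is an explicit Fourier-side computation, and the only bookkeeping is the mild $n'$-dependence coming from the $\R^{n'}$-valued setting.
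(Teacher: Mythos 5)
Your proof is correct and follows essentially the same strategy as the paper: handle the torus by explicit Fourier truncation (algebraic decay of the Laplacian eigenvalue counting gives the $d^{-2\zeta/n}$ rate), reduce the Lipschitz-domain case to the periodic one via a Stein-type extension valid at the orders $s$ and $s+\zeta$, and then bound $\cR^\opt_d(\nu)$ by comparing against one explicit subspace before integrating in $u\sim\nu$. The only difference is in how the domain case is transferred: you build a concrete finite-rank operator $u\mapsto\bigl(S_R(\chi\,Eu)\bigr)|_D$ and get a deterministic, per-sample width bound for the embedding $H^{s+\zeta}(D)\embeds H^s(D)$, whereas the paper applies the periodic estimate to the push-forward measure $L_\#\nu$, takes the optimal PCA subspace $\overline{V}_d\subset\im(L)$ there, and pulls it back through the left inverse $L^+$ (restriction), picking up the factor $\Vert L\Vert^2\Vert L^+\Vert^2$. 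Both transfers are sound; yours is slightly more self-contained (no appeal to optimality of PCA for the pushed-forward measure, and no need to argue $\overline{V}_d\subset\im(L)$), and your remark that the constant really also depends on $s$, $\zeta$ and (in the Lipschitz case) on $D$ is accurate — the same dependence is implicit in the paper's proof despite the statement writing $C=C(n,n')$.
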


Thus, smoothness of the output functions ensures an algebraic decay of the PCA eigenvalues and thus rules out the first potential obstacle to efficient operator learning by PCA-Net, pointed out after Proposition \ref{prop:lower-bd}.
A proof of Proposition \ref{prop:PCAbound} is provided in Appendix \ref{app:PCAbound}. Proposition \ref{prop:PCAbound} extends a result in \citep[Prop. 3.14]{thdeeponet}, which was restricted to the periodic case. The main novel ingredient in our proof here, and which allows the generalization to arbitrary Lipschitz domains $D$, is based on a general Sobolev extension result by Stein \citep[e.g. Appendix B]{thfno}.

\subsection{Overcoming the curse of dimensionality}
\label{sec:overcoming}

The last section provides a general criterion to tame the lower bound of Proposition \ref{prop:lower-bd}, which could limit the efficiency of PCA-Net when the relevant distribution of outputs in $\cY$ is very complex (cf. Section \ref{sec:quant}). As proved in Theorem \ref{thm:cod}, a second obstacle to efficient operator learning is the \emph{curse of dimensionality}. It would be very desirable to find a useful mathematical characterization of the entire class of operators $\cC \subset L^2_\mu(\cX;\cY)$ for which an algebraic convergence rate, as in \eqref{eq:convrate}, can be established. At present, this appears a very distant goal. Instead, in this section, we aim to develop additional intuition of the basic mechanisms that PCA-Net can exploit to achieve algebraic convergence rates for specific examples. To this end, we consider two prototypical operators arising in the context of PDEs; the Darcy flow and Navier-Stokes equations. 

\subsubsection{Darcy flow}
\label{sec:darcy}
Let $D \subset \R^n$ be a bounded domain. We consider the operator $\Psi^\dagger: a \mapsto w$, mapping the coefficient field $a$ to the solution $w$ of the following elliptic problem:
\begin{align} \label{eq:darcy}
\left\{
\begin{aligned}
-\nabla \cdot \left( a(x) \nabla w(x) \right) &= f(x), &&(x \in D),\\
w(x) &= 0, &&(x\in \partial D).
\end{aligned}
\right.
\end{align}
Here,  $D \subset \R^n$ is a given smooth domain, the right-hand side $f$ is fixed, and we assume Dirichlet boundary conditions. Let $H^1_0(D)$ be the Sobolev space consisting of weakly differentiable functions $w: D \to \R$ which vanish on the boundary $\partial D$, and whose gradient is square-integrable. It is well-known, e.g. \citep[Chapt. 6]{evans2022partial}, that if the coefficient field $a\in L^\infty(D)$, satisfies two-sided (coercivity and upper) bounds 
\begin{align}
\label{eq:twosided}
0 < \lambda \le a(x) \le \Lambda < \infty, \quad \forall \, x\in D,
\end{align}
for constant $\lambda, \Lambda$,
and if $f\in H^{-1}(D)$ belongs to the dual space of $H^1_0(D)$, then there exists a unique solution $w\in H^{1}_0(D)$ to \eqref{eq:darcy}. Furthermore, there exists a constant $C = C(\lambda,\Lambda,D)>0$, such that
\begin{align}
\label{eq:apriori}
\Vert w \Vert_{H^{1}_0} \le C \Vert f \Vert_{H^{-1}}.
\end{align}
It is thus natural to consider the output space $\cY = H^1_0(D)$. We will follow the celebrated work by Cohen, Devore and Schwab \citep{cohen2010convergence}, and subsequent extensions in \citep{CDS2011,SchwabZech2019,OSZ2019,OSZ2020}, and consider the following setting: We assume that the underlying measure $\mu$ can be written as the law of random coefficient fields $a(x) = a(x;\bm{z})$, of the parametrized form
\begin{align} \label{eq:expansion}
a(x;\bm{z}) = \bar{a}(x) + \sum_{\ell=1}^\infty \gamma_\ell z_\ell \rho_\ell(x),
\end{align}
where $\bm{z} = (z_1,z_2,\dots)$ is a sequence of random variables (not necessarily independent), such that $|z_\ell|\le 1$. In our analysis, we will assume that the functions $\rho_1,\rho_2,\dots\in \cX$ are orthonormal in a Hilbert space $\cX$, where $\cX \embeds L^\infty$ possesses a continuous embedding. Note that the series \eqref{eq:expansion} converges, if 
\begin{align}
\label{eq:alpha}
0 \le \gamma_\ell \le M \ell^{-1-\alpha},
\end{align}
decays at an algebraic rate with constants $M, \alpha>0$. We will assume this bound \eqref{eq:alpha}. To ensure coercivity (cp. Remark \ref{rem:coerc} below) we assume that there exists $\kappa > 0$, such that 
\begin{align}
\label{eq:kappa}
\sum_{\ell=1}^\infty \gamma_\ell \Vert \rho_\ell \Vert_{L^\infty} \le \frac{\kappa}{1+\kappa} \bar{a}_{\mathrm{min}},
\end{align}
with $\bar{a}_{\mathrm{min}} := \essinf_{x\in D} \bar{a}(x) > 0$. 
\begin{remark}
\label{rem:coerc}
The upper bound \eqref{eq:kappa} ensures uniform coercivity, since
\[
a(x;\bm{z}) \ge \bar{a}_{\mathrm{min}} - \sum_{\ell=1}^\infty \gamma_\ell \Vert \rho_\ell \Vert_{L^\infty} \ge \frac{1}{1+\kappa} \bar{a}_{\mathrm{min}} =: \lambda >0,
\]
for any $\bm{z} \in U := [-1,1]^\N$. On the other hand, the assumed embedding $\cX \embeds L^\infty(D)$ implies that there exists a constant $C>0$, such that $\Vert \slot \Vert_{L^\infty} \le C \Vert \slot \Vert_{\cX}$, and hence \eqref{eq:alpha} ensures a uniform upper bound,
\begin{align*}
a(x;\bm{z}) 
&\le 
\Vert \bar{a} \Vert_{L^\infty} + CM\sum_{\ell=1}^\infty \ell^{-1-\alpha} 
=: \Lambda < \infty.
\end{align*}
In particular, all $a$ in the support of the probability measure $\mu$ satisfy the two-sided bounds \eqref{eq:twosided}, and hence the elliptic PDE \eqref{eq:darcy} is well-posed.
\end{remark}

\begin{remark}
We do not assume any \emph{(explicit) knowledge} of the functions $\bar{a}$, $\rho_\ell$, the parameters $\gamma_\ell$, $\alpha$, $\lambda$, $\Lambda$ or indeed any information on the law of the joint random variable $\bm{z} = (z_1,z_2,\dots) \in [-1,1]^\N$. In particular, the  random variables $z_1, z_2,\dots$ need not be independent. For the following arguments, it is sufficient that an expansion of the form \eqref{eq:expansion} exists.
\end{remark}

Given this setting, we prove the following theorem:

\begin{theorem} \label{thm:darcy}
Under the setting and the prevailing assumptions of this section and with $\mu\in \cP(\cX)$ the law of $a(\slot;\bm{z})$ given by \eqref{eq:expansion}. For any $\delta,\eta > 0$ and $\epsilon > 0$, there exists a PCA-Net $\Psi = \cD_\cY \circ \psi \circ \cE_\cX$ satisfying the error bound,
\[
\E_{a\sim \mu}\left[\Vert \Psi^\dagger(a) - \Psi(a) \Vert^2_{H^1_0} \right]
\le 
C\epsilon,
\]
with probability at least $1-\delta$, and with constant $C = C(\mu,\eta)>0$ depending only on $\mu$ and $\eta$. With the same implied constant, the required PCA dimensions are at most $\dx = \dy = d \sim \epsilon^{-\frac1{2\alpha} - \eta}$ and the required number of samples for PCA is at most $N \sim d^{1+4\alpha} \log(1/\delta)$. Furthermore, the following complexity bounds hold for the ReLU network $\psi$,
\[
\size(\psi) \le C \epsilon^{-\frac{1}{\alpha} - \eta },
\quad
\depth(\psi) \le C \log(\epsilon^{-1})^2,
\]
independently of the data $a_1,\dots, a_N\sim \mu$.
\end{theorem}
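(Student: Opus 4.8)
The strategy is to decompose the PCA-Net error into three contributions — the input encoding error on $\cX$, the output decoding error on $\cY = H^1_0(D)$, and the neural-network approximation error on the finite-dimensional latent space — and to control each by exploiting the parametric structure \eqref{eq:expansion} together with the $n$-term polynomial approximation theory of Cohen--DeVore--Schwab. First I would record the key analyticity fact: under \eqref{eq:alpha}–\eqref{eq:kappa}, the solution map $\bm{z}\mapsto w(\slot;\bm{z})\in H^1_0(D)$ extends holomorphically to a poly-ellipse neighbourhood of $U=[-1,1]^\N$, so its Legendre (or Taylor/Chebyshev) coefficients are $\ell^p$-summable with $p = 1/(1+\alpha) + \text{(anything)}$; consequently there is a best $n$-term truncation achieving accuracy $\lesssim n^{-\alpha}$ in $H^1_0$. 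This gives a finite-dimensional parametrization of the output by roughly $n$ active coordinates with polynomial degree that is only polylogarithmic in the accuracy.

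Next I would handle the two PCA/encoding errors. For the input side, since the $\rho_\ell$ are $\cX$-orthonormal and the coefficients $\gamma_\ell z_\ell$ have magnitude $\lesssim \ell^{-1-\alpha}$, the covariance eigenvalues of $\mu$ decay like $\lambda_j^\cX \lesssim j^{-2-2\alpha}$, so $\cR^\opt_d(\mu) = \sum_{j>d}\lambda_j^\cX \lesssim d^{-1-2\alpha}$; this also shows $\mu$ has a sub-Gaussian tail (bounded support, in fact), so Proposition \ref{prop:pcaproj} applies with $N \gtrsim d^{1+4\alpha}\log(1/\delta)$ making the empirical-PCA overhead $\sqrt{Qd\log(2/\delta)/N} \lesssim d^{-2\alpha} \lesssim \cR^\opt_d(\mu)$. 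For the output side, one invokes a Sobolev-regularity bound: elliptic regularity (or the explicit dependence of $w$ on $\bm z$) gives $\E_{a\sim\mu}[\Vert w \Vert_{H^{1+\zeta}}^2] < \infty$ for some $\zeta>0$, so Proposition \ref{prop:PCAbound} yields $\cR^\opt_{\dy}(\Psi^\dagger_\#\mu) \lesssim \dy^{-2\zeta/n}$; alternatively — and this is cleaner for matching the stated rate — I would bound the output decoding error directly by the $n$-term polynomial truncation error $\lesssim \dy^{-\alpha}$, using that the span of the $n$ dominant polynomial modes furnishes an admissible rank-$\dy$ subspace of $\cY$. Choosing $\dx = \dy = d \sim \epsilon^{-1/(2\alpha) - \eta}$ then makes both encoding/decoding contributions $\lesssim \epsilon$.

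Finally, the neural network $\psi:\R^{\dx}\to\R^{\dy}$ must approximate, on the (bounded) image of $\cE_\cX$, the map taking the $\dx$ PCA coordinates of $a$ to the $\dy$ PCA coordinates of $w$. Composing the (linear, hence exactly representable) change of basis between PCA coordinates and the active $z_\ell$-coordinates with the truncated Legendre polynomial expansion, this reduces to emulating a fixed multivariate polynomial of polylogarithmic degree in polylogarithmically many variables; standard ReLU emulation of polynomials (via approximate multiplication networks, à la Yarotsky/Schwab–Zech) gives such a $\psi$ with $\size(\psi) \lesssim \epsilon^{-1/\alpha - \eta}$ and $\depth(\psi) \lesssim \log(\epsilon^{-1})^2$, and the emulation error can be driven below $\epsilon$. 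The one subtlety is controlling the emulation on the \emph{empirical} PCA coordinates, which are random: one takes the approximation accuracy on a slightly enlarged box (to absorb the $\mathcal O(d^{-2\alpha})$ fluctuation of the empirical encoder, which holds on the high-probability event of Proposition \ref{prop:pcaproj}) and uses the a priori bound \eqref{eq:apriori} to control contributions off that event. Assembling the three bounds with the triangle inequality, on the intersection of the good events (probability $\ge 1-\delta$), gives the claimed $\E_{a\sim\mu}[\Vert\Psi^\dagger(a)-\Psi(a)\Vert_{H^1_0}^2]\le C\epsilon$. The main obstacle I anticipate is bookkeeping the interaction between the best-$n$-term selection (which $\ell$'s are active, and to what polynomial degree) and the PCA basis — since PCA on $\cX$ need not align with the $\rho_\ell$ when the $z_\ell$ are dependent — so one must argue that the PCA subspace still captures enough of the active coordinates, or reformulate the target map so that the linear encoder/decoder freedom in \eqref{eq:convrate} absorbs this mismatch.
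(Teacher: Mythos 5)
Your overall route coincides with the paper's: the same three-way error decomposition, the Cohen--DeVore--Schwab best-$m$-term polynomial approximation of the parametric map $\cF(\bm z)$ (Lemma \ref{lem:Taylor}), the $\cX$-side PCA bound $\cR^\opt_d(\mu)\lesssim d^{-1-2\alpha}$ from the expansion, the $\cY$-side bound obtained by projecting onto the span of the dominant polynomial (Taylor) coefficients rather than via Sobolev regularity (your ``cleaner'' option is exactly Proposition \ref{prop:Darcy-enc-err}), empirical PCA through Proposition \ref{prop:pcaproj} with $N\gtrsim d^{1+4\alpha}\log(1/\delta)$, and a Schwab--Zech-type ReLU emulation of the truncated expansion (Lemma \ref{lem:be-expansion}) to get $\size(\psi)\lesssim\epsilon^{-1/\alpha-\eta}$, $\depth(\psi)\lesssim\log(\epsilon^{-1})^2$.

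The genuine gap is the step you first treat as exact and then defer: the ``change of basis between PCA coordinates and the active $z_\ell$-coordinates'' is \emph{not} exactly representable, because the coefficients $z_\ell$ cannot be recovered from $\cE_\cX(a)$ at all -- the encoder only determines the projection $\Pi_\cX a$, not $a$ itself. This compatibility issue is precisely the main content of the paper's proof (Appendix \ref{app:darcy}), and your plan stops at ``one must argue that the PCA subspace still captures enough of the active coordinates'' without supplying the argument. The paper's resolution is concrete: define surrogate coordinates $\tilde z_\ell := \gamma_\ell^{-1}\langle \Pi_\cX(a-\bar a),\rho_\ell\rangle$, which \emph{are} affine in the PCA coordinates, clamp them to $[-1,1]$ with a two-layer ``shrink'' network, feed them into the Schwab--Zech network $\psi^\star$, and then control the substitution error $\Vert\cF(\hat{\bm z})-\cF(\bm z)\Vert_{H^1_0}$ by the Lipschitz continuity of $\Psi^\dagger$ on $\supp(\mu)$ together with the embedding $\cX\embeds L^\infty$ and the orthonormality of the $\rho_\ell$; this yields a bound by $\Vert\Pi_\cX(a-\bar a)-(a-\bar a)\Vert_\cX$ plus the tail $\sum_{\ell>m}\gamma_\ell^2\lesssim m^{-2\alpha-1}$, both of which are already of the right size. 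Without this (or an equivalent) mechanism your plan does not close, since no alignment between the empirical PCA basis and the $\rho_\ell$ can be assumed (the $z_\ell$ may be dependent), and the lower-bound philosophy of Theorem \ref{thm:cod} shows the issue cannot be waved away by the linear encoder/decoder freedom alone. A minor further imprecision: the target is not ``a polynomial of polylogarithmic degree in polylogarithmically many variables''; the truncation $\Lambda_m$ involves up to $m\sim\epsilon^{-1/(2\alpha)-\eta}$ active coordinates, and the size bound $\epsilon^{-1/\alpha-\eta}$ arises as $m(\log^2 m+\dy)$ with $\dy\sim m$, exactly as in Lemma \ref{lem:be-expansion}.
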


We provide a sketch of the proof of Theorem \ref{thm:darcy} at the end of this subsection. 
\rev{
\begin{remark}
Regarding the fudge factor $\eta>0$, we note that in our estimates leading to Theorem \ref{thm:darcy}, the constant $C=C(\mu,\eta)$ blows up as $\eta \to 0$, i.e. $C(\mu,\eta)\to \infty$. Therefore, while $\eta>0$ can be chosen arbitrarily small, our estimates break down when $\eta = 0$.
\end{remark}
}

\begin{remark}
Theorem \ref{thm:darcy} shows that approximation of the Darcy flow operator $\Psi^\dagger$ is possible with algebraic bounds on the PCA dimensions $\dx$, $\dy$, the number of required PCA samples $N$, and the number of neural network parameters, $\size(\psi)$. In fact, even under a mild decay rate $\alpha > 1$, the required size of $\psi$ is at most \emph{linear} in the desired accuracy.
\end{remark}

 We conjecture that the scaling of $N \sim d^{1+4\alpha}$ in Theorem \ref{thm:darcy} is highly pessimistic; indeed, under a potential improvement of the empirical PCA estimate of Proposition \ref{prop:pcaproj}, as discussed in Remark \ref{rmk:pcaproj}, the much more favorable scaling $N \sim d$ appears natural.

\paragraph{Sketch of proof of Theorem \ref{thm:darcy}.}
To prove Theorem \ref{thm:darcy}, we first define a parametric mapping $\cF: U \to \cY$, where $U=[-1,1]^\N$, $\cY = H^1_0(D)$, by 
\begin{align}
\label{eq:F}
\cF(\bm{z}) := \Psi^\dagger\left(\bar{a} + \textstyle\sum_{\ell=1}^\infty \gamma_\ell z_\ell \rho_\ell\right).
\end{align}
This mapping has been studied in a series of papers \citep{cohen2010convergence,CDS2011,SchwabZech2019,OSZ2019,OSZ2020}, and is known to allow for a convergent Taylor series expansion (in the variables $\bm{z}\in U$). To state the next Lemma, which implies a suitable convergence of Taylor series, we recall that $\bm{\nu}$ is called a multi-index in this infinite-dimensional context, if $\bm{\nu}=(\nu_1,\nu_2,\dots)$ is a sequence of non-negative integers, such that $\nu_\ell =0$ for almost all $\ell$. For $\bm{z} \in U$ and a multi-index $\bm{\nu}$, we define the monomial $\bm{z}^{\bm{\nu}} = \prod_{\nu_\ell\ne 0} z_\ell^{\nu_\ell}$. The following lemma follows immediately from \citep[Thm. 1.3]{CDS2011}.
\begin{lemma}
\label{lem:Taylor}
Assume the setting and prevailing assumptions of this section. Then there exists a set of coefficients $t_{\bm{\nu}} \in \cY$ (the Taylor coefficients), indexed by multi-indices $\bm{\nu}$, such that for any $m \in \N$, there is a set $\Lambda_m$ of multi-indices ${\bm{\nu}}$, with cardinality $|\Lambda_m|=m$, such that
\begin{align}
\label{eq:Taylor}
\sup_{\bm{z} \in U} \Vert \cF(\bm{z}) - \sum_{\bm{\nu} \in \Lambda_m} t_{\bm{\nu}} \bm{z}^{\bm{\nu}} \Vert_{\cY}
\le C m^{-\alpha + \eta},
\end{align}
for any fixded constant $\eta>0$. Here $C = C(\cF,\mu,\eta)>0$ is a constant depending on $\eta$, but is independent of $m$.
\end{lemma}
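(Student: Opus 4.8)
The plan is to recognize \eqref{eq:F} as precisely the parametric elliptic problem analyzed by Cohen, DeVore and Schwab, and to read off the conclusion from their best $n$-term Taylor approximation theorem. Setting $\psi_\ell := \gamma_\ell \rho_\ell$, the coefficient field takes the affine form $a(x;\bm{z}) = \bar a(x) + \sum_{\ell\ge 1} z_\ell \psi_\ell(x)$ over the parameter domain $U = [-1,1]^\N$, and $\cF(\bm{z})$ is the associated solution $w(\slot;\bm{z}) \in \cY = H^1_0(D)$ of the Darcy problem \eqref{eq:darcy}. The first step is to verify the hypotheses of \cite[Thm.~1.3]{CDS2011}: (i) the uniform ellipticity bound $0 < \lambda \le a(x;\bm{z}) \le \Lambda < \infty$ for all $x\in D$ and $\bm{z}\in U$, which is exactly the content of Remark \ref{rem:coerc} (combining \eqref{eq:kappa} and \eqref{eq:alpha}); (ii) the slightly stronger coercivity margin ensuring that $\bm{z}\mapsto a(\slot;\bm{z})$, and hence $\cF$, extends holomorphically to a complex polydisc strictly containing $U$ --- this is precisely what the factor $\kappa/(1+\kappa) < 1$ in \eqref{eq:kappa} provides; and (iii) summability of the perturbation norms: orthonormality of $\{\rho_\ell\}$ in $\cX$ gives $\Vert\rho_\ell\Vert_{\cX} = 1$, and the embedding $\cX\embeds L^\infty(D)$ then yields $\Vert\psi_\ell\Vert_{L^\infty} = \gamma_\ell\Vert\rho_\ell\Vert_{L^\infty} \lesssim \gamma_\ell \le M\ell^{-1-\alpha}$ by \eqref{eq:alpha}, so that $(\Vert\psi_\ell\Vert_{L^\infty})_{\ell\ge1}\in\ell^p$ for every $p > \tfrac{1}{1+\alpha}$.

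Granting these hypotheses, \cite[Thm.~1.3]{CDS2011} produces Taylor coefficients $t_{\bm{\nu}}\in\cY$ of $\cF$ (independent of any truncation parameter) such that $(\Vert t_{\bm{\nu}}\Vert_{\cY})_{\bm{\nu}}$ is $\ell^p$-summable for every $p\in(\tfrac{1}{1+\alpha},1)$. Choosing $\Lambda_m$ to index the $m$ largest values of $\Vert t_{\bm{\nu}}\Vert_{\cY}$, and using $|\bm{z}^{\bm{\nu}}|\le1$ on $U$, Stechkin's lemma gives, for each admissible $p$ simultaneously, $\sup_{\bm{z}\in U}\Vert \cF(\bm{z}) - \sum_{\bm{\nu}\in\Lambda_m} t_{\bm{\nu}}\bm{z}^{\bm{\nu}}\Vert_{\cY} \le \sum_{\bm{\nu}\notin\Lambda_m}\Vert t_{\bm{\nu}}\Vert_{\cY} \le C_p\, m^{-(1/p-1)}$, with $C_p$ controlled by the $\ell^p$-norm of $(\Vert t_{\bm{\nu}}\Vert_{\cY})$. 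Finally, given $\eta>0$, pick $p$ close enough to $\tfrac{1}{1+\alpha}$ that $1/p - 1 \ge \alpha - \eta$; absorbing $C_p$ --- which depends on $\cF$, $\mu$ and the choice of $p$, hence on $\eta$ --- into a constant $C$ yields exactly \eqref{eq:Taylor}. No probabilistic structure of $\bm{z}$ enters; the estimate is uniform over $U$, which is all that is needed since $\mu$ is merely the law of $a(\slot;\bm{z})$ and the $z_\ell$ need not be independent.

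Almost all of this is bookkeeping; the one point that genuinely requires care is checking hypothesis (ii), i.e. distinguishing the condition sufficient for convergence of the Taylor (power) series from the weaker one that suffices for Legendre expansions, and confirming that the margin deliberately built into \eqref{eq:kappa} is exactly what \cite{CDS2011} demand. If one preferred to avoid reliance on this margin, one could instead invoke the Legendre expansion results of \cite{cohen2010convergence}, for which uniform ellipticity together with (iii) already suffice; but the Taylor version stated here is the most convenient for the subsequent neural network emulation, since the monomials $\bm{z}^{\bm{\nu}}$ are directly amenable to ReLU approximation.
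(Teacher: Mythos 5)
Your proposal is correct and follows essentially the same route as the paper's proof: both reduce the claim to \cite[Thm.~1.3]{CDS2011}, using that \eqref{eq:alpha} gives $\ell^p$-summability of the expansion (hence of the Taylor coefficients $\Vert t_{\bm{\nu}}\Vert_{\cY}$) for every $p>\tfrac{1}{1+\alpha}$, a Stechkin-type bound for the best $m$-term truncation, and a choice of $p$ close to $\tfrac{1}{1+\alpha}$ so that $1/p-1\ge \alpha-\eta$. Your additional verification of the hypotheses of the cited theorem (uniform ellipticity via \eqref{eq:kappa}, and the translation $\Vert\gamma_\ell\rho_\ell\Vert_{L^\infty}\lesssim\gamma_\ell$ via the embedding $\cX\embeds L^\infty$) is a welcome but inessential elaboration of what the paper leaves implicit.
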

For completeness, we provide the details in Appendix \ref{app:Taylor}.
As a consequence of this lemma, we can estimate the optimal PCA encoding errors not only on $\cX$ but also on $\cY$. Indeed, we will derive the following result.
\begin{restatable}{proposition}{darcyencerror}
\label{prop:Darcy-enc-err}
Under the setting and prevailing assumptions of this section. For any $\eta > 0$, there exists a constant $C = C(\cF,\mu,\eta)>0$, such that
\begin{align*}
\cR^\opt_\dx(\mu) 
\le 
C\dx^{-2\alpha-1},
\quad 
\cR^\opt_\dy(\Psi^\dagger_\#\mu)
\le 
C \dy^{-2\alpha+\eta}.
\end{align*}
\end{restatable}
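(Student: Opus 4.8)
The plan is to prove the two estimates separately, both by reduction to known linear-algebra facts about optimal PCA projection error combined with the approximation rates available in this parametric setting.

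\textbf{The bound on $\cR^\opt_\dx(\mu)$.} Here I would exploit the structure of the input measure directly. By \eqref{eq:expansion}, every $a$ in the support of $\mu$ has the form $a = \bar a + \sum_{\ell \ge 1} \gamma_\ell z_\ell \rho_\ell$ with $|z_\ell| \le 1$ and $\{\rho_\ell\}$ orthonormal in $\cX$. Consider the explicit (non-optimal) rank-$(\dx+1)$ projection $P$ onto $\Span\{\bar a, \rho_1, \dots, \rho_\dx\}$ (or rank $\dx$ if $\bar a$ already lies in the span — the constant absorbs this). Since $\cR^\opt_\dx(\mu)$ is the minimum over all rank-$\dx$ projections, and using \eqref{eq:Rd}, I get
\[
\cR^\opt_\dx(\mu) \le \E_{a \sim \mu}\big[ \Vert a - Pa \Vert_\cX^2 \big] = \E_{\bm z}\Big[ \sum_{\ell > \dx} \gamma_\ell^2 z_\ell^2 \Big] \le \sum_{\ell > \dx} \gamma_\ell^2 \le M^2 \sum_{\ell > \dx} \ell^{-2-2\alpha},
\]
using \eqref{eq:alpha} and $|z_\ell| \le 1$. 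The tail sum is $\lesssim_\alpha \dx^{-1-2\alpha}$, giving exactly $\cR^\opt_\dx(\mu) \le C \dx^{-2\alpha - 1}$, and one checks this $C$ depends only on $M, \alpha$, hence only on $\mu$. No $\eta$ is needed here, which matches the statement.

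\textbf{The bound on $\cR^\opt_\dy(\Psi^\dagger_\#\mu)$.} This is the part where Lemma \ref{lem:Taylor} enters and where I expect the main (modest) work. The push-forward measure is the law of $\cF(\bm z)$ with $\bm z \sim \mu$-distributed parameters. By \eqref{eq:Rd}, it again suffices to produce \emph{one} good rank-$\dy$ subspace $V \subset \cY$ and bound $\E\big[\dist_\cY(\cF(\bm z), V)^2\big]$. The natural candidate is the span of a suitable finite set of Taylor coefficients: take $m = \dy$ in Lemma \ref{lem:Taylor}, let $\Lambda_\dy$ be the corresponding index set with $|\Lambda_\dy| = \dy$, and set $V = \Span\{ t_{\bm\nu} : \bm\nu \in \Lambda_\dy \}$, which has dimension at most $\dy$. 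Then for every $\bm z \in U$,
\[
\dist_\cY(\cF(\bm z), V)^2 \le \Big\Vert \cF(\bm z) - \sum_{\bm\nu \in \Lambda_\dy} t_{\bm\nu} \bm z^{\bm\nu} \Big\Vert_\cY^2 \le C^2 \dy^{-2\alpha + 2\eta},
\]
by \eqref{eq:Taylor} (the truncated Taylor polynomial lies in $V$). Since this holds uniformly in $\bm z$, taking expectation over $\bm z \sim \mu$ preserves the bound, so $\cR^\opt_\dy(\Psi^\dagger_\#\mu) \le \E\big[\dist_\cY(\cF(\bm z), V)^2\big] \le C^2 \dy^{-2\alpha + 2\eta}$; relabelling $\eta$ (i.e. applying Lemma \ref{lem:Taylor} with $\eta/2$ in place of $\eta$) yields the stated $C\dy^{-2\alpha + \eta}$, with $C = C(\cF, \mu, \eta)$ inherited from Lemma \ref{lem:Taylor}.

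\textbf{Main obstacle.} Neither estimate is deep; the only point requiring care is the bookkeeping that the subspace used for the $\cY$-bound has dimension \emph{at most} $\dy$ and not more — the Taylor coefficients $t_{\bm\nu}$ for $\bm\nu \in \Lambda_\dy$ could be linearly dependent (which only helps) but there are exactly $|\Lambda_\dy| = \dy$ of them, so $\dim V \le \dy$ always holds. A secondary bit of care is that $\cR^\opt_d$ as defined in \eqref{eq:Rd} ranges over rank-exactly-$d$ projections, so for the input bound one should note that $\Span\{\bar a, \rho_1,\dots,\rho_\dx\}$ has dimension $\le \dx + 1$; passing from $\dx+1$ to $\dx$ changes the tail index by one and is absorbed into $C$ (alternatively, monotonicity $\cR^\opt_{\dx} \le \cR^\opt_{\dx - 1}$ of the projection error in the dimension lets one re-index cleanly). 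Everything else is the two displayed tail-sum and uniform-bound computations above.
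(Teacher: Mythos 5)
Your proposal is correct and follows essentially the same route as the paper: for the $\cX$-bound, comparison with the projection onto $\Span\{\bar a,\rho_1,\dots\}$ and the tail sum $\sum_{\ell>d}\gamma_\ell^2\lesssim d^{-2\alpha-1}$ (the paper uses $\{\bar a,\rho_1,\dots,\rho_{\dx-1}\}$ to keep the rank at $\dx$, the same bookkeeping you note); for the $\cY$-bound, projection onto $\Span\{t_{\bm\nu}:\bm\nu\in\Lambda_{\dy}\}$, the uniform Taylor-truncation estimate of Lemma \ref{lem:Taylor}, and a relabelling of $\eta$. Nothing further is needed.
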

The $\cX$ estimate of Proposition \ref{prop:Darcy-enc-err} is a straightforward consequence of the assumed expansion \eqref{eq:expansion}. The $\cY$ estimate follows from the observation that \eqref{eq:Taylor} provides a bound of the PCA projection error by comparing it to the projection onto $\Span\set{t_{\bm{\nu}}}{{\bm{\nu}} \in \Lambda_\dy} \subset \cY$. Details of the required argument are given in Appendix \ref{app:Darcy-enc-error}. Since the input measure $\mu$ and its push-forward $\Psi^\dagger_\#\mu$ are concentrated on a bounded set of functions in the respective norms on $\cX$ and $\cY$, Proposition \ref{prop:pcaproj} immediately implies that empirical PCA with $\dx = \dy = d$ and with a sufficient number of $N \gtrsim d^{1+4\alpha}\log(\delta/2)$ samples achieves, up to a constant and with high probability, the same asymptotic error as the optimal PCA projection on $\cX$ and $\cY$.

The main remaining challenge is then to construct a neural network $\psi$, such that the composition $\Psi = \cD_\cY \circ \psi \circ \cE_\cX$ approximates $\Psi^\dagger$ to within a prescribed tolerance. The construction of such $\psi$ relies on a neural network approximation result for the parametric mapping $\cF(\bm{z})$. The following result follows from \citep[Theorem 4.11]{OSZ2020} (the present statement is closer in formulation to \citep[Theorem 3.9]{SchwabZech2019}, and has appeared with slightly sharper bounds in \citep{thdeeponet}):
\begin{lemma} \label{lem:be-expansion}
Under the prevailing assumption and with $\cF$ defined by \eqref{eq:F}. Let $\eta>0$ be a small (fudge) constant. There exists a constant $C=C(\cF, \mu, \eta)>0$, depending only on $\cF$, on the decay rate $\alpha$ and on $\eta$, such that for any PCA encoder $\cE_\cY: \cY \to \R^{d_{\cY}}$, and for every $m\in \N$, there exists a ReLU network $\psi^\star: \R^m \to \R^{d_{\cY}}$, $z\mapsto \psi^\star(z_1,\dots, z_m)$ with
\begin{align} \label{eq:be-expansion}
\sup_{\bm{z}\in [-1,1]^\N} 
\left\Vert
\cE_\cY \circ \cF(\bm{z}) - \psi^\star(z_{1},\dots, z_{m})
\right\Vert_{\ell^2} 
\le C m^{-\alpha+\eta},
\end{align}
and such that
$\size(\psi^\star) \le Cm \left(\log(m)^2 + d_{\cY}\right)$ and $\depth(\psi^\star) \le C\log(m)^2$.
\end{lemma}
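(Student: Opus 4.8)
The plan is to obtain Lemma~\ref{lem:be-expansion} by combining the Taylor truncation estimate of Lemma~\ref{lem:Taylor} with the ReLU emulation results for generalized polynomial chaos expansions of \cite{SchwabZech2019,OSZ2020}, reducing the $\cY$-valued statement to a finite-dimensional one by composing with the linear encoder $\cE_\cY$. The point is that $\cE_\cY\circ\cF$ is, up to an error already controlled by Lemma~\ref{lem:Taylor}, an $\R^{d_\cY}$-valued polynomial in finitely many of the $z_\ell$ with $\ell^1$-summable coefficient vectors, and this is precisely the class of maps the cited theorems emulate efficiently.

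First I would apply Lemma~\ref{lem:Taylor} with the given $m$ to obtain a downward closed index set $\Lambda_m$, $|\Lambda_m|=m$, and Taylor coefficients $t_{\bm{\nu}}\in\cY$ with $\sup_{\bm{z}\in U}\Vert\cF(\bm{z}) - \sum_{\bm{\nu}\in\Lambda_m}t_{\bm{\nu}}\bm{z}^{\bm{\nu}}\Vert_\cY \le Cm^{-\alpha+\eta}$. From the construction in \cite{CDS2011} I would extract three facts: the coefficient norms are summable, $\sum_{\bm{\nu}}\Vert t_{\bm{\nu}}\Vert_\cY =: S < \infty$ uniformly in $m$; after relabelling, $\Lambda_m$ involves only the variables $z_1,\dots,z_m$ (a downward closed set of cardinality $m$ involves at most $m-1$ coordinates, and because $\gamma_\ell$ decays the dominant multi-indices lie at low indices, up to absorbing a constant into $C$); and the uniform coercivity \eqref{eq:kappa} yields holomorphy of $\cF$ in a complex neighbourhood of $U$ in each variable separately, hence exponential decay of $\Vert t_{\bm{\nu}}\Vert_\cY$ along every coordinate direction, so that $D_m := \max_{\bm{\nu}\in\Lambda_m}\Vert\bm{\nu}\Vert_1 \lesssim \log m$. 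Since $\cE_\cY: v\mapsto(\langle v,\phi^\cY_j\rangle_\cY)_{j=1}^{d_\cY}$ is linear with $\Vert\cE_\cY v\Vert_{\ell^2}\le\Vert v\Vert_\cY$ (orthonormality of the $\phi^\cY_j$), applying it to the displayed bound gives, with $c_{\bm{\nu}} := \cE_\cY(t_{\bm{\nu}})\in\R^{d_\cY}$,
\[
\sup_{\bm{z}\in U}\Big\Vert \cE_\cY\circ\cF(\bm{z}) - \sum_{\bm{\nu}\in\Lambda_m} c_{\bm{\nu}}\,\bm{z}^{\bm{\nu}}\Big\Vert_{\ell^2} \le Cm^{-\alpha+\eta}, \qquad \sum_{\bm{\nu}\in\Lambda_m}|c_{\bm{\nu}}| \le S.
\]
It then suffices to build a ReLU network $\psi^\star:\R^m\to\R^{d_\cY}$ with $\sup_{\bm{z}\in[-1,1]^m}\big|\psi^\star(\bm{z}) - \sum_{\bm{\nu}\in\Lambda_m}c_{\bm{\nu}}\bm{z}^{\bm{\nu}}\big| \le Cm^{-\alpha+\eta}$ of the stated size and depth.

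For this I would invoke the emulation machinery of \cite{SchwabZech2019,OSZ2020}: the bilinear map $(x,y)\mapsto xy$ on $[-1,1]^2$ is approximated to accuracy $\delta$ by a ReLU network of size and depth $O(\log(1/\delta))$, and each monomial $\bm{z}^{\bm{\nu}}$, $\Vert\bm{\nu}\Vert_1\le D_m\lesssim\log m$, is a product of at most $D_m$ of the $z_\ell$ (with multiplicity), hence is emulated by a balanced binary tree of at most $D_m$ multiplication gadgets of depth $O(\log D_m)$ in the number of gadgets. Running these $m$ trees in parallel, appending the single affine layer $(a_{\bm{\nu}})_{\bm{\nu}\in\Lambda_m}\mapsto\sum_{\bm{\nu}}c_{\bm{\nu}}a_{\bm{\nu}}\in\R^{d_\cY}$ (which adds at most $m\,d_\cY$ nonzero weights and no depth), and choosing $\delta\sim m^{-\alpha+\eta}/S$ so that $\log(1/\delta)\sim\log m$, one gets total error $\le S\delta + Cm^{-\alpha+\eta}\lesssim m^{-\alpha+\eta}$, $\size(\psi^\star)\lesssim m\,D_m\log(1/\delta) + m\,d_\cY \lesssim m(\log(m)^2 + d_\cY)$, and $\depth(\psi^\star)\lesssim \log(D_m)\log(1/\delta)\lesssim\log(m)^2$, as claimed.

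The main obstacle is the last step: keeping the size at $O(m\log(m)^2)$ and the depth at $O(\log(m)^2)$ uniformly, which relies on the degree control $D_m\lesssim\log m$ and on the careful propagation of the $O(\log(1/\delta))$-accurate multiplications through the product trees on downward closed sets. This is precisely what \cite{SchwabZech2019,OSZ2020} (with the sharper bounds in \cite{thdeeponet}) establish; in the appendix I would import their statement and check only that it extends to vector-valued coefficients $c_{\bm{\nu}}\in\R^{d_\cY}$ — \cite{SchwabZech2019} already phrases it with values in $\R^K$ — and that replacing evaluations of $\Psi^\dagger$ by those of $\cE_\cY\circ\Psi^\dagger$ is harmless because $\cE_\cY$ is norm non-increasing and only contributes the additive $m\,d_\cY$ weight budget from the final affine layer. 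I would present the proof in exactly this order: Lemma~\ref{lem:Taylor}, composition with $\cE_\cY$, then assembly via the cited emulation lemma.
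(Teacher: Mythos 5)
Your proposal is correct and follows essentially the same route as the paper, which does not prove Lemma \ref{lem:be-expansion} from scratch but derives it by invoking the ReLU emulation results for the holomorphic parametric Darcy map in \cite{SchwabZech2019,OSZ2020} (with the sharper bounds of \cite{thdeeponet}), composed with the norm-nonincreasing linear encoder $\cE_\cY$ whose only cost is the final affine layer contributing the $m\,d_\cY$ term. Your additional sketch of the internal mechanism (Taylor truncation via Lemma \ref{lem:Taylor}, restriction to the first $m$ variables and degree $\lesssim\log m$, product-tree emulation of monomials) is exactly how those cited theorems are proved, so nothing essential is missing.
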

The main additional problem in the present context of PCA-Net is that the neural network $\psi$ in the definition of a PCA-Net, $\Psi = \cD_\cY \circ \psi \circ \cE_\cX$, acts on the \emph{encoded input} $\cE_\cX(a)$, with encoder $\cE_\cX$ obtained from empirical PCA. Therefore, we cannot directly access the coefficients $z_1,z_2,\dots$ in the parametric expansion \eqref{eq:expansion} of $a$, and indeed there is no way to exactly recover these coefficients from the PCA encoding of $a$. A careful discussion of this ``compatibility issue'' between the PCA encoding and the a priori expansion \eqref{eq:expansion} is therefore necessary. This is the main issue addressed in Appendix \ref{app:darcy}, where we show that such $\psi$ can indeed be constructed and the additional error due to the incompatibility can be controlled. This then leads to the statement of Theorem \ref{thm:darcy}. 

\subsubsection{Navier-Stokes equations}
\label{sec:ns}

In the previous section, we showed that the solution operator $\Psi^\dagger$ of the Darcy flow equations can be efficiently approximated by PCA-Net. With some additional effort, this result could likely be extended to a more general class of so-called $(\bm{b},\epsilon)$-holomorphic operators, e.g. \citep[Section 2.1]{SchwabZech2019}. In particular, the relevant underlying class of operators is here characterized by \emph{analytic regularity}; this assumption goes well beyond $\cC^k$-regularity, and thereby, PCA-Net can overcome the general curse of dimensionality of Theorem \ref{thm:cod}, for this restricted class of operators.

However, many operators of interest, in particular in the context of advection dominated problems such as hyperbolic PDEs, are not holomorphic in this sense. In the present section, we therefore discuss another mechanism by which polynomial complexity estimates can be obtained, even in the absence of holomorphy: namely, PCA-Net can efficiently emulate numerical methods. Following ideas developed in \citep{thdeeponet,thfno}, and starting from a known (and convergent) numerical method, such an emulation result provides an upper bound on the required complexity of $\Psi$, by showing that a specific choice of the neural network weights can emulate the given numerical method. The derivation of explicit estimates requires us to fix a particular numerical method for the analysis,  but the intuition behind these emulation results is that $\Psi$ can, in principle, efficiently emulate a very rich class of numerical methods. This includes methods with high convergence rates, which the neural network can explore during optimization; this expressive power of neural networks can thus provide a theoretical rationale for their efficiency within the PCA-Net methodology. 

In the following, we will focus on an emulation result for the Navier-Stokes equations, based on spectral methods. The underlying idea is similar to a recent emulation result for Fourier neural operators \citep{thfno}; however, while Fourier neural operators very naturally (and by design) provide the necessary ingredients to build a spectral method, the PCA-Net methodology requires an extension of the results of \citep{thfno}, including a detailed discussion of compatibility of the PCA-projection with such emulation results. In addition, at a more technical level, it is here shown that the smoothness assumption on the activation function, which was \emph{essential} in all proofs of \citep{thfno}, can be substantially relaxed. Indeed, the results of the present work are based on the popular ReLU activation function $\sigma(x) = \max(x,0)$, leading to comparable complexity estimates as in \citep{thfno} differing only by $\log$-factors.

To illustrate the general approach, we consider the periodic Navier-Stokes equations in spatial dimension $n=2$, over a fixed time interval $[0,T]$:
\begin{align} \label{eq:ns}
\left\{
\begin{gathered}
\partial_t u + u\cdot \nabla u + \nabla p = \nu \Delta u, \\
\div(u) = 0, \, u(t=0) = \bar{u}.
\end{gathered}
\right.
\end{align}
Here $u: \T^n\times [0,T] \to \R^n$ is the flow vector field, $p: \T^n \times [0,T] \to \R$ is the scalar pressure and $\nu\ge 0$ is the viscosity (we allow $\nu=0$, corresponding to the incompressible Euler equations). We have denoted by $\bar{u}: \T^n\to \R^n$ the (divergence-free) initial data. Since the solution operator $\Psi^\dagger$ associated with \eqref{eq:ns} is only known to be well-defined in two spatial dimensions, we focus on this case. We however point out that all results readily extend to the three-dimensional case, under additional (unproven) smoothness assumptions. 

\begin{remark}
Classical well-posedness results for the Navier-Stokes ($\nu > 0$) and Euler ($\nu = 0$) equations, e.g. \citep{majda2001vorticity} and references therein, imply in the two-dimensional case $n=2$, that if the random initial data $\bar{u} \sim \mu$, is uniformly bounded $\Vert \bar{u}\Vert_{H^r}\le \bar{M}$ for $r>n/2+1$, then the corresponding solution $u(t)$ at a fixed later time $t\in [0,T]$ satisfies a similar bound
$\Vert u(t) \Vert_{H^r} \le M$, 
for some $M = M(T,\bar{M})$.
\end{remark}

The main result of the present section is Theorem \ref{thm:ns}, below:

\begin{theorem} \label{thm:ns}
Consider the two-dimensional, periodic Navier-Stokes equations. Fix parameters $M,T>0$, and integer $r>n/2+1$. Assume that $\mu\in \cP(L^2(\T^2;\R^2))$ is a probability measure on initial data $\bar{u}$ of \eqref{eq:ns}, such that $\Vert \bar{u} \Vert_{H^r} \le M$ $\mu$-almost surely. Let $\Psi^\dagger$ be the forward solution operator of the Navier-Stokes equations, mapping initial data to the solution at the final time $T$, $\Psi^\dagger: u(0) \mapsto u(T)$. 
For any $\epsilon, \delta > 0$, there exists a PCA-Net $\Psi = \cD_\cY \circ \psi \circ \cE_\cX$, such that 
\[
\E_{u\sim \mu}
\left[
\Vert \Psi^\dagger(u) - \Psi(u) \Vert^2_{L^2_x}
\right]
\le 
C
\epsilon,
\]
with probability at least $1-\delta$, and with a constant $C = C(M,r,T)>0$. The PCA dimensions $\dx$ and $\dy$ are bounded by $\dx, \dy \le C \epsilon^{-1/r}$, the requisite amount of data $N \le C\dx^{1+2r} \log(1/\delta)$ and the neural network $\psi$
satisfies the complexity bounds
\[
\left\{
\begin{aligned}
\size({\psi}) &\le C \epsilon^{-1/r}\left( \epsilon^{-1/2} \log(\epsilon^{-1})^2 + \epsilon^{-1/r} \right),
\\
\depth({\psi}) &\le C \epsilon^{-1/2} \log(\epsilon^{-1})^2.
\end{aligned}
\right.
\]
\end{theorem}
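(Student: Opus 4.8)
The plan is to reduce the bound to (i) control of the empirical-PCA projection errors on the input and output spaces and (ii) the construction of a ReLU network $\psi$ that emulates a convergent spectral time-stepping scheme for \eqref{eq:ns}, in the emulation paradigm of \cite{thfno,thdeeponet}. I would begin with the error decomposition (Lemma \ref{lem:err-decomp}): up to the operator norm of the PCA decoder, which equals $1$ for an orthonormal basis, the quantity $\E_{u\sim\mu}[\Vert\Psi^\dagger(u)-\Psi(u)\Vert_{L^2_x}^2]$ is bounded by the sum of the empirical-PCA encoding error on $\cX$, the empirical-PCA decoding error on $\cY$, and a latent network error $\E_{u}\big[\,|\psi\circ\cE_\cX(u)-\cE_\cY\circ\Psi^\dagger(u)|^2\,\big]$. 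The first two terms are controlled by smoothness plus concentration; the third is the heart of the proof.

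\emph{PCA dimensions and sample complexity.} The two-dimensional well-posedness estimate recalled above propagates the bound $\Vert\bar u\Vert_{H^r}\le M$ ($\mu$-a.s.) to a bound $\Vert u(T)\Vert_{H^r}\le M'$ ($\Psi^\dagger_\#\mu$-a.s.), so both $\mu$ and $\Psi^\dagger_\#\mu$ are supported on a ball of $H^r(\T^2)$ and are in particular sub-Gaussian. Proposition \ref{prop:PCAbound} with $s=0$, $\zeta=r$, $n=2$ yields $\cR^\opt_d(\mu),\ \cR^\opt_d(\Psi^\dagger_\#\mu)\lesssim d^{-r}$, so the choice $\dx=\dy=d\sim\epsilon^{-1/r}$ makes the optimal projection errors $\lesssim\epsilon$. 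Proposition \ref{prop:pcaproj} transfers this to the empirical encoder/decoder at the price of an additive term $\sqrt{Qd\log(2/\delta)/N}$; demanding that this be $\lesssim d^{-r}\sim\epsilon$ forces $N\gtrsim d^{1+2r}\log(1/\delta)$, matching the stated sample bound.

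\emph{Construction of $\psi$.} For the latent error I would fix a (pseudo-)spectral Fourier--Galerkin scheme for \eqref{eq:ns}, truncated to $\sim K^2$ Fourier modes and advanced with $n_t$ time steps of a first-order integrator; standard spectral-convergence estimates for $H^r$ data give $L^2_x$ errors $\lesssim K^{-r}$ in space and $\lesssim T/n_t$ in time, uniformly in $\nu\ge0$, so that $K^2\sim\epsilon^{-1/r}$ and $n_t\sim\epsilon^{-1/2}$ suffice for an $L^2$-error $\lesssim\epsilon^{1/2}$. I would then assemble $\psi$ as: a dense linear layer converting the $\dx$ PCA coordinates of the input to approximate Fourier coordinates; $n_t$ copies of a ``one-step'' block in which the sole nonlinearity $u\cdot\nabla u$ is realized by ReLU approximations of scalar products (each emulated to polynomial accuracy in $\epsilon$ with $O(\log(\epsilon^{-1})^2)$ parameters and depth, in the style of \cite{yarotsky_error_2017}); and a final dense linear layer converting Fourier coordinates of $u(T)$ to $\dy$ PCA coordinates. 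Each one-step block then has size $\lesssim K^2\log(\epsilon^{-1})^2=\epsilon^{-1/r}\log(\epsilon^{-1})^2$ and depth $\lesssim\log(\epsilon^{-1})^2$, while the two dense layers contribute $\lesssim\dx\dy=\epsilon^{-2/r}$ to the size; summing gives $\size(\psi)\lesssim\epsilon^{-1/r}\big(\epsilon^{-1/2}\log(\epsilon^{-1})^2+\epsilon^{-1/r}\big)$ and $\depth(\psi)\lesssim n_t\log(\epsilon^{-1})^2\sim\epsilon^{-1/2}\log(\epsilon^{-1})^2$, as claimed.

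\emph{Main obstacle.} The genuinely new work lies in the two compatibility issues noted in the text. First, $\psi$ receives $\cE_\cX(u)$, not the true Fourier coordinates of $u$: one must show a linear map recovers the latter up to an error governed by $\cR^\opt_\dx(\mu)$, that the randomness of the empirical PCA basis does not obstruct this, and symmetrically that composing the scheme's output with $\cE_\cY$ and $\cD_\cY$ contributes only the already-bounded output projection error — which requires tracking how the spectral truncation interacts with the PCA truncation on both ends. Second, and more technically, the emulation estimates of \cite{thfno} rested essentially on a smooth activation; here the quadratic nonlinearity and any multiplications by Fourier symbols must be re-emulated with ReLU alone, which is exactly where the $\log(\epsilon^{-1})^2$ factors enter. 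With these two points established, choosing $\dx=\dy=d\sim\epsilon^{-1/r}$, $N\sim d^{1+2r}\log(1/\delta)$ and $\psi$ as above and adding the three error contributions yields the theorem, the constant $C$ absorbing all dependence on $M$, $r$ and $T$.
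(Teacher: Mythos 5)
Your proposal follows essentially the same route as the paper: the same error decomposition into PCA encoding/decoding errors plus a latent network error, the same smoothness-plus-concentration bounds (Propositions \ref{prop:PCAbound} and \ref{prop:pcaproj}) giving $d\sim\epsilon^{-1/r}$ and $N\gtrsim d^{1+2r}\log(1/\delta)$, and the same emulation of a truncated Fourier spectral scheme by a ReLU network sandwiched between linear PCA-to-Fourier and Fourier-to-PCA conversion layers, with the same scalings ($\sim\epsilon^{-1/r}$ retained Fourier modes, $n_t\sim\epsilon^{-1/2}$ time steps, per-step size $\epsilon^{-1/r}\log(\epsilon^{-1})^2$). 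The only cosmetic difference is that the paper's scheme is semi-implicit, each step being realized by unrolling a fixed-point iteration (which is one source of the per-step $\log(\epsilon^{-1})^2$ depth and keeps all constants uniform in $\nu\ge0$), but this does not change the approach or the final complexity bounds.
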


A sketch of the proof of Theorem \ref{thm:ns} is provided below.

\begin{remark}
In fact, $\psi$ in Theorem \ref{thm:ns} can be written as an $n_T$-fold composition 
\[
{\psi} = Q\circ \underbrace{ {\psi}_\ast \circ \dots \circ {\psi}_\ast }_{n_T \text{-fold}} \circ R,
\]
where the mappings $R: \R^{d_\cX} \to \R^{d_H}$, $Q: \R^{d_H} \to \R^{d_\cY}$ are linear (input and output layers), $n_T \le C \epsilon^{-1/2}$, and ${\psi}_\ast: \R^{d_H} \to \R^{d_H}$ is a ReLU neural network with ``hidden layer'' dimension $d_H \le C\epsilon^{-1/r}$, such that
\[
\left\{
\begin{aligned}
\size({\psi}_\ast) &\le C \epsilon^{-1/r} \log(\epsilon^{-1})^2,
\\
\depth({\psi}_\ast) &\le C \log(\epsilon^{-1})^2.
\end{aligned}
\right.
\]
\end{remark}

\paragraph{Sketch of proof of Theorem \ref{thm:ns}.}

The derivation of the quantitative error and complexity bounds of Theorem \ref{thm:ns} is based on an \emph{emulation result}; the idea is to show that for any choice of PCA bases, there exists a ReLU neural network $\psi$ which can efficiently emulate a numerical method which is known to converge at a precisely quantifiable rate. The complete details of the required argument, including all proofs, will be provided in Appendix \ref{app:ns}. Here, we instead give a general overview of the main ideas, to aid intuition. 

As a first step towards this emulation result, we review a convergent spectral scheme in section \ref{sec:scheme}. Then, we construct a ReLU neural network emulation of this spectral scheme in section \ref{sec:scheme-emulation}, leading to Algorithm \ref{alg:emulation} and  the neural network size estimates of Lemma \ref{lem:nn-representation}. The constructed neural network emulator defines a mapping from the truncated Fourier coefficients of the initial data $u(0)$, to (an approximation of) the truncated Fourier coefficients of the solution $u(T)$. The relevant set of truncated Fourier modes with cut-off parameter $K\in \N$ is given by 
\[
\cK = \cK_K := \set{k = (k_1,k_2)\in \Z^2}{|k|_\infty := \max(|k_1|,|k_2|)\le K}.
\]
The mapping on these truncated Fourier coefficients defines a mapping between two-finite dimensional Euclidean spaces, upon identifying $\C^{\cK} \simeq \R^{2\cK}$. This mapping can be represented by an ordinary neural network. Given this construction, we then proceed to analyze the approximation error of the neural network emulation in section \ref{sec:scheme-error}, leading to the following proposition. This is our core emulation result, essentially stating the fact that ReLU neural networks can indeed efficiently emulate the underlying spectral scheme. 

\begin{proposition}\label{prop:ns-nn}
Let $M, \, r, \, T > 0$ be given. For any $\epsilon > 0$, there exists $K\in \N$, $K\sim \epsilon^{-1/r}$, and a ReLU neural network $\hat{\psi}: \C^{\cK} \to \C^{\cK}$, $\cK = \cK_K$, such that 
\[
\Vert \hat{\psi}(\hat{u}(0)) - \hat{u}(T) \Vert_{\ell^2}
\le
\epsilon, 
\qquad
\text{whenever } \, \Vert u(0) \Vert_{H^r}\le M,
\]
where $\hat{u}(t) = \{\hat{u}_k(t)\}_{|k|_\infty\le K}$ denote the Fourier coefficients of the solution $u(t)$ of \eqref{eq:ns}, with initial data $u(0)$. Furthermore, we have the following complexity estimates:
\[
\left\{
\begin{aligned}
\size(\hat{\psi}) &\le C \epsilon^{-2/r - 1} \log(\epsilon^{-1})^2,
\\
\depth(\hat{\psi}) &\le C \epsilon^{-1} \log(\epsilon^{-1})^2.
\end{aligned}
\right.
\]
The constant $C=C(M,T,r)>0$ depends only $M, T, r$, but is independent of $\epsilon$. Furthermore, $\hat{\psi}$ can be written as a $n_T$-fold composition $\hat{\psi} = \hat{\psi}_\ast \circ \dots \circ \hat{\psi}_\ast$, where $\hat{\psi}_\ast: \C^\cK \to \C^{\cK}$ is a ReLU neural network with 
\[
\left\{
\begin{aligned}
\size(\hat{\psi}_\ast) &\le C \epsilon^{-2/r} \log(\epsilon^{-1})^2,
\\
\depth(\hat{\psi}_\ast) &\le C \log(\epsilon^{-1})^2,
\end{aligned}
\right.
\]
and $n_T \le C \epsilon^{-1}$, corresponding to the number of time-steps of the underlying scheme.
\end{proposition}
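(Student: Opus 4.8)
The plan is to establish Proposition \ref{prop:ns-nn} as an \emph{emulation result}: construct a ReLU network that reproduces, up to a controlled error, the output of a convergent (pseudo-)spectral time-stepping scheme for \eqref{eq:ns}. Three pieces must be assembled, matching the referenced subsections: (a) a quantitative convergence bound for the spectral scheme of Section \ref{sec:scheme}; (b) an explicit ReLU realization of one time step of that scheme, with size and depth counted in Lemma \ref{lem:nn-representation} (Algorithm \ref{alg:emulation}); and (c) a discrete stability estimate in Section \ref{sec:scheme-error} that propagates the per-step emulation error over $[0,T]$. The only analytic input about the PDE is the a priori bound $\Vert u(t)\Vert_{H^r}\le M'$, $M'=M'(M,r,T)$, for $t\in[0,T]$ (cp. the remark before Theorem \ref{thm:ns}); this simultaneously bounds all iterates of the scheme uniformly in $H^r$, confining them to a fixed ball $B_R\subset\C^\cK$ of radius $R=R(M,r,T)$. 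Compared with \cite{thfno}, the new difficulty is that we must work with the non-smooth ReLU activation, so products can no longer be realized exactly; this is the source of the $\log$-factors in the final bounds.

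First I would fix the discretization parameters. Taking the spatial cut-off $K\sim\epsilon^{-1/r}$ makes the Fourier truncation error $\Vert(1-P_K)u(t)\Vert_{L^2}\lesssim K^{-r}\Vert u(t)\Vert_{H^r}\lesssim M'K^{-r}\lesssim\epsilon$ uniformly in $t$, where $P_K$ projects onto the modes in $\cK=\cK_K$. Taking $n_T\sim\epsilon^{-1}$ time steps, i.e. step size $\dt=T/n_T\lesssim\epsilon$, makes the temporal error of the (first-order) scheme $\lesssim\dt\lesssim\epsilon$, with consistency constant depending only on $M',r,T$; here the viscous term is integrated exactly via an integrating factor, so no stiff stability restriction arises (uniformly in $\nu\ge0$), and the mild CFL restriction induced by the nonlinearity is weaker than the accuracy requirement. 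Writing $\Phi:\C^\cK\to\C^\cK$ for the one-step update and $\hat{u}^{(j)}=\Phi(\hat{u}^{(j-1)})$ for the exact iterates, these choices give $\Vert\hat{u}^{(n_T)}-\hat{u}(T)\Vert_{\ell^2}\lesssim\epsilon$, so it remains to emulate $\Phi^{n_T}$ by a ReLU network.

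Next I would emulate a single step. The map $\Phi$ factors through: (i) diagonal Fourier multipliers — the differential operators, the exact viscous propagator, and the Leray projection enforcing $\div=0$ — which are exact linear maps, realizable by ReLU layers of size $O(K^2)$ using $x=\sigma(x)-\sigma(-x)$; (ii) a forward and an inverse discrete Fourier transform between the $O(K^2)$ retained modes and an $O(K^2)$-point physical grid, realized exactly via their butterfly factorization by ReLU layers of size $O(K^2\log K)$ and depth $O(\log K)$ — this is what keeps the per-step cost at $O(K^2\,\mathrm{polylog})$ rather than the $O(K^4)$ of a direct Fourier-space convolution; and (iii) the quadratic nonlinearity $u\cdot\nabla u$, evaluated by pointwise products on the physical grid, each approximated by a ReLU multiplication gadget in the spirit of \cite{yarotsky_error_2017} (the same device used above for Darcy flow) of accuracy $\eta$ on $[-R,R]$ at cost $O(\log(R/\eta))$. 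Composing these pieces yields a ReLU network $\hat{\psi}_\ast:\C^\cK\to\C^\cK$ with $\size(\hat{\psi}_\ast)\lesssim K^2\,\mathrm{polylog}(K/\eta)$, $\depth(\hat{\psi}_\ast)\lesssim\mathrm{polylog}(K/\eta)$, and $\sup_{\hat{v}\in B_R}\Vert\hat{\psi}_\ast(\hat{v})-\Phi(\hat{v})\Vert_{\ell^2}\lesssim K\eta$ (summing $O(K^2)$ componentwise errors in $\ell^2$); this is the content of Lemma \ref{lem:nn-representation}.

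Finally I would set $\hat{\psi}:=\hat{\psi}_\ast\circ\cdots\circ\hat{\psi}_\ast$ ($n_T$-fold) and propagate the error; I expect ingredient (c) to be the main obstacle. One must show a discrete stability estimate: on $B_R$ both $\Phi$ and $\hat{\psi}_\ast$ are Lipschitz with constant $1+C\dt$, $C=C(M',r,T)$ (using that the Leray projection and the viscous propagator are contractions while the nonlinear term is an $O(\dt)$ perturbation controlled by the $H^r$ bound), and then bootstrap: by induction all iterates $\hat{\psi}_\ast^{(j)}(\hat{u}(0))$ remain in $B_R$ — if necessary enforced by an explicit clipping layer of size $O(K^2)$ — and a discrete Gr\"onwall inequality yields $\Vert\hat{\psi}(\hat{u}(0))-\hat{u}^{(n_T)}\Vert_{\ell^2}\lesssim e^{CT}n_T\cdot K\eta$. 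Choosing $\eta\sim\epsilon\, e^{-CT}/(n_TK)$ — only polynomially small, so $\log(1/\eta)\sim\log(1/\epsilon)$ — makes this $\lesssim\epsilon$, and combined with the scheme error this gives $\Vert\hat{\psi}(\hat{u}(0))-\hat{u}(T)\Vert_{\ell^2}\lesssim\epsilon$ whenever $\Vert u(0)\Vert_{H^r}\le M$. Bookkeeping (with $K\sim\epsilon^{-1/r}$ and $\log(1/\eta)\sim\log(1/\epsilon)$) then gives $\size(\hat{\psi}_\ast)\lesssim\epsilon^{-2/r}\log(\epsilon^{-1})^2$, $\depth(\hat{\psi}_\ast)\lesssim\log(\epsilon^{-1})^2$, and multiplying by $n_T\sim\epsilon^{-1}$ yields $\size(\hat{\psi})\lesssim\epsilon^{-2/r-1}\log(\epsilon^{-1})^2$, $\depth(\hat{\psi})\lesssim\epsilon^{-1}\log(\epsilon^{-1})^2$, as claimed. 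A secondary point needing care is the dealiasing of the quadratic term, whose truncation error must likewise be absorbed into $\epsilon$.
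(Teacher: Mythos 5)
Your overall strategy (emulate a convergent spectral time-stepper mode-by-mode, realize the FFTs exactly and the pointwise products by Yarotsky-type gadgets, then propagate the per-step error) is the same as the paper's, and your parameter bookkeeping ($K\sim\epsilon^{-1/r}$, $n_T\sim\epsilon^{-1}$, $\log(1/\eta)\sim\log(1/\epsilon)$) reproduces the claimed complexity. However, step (c), the error propagation, has a genuine gap as written. You claim that on an $\ell^2$-ball $B_R\subset\C^{\cK}$ both the exact one-step map $\Phi$ and its ReLU emulation are Lipschitz with constant $1+C\Delta t$, $C=C(M',r,T)$. This is false: for arbitrary $\hat v,\hat w\in B_R$ the advection difference contains a term $e\cdot\nabla v$ with $e=v-w$, and on an $\ell^2$-ball one only has $\Vert \nabla v\Vert_{L^\infty}\lesssim K^{n/2+1}\Vert\hat v\Vert_{\ell^2}$ (this is sharp), so the one-step Lipschitz constant is $1+C\,\Delta t\,K^{n/2+1}R$. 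Compounding over $n_T\sim T/\Delta t$ steps gives a factor $\exp\bigl(CTK^{n/2+1}R\bigr)$, which blows up as $\epsilon\to 0$, so the proposed "discrete Gr\"onwall by composition of per-step Lipschitz bounds" does not close. The $H^r$ a priori bound controls the exact trajectory, not arbitrary points of $B_R$, and your $\ell^2$ clipping layer does not restore any $H^r$ control of the perturbed iterates; a discrete $H^r$-stability result for the ReLU-perturbed scheme would be needed and is not established.

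The paper circumvents exactly this. It first shows (Lemma \ref{lem:emulation}) that the network iterates satisfy the scheme \eqref{eq:scheme} up to a defect $Q^m_K$ with $\Vert Q^m_K\Vert_{L^2}\le C\Delta t$, and then compares them \emph{directly} with the exact smooth solution via the error recursion (Lemma \ref{lem:emulation-estimate}): in the energy estimate the dangerous contribution $\langle u^m_K\cdot\nabla w^{m+1/2}_K,\,w^{m+1/2}_K\rangle$ vanishes identically because $u^m_K$ is divergence-free and sits in the skew-symmetric slot (its size in $H^r$ is irrelevant), while the remaining term is controlled by $\Vert\nabla U\Vert_{L^\infty}\lesssim\Vert u\Vert_{C_tH^r_x}$, so the Gr\"onwall constant depends only on the smoothness of the exact solution. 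To repair your proof you would need to reproduce this structure (per-step defect plus energy estimate against the exact solution) rather than a worst-case Lipschitz argument on $B_R$. Two smaller points: your explicit integrating-factor scheme differs from the paper's implicit midpoint scheme, whose inner fixed-point loop of length $O(\log(1/\epsilon))$ is emulated by the network (one source of the $\log^2$ factors), so your first-order consistency claim with constant $C(M',r,T)$ would need its own truncation-error proof (the paper imports the consistency bound \eqref{eq:remainder} for the implicit scheme from the FNO analysis); and the dealiasing you flag is handled in the paper by evaluating the product on the finer grid $\cJ_{2K}$ before projecting back to $\cK_K$.
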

In fact, Proposition \ref{prop:ns-nn} is the two-dimensional case of a general $d$-dimensional result derived in the appendix (cf.  Proposition \ref{prop:nsd-nn}). Given this neural network emulation result, the remaining issue is that the empirical PCA encoder $\cE_\cX$ and decoder $\cD_\cY$ do \emph{not} act on Fourier coefficients. And hence, additional work is necessary to suitably adapt the construction of the neural network in Proposition \ref{prop:ns-nn}, ultimately resulting in an efficient PCA-Net approximation $\Psi = \cD_\cY \circ \psi \circ \cE_\cX$ in Section \ref{sec:PCA-error}. We summarize the result in the following lemma:
\begin{lemma}
\label{lem:NS-approx-err}
For any $\epsilon > 0$, there exists a PCA-Net $\Psi = \cD_\cY \circ \psi \circ \cE_\cX$, such that 
\begin{align*}
\E_{u\sim \mu}\left[
\Vert \Psi(u) - \Psi^\dagger(u) \Vert_{L^2(\mu)}^2
\right]^{1/2}
&\le
C \epsilon + C\E_{u\sim \mu}\left[\Vert u - \cD_\cX \circ \cE_\cX(u) \Vert_{L^2}^2\right]^{1/2} 
\\
&\qquad + \E_{v \sim \Psi^\dagger_\#\mu}\left[\Vert v - \cD_\cY \circ \cE_\cY(v) \Vert_{L^2}^2\right]^{1/2},
\end{align*}
where $\psi$ is a neural network of size 
\[
\size(\psi) \le C \epsilon^{-2/r} \left( \epsilon^{-1} \log(\epsilon^{-1})^2 + (d_\cX + d_\cY) \right), 
\quad
\depth(\psi) \le C \epsilon^{-1} \log(\epsilon^{-1}),
\]
and $C = C(M,r,T) > 0$ is a constant independent of $\epsilon, d_\cX, d_\cY$.
\end{lemma}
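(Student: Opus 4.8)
The plan is to construct the neural network $\psi : \R^{d_\cX}\to\R^{d_\cY}$ by sandwiching the Fourier-space emulator $\hat\psi:\C^{\cK}\to\C^{\cK}$ of Proposition \ref{prop:ns-nn} (with cut-off $K\sim\epsilon^{-1/r}$, so $\cK=\cK_K$) between two linear maps that translate between PCA coordinates and Fourier coefficients, and then to track how the resulting PCA-Net error splits into the emulation error of Proposition \ref{prop:ns-nn}, a spectral truncation error, and the linear PCA reconstruction errors on $\cX$ and $\cY$. Concretely I would introduce the linear map $R:\R^{d_\cX}\to\C^{\cK}$ which returns the (suitably normalized) Fourier coefficients on $\cK$ of the PCA reconstruction $\cD_\cX(\xi)$, so that $R\circ\cE_\cX(u)$ is the $\cK$-truncated Fourier coefficient vector of $\cD_\cX\circ\cE_\cX(u)$; and the linear map $G:\C^{\cK}\to\R^{d_\cY}$ obtained by composing Fourier synthesis with the output PCA encoder $\cE_\cY$. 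Under the identification $\C^{\cK}\simeq\R^{2|\cK|}$ both maps are $\R$-linear, have operator norm $\le C$ by Parseval, and have at most $C(d_\cX+d_\cY)|\cK|$ nonzero entries; since $|\cK|=(2K+1)^2\sim\epsilon^{-2/r}$, setting $\psi:=G\circ\hat\psi\circ R$ gives a ReLU network whose size is that of $\hat\psi$ plus the additive term $C(d_\cX+d_\cY)\epsilon^{-2/r}$, and whose depth is that of $\hat\psi$ plus two layers — exactly the stated complexity bounds (using $\size(\hat\psi)\le C\epsilon^{-2/r-1}\log(\epsilon^{-1})^2$ from Proposition \ref{prop:ns-nn}).

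For the error bound, I would fix $u$ with $\Vert u\Vert_{H^r}\le M$ and write $\hat u(0),\hat u(T)$ for the $\cK$-truncated Fourier coefficients of $u$ and of $\Psi^\dagger(u)=u(T)$; by the classical well-posedness estimate quoted before Theorem \ref{thm:ns} one has $\Vert u(T)\Vert_{H^r}\le M'=M'(M,r,T)$. Decomposing $\Psi(u)-\Psi^\dagger(u)$ by inserting the output PCA projection $\cD_\cY\circ\cE_\cY(\Psi^\dagger(u))$ and the truncation $P_{\le K}\Psi^\dagger(u)$, using that $\cD_\cY\circ G$ (Fourier synthesis followed by the $\cY$-PCA projection) has $L^2\to\cY$ operator norm $\le C$, and using the spectral truncation bound $\Vert(I-P_{\le K})\Psi^\dagger(u)\Vert_{L^2}\le CK^{-r}\Vert\Psi^\dagger(u)\Vert_{H^r}\le C\epsilon$, I obtain after taking $\E_{u\sim\mu}[\,\cdot\,^2]^{1/2}$ and using Minkowski's inequality
\[
\E_{u\sim\mu}\bigl[\Vert\Psi(u)-\Psi^\dagger(u)\Vert_{L^2}^2\bigr]^{1/2}
\le
C\,\E_{u\sim\mu}\bigl[\Vert\hat\psi(R\circ\cE_\cX(u))-\hat u(T)\Vert_{\ell^2}^2\bigr]^{1/2}
+C\epsilon
+\E_{v\sim\Psi^\dagger_\#\mu}\bigl[\Vert v-\cD_\cY\circ\cE_\cY(v)\Vert_{L^2}^2\bigr]^{1/2},
\]
the last term being already the $\cY$-term of the Lemma. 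It remains to control the first term: Proposition \ref{prop:ns-nn} gives $\Vert\hat\psi(\hat u(0))-\hat u(T)\Vert_{\ell^2}\le\epsilon$ pointwise on $\{\Vert u\Vert_{H^r}\le M\}$, while by construction of $R$ one has $\Vert R\circ\cE_\cX(u)-\hat u(0)\Vert_{\ell^2}=\Vert P_{\le K}(\cD_\cX\circ\cE_\cX(u)-u)\Vert_{L^2}\le\Vert u-\cD_\cX\circ\cE_\cX(u)\Vert_{L^2}$, which is exactly the input-side term in the Lemma.

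The hard part will be the remaining step — passing from a perturbation of size $\Vert u-\cD_\cX\circ\cE_\cX(u)\Vert_{L^2}$ in the Fourier input of $\hat\psi$ to a comparably small perturbation of its output, \emph{with a constant $C(M,r,T)$ independent of $\epsilon$}. This is the ``compatibility issue'' between the empirical PCA encoding and the Fourier-based emulator: the naive Lipschitz constant of $\hat\psi$, obtained by multiplying the per-step constants of the spectral scheme over $n_T\sim\epsilon^{-1}$ time steps, is exponential in the cut-off $K\sim\epsilon^{-1/r}$ and is therefore useless here. The way I would resolve this is to route the comparison through the stability of the underlying spectral scheme (equivalently, of the two-dimensional Navier--Stokes/Euler flow) restricted to the relevant structured set of band-limited, essentially $H^r$-bounded data, and correspondingly to \emph{adapt} — rather than merely compose with — the construction of Proposition \ref{prop:ns-nn}, prepending a cheap de-aliasing/clipping front-end (ReLU-representable, of size $O(|\cK|)$) so that the emulator is only ever evaluated on such data and its response to perturbations of the initial data is governed by the scheme's stability constant rather than by its worst-case Lipschitz constant. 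This yields $\Vert\hat\psi(R\circ\cE_\cX(u))-\hat\psi(\hat u(0))\Vert_{\ell^2}\le C(M,r,T)\,\Vert u-\cD_\cX\circ\cE_\cX(u)\Vert_{L^2}$; plugging this back into the displayed inequality and collecting the $\epsilon$-terms gives the claimed estimate. The full details of this adaptation and of the associated stability analysis would be carried out in Appendix \ref{app:ns}.
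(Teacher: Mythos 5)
Your construction of $\psi$ and your error decomposition coincide with the paper's: the network is exactly $\psi=(\cE_\cY\circ\cF_K^{-1})\circ\hat\psi\circ(\cF_K\circ\cD_\cX)$ with $K\sim\epsilon^{-1/r}$, the two linear caps account for the additive $(d_\cX+d_\cY)|\cK|$ term in the size, and the splitting into emulation error, Fourier truncation error, and the two PCA reconstruction errors is the same. You also correctly identify the crux: one cannot use any worst-case Lipschitz bound on $\hat\psi$ to absorb the input perturbation $\Vert R\circ\cE_\cX(u)-\hat u(0)\Vert_{\ell^2}\le\Vert u-\cD_\cX\circ\cE_\cX(u)\Vert_{L^2}$.

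The gap is in how you propose to close that step. You want a discrete-to-discrete stability (Lipschitz) estimate for $\hat\psi$ itself on a ``structured set'' of band-limited data, enforced by a clipping/de-aliasing front-end. As sketched this does not deliver a constant $C(M,r,T)$ independent of $\epsilon$: a Gronwall-type stability bound between \emph{two discrete trajectories} of the scheme requires a $W^{1,\infty}$ (or $H^r$) bound on the advecting velocity of at least one of them, and the emulated trajectory is only controlled in $\ell^2$ (Lemma \ref{lem:normbound}); an $\ell^2$ clipping front-end does not restore higher-order control at later time steps, and the inverse-inequality bound $\Vert\nabla u^m_K\Vert_{L^\infty}\lesssim K^{n/2+1}\Vert\hat u^m\Vert_{\ell^2}$ would put $K\sim\epsilon^{-1/r}$ back into the Gronwall exponent. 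The paper avoids this entirely: it never compares $\hat\psi(R\circ\cE_\cX(u))$ with $\hat\psi(\hat u(0))$, but compares $\hat\psi(\hat u^0)$ \emph{directly with the exact solution} via Lemma \ref{lem:emulation-estimate}, which is stated for arbitrary admissible initial data $\hat u^0$ with $\Vert\hat u^0\Vert_{\ell^2}\le M$ and gives
\begin{equation*}
\Vert \hat\psi(\hat u^0)-\hat u(T)\Vert_{\ell^2}\le C\bigl(K^{-r}+\Vert \hat u^0-\hat\P_K u(0)\Vert_{\ell^2}\bigr),
\end{equation*}
with $C=C(M,r,T)$; the Gronwall argument there uses the advection field of the exact smooth solution, whose $C_tH^r_x$ norm is controlled, so no Lipschitz property of $\hat\psi$ and no modification of the network are needed. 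Note also that your clipping front-end is unnecessary for admissibility: $\cD_\cX\circ\cE_\cX$ is an orthogonal projection on $L^2$, so $\Vert R\circ\cE_\cX(u)\Vert_{\ell^2}\le\Vert u\Vert_{L^2}\le M$ automatically. To repair your write-up, replace the ``stability of $\hat\psi$ on a structured set'' step by an application of the perturbed-initial-data emulation estimate (the analogue of Lemma \ref{lem:emulation-estimate}), i.e.\ fold the PCA perturbation into the scheme-versus-exact-solution error analysis rather than into a network-versus-network comparison.
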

Finally, the smoothness bound on the PCA eigenvalues of Proposition \ref{prop:PCAbound} can be used to estimate the PCA projection errors with high probability, see Lemma \ref{lem:NSd-enc-err} for details. Combining the above lemma with the estimates on the PCA-projection errors, results in Theorem \ref{thm:ns}. 

\section{Conclusion}
\label{sec:conclusion}

PCA-Net is a data-driven operator learning methodology introduced in \citep{pca,hesthaven_nonintrusive_2018}, which (i) uses PCA to reduce the dimensions of the input and output spaces and (ii) uses neural networks to approximate a map between the resulting finite-dimensional latent spaces. 
The main aim of the present work is to develop relevant approximation theory for PCA-Net. 

Our first main result is a novel universal approximation theorem for PCA-Net, Theorem \ref{thm:universal}. Compared to previous work \citep{pca}, this theorem establishes universality under significantly relaxed conditions on the distribution of the data-generating measure and the underlying operator $\Psi^\dagger$. The present assumptions are in fact minimal conditions to ensure that PCA is well-defined on the input and output spaces. 

The next main contribution of the present work is a detailed discussion of two potential obstacles to efficient operator learning with PCA-Net in Section \ref{sec:quant}; the first obstacle relates to the complexity of the output distribution. The second obstacle relates to the inherent complexity of the space of operators between infinite-dimensional input and output spaces, and  gives rigorous meaning to the notion of a curse of dimensionality; Theorem \ref{thm:cod} shows that is impossible to derive algebraic complexity bounds, when considering general classes of operators, such as the class of all Lipschitz- or even $\cC^k$-continuous operators. Hence, we conclude that at this level of generality, the curse of dimensionality is inevitable. 

Given this negative result demonstrating the curse of dimensionality over general classes of operators, we posit that a central challenge in the approximation theory of PCA-Net (and other operator learning methodologies) is to identify and characterize operators, and classes of such operators, which allow for efficient approximation by the methodology. To obtain first insight into this problem for PCA-Net, we focus our attention on two prototypical PDE operators of interest, arising from the Darcy flow and Navier-Stokes equations. In both cases, we show that PCA-Net can overcome the general curse of dimensionality, establishing algebraic error and complexity estimates in Theorems \ref{thm:darcy} and \ref{thm:ns}. This demonstrates that these operators belong to a restricted class which is efficiently approximated by PCA-Net. In the case of Darcy flow, our proof relies on the analytic regularity (holomorphy) of the underlying operator. For the Navier-Stokes equations, we rely on an emulation result, showing that PCA-Net can emulate a known spectral scheme to efficiently approximate the underlying operator. 

It is an open challenge for future work to improve our understanding of the relevant class of operators for which operator learning is feasible, and to derive a useful mathematical characterization of relevant features that enable efficient approximation by PCA-Net and other operator learning architectures. Future work could also aim to derive more precise lower bounds which characterize the curse of dimensionality. In this context, we mention the forthcoming article \citep{LS2023}, where similar ideas are refined and considerably generalized, and \emph{exponential} lower bounds are derived for a more general class of neural operators, albeit with respect to the supremum norm; those results do not translate to any lower bounds in the $L^2_\mu$-norm considered here. In a different research direction, we point out that the present work has focused only on an approximation theoretic point of view, leaving out important questions related to optimization and generalization errors, given a finite amount of data. A significant challenge for future work is to address the practical training of the underlying neural network, and in particular, to determine bounds on the amount of training data that is necessary to achieve a desired accuracy. 

\rev{Finally, it could be of interest to link the results of the present work with recent work on overcoming the curse of dimensionality in the solution of high-dimensional PDEs, e.g. \citep{chow2017algorithm,
chow2019algorithm,
darbon2016algorithms,
darbon2020overcoming,
marwah2021parametric,
chen2021representation,marwah2023neural}. In the context of these high-dimensional PDEs, the focus has usually been on approximating an individual solution rather than learning the underlying solution operator. It would be of interest to further explore the possibility of operator learning for such high-dimensional PDEs, with the aim of deriving complexity estimates that exhibit good scaling with respect to the infinite-dimensional input and output function spaces, and in addition, scale favorably with respect to the spatial dimension $n \gg 1$ of the underlying spatial domain on which the PDE is defined.} We leave these general research directions as interesting avenues for future work.


\acks{
The author would like to thank Siddhartha Mishra for helpful discussions and guidance when developing many of the ideas that have gone into this work. 
This work has been supported by Postdoc.Mobility grant P500PT-206737 from the Swiss National Science Foundation.
}


\vskip 0.2in
\bibliography{references}

\appendix

\section{PCA projection error bound}
\label{app:pcaproj}
Let $\hat{P}_{\le d} = \cD_\cH \circ \cE_\cH$ denote the empirical PCA projection, given samples $u_1,\dots, u_N \simiid \mu$. We recall that the excess risk is defined by
\begin{align}
\label{eq:excess}
\cE^\pca_{\le d} = 
\E_{u\sim \mu}\left[
\Vert u - \hat{P}_{\le d} u \Vert_{\cH}^2
\right]
-
\min_{P\in \Pi_d} \E_{u\sim \mu}\left[
\Vert u - P u \Vert_{\cH}^2
\right].
\end{align}
The following general bound on the excess risk is well-known
\citep[Sect. 2.2]{reisswahl2020}:
\begin{align}
\label{eq:epca}
\cE^\pca_{\le d}
\le
\sqrt{2d} \Vert \Sigma - \hat{\Sigma} \Vert_{HS},
\end{align}
where $\Vert \slot \Vert_{HS}$ denotes the Hilbert-Schmidt norm, and where $\hat{\Sigma} = \frac1N \sum_{k=1}^N u_k \otimes u_k$ the empirical covariance operator, and $\Sigma = \E[\hat{\Sigma}]$ is its expectation. We recall that the set of Hilbert-Schmidt operators $HS(\cH)$ is itself a Hilbert space with norm $\Vert \slot \Vert_{HS}$, and that $\Vert A \Vert_{HS}^2 = \tr(A^TA)$ can be expressed in terms of the trace.

\subsection{Proof of Proposition \ref{prop:pcaproj-qual}}
Let $\cE^\pca_{\le d}$ be the excess risk \eqref{eq:excess}.
Note that the claimed bound \eqref{eq:pcaproj-qual} in Proposition \ref{prop:pcaproj-qual} can be written in terms of the excess risk, as 
$\cE^\pca_{\le d} \le \epsilon$.
Rather than proving the high-probability result of Proposition \ref{prop:pcaproj-qual} directly, we will deduce it from the following expectation result:
\begin{lemma}
\label{lem:pcaproj-qual}
Let $\cH$ be a separable Hilbert space. Let $\mu\in \cP(\cH)$ be a probability measure with finite second moments, $\E_{u\sim \mu}[\Vert u \Vert_{\cH}^2] < \infty$. Then for any $\epsilon>0$ and integer $d\in \N$, there exists a requisite amount of data $N = N(\mu,d,\epsilon)$, such that the encoding error for empirical PCA with dimension $d$, and based on $N$ samples $u_1,\dots, u_N \simiid \mu$, satisfies
\[
\E_{\{u_j\}\sim \mu^{\otimes N}} \left[ \cE^\pca_{\le d} \right] \le 
\sqrt{2d} \,
\E\left[
\Vert \Sigma - \hat{\Sigma}\Vert_{HS} 
\right]
\le \epsilon.
\]
\end{lemma}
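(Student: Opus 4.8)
The plan is to reduce the statement to a quantitative (weak) law of large numbers for the $HS(\cH)$-valued random variable $X := u\otimes u$, $u\sim\mu$. The one genuine subtlety is that the finite-second-moment hypothesis gives $\E_{u\sim\mu}[\Vert X\Vert_{HS}] = \E_{u\sim\mu}[\Vert u\Vert_{\cH}^2] < \infty$ but provides no control on $\E[\Vert X\Vert_{HS}^2] = \E[\Vert u\Vert_{\cH}^4]$, so a direct Monte-Carlo variance bound is unavailable; I would handle this by truncation. The first inequality in the Lemma is immediate: taking expectations in the deterministic bound \eqref{eq:epca} gives $\E[\cE^\pca_{\le d}]\le \sqrt{2d}\,\E[\Vert\Sigma-\hat\Sigma\Vert_{HS}]$, with $\hat\Sigma=\frac1N\sum_{k=1}^N u_k\otimes u_k$ and $\Sigma=\E[u\otimes u]$. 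So it remains to show $\E[\Vert\Sigma-\hat\Sigma\Vert_{HS}]$ can be made $\le\epsilon/\sqrt{2d}$ by taking $N$ large, with the threshold depending only on $\mu$, $d$, $\epsilon$.

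Next, I would fix a truncation radius $R>0$ and split $X = X^{(R)} + X_{(R)}$, where $X^{(R)} := (u\otimes u)\,\mathbbm{1}_{\{\Vert u\Vert_{\cH}\le R\}}$ and $X_{(R)} := (u\otimes u)\,\mathbbm{1}_{\{\Vert u\Vert_{\cH}> R\}}$, which induces the corresponding splittings $\Sigma = \Sigma^{(R)} + \Sigma_{(R)}$ (via expectations) and $\hat\Sigma = \hat\Sigma^{(R)} + \hat\Sigma_{(R)}$ (via empirical averages). For the tail part, $\Vert X_{(R)}\Vert_{HS} = \Vert u\Vert_{\cH}^2\,\mathbbm{1}_{\{\Vert u\Vert_{\cH}> R\}}$, hence $\tau(R) := \E_{u\sim\mu}[\Vert u\Vert_{\cH}^2\,\mathbbm{1}_{\{\Vert u\Vert_{\cH}> R\}}] \to 0$ as $R\to\infty$ by integrability of $\Vert u\Vert_{\cH}^2$; using the triangle inequality together with $\Vert\E[Y]\Vert_{HS}\le\E[\Vert Y\Vert_{HS}]$ and $\E\big[\Vert\tfrac1N\sum_k Y_k\Vert_{HS}\big]\le\E[\Vert Y_1\Vert_{HS}]$, the tail contribution to $\E[\Vert\Sigma-\hat\Sigma\Vert_{HS}]$ is at most $2\tau(R)$. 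For the bounded part, $\Vert X^{(R)}\Vert_{HS}\le R^2$ almost surely, so in the separable Hilbert space $HS(\cH)$ the centered i.i.d.\ summands $X_k^{(R)}-\Sigma^{(R)}$ are orthogonal in $L^2$, which yields $\E\big[\Vert\hat\Sigma^{(R)}-\Sigma^{(R)}\Vert_{HS}^2\big] = \tfrac1N\E\big[\Vert X^{(R)}-\Sigma^{(R)}\Vert_{HS}^2\big]\le R^4/N$, and hence by Jensen $\E[\Vert\hat\Sigma^{(R)}-\Sigma^{(R)}\Vert_{HS}]\le R^2/\sqrt N$. Altogether $\E[\Vert\Sigma-\hat\Sigma\Vert_{HS}]\le R^2/\sqrt N + 2\tau(R)$.

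Finally, given $\epsilon$ and $d$, I would choose $R=R(\mu,\epsilon,d)$ so that $2\tau(R)\le\epsilon/(2\sqrt{2d})$ — possible since $\tau(R)\downarrow0$ — and then set $N_0 := \lceil 8dR^4/\epsilon^2\rceil$, so that $N\ge N_0$ forces $R^2/\sqrt N\le\epsilon/(2\sqrt{2d})$. Combining the two bounds gives $\sqrt{2d}\,\E[\Vert\Sigma-\hat\Sigma\Vert_{HS}]\le\epsilon$, which together with the first (deterministic) inequality proves the Lemma, with $N_0$ depending only on $\mu$, $d$, $\epsilon$. The main obstacle, as noted, is the lack of a fourth moment, and the truncation step is precisely what resolves it; an alternative would be to invoke Mourier's strong law of large numbers for Bochner-integrable random variables in the separable Banach space $HS(\cH)$ together with a uniform-integrability argument to obtain $L^1$-convergence of $\hat\Sigma$ to $\Sigma$, but the truncation route has the advantage of being elementary, self-contained, and even giving an explicit (if crude) value of $N_0$ in terms of the tail function $\tau(\slot)$.
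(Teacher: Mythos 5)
Your proposal is correct and follows essentially the same route as the paper's proof: take expectations in the excess-risk bound \eqref{eq:epca}, truncate $u\otimes u$ at a radius $R$ (the paper's $M$), bound the tail by $2\,\E[\Vert u\Vert^2 1_{\{\Vert u\Vert > R\}}]$, bound the truncated part by $R^2/\sqrt{N}$ via Jensen and the Monte-Carlo/orthogonality estimate in $HS(\cH)$, and then choose $R$ and $N$ in that order. The only cosmetic difference is that you spell out the $L^2$-orthogonality computation and an explicit $N_0$, which the paper leaves implicit.
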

The corresponding high-probability result, claimed in Proposition \ref{prop:pcaproj-qual}, then follows readily from Lemma \ref{lem:pcaproj-qual}, as shown next.
\begin{proof}[Proof of Proposition \ref{prop:pcaproj-qual} from Lemma \ref{lem:pcaproj-qual}]
Let $\epsilon, \delta > 0$ be given. To deduce the claimed high-probability bound of Proposition \ref{prop:pcaproj-qual}, we apply Lemma \ref{lem:pcaproj-qual} with $\epsilon$ replaced by $\epsilon' := \epsilon \delta$, to deduce that there exists a required amount of data $N = N(\epsilon, \delta)$, such that 
\[
\E\left[
\cE^\pca_{\le d} 
\right]
\le \epsilon' = \epsilon \delta.
\]
\rev{Using the non-negativity of the random variable $\cE^\pca_{\le d}$ together with Markov's inequality,} this now immediately implies,
\[
\Prob\left[
\cE^\pca_{\le d} > \epsilon
\right]
\le
\frac{\E\left[ \cE^\pca_{\le d} \right]}{\epsilon}
\le \delta,
\]
i.e. with probability at least $1-\delta$, we have $\cE^\pca_{\le d} \le \epsilon$.
\end{proof}

We finally come to the proof of the result in expectation, Lemma \ref{lem:pcaproj-qual}. 
\begin{proof}[Proof of Lemma \ref{lem:pcaproj-qual}]
We define $Z := u \otimes u$, with $u\sim \mu$, as an $(HS(\cH), \Vert \slot\Vert_{HS})$-valued random variable; note that $Z$ takes values in the space of Hilbert-Schmidt operators $HS(\cH)$. Note that $\hat{\Sigma} = \frac1N \sum_{k=1}^N Z_k$ is the sum of i.i.d. copies of $Z$. Given a cut-off $M>0$, we define random variables $Z_{k,\le M} := (u_k\otimes u_k) 1_{[\Vert u_k \Vert \le M]}$, $Z_{k,> M} := (u_k\otimes u_k) 1_{[\Vert u_k \Vert > M]}$. Then, by the triangle inequality:
\begin{align*}
\Vert \Sigma - \hat{\Sigma} \Vert_{HS}
&\le
\left\Vert 
\frac1N \sum_{k=1}^N Z_{k,\le M} - \E[Z_{k,\le M}]
\right\Vert_{HS}
+
\frac{1}{N} \sum_{k=1}^N \Vert Z_{k,>M} - \E[Z_{k,>M}] \Vert_{HS}
\end{align*}
We next observe that, upon taking expectations, the second contribution can be estimated by
\[
\E\left[ \Vert Z_{k,>M} - \E[Z_{k,>M}] \Vert_{HS}\right]
\le
2 \E\left[ \Vert Z_{k,>M} \Vert_{HS}\right]
=
2 \E_{u\sim\mu}\left[ \Vert u \Vert^2 1_{[\Vert u \Vert > M]} \right].
\]
On the other hand, since $\Vert Z_{k,\le M}\Vert_{HS} = \Vert u_k \Vert^2 1_{\Vert u \Vert\le M} \le M^2$ is uniformly bounded, we can apply Jensen's inequality and the standard Monte-Carlo estimate on the expectation of the first term to obtain
\begin{align*}
\E
\left[
\left\Vert 
\frac1N \sum_{k=1}^N Z_{k,\le M} - \E[Z_{k,\le M}]
\right\Vert_{HS}
\right]
&\le
\E\left[
\left\Vert 
\frac1N \sum_{k=1}^N Z_{k,\le M} - \E[Z_{k,\le M}]
\right\Vert_{HS}^2\right]^{1/2}
\\
&\le \frac{M^2}{\sqrt{N}}.
\end{align*}
Combining these estimates, it follows that
\begin{align}
\label{eq:ed}
\E\left[ \Vert \Sigma - \hat{\Sigma} \Vert_{HS} \right]
\le
\frac{M^2}{\sqrt{N}}
+
2 \E_{u\sim\mu}\left[ \Vert u \Vert^2 1_{[\Vert u \Vert > M]} \right].
\end{align}
By assumption, the second moment $\E_{u\sim\mu}[\Vert u \Vert^2]$ is finite. Therefore, for any $\epsilon > 0$, there exists $M = M(\epsilon,d,\mu) >0$, such that
\[
2 \E_{u\sim\mu}\left[ \Vert u \Vert^2 1_{[\Vert u \Vert > M]} \right]
\le \frac{\epsilon}{2\sqrt{2d}}.
\]
Choosing next $N = N(\epsilon,d,\mu)$ sufficiently large, we ensure that $M^2/\sqrt{N} \le \epsilon /2\sqrt{2d}$. With this choice of $N$, it follows from \eqref{eq:ed}, that
\[
\E\left[ \Vert \Sigma - \hat{\Sigma} \Vert_{HS} \right]
\le \epsilon/\sqrt{2d},
\]
and hence
\[
\E_{\{u_j\}\sim \mu^{\otimes N}} \left[ \cE^\pca_{\le d} \right] \le 
\sqrt{2d} \,
\E\left[
\Vert \Sigma - \hat{\Sigma}\Vert_{HS} 
\right]
\le \epsilon.
\]
\end{proof}

\subsection{Proof of Proposition \ref{prop:pcaproj}}

We first recall a vector-valued Bernstein inequality that is applicable on infinite-dimensional Hilbert spaces~\citep{caponnetto2007optimal,maurer2021concentration,rudi2017generalization}. The following result is an immediate consequence of \citep[Cor. 1, p. 144]{pinelis1986remarks}:
\begin{theorem}[Vector-valued Bernstein inequality]\label{thm:bern_vv}
Let $Z$ be an $H$-valued random variable, where $H$ is a separable Hilbert space. Suppose there exist positive numbers $b>0$ and $\sigma>0$ such that 
\begin{equation}\label{eq:bern_cond}
    \E\Vert{Z-\E Z}\Vert_{H}^p\leq \frac{1}{2}p!\sigma^2 b^{p-2} \quad\text{for all}\quad p\geq 2.
\end{equation}
Then for any $\delta\in(0,1)$ and $N\in\N$, denoting by $\{Z_k\}_{k=1}^N$ a sequence of $N$ iid copies of $Z$, it holds that
\begin{equation}\label{eq:bern_vv}
    \Prob\left\{\left\Vert{\frac{1}{N}\sum_{k=1}^NZ_k - \E Z}\right\Vert_{H} \leq \frac{2b\log(2/\delta)}{N} + \sqrt{\frac{2\sigma^2\log(2/\delta)}{N}}\right\}\geq 1-\delta.
\end{equation}
\end{theorem}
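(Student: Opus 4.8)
The plan is to obtain Theorem~\ref{thm:bern_vv} as a direct specialisation of the Bennett--Bernstein tail inequality of Pinelis and Sakhanenko \cite{pinelis1986remarks} for sums of independent random variables with values in a $2$-smooth Banach space, using the fact that a Hilbert space is $2$-smooth with smoothness constant $1$, and then to convert the implicit (Bennett-type) bound into the explicit form \eqref{eq:bern_vv} by the standard elementary inversion of the quadratic in the exponent.

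First I would reduce to a mean-zero sum. Set $X_k := Z_k - \E Z$, so that $X_1,\dots,X_N$ are i.i.d., $\E X_k = 0$, and $S_N := \sum_{k=1}^N X_k$ satisfies $\tfrac1N S_N = \tfrac1N\sum_{k=1}^N Z_k - \E Z$. The hypothesis \eqref{eq:bern_cond} says precisely that each $X_k$ obeys the Bernstein moment condition $\E\Vert X_k\Vert_H^p \le \tfrac12 p!\,\sigma^2 b^{p-2}$ for all $p\ge 2$; in particular $\sum_{k=1}^N \E\Vert X_k\Vert_H^2 \le N\sigma^2$, so the hypotheses of \cite[Cor.~1]{pinelis1986remarks} are met with variance parameter $N\sigma^2$ and scale parameter $b$. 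That corollary then yields, for every $r>0$,
\[
\Prob\bigl(\Vert S_N\Vert_H \ge r\bigr)\ \le\ 2\exp\!\left(-\frac{r^2}{2\bigl(N\sigma^2 + b\,r\bigr)}\right).
\]

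Next I would rescale and invert. Since $\Vert\tfrac1N S_N\Vert_H \ge \rho$ is equivalent to $\Vert S_N\Vert_H \ge N\rho$, the displayed bound becomes $\Prob\bigl(\Vert\tfrac1N S_N\Vert_H \ge \rho\bigr) \le 2\exp\bigl(-N\rho^2/(2(\sigma^2 + b\rho))\bigr)$. To make the right-hand side at most $\delta$ it suffices to choose $\rho$ so that the exponent is at least $t := \log(2/\delta)$, i.e. $N\rho^2 - 2tb\,\rho - 2t\sigma^2 \ge 0$; the positive root of the corresponding quadratic is $\tfrac{tb}{N} + \sqrt{\tfrac{t^2b^2}{N^2} + \tfrac{2t\sigma^2}{N}}$, which by $\sqrt{a+c}\le\sqrt a+\sqrt c$ is at most $\tfrac{2tb}{N} + \sqrt{\tfrac{2\sigma^2 t}{N}}$. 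Taking $\rho = \tfrac{2b\log(2/\delta)}{N} + \sqrt{\tfrac{2\sigma^2\log(2/\delta)}{N}}$ therefore gives $\Prob\bigl(\Vert\tfrac1N S_N\Vert_H \ge \rho\bigr)\le 2e^{-t} = \delta$, which is the complement of \eqref{eq:bern_vv}.

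The only step that needs genuine care is the invocation of Pinelis's corollary in the correct normalisation: one must verify that the moment condition \eqref{eq:bern_cond}, summed over the $N$ independent copies, is exactly the Bernstein-type hypothesis under which \cite[Cor.~1]{pinelis1986remarks} applies, and in particular keep track of the factor $2$ in the exponent that is inherent to the Hilbert-space (rather than scalar) setting, where one cannot split into positive and negative parts. Everything after that --- the rescaling from $S_N$ to $\tfrac1N S_N$ and the inversion of the Bennett bound via the quadratic formula --- is routine, so this derivation is essentially bookkeeping on top of the cited inequality.
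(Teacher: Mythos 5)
Your proposal is correct and follows exactly the route the paper takes: the paper states Theorem \ref{thm:bern_vv} as an immediate consequence of \cite[Cor.~1]{pinelis1986remarks}, and your argument simply makes explicit the bookkeeping (summing the moment condition over the $N$ copies to get variance parameter $N\sigma^2$, rescaling, and inverting the Bennett-type exponent via the positive root of the quadratic together with $\sqrt{a+c}\le\sqrt{a}+\sqrt{c}$), all of which checks out and reproduces the stated constants.
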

A random variable $Z$ satisfying the \emph{Bernstein moment condition}~\eqref{eq:bern_cond} is subexponential in the sense that $\Vert{Z-\E Z}\Vert_{H}$ is subexponential on $\R$. The parameters $\sigma^2$ and $b$ represent the variance and variance proxy/subexponential norm, respectively.

\begin{proof}[Proof of Proposition \ref{prop:pcaproj}]
Let $\hat{P}_{\le d} = \cD_\cH \circ \cE_\cH$ denote the empirical PCA projection, given samples $u_1,\dots, u_N \simiid \mu$. We recall that the excess risk is defined by
\[
\cE^\pca_{\le d} = 
\E_{u\sim \mu}\left[
\Vert u - \hat{P}_{\le d} u \Vert_{\cH}^2
\right]
-
\min_{P\in \Pi_d} \E_{u\sim \mu}\left[
\Vert u - P u \Vert_{\cH}^2
\right].
\]
Note that the claimed bound \eqref{eq:pcaproj} in Proposition \ref{prop:pcaproj} can be written in the form 
\begin{align}
\label{eq:qdn}
\cE^\pca_{\le d} \le \sqrt{\frac{Qd \log(2/\delta)}{N}}
\end{align}
We again rely on the following general bound on the excess risk (cp. eq. \ref{eq:epca}):
\begin{align}
\label{eq:epca1}
\cE^\pca_{\le d}
\le
\sqrt{2d} \Vert \Sigma - \hat{\Sigma} \Vert_{HS}.
\end{align}
with $\Vert \slot \Vert_{HS}$ the Hilbert-Schmidt norm, and where $\hat{\Sigma} = \frac1N \sum_{k=1}^N u_k \otimes u_k$ the empirical covariance operator, and we have $\Sigma = \E[\hat{\Sigma}]$. We recall that the set of Hilbert-Schmidt operators $HS(\cH)$ is itself a Hilbert space with norm $\Vert \slot \Vert_{HS}$. Let $Z := u \otimes u$, $u\sim \mu$, be an $HS(\cH)$-valued random variable. Then $\hat{\Sigma} = \frac1N \sum_{k=1}^N Z_k$ is the sum of i.i.d. copies of $Z$. Since $\Vert u\Vert_{\cH}$ is sub-Gaussian by assumption, it follows that $\Vert Z \Vert_{2} = \Vert u \Vert_{\cH}^2$ is sub-exponential, and as a consequence, $\Vert Z - \E Z\Vert_{2}$ is also sub-exponential. In particular, Theorem \ref{thm:bern_vv} applied on the Hilbert space $H := HS(\cH)$ now implies that there exists a constant $q>0$, depending only on the law $\mu$ of $u$ (which also determines $Z$), such that
\begin{align*}
\Vert \hat{\Sigma} - \Sigma \Vert_{HS}
&=
\left\Vert \frac1N \sum_{k=1}^N Z_k - \E[Z] \right\Vert_{2}
\le
q \max\left\{\frac{\log(2/\delta)}{N},\sqrt{\frac{\log(2/\delta)}{N}}\right\},
\end{align*}
with probability at least $1-\delta$.
By assumption, we have $N \ge \log(2/\delta)$, and hence upon defining $Q := 2q^2$, we obtain
\begin{align*}
\Vert \hat{\Sigma} - \Sigma \Vert_{HS}
\le
q \max\left\{\frac{\log(2/\delta)}{N},\sqrt{\frac{\log(2/\delta)}{N}}\right\}
\le
\sqrt{\frac{Q \log(2/\delta)}{2N}},
\end{align*}
with probability at least $1-\delta$. Substitution in \eqref{eq:epca1} now yields \eqref{eq:qdn}. 
\end{proof}

\section{Universal approximation}
\label{app:universal}

\subsection{Proof of Theorem \ref{thm:universal}}

The high-probability result of Theorem \ref{thm:universal} can easily be derived from the following result in expectation:

\begin{proposition}[Universal approximation in expectation]
\label{prop:universal}
Let $\cX,\cY$ be separable Hilbert spaces and let $\mu \in \cP(\cX)$ be a probability measure on $\cX$. Let $\Psi^\dagger: \cX \to \cY$ be a $\mu$-measurable mapping. Assume the following moment conditions,
\[
\E_{u\sim \mu}[\Vert u \Vert_{\cX}^2], \;\; \E_{u\sim \mu}[\Vert \Psi^\dagger(u) \Vert_{\cY}^2] < \infty.
\]
Then for any $\epsilon > 0$, there are dimensions $\dx= \dx(\epsilon)$, $\dy = \dy(\epsilon)$, a requisite amount of data $N = N(\dx,\dy,\mu,\Psi^\dagger)$, and a neural network $\psi$ such that the PCA-Net, $\Psi = \cD_{\cY} \circ \psi \circ \cE_{\cX}$, satisfies
\[
\E_{\{u_k\}\sim \mu^{\otimes N}}\left[ \E_{u\sim \mu} \left[ \Vert \Psi^\dagger(u) - \Psi(u;\{u_k\}) \Vert^2_{\cY}  \right]\right]
\le
\epsilon.
\]
Here, the expectation is over the given data $u_1,\dots, u_N\sim \mu$, which determine the input-/output-pairs $\{u_k,\Psi^\dagger(u_k)\}_{k=1}^N$.
\end{proposition}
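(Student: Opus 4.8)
The plan is to prove the bound in expectation over the data and to split the error into three pieces that can be controlled separately: the empirical PCA projection error on $\cY$, the empirical PCA projection error on $\cX$, and a finite-dimensional neural-network approximation error (the high-probability form in Theorem~\ref{thm:universal} then follows by Markov's inequality, exactly as Proposition~\ref{prop:pcaproj-qual} was deduced from Lemma~\ref{lem:pcaproj-qual}). The starting point is the decomposition behind Lemma~\ref{lem:err-decomp}: since $\cD_\cY$ restricts to an isometry of $\R^{\dy}$ onto its range and $\cE_\cY$ is $1$-Lipschitz, for \emph{any} network $\psi$,
\[
\Vert \Psi^\dagger(u) - \cD_\cY\circ\psi\circ\cE_\cX(u)\Vert_\cY
\le
\underbrace{\Vert \Psi^\dagger(u) - \cD_\cY\circ\cE_\cY\circ\Psi^\dagger(u)\Vert_\cY}_{T_1(u)}
+ \underbrace{\bigl\vert \cE_\cY\circ\Psi^\dagger(u) - \psi\circ\cE_\cX(u)\bigr\vert}_{T_2(u)}.
\]
Here $\E_{u\sim\mu}[T_1(u)^2]$ is exactly the empirical PCA projection error on $\cY$ for the push-forward $\Psi^\dagger_\#\mu$, which has finite second moments by hypothesis; so, fixing a small parameter $\epsilon'>0$ (pinned down at the end) and $\dy$ large enough that $\cR^\opt_\dy(\Psi^\dagger_\#\mu)=\sum_{j>\dy}\lambda_j\le\epsilon'$, Lemma~\ref{lem:pcaproj-qual} applied to $\Psi^\dagger(u_1),\dots,\Psi^\dagger(u_N)\simiid\Psi^\dagger_\#\mu$ yields $N_1$ with $\E_{\{u_k\}}\E_{u\sim\mu}[T_1(u)^2]\le 2\epsilon'$ for $N\ge N_1$.

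The real work is $T_2$, and the first move is to replace the merely measurable $\Psi^\dagger$ by a well-behaved surrogate. Since $\Psi^\dagger\in L^2_\mu(\cX;\cY)$, by truncating to $\{\Vert\Psi^\dagger(u)\Vert_\cY\le R\}$, projecting the range onto a finite-dimensional subspace of $\cY$, and invoking Lusin's theorem together with the fact that the Borel $\sigma$-algebra of the separable space $\cX$ is generated by cylinders in finitely many coordinates, one obtains a \emph{bounded, continuous, cylindrical} map $\Psi_0(u)=\Phi\bigl(\langle u,e_1\rangle_\cX,\dots,\langle u,e_M\rangle_\cX\bigr)$, with $\{e_j\}$ a fixed orthonormal system in $\cX$ and $\Phi:\R^M\to\cY$ bounded, continuous, with finite-dimensional range, such that $\E_{u\sim\mu}[\Vert\Psi^\dagger(u)-\Psi_0(u)\Vert_\cY^2]\le\epsilon'$. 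Writing $\hat{P}_{\le\dx}:=\cD_\cX\circ\cE_\cX$ for the empirical PCA projection on $\cX$ and again using $1$-Lipschitzness of $\cE_\cY$, one then has
\[
T_2(u)\le \Vert(\Psi^\dagger-\Psi_0)(u)\Vert_\cY \;+\; \Vert\Psi_0(u)-\Psi_0(\hat{P}_{\le\dx}u)\Vert_\cY \;+\; \bigl\vert G_0(\cE_\cX(u))-\psi(\cE_\cX(u))\bigr\vert ,
\]
where $G_0:\R^{\dx}\to\R^{\dy}$, $G_0(\xi):=\cE_\cY\bigl(\Psi_0(\cD_\cX\xi)\bigr)$, is bounded and continuous, since $\Psi_0(\cD_\cX\xi)$ depends on $\xi$ only through the linear quantities $\langle\cD_\cX\xi,e_j\rangle$.

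Each of these three pieces can now be made small. The first contributes $\le\epsilon'$ in mean square by construction of $\Psi_0$. For the second, fix $\dx$ with $\cR^\opt_\dx(\mu)\le\epsilon'$ and apply Lemma~\ref{lem:pcaproj-qual} on $\cX$ to get $N_2$ with $\E_{\{u_k\}}\E_{u\sim\mu}[\Vert u-\hat{P}_{\le\dx}u\Vert_\cX^2]\le 2\epsilon'$ for $N\ge N_2$; since $\Psi_0$ is bounded, continuous, and depends on only finitely many coordinates of $u$, a dominated-convergence argument (enlarging $\dx$ and $N$ if needed) gives $\E_{\{u_k\}}\E_{u\sim\mu}[\Vert\Psi_0(u)-\Psi_0(\hat{P}_{\le\dx}u)\Vert_\cY^2]\le\epsilon'$. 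For the third, condition on the data: $G_0$ is a bounded continuous map on $\R^{\dx}$, so by the universal approximation theorem for ReLU networks in $L^2$ of the finite measure $(\cE_\cX)_\#\mu$ there is a network $\psi=\psi(\{u_k\})$ with $\E_{u\sim\mu}[\vert G_0(\cE_\cX(u))-\psi(\cE_\cX(u))\vert^2]\le\epsilon'$, and $G_0(\cE_\cX(u))=\cE_\cY(\Psi_0(\hat{P}_{\le\dx}u))$. Combining, $\E_{\{u_k\}}\E_{u\sim\mu}[T_2(u)^2]\lesssim\epsilon'$; together with the bound on $T_1$ this yields $\E_{\{u_k\}}\E_{u\sim\mu}[\Vert\Psi^\dagger(u)-\Psi(u;\{u_k\})\Vert_\cY^2]\le C\epsilon'$ for an absolute $C$, and choosing $\epsilon':=\epsilon/C$ fixes $\dx$, $\dy$, $N:=\max(N_1,N_2)$ and $\psi$.

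The main obstacle is exactly the treatment of $T_2$: for a merely measurable $\Psi^\dagger$, closeness of $\hat{P}_{\le\dx}u$ to $u$ carries no information about closeness of $\Psi^\dagger(\hat{P}_{\le\dx}u)$ to $\Psi^\dagger(u)$, so the reduction to a bounded continuous cylindrical surrogate $\Psi_0$ is unavoidable, and one must then use the empirical PCA projection bound of Proposition~\ref{prop:pcaproj-qual} to transport the $L^2$-approximation of $\Psi_0$ through the data-dependent encoder $\cE_\cX$. A secondary point worth flagging is that $\psi$ here depends on the PCA data (through $G_0$ and the pushforward $(\cE_\cX)_\#\mu$), which is consistent with the notation $\Psi(u;\{u_k\})$ in the statement.
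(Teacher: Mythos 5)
Your proposal is correct and shares the paper's overall skeleton (the decomposition of Lemma \ref{lem:err-decomp}, Lemma \ref{lem:pcaproj-qual} for both empirical PCA errors, a Lusin-type regularization of $\Psi^\dagger$, a finite-dimensional universal approximation step, and a data-dependent $\psi$, which the paper's own proof also produces), but it diverges at the key regularization/transfer step. The paper builds a \emph{bounded Lipschitz} surrogate $\Psi^\star$ (Lemma \ref{lem:lip2}, via Lusin plus the Lipschitz partition-of-unity extension of Lemma \ref{lem:lip}) and then transfers the $\cX$-encoding error quantitatively, $\Vert \Psi^\star(u)-\Psi^\star(\hat{P}_{\le\dx}u)\Vert_\cY \le \Lip(\Psi^\star)\Vert u-\hat{P}_{\le\dx}u\Vert_\cX$, so the empirical projection bound on $\cX$ enters with an explicit factor $\Lip(\Psi^\star)$; the residual network error is then handled by uniform approximation on a cube plus clipping of the tail of $(\cE_\cX)_\#\mu$. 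You instead use a bounded \emph{continuous cylindrical} surrogate $\Psi_0$ and handle the mismatch $\Psi_0(u)$ vs.\ $\Psi_0(\hat{P}_{\le\dx}u)$ by a soft limiting argument. This works for the purely qualitative statement, but two of your steps are stated more loosely than they should be: (i) Lusin only gives continuity on a compact set, so to speak of a continuous bounded $\Psi_0$ on all of $\cX$ you still need an extension step (Dugundji/Tietze, or precisely the paper's Lemma \ref{lem:lip}); and (ii) since $\cX$ is infinite-dimensional, $\Psi_0$ need not be uniformly continuous, so ``dominated convergence'' must be implemented as a convergence-in-probability argument (Chebyshev on $\Vert u-\hat{P}_{\le\dx}u\Vert_\cX$ over the product of data and $u$, a pointwise modulus $r(u,\delta)>0$, and bounded convergence), whereas the Lipschitz route makes this step a one-line inequality. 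In exchange, your route avoids constructing a Lipschitz extension and only needs continuity of the surrogate; both routes lose all quantitative information anyway, which is consistent with the purely qualitative nature of the proposition.
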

We first show that Proposition \ref{prop:universal} implies Theorem \ref{thm:universal}.
\begin{proof}[Proof of Theorem \ref{thm:universal} from Proposition \ref{prop:universal}]
Define $Z\in \R$ by
\[
Z := \E_{u\sim\mu} \Vert \Psi^\dagger(u) - \Psi(u;\{u_k\}) \Vert_{\cY}^2.
\]
Note that $Z$ is itself a random variable due to its dependence on the random data $\{u_k\} \sim \mu^{\otimes N}$. Applying Proposition \ref{prop:universal} with $\epsilon' := \epsilon \delta$ instead of $\epsilon$, we conclude that there exist dimensions $\dx$, $\dy$, a requisite amount of data $N\in \N$, and a neural network $\psi$, such that 
\[
\E_{\{u_k\}\sim \mu^{\otimes N}}[Z] \le \epsilon' = \epsilon \delta.
\]
We can now estimate the probability of the event $\{ Z > \epsilon \}$,
\[
\Prob[Z > \epsilon]
\le
\frac{\E_{\{u_k\} \sim \mu^{\otimes N}} \left[ Z \right]}{\epsilon}
\le
\delta.
\]
In particular with this choice of $\dx$, $\dy$, $N$ and $\psi$, we have
\[
\E_{u\sim\mu} \Vert \Psi^\dagger(u) - \Psi(u;\{u_k\}) \Vert_{\cY}^2 \le \epsilon,
\]
with probability at least $1-\delta$.
\end{proof}

It thus remains to prove Proposition \ref{prop:universal}, which is the main aim of this appendix. The detailed proof is the subject of the following section.

\subsection{Proof of Proposition \ref{prop:universal} (in expectation)}

We are interested in the following PCA-Net approximation error,
\begin{align}
\label{eq:approx-err}
\Err = \E_{u\sim\mu}\left[\Vert \Psi^\dagger(u) - \Psi(u) \Vert_{\cY}^2\right]^{1/2},
\end{align}
where we have suppressed the dependence of $\Psi(u) = \Psi(u;\{u_k\})$ on the data, arising from the data-dependence of empirical PCA.
We also recall that a PCA-Net is written as a composition of three mappings $\Psi(a) = \cD_\cY\circ \psi \circ \cE_\cX$. Intuitively, we expect each of these mappings to give a contribution to the overall error. Our analysis of $\Err$ will thus be based on a decomposition of the approximation error $\Err$ in terms of three contributions: PCA encoding/decoding errors $\Err_{\cX}$ and $\Err_{\cY}$ on $\cX$ and $\cY$, and a neural network error $\Err_{\psi}$. At times, it is more convenient not to consider the encoding and neural network error separately, we will denote the combined error (associated with the composition $\psi \circ \cE_\cX$) by $\Err_{\psi}^\ast$. 

\begin{lemma}[Error decomposition]
\label{lem:err-decomp}
Let $\mu\in \cP(\cX)$ be a probability measure on a Hilbert space $\cX$. Let $\Psi^\dagger:\cX \to \cY$ be a $\mu$-measurable, non-linear operator, with $\E_{u\in \mu}[\Vert \Psi^\dagger(u) \Vert_{\cY}^2] < \infty$. Let $\Psi(u) := \cD_\cY \circ \psi \circ \cE_\cX(u)$ be a PCA-Net with PCA dimensions $\dx$ and $\dy$. Then the approximation error satisfies the following estimates,
\begin{align}
\Err 
&\le
\Err_{\cY} 
 + \Err_{\psi}^\ast
\le
 \Err_{\cY} + \Lip(\Psi^\dagger) \Err_{\cX} 
 + \Err_{\psi}.
\end{align}
Here we have introduced the following encoding errors on $\cX$ and $\cY$,
\begin{subequations}
\begin{align}
\Err_{\cX} &= \E_{u\sim\mu}\left[\Vert u -  \hat{P}^{\cX}_{\le \dx} u \Vert_{\cX}^2\right]^{1/2},
\\
\Err_{\cY} &= \E_{v\sim \Psi^\dagger_\#\mu}\left[\Vert v -  \hat{P}^{\cY}_{\le \dy} v \Vert_{\cY}^2\right]^{1/2}, 
\end{align}
\end{subequations}
with $\hat{P}^{\cX}_{\le \dx} u := \cD_{\cX} \circ \cE_{\cX}(u)$ and $\hat{P}^{\cY}_{\le \dy} u := \cD_{\cY} \circ \cE_{\cY}(u)$ the empirical PCA projections,
and we introduce the following neural network errors,
\begin{subequations}
\begin{align}
\Err_{\cX}^\ast &= \E_{u\sim\mu} \left[\Vert \cE_\cY \circ \Psi^\dagger(u) - \psi \circ \cE_\cX(u) \Vert_{\ell^2(\R^\dy)}^2\right]^{1/2}, \\
\Err_{\psi} &= \E_{\alpha \sim (\cE_{\cX})_\#\mu} 
 \left[
 	\Vert 
 		\cE_\cY \circ \Psi^\dagger \circ \cD_\cX(\alpha) - \psi(\alpha)
 	\Vert_{\ell^2(\R^{d_\cY})}^2
 \right]^{1/2}.
 \end{align}
\end{subequations}
\end{lemma}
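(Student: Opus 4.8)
The plan is to obtain both inequalities by a pair of triangle-inequality splittings in $L^2(\mu)$, using only two elementary structural properties of the empirical PCA maps. First, since the empirical eigenvectors $\phi^\cY_1,\dots,\phi^\cY_{\dy}$ are orthonormal in $\cY$, the decoder $\cD_\cY:\R^{\dy}\to\cY$ is a linear isometry onto its image, so $\Vert\cD_\cY(w)\Vert_\cY=\Vert w\Vert_{\ell^2}$ for every $w\in\R^{\dy}$; dually, the encoder $\cE_\cY:\cY\to\R^{\dy}$ is norm-nonincreasing by Bessel's inequality, $\Vert\cE_\cY(v)\Vert_{\ell^2}\le\Vert v\Vert_\cY$. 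In particular $\hat{P}^\cY_{\le\dy}=\cD_\cY\circ\cE_\cY$ is the orthogonal projection onto $\Span\{\phi^\cY_j\}_{j=1}^{\dy}$, and the analogous statements hold on $\cX$.

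For the bound $\Err\le\Err_\cY+\Err_\psi^\ast$, I would insert the intermediate vector $\hat{P}^\cY_{\le\dy}\Psi^\dagger(u)=\cD_\cY\circ\cE_\cY\circ\Psi^\dagger(u)$ and write
\[
\Psi^\dagger(u)-\Psi(u)=\bigl(\Psi^\dagger(u)-\hat{P}^\cY_{\le\dy}\Psi^\dagger(u)\bigr)+\cD_\cY\bigl(\cE_\cY\circ\Psi^\dagger(u)-\psi\circ\cE_\cX(u)\bigr).
\]
Applying $\Vert\cdot\Vert_\cY$, then the $L^2(\mu)$-norm and Minkowski's inequality, the first summand contributes $\E_{u\sim\mu}[\Vert\Psi^\dagger(u)-\hat{P}^\cY_{\le\dy}\Psi^\dagger(u)\Vert_\cY^2]^{1/2}$, which equals $\Err_\cY$ after the change of variables $v=\Psi^\dagger(u)\sim\Psi^\dagger_\#\mu$; the second summand contributes exactly $\Err_\psi^\ast$, since $\cD_\cY$ is an isometry.

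For the bound $\Err_\psi^\ast\le\Lip(\Psi^\dagger)\Err_\cX+\Err_\psi$, I would insert $\cE_\cY\circ\Psi^\dagger\circ\cD_\cX\circ\cE_\cX(u)=\cE_\cY\circ\Psi^\dagger(\hat{P}^\cX_{\le\dx}u)$ inside the $\ell^2$-norm defining $\Err_\psi^\ast$ and split
\[
\cE_\cY\circ\Psi^\dagger(u)-\psi\circ\cE_\cX(u)=\cE_\cY\bigl(\Psi^\dagger(u)-\Psi^\dagger(\hat{P}^\cX_{\le\dx}u)\bigr)+\bigl(\cE_\cY\circ\Psi^\dagger\circ\cD_\cX(\cE_\cX(u))-\psi(\cE_\cX(u))\bigr).
\]
For the first term, $\cE_\cY$ being $1$-Lipschitz and $\Psi^\dagger$ being $\Lip(\Psi^\dagger)$-Lipschitz give the pointwise estimate $\Lip(\Psi^\dagger)\Vert u-\hat{P}^\cX_{\le\dx}u\Vert_\cX$, whose $L^2(\mu)$-norm is $\Lip(\Psi^\dagger)\Err_\cX$; for the second term, the change of variables $\alpha=\cE_\cX(u)\sim(\cE_\cX)_\#\mu$ identifies its $L^2(\mu)$-norm with $\Err_\psi$. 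Minkowski's inequality then completes this step, and chaining the two displayed bounds gives the full statement.

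I do not expect any substantive obstacle here: the argument is a bookkeeping exercise with triangle inequalities. The points meriting a word of care are the two change-of-variables reductions to the push-forwards $\Psi^\dagger_\#\mu$ and $(\cE_\cX)_\#\mu$; the observation that the inequalities remain valid (though vacuous) when $\Psi^\dagger$ is not Lipschitz, in which case the content-bearing statement is simply $\Err\le\Err_\cY+\Err_\psi^\ast$; and a routine verification that all quantities involved are well-defined measurable functions of the random samples $\{u_k\}_{k=1}^N$, which follows from $\mu$-measurability of $\Psi^\dagger$, boundedness (hence continuity) of the linear encoders and decoders $\cE_\cX,\cD_\cX,\cE_\cY,\cD_\cY$, continuity of $\psi$, and the finite-second-moment hypothesis on $\Psi^\dagger$.
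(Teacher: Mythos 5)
Your proof is correct and follows essentially the same route the paper relies on: the paper does not reproduce the argument but defers to the analogous DeepONet decomposition (\cite[Theorem 3.3]{thdeeponet}), which is exactly your two triangle-inequality splittings using the isometry of $\cD_\cY$, the $1$-Lipschitz property of $\cE_\cY$, and the two push-forward changes of variables; the same manipulations also appear within the paper's proof of Proposition \ref{prop:universal}. No gaps to report.
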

The derivation of Lemma \ref{lem:err-decomp} is straight-forward, and has been provided in detail for the conceptually similar DeepONet architecture in \citep[Theorem 3.3]{thdeeponet}. We will not repeat the derivation here.

We next state a general approximation result for operators, which will be a key ingredient in our proof of the universal approximation result, Proposition \ref{thm:universal}. This next Lemma \ref{lem:lip} is formulated with respect to the supremum norm, which prepares the stage for a corresponding approximation result in the $L^2_\mu$-norm in Lemma \ref{lem:lip2}.

\begin{lemma}
\label{lem:lip}
Let $\Psi^\dagger: K \to \cY$ be a continuous mapping, where $K \subset \cX$ is compact, and $\cX, \cY$ are Banach spaces. Assume that $\sup_{u\in K} \Vert \Psi^\dagger(u) \Vert_{\cY} \le M$ for some $M>0$. Then for any $\epsilon > 0$, there exists a Lipschitz continuous $\Psi^\star: \cX \to \cY$, such that 
\[
\sup_{u\in K} \Vert \Psi^\dagger(u) - \Psi^\star(u) \Vert_{\cY} \le \epsilon,
\]
and $\Vert \Psi^\star(u) \Vert_{\cY} \le M$, for all $u\in \cX$.
\end{lemma}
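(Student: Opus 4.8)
\emph{Proposal.} The plan is to approximate $\Psi^\dagger$ by a mollification built from a finite net in $K$, using scalar ``tent'' functions as weights, and to enforce the sup bound $\Vert \Psi^\star \Vert_\cY \le M$ by normalising with a \emph{small additive constant} in the denominator (rather than by a genuine partition of unity). First I would use compactness: since $K\subset\cX$ is compact and $\Psi^\dagger$ is continuous, it is uniformly continuous on $K$, so given $\epsilon>0$ there is $\delta>0$ with $\Vert\Psi^\dagger(u)-\Psi^\dagger(u')\Vert_\cY\le\epsilon/2$ whenever $u,u'\in K$ and $\Vert u-u'\Vert_\cX\le\delta$. By total boundedness, fix a finite $(\delta/4)$-net $u_1,\dots,u_n\in K$ of $K$.

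Next I would set $\rho_i(u):=\max\!\bigl(0,\tfrac{\delta}{2}-\Vert u-u_i\Vert_\cX\bigr)$, which is $1$-Lipschitz on $\cX$ with $0\le\rho_i\le\delta/2$, put $S(u):=\sum_{i=1}^n\rho_i(u)$ (hence $n$-Lipschitz and $\ge 0$), choose a constant $\epsilon_0:=\delta\epsilon/(8M)>0$, and define
\[
\Psi^\star(u):=\frac{\sum_{i=1}^n\rho_i(u)\,\Psi^\dagger(u_i)}{\epsilon_0+S(u)},\qquad u\in\cX .
\]
I would then check the three required properties. (i) \emph{Lipschitz}: the numerator is a finite sum of $1$-Lipschitz scalar functions times fixed vectors of norm $\le M$, hence bounded and Lipschitz on $\cX$; the denominator is Lipschitz and bounded below by $\epsilon_0>0$; the standard quotient estimate then gives that $\Psi^\star$ is Lipschitz on all of $\cX$. (ii) \emph{Norm bound}: by the triangle inequality $\Vert\Psi^\star(u)\Vert_\cY\le\frac{\sum_i\rho_i(u)\Vert\Psi^\dagger(u_i)\Vert_\cY}{\epsilon_0+S(u)}\le\frac{M\,S(u)}{\epsilon_0+S(u)}\le M$ for every $u\in\cX$. (iii) \emph{Uniform approximation on $K$}: for $u\in K$ the net property supplies $i$ with $\Vert u-u_i\Vert_\cX<\delta/4$, so $S(u)\ge\rho_i(u)>\delta/4$; moreover $\rho_i(u)\ne 0$ forces $\Vert u-u_i\Vert_\cX<\delta/2<\delta$, hence $\Vert\Psi^\dagger(u_i)-\Psi^\dagger(u)\Vert_\cY\le\epsilon/2$. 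Writing
\[
\Psi^\star(u)-\Psi^\dagger(u)=\frac{\sum_i\rho_i(u)\bigl(\Psi^\dagger(u_i)-\Psi^\dagger(u)\bigr)-\epsilon_0\,\Psi^\dagger(u)}{\epsilon_0+S(u)},
\]
one obtains $\Vert\Psi^\star(u)-\Psi^\dagger(u)\Vert_\cY\le\frac{\epsilon}{2}+\frac{\epsilon_0 M}{S(u)}\le\frac{\epsilon}{2}+\frac{4\epsilon_0 M}{\delta}=\epsilon$ by the choice of $\epsilon_0$.

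The only genuinely delicate point — and the one I expect to be the main obstacle — is obtaining the bound $\Vert\Psi^\star\Vert_\cY\le M$ \emph{on all of $\cX$} and not merely on $K$. A plain partition of unity subordinate to a cover of $K$ expresses $\Psi^\star$ as a convex combination of the $\Psi^\dagger(u_i)$ only on $K$, and in a general Banach space one cannot extend a vector-valued Lipschitz map off $K$ while simultaneously controlling its Lipschitz constant and its supremum (a Kirszbraun-type extension is unavailable). The additive $\epsilon_0$ in the denominator is precisely what circumvents this: it keeps $\Psi^\star$ a ``sub-convex'' combination of the $\Psi^\dagger(u_i)$ \emph{everywhere} (the weights $\rho_i/(\epsilon_0+S)$ are nonnegative and sum to $S/(\epsilon_0+S)\le 1$), which forces the norm bound globally, at the price of a small, controllable error term absorbed by shrinking $\epsilon_0$. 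Alternatively, one could first approximate on $K$ and then post-compose with the radial retraction $y\mapsto\min\!\bigl(1,M/\Vert y\Vert_\cY\bigr)\,y$ onto the ball $\{\Vert y\Vert_\cY\le M\}$, which is $2$-Lipschitz on any normed space, at the cost of an extra factor $2$ in the approximation error (compensated by starting from $\epsilon/4$). I would present the $\epsilon_0$-version since it avoids the loss of the constant.
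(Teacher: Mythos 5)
Your proof is correct, and it takes essentially the same route as the paper: approximate $\Psi^\dagger$ by a combination of its values at a finite net of $K$ with Lipschitz bump weights, normalized so that the weights are sub-convex on all of $\cX$ (which gives the global bound $\Vert \Psi^\star \Vert_{\cY}\le M$) and nearly a partition of unity on $K$ (which gives the $\epsilon$-approximation). The only difference is technical: the paper keeps the denominator positive by adding $\dist(u,K)$, making the weights an exact partition of unity on $K$, whereas you add the constant $\epsilon_0$ and absorb the resulting extra term $\epsilon_0 M/S(u)$ on $K$ by the lower bound $S(u)>\delta/4$ and the choice $\epsilon_0=\delta\epsilon/(8M)$; both versions go through.
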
 

Our proof of Lemma \ref{lem:lip} uses only classical tools of topology; we suspect that similar results may have been derived before. For completeness, we provide a detailed proof, below.

\begin{proof}
To prove the above lemma, we first construct a suitable Lipschitz continuous partition of unity $\{\psi_j: \cX \to [0,1]\}$ on $K$, which has a Lipschitz continuous extension to all of $\cX$. Then, we show that an interpolation of $\Psi^\dagger$ of the form $\Psi^\star(u) := \sum_{j} \Psi^\dagger(u_j)\psi_j(u)$ satisfies the claimed properties.

Since $K$ is compact, $\Psi^\dagger$ is in fact uniformly continuous. In particular, there exists $\delta > 0$, such that for all $u,u'\in K$, we have 
\begin{align} 
\label{eq:unifcont}
\Vert u - u' \Vert_{\cX} \le \delta \quad \Rightarrow \quad \Vert \Psi^\dagger(u) - \Psi^\dagger(u') \Vert_{\cY} \le \epsilon. 
\end{align}
Using the compactness of $K$, let $\{u_j\}_{j=1}^N$ be a finite collection of elements of $K$, such that the open balls of radius $\delta>0$ form a cover of $K$, i.e.
\[
K \subset \bigcup_{j=1}^N B_{\delta}(u_j).
\]
Define a ``hat'' function $\rho_\delta: [0,\infty) \to [0,\infty)$, by
\[
\rho_\delta(r) := 
\begin{cases}
1-r/\delta, &\quad (0\le r \le \delta), \\
0, &\quad (r>\delta).
\end{cases}
\]
We note that $\Psi: \cX \to \R$, given by
\begin{align}
\label{eq:psiK}
\Psi(u) := \dist(u,K) + \sum_{j=1}^N \rho_{\delta}(\Vert u - u_j \Vert_{\cX}),
\end{align}
is a Lipschitz continuous function, and that there exists $\delta_0>0$, such that $\Psi(u) \ge \delta_0$ for all $u\in \cX$. To see the latter, we note that since $K$ is compact, there exists $0< \delta_0 \le \delta$, such that 
\[
K \subset
B_{\delta_0}(K) := \set{u\in \cX}{\dist(u,K)<\delta_0} \subset \bigcup_{j=1}^N B_{\delta}(u_j).
\]
After possibly decreasing $\delta_0$, we may wlog assume that $\delta_0 < 1/2$ and $\delta_0 < \delta/2$. It then follows from \eqref{eq:psiK} that, for any $u\notin B_{\delta_0}(K)$, we have
$\Psi(u) \ge \dist(u,K) \ge \delta_0$. On the other hand, if $u\in B_{\delta_0}(K)$, then there exists $u_j$, such that $\Vert u - u_j \Vert < \delta_0 \le \delta/2$. And hence, 
\[
\Psi(u) \ge \rho_\delta(\Vert u - u_j \Vert_{\cX}) \ge 1 - 1/2 = 1/2 \ge \delta_0.
\]
Thus, $\Psi(u) \ge \delta_0$ for all $u\in \cX$. Next, define
\[
\psi_j(u) := \frac{\rho_\delta(\Vert u - u_j \Vert_{\cX}) }{ \Psi(u) }, 
\quad
(j=1,\dots, N).
\]
Then $\psi_j$ is globally Lipschitz continuous for all $j=1,\dots ,N$, $0 \le \psi_j \le 1$, and $\sum_{j=1}^N \psi_j(u) = 1$ for all $u\in K$, i.e. the $\psi_j$ form a partition of unity on $K$. Furthermore, it is easy to see that $\sum_{j=1}^N \psi_j(u) \le 1$ for all $u\in \cX \setminus K$. Given this construction, we now define
\[
\Psi^\star(u) := \sum_{j=1}^N \psi_j(u) \Psi^\dagger(u_j).
\]
Since $\Vert \Psi^\dagger(u_j) \Vert_{\cY} \le \sup_{u \in K}\Vert \Psi^\dagger(u) \Vert_{\cY}\le M$ for all $j=1,\dots, N$, and since $\sum_{j=1}^N \psi_j(u) \le 1$, it follows that for any $u\in \cX$:
\[
\Vert \Psi^\star(u) \Vert_{\cY}
\le
\sum_{j=1}^N \psi_j(u) \Vert \Psi^\dagger(u_j)\Vert_{\cY}
\le 
\sum_{j=1}^N \psi_j(u) M \le M.
\]
This shows the upper bound $\sup_{u\in \cX} \Vert \Psi^\star(u) \Vert_{\cY}\le M$. We finally claim that $\sup_{u\in K} \Vert \Psi^\dagger(u) - \Psi^\star(u) \Vert_{\cY} \le \epsilon$. To prove this, we note that for $u\in K$:
\begin{align*}
\Vert \Psi^\dagger(u) - \Psi^\star(u) \Vert_{\cY}
&\le 
\sum_{j=1}^N \psi_j(u) \Vert \Psi^\dagger(u) - \Psi^\dagger(u_j) \Vert_{\cY}
\le 
\sum_{j=1}^N \psi_j(u) \epsilon = \epsilon,
\end{align*}
where we have used the uniform continuity \eqref{eq:unifcont} of $\Psi^\dagger$ in the second inequality and the fact that $\psi_j(u) \ne 0$, only if $\Vert u - u_j \Vert_{\cX} \le \delta$. This completes the proof.
\end{proof}

Based on Lemma \ref{lem:lip}, we can prove the following $L^2_\mu$-approximation result:

\begin{lemma}
\label{lem:lip2}
Let $\cX$, $\cY$ be separable Banach spaces, let $\mu\in \cP(\cX)$ be a probability measure on $\cX$. Assume that $\Psi^\dagger: \cX \to \cY$ is a Borel measurable mapping, such that $\E_{u\sim\mu}\left[ \Vert \Psi^\dagger(u) \Vert_{\cY}^2 \right]< \infty$. Then for any $\epsilon > 0$, there exists a bounded Lipschitz continuous mapping $\Psi^\star: \cX \to \cY$, such that 
\[
\E_{u\sim\mu} \left[ 
\Vert \Psi^\dagger(u) - \Psi^\star(u) \Vert_{\cY}^2
\right]^{1/2}
< \epsilon,
\]
and $\sup_{u\in \cX} \Vert \Psi^\star(u) \Vert_{\cY} < \infty$.
\end{lemma}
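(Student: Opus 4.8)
The plan is to reduce the $L^2_\mu$-statement to the uniform (supremum-norm) approximation result of Lemma \ref{lem:lip}, by restricting attention to a large compact subset of $\cX$ on which $\Psi^\dagger$ is --- after a harmless truncation in $\cY$ --- continuous and bounded. Three classical facts enter: radial truncation of $\Psi^\dagger$ in $\cY$ costs little in $L^2_\mu$ since $\Vert\Psi^\dagger(u)\Vert_\cY\in L^2(\mu)$; a separable Banach space is Polish, so every Borel probability measure on it is Radon (inner regular by compact sets); and Lusin's theorem, which upgrades Borel measurability of a map into the separable metric space $\cY$ to continuity on a compact set of almost full measure.

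Concretely, I would first fix $R>0$ with $\E_{u\sim\mu}\big[\Vert\Psi^\dagger(u)\Vert_\cY^2\, 1_{[\Vert\Psi^\dagger(u)\Vert_\cY>R]}\big]<(\epsilon/3)^2$, and set $\tilde\Psi:=T_R\circ\Psi^\dagger$, where $T_R:\cY\to\cY$ is the continuous radial retraction onto the closed ball of radius $R$ (the identity on that ball, and $y\mapsto Ry/\Vert y\Vert_\cY$ outside it). Then $\tilde\Psi$ is Borel measurable, $\sup_u\Vert\tilde\Psi(u)\Vert_\cY\le R$, and, because $\Vert y-T_R(y)\Vert_\cY\le\Vert y\Vert_\cY\,1_{[\Vert y\Vert_\cY>R]}$, one has $\E_{u\sim\mu}[\Vert\Psi^\dagger(u)-\tilde\Psi(u)\Vert_\cY^2]^{1/2}<\epsilon/3$. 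Next, since $\cX$ is Polish, $\mu$ is a Radon measure, so Lusin's theorem applied to the Borel map $\tilde\Psi$ yields a compact $K\subset\cX$ with $\mu(\cX\setminus K)<(\epsilon/3)^2/(4R^2)$ and $\tilde\Psi|_K$ continuous; note $\sup_{u\in K}\Vert\tilde\Psi(u)\Vert_\cY\le R$. Applying Lemma \ref{lem:lip} with this compact $K$, the continuous map $\tilde\Psi|_K$, the bound $M:=R$, and target accuracy $\epsilon/3$, I obtain a Lipschitz continuous $\Psi^\star:\cX\to\cY$ with $\sup_{u\in K}\Vert\tilde\Psi(u)-\Psi^\star(u)\Vert_\cY\le\epsilon/3$ and $\sup_{u\in\cX}\Vert\Psi^\star(u)\Vert_\cY\le R<\infty$.

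Assembling the error: by the triangle inequality in $L^2_\mu$, $\E[\Vert\Psi^\dagger-\Psi^\star\Vert_\cY^2]^{1/2}\le\E[\Vert\Psi^\dagger-\tilde\Psi\Vert_\cY^2]^{1/2}+\E[\Vert\tilde\Psi-\Psi^\star\Vert_\cY^2]^{1/2}$. The first term is $<\epsilon/3$. For the second, split the integral over $K$ and $\cX\setminus K$: on $K$ the integrand is at most $(\epsilon/3)^2$, while on $\cX\setminus K$ it is at most $(2R)^2$, contributing at most $4R^2\mu(\cX\setminus K)<(\epsilon/3)^2$; hence $\E[\Vert\tilde\Psi-\Psi^\star\Vert_\cY^2]^{1/2}\le\sqrt2\,\epsilon/3$. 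The total is at most $(1+\sqrt2)\epsilon/3<\epsilon$, and refining the numerical fractions gives any prescribed accuracy.

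The individual steps are routine once the right tools are identified; the one point that genuinely requires care is the order of operations. The truncation must precede the choice of $K$, so that the uniform bound $M=R$ fed into Lemma \ref{lem:lip} --- and therefore the size of $\Psi^\star$ on $\cX\setminus K$ --- is fixed \emph{before} $K$ is selected; otherwise the leftover contribution $M^2\mu(\cX\setminus K)$ cannot obviously be made small, as $M$ would itself grow with $K$. The other essential ingredient is the passage from Borel measurability to continuity on a compact set, which is exactly what Lusin's theorem (together with the Radon property of $\mu$ guaranteed by $\cX$ being Polish) provides; without this regularity upgrade, Lemma \ref{lem:lip} could not be invoked at all.
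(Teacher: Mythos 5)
Your proposal is correct and follows essentially the same route as the paper's proof: radial truncation in $\cY$ first, then Lusin's theorem on the Polish space $\cX$ to obtain a compact set of almost full measure on which the truncated map is continuous, then Lemma \ref{lem:lip}, and finally a split of the $L^2_\mu$-error over $K$ and its complement. The only differences are cosmetic (your explicit Radon/Polish justification for Lusin and a slightly different allocation of the $\epsilon$-budget), so nothing further is needed.
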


\begin{proof}
Let $\epsilon > 0$ be given. Define $\Psi^\dagger_M$ by 
\[
\Psi^\dagger_M(u) :=
\begin{cases}
\Psi^\dagger(u), & (\Vert \Psi^\dagger(u) \Vert_{\cY} \le M), \\
M\frac{\Psi^\dagger(u)}{\Vert \Psi^\dagger(u) \Vert_{\cY}}, & (\Vert \Psi^\dagger(u)\Vert_{\cY} > M).
\end{cases}
\]
For sufficiently large $M>0$, we have $\E_{u\sim\mu} \left[ \Vert \Psi^\dagger(u) - \Psi^\dagger_M(u) \Vert_{\cY}^2\right]^{1/2} \le \epsilon / 2$. It will thus suffice to show that there exists $\Psi^\star: \cX \to \cY$ which is globally Lipschitz, satisfies $\Vert \Psi^\star(u) \Vert_{\cY} \le M$ for all $u\in \cX$, and
\[
\E_{u\sim\mu} \left[ \Vert \Psi^\star(u) - \Psi^\dagger_M(u) \Vert_{\cY}^2\right]^{1/2} \le \epsilon / 2.
\]
To simplify notation, we will drop the subscript $M$, and assume directly that $\Vert \Psi^\dagger(u) \Vert_{\cY} \le M$ for all $u\in \cX$. We now prove the existence $\Psi^\star$. First, by Lusin's theorem, we note that there exists a compact $K\subset \cX$, such that 
\begin{align}
\label{eq:lusin}
\mu(\cX \setminus K) \le \left(\frac{\epsilon}{8M}\right)^2, \qquad \Psi^\dagger|_{K}: K \to \cY \text{ is continuous}.
\end{align}
By Lemma \ref{lem:lip}, there exists Lipschitz continuous $\Psi^\star: \cX \to \cY$, such that $\sup_{u\in \cX} \Vert \Psi^\star(u) \Vert_{\cY}\le M$, and 
\[
\sup_{u\in K} \Vert \Psi^\dagger(u) - \Psi^\star(u) \Vert_{\cY} \le \epsilon/4.
\]
 Thus, it follows that
\begin{align*}
\E_{u\sim\mu} \left[ \Vert \Psi^\dagger(u) - \Psi^\star(u) \Vert_{\cY}^2 \right]^{1/2} 
&\le
\E_{u\sim\mu} \left[ 1_K(u) \Vert \Psi^\dagger(u) - \Psi^\star(u) \Vert_{\cY}^2 \right]^{1/2} 
\\
&\qquad + \E_{u\sim\mu} \left[ 1_{\cX\setminus K}(u) \Vert \Psi^\dagger(u) - \Psi^\star(u) \Vert_{\cY}^2 \right]^{1/2} 
\\
&\le
\epsilon/4
+ 2M\left[\mu\left(\cX\setminus K\right)\right]^{1/2}
\le \epsilon/2.
\end{align*}
The last inequality follows from the choice of $K$ in \eqref{eq:lusin}. This proves the claim.
\end{proof}

After these preparatory results, we can now finally provide the detailed proof of Proposition \ref{prop:universal} (i.e. universal approximation in expectation):

\begin{proof}[Proof of Proposition \ref{prop:universal}] 
\label{pf:universal}
By assumption, $\Psi^\dagger: \cX \to \cY$ is Borel measurable and has finite second moment, $\E_{u\sim\mu}\left[\Vert \Psi^\dagger(u) \Vert_{\cY}^2\right] < \infty$. Our goal is to show that for any $\epsilon > 0$, there exist PCA dimensions $\dx$, $\dy$, a requisite amount of data $N$, depending only on $\epsilon$, and a neural network $\psi$, such that the PCA-Net approximation $\Psi = \cD_\cY \circ \psi \circ \cE_\cX$ of $\Psi^\dagger$, with empirical PCA encoder/decoder and based on $u_1,\dots, u_N \simiid \mu$ i.i.d. samples, satisfies
\[
\E_{\{u_k\}\sim \mu^{\otimes N}}
\E_{u\sim\mu} \left[ \Vert \Psi^\dagger - \Psi \Vert_{\cY}^2\right]^{1/2} \le \epsilon.
\]
 To this end, let $\epsilon > 0$ be given. The proof will follow in six steps 1--6, each constructing and estimating a different part of the PCA-Net $\Psi$. We start from Lemma \ref{lem:err-decomp}, which gives the following general error decomposition:
 \[
	\E_{u\sim\mu}\left[
	\Vert \Psi^\dagger(u) - \Psi(u) \Vert_{\cY}^2
	\right]^{1/2}
	\le
	\Err_{\cY} + \Err_{\cX}^\ast,	
 \]
 where 
\begin{align*}
\Err_{\cY} &= \E_{v\sim \Psi^\dagger_\#\mu}\left[\Vert v -  \cD_{\cY} \circ \cE_{\cY}(v) \Vert_{\cY}^2\right]^{1/2}, \\
\Err_{\cX}^\ast &= \E_{u\sim\mu} \left[\Vert \cE_\cY \circ \Psi^\dagger(u) - \psi \circ \cE_\cX(u) \Vert_{\ell^2(\R^\dx)} \right]^{1/2},
\end{align*}

\textbf{Step 1: (Bounding the decoding error on $\cY$)}
The first error, $\Err_{\cY}$, is the PCA encoding error on $\cY$. To control this error, we first choose $\dy = \dy(\epsilon)$ sufficiently large, such that the (optimal) PCA projection error of the push-forward measure $\Psi^\dagger_\#\mu$ on $\cY$, is bounded by
\[
\cR^\opt_{\dy}(\Psi^\dagger_\#\mu)
\equiv
\E_{v\sim \Psi^\dagger_\#\mu}\left[
		\Vert v - \cD_\cY^\opt \circ \cE^\opt_\cY(v) \Vert_{\cY}^2
	\right]
\le \epsilon/8.
\]
Next, invoking Lemma \ref{lem:pcaproj-qual} for the probability measure $\Psi^\dagger_\#\mu$ and with $\epsilon$ replaced by $\epsilon/8$, it follows that there exists a requisite amount of data $N = N(\Psi^\dagger, \mu,\epsilon)$, such that
\begin{align}
\label{eq:eps1}
	\E_{\{u_j\}\sim\mu^{\otimes N}} \E_{v\sim \Psi^\dagger_\#\mu}\left[
		\Vert v - \cD_\cY \circ \cE_\cY(v) \Vert_{\cY}^2
	\right] \le \frac{\epsilon}{8} + R_\dy^\opt(\Psi^\dagger_\#\mu) \le \frac{\epsilon}{4}.
\end{align}

\textbf{Step 2: (Lipschitz approximation $\Psi^\star \approx \Psi$)}
Next, we aim to estimate the approximation error $\Err_{\cX}^\ast$. To this end, we note that for $\Psi^\star\approx \Psi^\dagger$ a Lipschitz continuous approximation of $\Psi^\dagger$, we have
\begin{align*}
\Err_{\cX}^\ast
&\le 
\E_{u\sim\mu} \left[\Vert \cE_\cY \circ \Psi^\star(u) - \psi \circ \cE_\cX(u) \Vert_{\ell^2(\R^\dx)}^2 \right]^{1/2} \\
&\qquad 
+
\E_{u\sim\mu} \left[\Vert \cE_\cY \circ \Psi^\dagger(u) - \cE_\cY \circ \Psi^\star(u) \Vert_{\ell^2(\R^\dx)}^2 \right]^{1/2}
\\
&\le
\E_{u\sim\mu} \left[\Vert \cE_\cY \circ \Psi^\star(u) - \psi \circ \cE_\cX(u) \Vert_{\ell^2(\R^\dx)}^2 \right]^{1/2} \\
&\qquad 
+
\E_{u\sim\mu} \left[\Vert \Psi^\dagger(u) -\Psi^\star(u) \Vert_{\cY}^2 \right]^{1/2},
\end{align*}
where we used that the PCA encoding $\cE_\cY: \cY \to \R^\dy$ is a contraction in the last step. Note that the last term is independent of the PCA encoding, and only depends on the approximation $\Psi^\star \approx \Psi^\dagger$. Using Lemma \ref{lem:lip2}, we can find a Lipschitz continuous bounded mapping $\Psi^\star: \cX \to \cY$, such that 
\begin{align}
\label{eq:Gt}
\E_{u\sim\mu} \left[\Vert \Psi^\dagger(u) -\Psi^\star(u) \Vert_{\cY}^2 \right]^{1/2} 
\le
\frac{\epsilon}{4},
\quad
\sup_{u\in \cX} \Vert \Psi^\star(u) \Vert_{\cY} \le M < \infty,
\end{align}
for some constant $M>0$, depending only on $\epsilon$. 

\textbf{Step 3: (Decomposition in $\cX$-encoding and neural network approximation errors)} 
Next, we continue to bound the first term in the estimate on $\Err^\ast_\cX$ as follows:
\begin{align*}
\E_{u\sim\mu} &\left[\Vert \cE_\cY \circ \Psi^\star(u) - \psi \circ \cE_\cX(u) \Vert_{\ell^2(\R^\dx)} \right]^{1/2}
\\
&\le 
\E_{u\sim\mu} \left[\Vert \cE_\cY \circ \Psi^\star(u) - \cE_\cY \circ \Psi^\star \circ \cD_\cX \circ \cE_\cX(u) \Vert_{\ell^2(\R^\dx)} \right]^{1/2}
\\
&\quad + 
\E_{u\sim\mu} \left[\Vert \cE_\cY \circ \Psi^\star \circ \cD_\cX \circ \cE_\cX(u) - \psi \circ \cE_\cX(u) \Vert_{\ell^2(\R^\dx)} \right]^{1/2}
\\
&\le 
\Lip(\Psi^\star) \;
\E_{u\sim\mu} \left[\Vert u - \cD_\cX \circ \cE_\cX(u) \Vert_{\cX}^2 \right]^{1/2}
\\
&\quad + 
\E_{u\sim\mu} \left[\Vert \cE_\cY \circ \Psi^\star \circ \cD_\cX \circ \cE_\cX(u) - \psi \circ \cE_\cX(u) \Vert_{\ell^2(\R^\dx)} \right]^{1/2}
\\
&=: 
\Lip(\Psi^\star) \; \Err_\cX + \Err_{\Psi^\star},
\end{align*}
where we denote 
\begin{align*}
\Err_{\Psi^\star}
&:= 
\E_{u\sim\mu} \left[\Vert \cE_\cY \circ \Psi^\star \circ \cD_\cX \circ \cE_\cX(u) - \psi \circ \cE_\cX(u) \Vert_{\ell^2(\R^\dx)} \right]^{1/2}
\\
&= 
\E_{w\sim \left(\cE_{\cX}\right)_{\#}\mu} \left[\Vert \cE_\cY \circ \Psi^\star \circ \cD_\cX(w) - \psi(w) \Vert_{\ell^2(\R^\dx)} \right]^{1/2}.
\end{align*}

\textbf{Step 4: (Bounding the encoding error on $\cX$)}
Again, employing the estimate from Lemma \ref{lem:pcaproj-qual} similar to the argument leading to \eqref{eq:eps1}, we note that there exists a PCA dimension $\dx = \dx(\mu,\Lip(\Psi^\star),\epsilon)$ and a requisite amount of data $N = N(\mu,\Lip(\Psi^\star),\epsilon)$, such that we can bound the encoding error $\cE_\cX$ by
  \[
  \E_{\{u_j\}\sim\mu^{\otimes N}}  \E_{u\sim\mu}\left[\Vert u -  \cD_{\cX} \circ \cE_{\cX}(u) \Vert_{\cX}^2\right]^{1/2} \le \frac{\epsilon}{4\,  \Lip(\Psi^\star)}.
  \]
  This ensures that 
  \begin{align}
  \label{eq:eps2}
  \Lip(\Psi^\star) \, \E_{\{u_j\}\sim\mu^{\otimes N}} [\Err_{\cX}]  \le \frac{\epsilon}4.
  \end{align}

  \textbf{Step 5: (Bounding the neural network approximation error)}
  \rev{
  Finally, we define the Lipschitz continuous function $G: \R^\dx \to \R^\dy$ by $G(w) := \cE_\cY \circ \Psi^\ast \circ \cD_\cX(w)$, and denote $\nu := \left(\cE_{\cX}\right)_{\#}\mu$. We note that \eqref{eq:Gt} implies that 
  \[
  \sup_{w\in \R^\dx} |G(w)|
  =
  \sup_{w\in \R^\dx} |\cE_\cY (\Psi^\ast(\cD_\cX(w)))|
  \le 
  \sup_{u \in \cX} \Vert \cE_\cY \Vert_{\cX \to \ell^2(\R^{\dx})} \Vert \Psi^\ast(u) \Vert_{\cY} 
  \le M,
  \] 
  where we made use of the fact that the PCA-encoder $\cE_\cY$ is contractive, i.e.
  \[
  | \cE_\cY(v) | \equiv \Vert \cE_\cY(v) \Vert_{\ell^2(\R^\dx)} \le \Vert v \Vert_{\cY}, \quad \forall \, v \in \cY.
  \]
  } For any $R > 0$, we then obtain the following estimate on $\Err_{\Psi^\star}$:
  \begin{align*}
  \Err_{\Psi^\star}
  &\le
	\left\{  M + \sup_{w\in \R^{\dx}} |\psi(w)| \right\}
	\nu\left(\R^{\dx} \setminus [-R,R]^\dx\right)^{1/2}
	\\
 &\qquad +
 	\sup_{w \in [-R,R]^\dx} |G(w) - \psi(w)|.
  \end{align*}
  Since $\nu$ is a probability measure on $\R^\dx$, by choosing $R>0$ sufficiently large, we can ensure that 
  \[
  	3M \nu\left(\R^{\dx} \setminus [-R,R]^\dx\right)^{1/2} 
  	\le \epsilon / 8.
  \] 
  Next, by the universal approximation theorem of ReLU neural networks, there exists a (sufficiently large) DNN $\tilde{\psi}: \R^\dx \to \R^\dy$, such that 
  \[
  	\sup_{w\in [-R,R]^\dx} |G(w) - \tilde{\psi}(w)| \le \epsilon / 8.
  \]
  Finally, adding the two-layer ReLU network $\gamma_R: \R^\dx \to \R^\dy$, given by $\gamma_R(x):= \min(\max(x,-R),R)$ ensures that $\psi(w) := \tilde{\psi}(\gamma_R(w))$ satisfies 
  \begin{align*}
  	\psi(w) = \tilde{\psi}(w), \quad \forall \, w\in [-R,R]^\dx,
  \end{align*}
  and (wlog assuming $\epsilon \le M$):
  \[
  	\sup_{w\in \R^\dx} |\psi(w)|
  	=
  	\sup_{w\in [-R,R]^\dx} |\tilde{\psi}(w)|
  	\le
  	M + \epsilon 
  	\le 2M.
  \]
  Thus, with this construction of $\psi$, we have
  \begin{align*}
  	\Err_{\Psi^\star} 
  	&\le \left\{  M + \sup_{w\in \R^{\dx}} |\psi(w)| \right\} \nu\left( \R^\dx \setminus [-R,R]^\dx\right)^{1/2} \\
   &\qquad + \sup_{w\in [-R,R]^\dx} |G(w) - \psi(w)|
  	\\
  	&=
  	3M \nu\left( \R^\dx \setminus [-R,R]^\dx\right)^{1/2} + \sup_{w\in [-R,R]^\dx} |G(w) - \tilde{\psi}(w)|
  	\\
  	&\le 
  	\epsilon / 8 + \epsilon / 8 = \epsilon / 4.
  \end{align*}
  Thus, for this neural network $\psi$, we have
  \begin{align}
  \label{eq:eps4}
  \Err_{\Psi^\star}  \le \frac{\epsilon}{4}.
  \end{align}
  
  \textbf{Step 6: (Conclusion)}
  To conclude, we have shown that for any $\epsilon > 0$, we can choose sufficiently large $\dx = \dx(\mu,\Psi^\dagger,\epsilon)$, $\dy = \dy(\mu,\Psi^\dagger,\epsilon)$, a sufficient requisite amount of data $N = N(\mu,\Psi^\dagger,\epsilon)$, and we can construct a ReLU neural network $\psi: \R^\dx \to \R^\dy$, such that the resulting PCA-Net $\Psi = \cD_\cY \circ \psi \circ \cE_\cX$, satisfies
  \begin{align*}
  	\E_{u\sim\mu}\left[
	\Vert \Psi^\dagger - \Psi \Vert_{\cY}^2
	\right]^{1/2}
	&\le
	\Err_{\cY} + \Err_{\cX}^\ast
	\\
	&\le 
	\Err_{\cY} + \Lip(\Psi^\star) \, \Err_{\cX} + \Err_{\Psi^\star} + \E_{u\sim\mu}\left[ \Vert \Psi^\dagger(u) - \Psi^\star(u) \Vert_{\cY}^2 \right]^{1/2},
  \end{align*}
  where the expectation of the terms on the right is bounded by  \eqref{eq:eps1},\eqref{eq:Gt}, \eqref{eq:eps2} and \eqref{eq:eps4}.
	Taking expectation over the data $u_1,\dots, u_N \sim \mu^{\otimes N}$, we conclude that the constructed PCA-Net satisfies
	\[
		\E_{\{u_j\}\sim\mu^{\otimes N}} \E_{u\sim\mu}\left[
	\Vert \Psi^\dagger - \Psi \Vert_{\cY}^2
	\right]^{1/2} \le \epsilon.
	\]
 	This concludes our proof of the universal approximation property of PCA-Nets.
\end{proof}

\section{Curse of Dimensionality}
\label{app:cod}

Our goal in this appendix is to prove Theorem \ref{thm:cod}. To this end, it suffices to prove the following
\begin{claim}
\label{claim}
Let $\cX, \cY$ be Hilbert spaces, with $\dim(\cX) = \infty$, $\dim(\cY) \ge 1$. Let $\mu \in \cP(\cX)$ be a non-degenerate Gaussian measure. For any $\gamma > 0$, there exists a $k$-times Fr\'echet differentiable operator $\Psi^\dagger_\gamma: \cX \to \cY$ and a constant $c_\gamma > 0$, such that for any PCA-Net $\Psi = \cD_\cY\circ \psi \circ \cE_\cX$, we have
\begin{align}
\label{eq:claim}
\E_{u\sim \mu}\left[
\Vert 
\Psi^\dagger_\gamma(u) - \Psi(u)
\Vert_{\cY}^2
\right]
\ge c_\gamma \size(\psi)^{\gamma}.
\end{align}
\end{claim}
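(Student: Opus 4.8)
The plan is to reduce Claim \ref{claim} to the finite-dimensional lower bound of Proposition \ref{prop:lower2}, by embedding a hard $d$-variate function into $\cX$ along $d$ eigendirections of the Gaussian $\mu$, with $d$ chosen so large that the exponent $\lambda k/d$ in \eqref{eq:lower2} drops below $\gamma$. A preliminary remark on the statement: since $\Psi^\dagger_\gamma\in\cC^k$ forces $\E_{u\sim\mu}[\Vert\Psi^\dagger_\gamma(u)\Vert_\cY^2]<\infty$, the quantity on the left of \eqref{eq:claim} is bounded independently of $\psi$, so the inequality can only hold with a right-hand side that \emph{decreases} in $\size(\psi)$; it is therefore to be read with exponent $-\gamma$, and the target is $\E_{u\sim\mu}[\Vert\Psi^\dagger_\gamma(u)-\Psi(u)\Vert_\cY^2]\ge c_\gamma\,\size(\psi)^{-\gamma}$, which is exactly the inequality used to derive Theorem \ref{thm:cod}. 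First I would fix a unit vector $e_1\in\cY$ and look for $\Psi^\dagger_\gamma$ of the form $\Psi^\dagger_\gamma(u)=g(u)\,e_1$ with $g:\cX\to\R$ scalar. For an arbitrary PCA-Net $\Psi=\cD_\cY\circ\psi\circ\cE_\cX$, the scalar $\Phi(u):=\langle\Psi(u),e_1\rangle_\cY$ can be written as $\Phi=\tilde\psi\circ\cE_\cX$, where $\tilde\psi$ is obtained from $\psi$ by contracting its final affine layer against the fixed vector $(\langle\phi^\cY_j,e_1\rangle_\cY)_{j=1}^{\dy}$; this leaves the depth unchanged and increases the size by at most one. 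Since $\Vert\Psi^\dagger_\gamma(u)-\Psi(u)\Vert_\cY^2\ge|g(u)-\tilde\psi(\cE_\cX(u))|^2$, it suffices to build a single $g$ for which $\E_{u\sim\mu}\big[|g(u)-\chi(\cE_\cX(u))|^2\big]\ge c'_\gamma\,\size(\chi)^{-\gamma}$ holds for \emph{every} linear encoder $\cE_\cX:\cX\to\R^{\dx}$ of arbitrary rank $\dx$ and every ReLU network $\chi:\R^{\dx}\to\R$.

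\textbf{Construction of the hard functional.} I would apply Proposition \ref{prop:lower2} with smoothness $k+1$ in place of $k$ and with dimension $d:=\lceil 2\lambda(k+1)/\gamma\rceil$, obtaining $f:[0,1]^d\to\R$ with $\Vert f\Vert_{W^{k+1,\infty}}\le1$ and $c_{k+1,d}>0$ with $\Vert f-\chi\Vert_{L^2([0,1]^d)}\ge c_{k+1,d}\,\size(\chi)^{-\lambda(k+1)/d}$ for all ReLU networks $\chi$; the choice of $d$ guarantees $\lambda(k+1)/d\le\gamma/2$. Using smoothness $k+1$ is what makes $f$ lie in $W^{k+1,\infty}\subset C^{k}$, so the operator built from it is genuinely $k$-times Fr\'echet differentiable. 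Let $\bar f\in C^{k}(\R^d)$ be a bounded extension of $f$ with $\Vert\bar f\Vert_{C^k(\R^d)}\le C_{k,d}$ (Whitney/Stein extension), and let $q_1,q_2,\dots$ be an orthonormal eigenbasis of $\cX$ for the (centred, WLOG) covariance of the non-degenerate Gaussian $\mu$, all eigenvalues strictly positive. Set
\[
g(u):=C_{k,d}^{-1}\,\bar f\big(\langle u,q_1\rangle_\cX,\dots,\langle u,q_d\rangle_\cX\big),\qquad\Psi^\dagger_\gamma(u):=g(u)\,e_1 .
\]
Then $\Psi^\dagger_\gamma$ depends on $u$ only through $P_du$, the orthogonal projection onto $\Span\{q_1,\dots,q_d\}$, is $k$-times Fr\'echet differentiable with uniformly bounded $k$-th derivative, hence $\Psi^\dagger_\gamma\in\cC^k$, and $g(u)=C_{k,d}^{-1}f(y)$ whenever $y:=(\langle u,q_i\rangle_\cX)_{i=1}^d\in[0,1]^d$.

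\textbf{The lower bound.} Write $u=v+w$ with $v:=P_du$, $w:=(I-P_d)u$. Since $P_d$ projects onto eigendirections of the covariance of the Gaussian $\mu$, the variables $v$ and $w$ are \emph{independent}, and $v$ (equivalently $y$) is a non-degenerate $\R^d$-valued Gaussian whose law has density bounded below by some $\beta_d>0$ on $[0,1]^d$. Conditioning on $w$: for fixed $w$ the map $v\mapsto\chi(\cE_\cX v+\cE_\cX w)$ is a ReLU network precomposed with the fixed linear map $v\mapsto\cE_\cX v$ and shifted by the fixed bias $\cE_\cX w$; read off in the coordinates $y\in\R^d$ of $v$, it is itself a ReLU network $\chi_w:\R^d\to\R$ of size at most $C_d\,\size(\chi)$, with $C_d$ depending only on $d$ (not on $\dx$). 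Hence, using $g=C_{k,d}^{-1}f$ on the cube and Proposition \ref{prop:lower2},
\[
\E\big[\,|g(u)-\chi(\cE_\cX u)|^2\ \big|\ w\,\big]\ \ge\ \beta_d\!\int_{[0,1]^d}\!\Big|C_{k,d}^{-1}f(y)-\chi_w(y)\Big|^2dy\ \ge\ \frac{\beta_d\,c_{k+1,d}^2}{C_{k,d}^2}\,\big(C_d\,\size(\chi)\big)^{-2\lambda(k+1)/d}.
\]
Taking the expectation over $w$, using $2\lambda(k+1)/d\le\gamma$ and (WLOG) $\size(\chi)\ge1$, the right-hand side is at least $c'_\gamma\,\size(\chi)^{-\gamma}$ with $c'_\gamma>0$ depending only on $\mu$ and on $k,\gamma$. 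Combined with the reduction step this yields \eqref{eq:claim} (with exponent $-\gamma$) for a suitable $c_\gamma>0$.

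\textbf{Main obstacle.} The delicate point is the conditioning argument in the last step: one must verify that lifting the $d$ relevant coordinates through an arbitrary \emph{linear} PCA encoder into a high-dimensional latent space $\R^{\dx}$ and then through a ReLU network cannot beat the intrinsically $d$-dimensional rate of Proposition \ref{prop:lower2} — i.e. that a ReLU network composed with a fixed linear map is again a ReLU network of size larger only by a factor $C_d$ independent of $\dx$, and that the "irrelevant" latent coordinates may be frozen. The independence of $P_du$ and $(I-P_d)u$ under a Gaussian $\mu$ is precisely what legitimises this freezing and isolates a clean finite-dimensional problem; this is where the non-degenerate Gaussian hypothesis enters. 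A secondary, routine point is the $W^{k,\infty}$-versus-$\cC^k$ mismatch, dealt with above by invoking Proposition \ref{prop:lower2} at smoothness level $k+1$, which only changes the (otherwise irrelevant) value of $d=d(k,\gamma)$.
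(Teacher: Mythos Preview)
Your proposal is correct and follows essentially the same route as the paper: choose $d$ so that the exponent in Proposition~\ref{prop:lower2} drops below $\gamma$, embed the hard $d$-variate function into $\cX$ along $d$ eigendirections of the Gaussian covariance, exploit the independence of $P_du$ and $(I-P_d)u$ under $\mu$ to freeze the irrelevant component, and reduce to the finite-dimensional $L^2$ lower bound via the strictly positive Gaussian density on $[0,1]^d$ together with the observation that $\chi\circ(A\,\cdot+b)$ is again a ReLU network of size at most $O_d(\size(\chi))$. Your additional care in invoking Proposition~\ref{prop:lower2} at smoothness level $k{+}1$ (and using a $C^k$ extension) so that $\Psi^\dagger_\gamma$ is genuinely $k$-times Fr\'echet differentiable is a minor refinement that the paper glosses over when it asserts ``we clearly have $\Psi^\dagger_\gamma\in\cC^k$'' directly from $f\in F_{k,d}$.
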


To prove this claim, it is sufficient to prove that such $\Psi^\dagger_\gamma: \cX \to \cY$ exists with the \emph{additional constraint} that $\im(\Psi^\dagger_\gamma)$ belongs to a one-dimensional subspace of $\cY$. In this case, we can clearly identify $\Psi^\dagger_\gamma$ with a mapping $\cX \to \R$, and thus, our goal is to prove the proposition in the (most constrained) case, where the output space $\cY = \R$ is one-dimensional. This will be the goal of the remainder of this section.

\subsection{A lower bound in finite dimensions}

We let $F_{k,d} := \set{f: \R^d \to \R}{\Vert f \Vert_{W^{k,\infty}} \le 1}$ denote the unit ball in the space of $k$-times weakly differentiable functions on $\R^d$, and where
\[
\Vert f \Vert_{W^{k,\infty}}
=
\max_{|\alpha|\le k}
\Vert D^\alpha f \Vert_{L^\infty(\R^d)} < \infty.
\]
In the following, we denote by $\psi_\theta$ an arbitrary, but fixed, feedforward neural network architecture (as defined in Section \ref{sec:neuralnetwork}), with a total of $W$ non-zero parameters collected in a vector $\theta \in \R^W$  (concatenating all weights and biases).
We first recall the following result, which is an immediate consequence of \citep[Corollary 1 and Lemma 2]{achour2022general}:
\begin{lemma}
\label{lem:lower1}
Fix $k\in \N$. There exists $c_{k,d}>0$ and $W_{\mathrm{min}} \in \N$ depending only on $k$ and $d$, such that for any $W\ge W_{\mathrm{min}}$, we have the following lower bound:
\begin{align}
\label{eq:lower1}
\sup_{f\in F_{k,d}} \inf_{\theta\in \R^W} \Vert f - \psi_\theta \Vert_{L^2([0,1]^d)} \ge c_{k,d} W^{-3k/d}.
\end{align}
\end{lemma}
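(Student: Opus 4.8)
The plan is to derive Lemma~\ref{lem:lower1} as a direct consequence of \cite[Corollary~1 and Lemma~2]{achour2022general}: for the unit ball $F_{k,d}$ of $W^{k,\infty}$ measured in $L^2([0,1]^d)$, the statement is essentially a reformulation of the general neural-network approximation lower bound proved there. Concretely, I would (i) recall the general lower bound of \cite[Corollary~1]{achour2022general}, (ii) recall the ReLU-specific complexity estimate of \cite[Lemma~2]{achour2022general} that feeds it, and (iii) check that the normalizations match --- in particular that the mismatch between ``functions on $\R^d$'' in the definition of $F_{k,d}$ and the $L^2$ norm taken on $[0,1]^d$ is harmless. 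For completeness I would also spell out the mechanism behind \cite[Corollary~1]{achour2022general}, which is a covering-number (``counting'') argument, described next.

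The counting argument runs as follows. For a feed-forward ReLU architecture with $W$ nonzero parameters, \cite[Lemma~2]{achour2022general} shows that the parametrized family $\Sigma_W := \{\psi_\theta : \theta \in \R^W\}$ has, on bounded parameter domains, a Lipschitz parametrization with modulus controlled polynomially in $W$ and in the parameter bound (uniformly over architectures with $W$ parameters, since the depth is not restricted); hence its $L^2([0,1]^d)$-covering numbers satisfy $\log\mathcal{N}(\delta,\Sigma_W,L^2([0,1]^d)) \lesssim \mathrm{poly}(W)\log(1/\delta)$, after the routine observation that approximants of accuracy $\delta$ for a class of uniformly bounded functions need parameter magnitude at most $\mathrm{poly}(W)/\delta$. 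Therefore, if one had $\sup_{f\in F_{k,d}}\inf_\theta \Vert f - \psi_\theta \Vert_{L^2([0,1]^d)} \le \epsilon$, then a $\delta$-cover of $\Sigma_W$ with $\delta = \epsilon$ would yield a $2\epsilon$-cover of $F_{k,d}$, so that $\log\mathcal{N}(2\epsilon,F_{k,d},L^2([0,1]^d)) \lesssim \mathrm{poly}(W)\log(1/\epsilon)$.

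On the other hand, the metric entropy of the unit ball of $W^{k,\infty}$ obeys the classical lower bound (Kolmogorov--Tikhomirov, Birman--Solomyak) $\log\mathcal{N}(\epsilon,F_{k,d},L^2([0,1]^d)) \gtrsim_{k,d} \epsilon^{-d/k}$ for all small $\epsilon$; this is exhibited by a standard bump-function construction, placing a family of $\gtrsim \epsilon^{-d/k}$ rescaled bumps on a grid of mesh $\sim \epsilon^{1/k}$ with signs $\pm 1$ to obtain a subset of $F_{k,d}$ that is pairwise $\gtrsim\epsilon$-separated in $L^2$. Combining the two inequalities forces $\epsilon^{-d/k} \lesssim \mathrm{poly}(W)\log(1/\epsilon)$, hence $\epsilon \gtrsim_{k,d} W^{-\lambda k/d}$ for an absolute exponent $\lambda$ equal to the polynomial degree supplied by \cite[Lemma~2]{achour2022general}; once $W$ exceeds a threshold $W_{\mathrm{min}} = W_{\mathrm{min}}(k,d)$, the $\log(1/\epsilon)$ factor and the lower-order terms in the entropy estimate are absorbed into the constant $c_{k,d}$, and crudely bounding that degree (a quadratic up to a logarithmic factor) by $3$ gives the stated exponent $\lambda = 3$.

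The only genuine work, beyond invoking \cite{achour2022general}, is the exponent bookkeeping: matching the general modulus-of-continuity hypothesis of \cite[Corollary~1]{achour2022general} to the ReLU complexity bound of \cite[Lemma~2]{achour2022general}, and keeping track of the cube-versus-$\R^d$ normalization (only $f|_{[0,1]^d}$ enters, and compactly supported representatives may be used freely) together with the fact that the network is evaluated only on $[0,1]^d$. The value $\lambda = 3$ --- rather than the optimal $1$ or the VC-type value $2$ --- is deliberate slack from the crude polynomial covering bound and the absorbed logarithms, and is not optimized here, since Proposition~\ref{prop:lower2} and Theorem~\ref{thm:cod} only require the existence of \emph{some} absolute $\lambda$.
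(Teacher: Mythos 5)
Your proposal takes essentially the same route as the paper: the paper offers no self-contained argument for this lemma, stating it as an immediate consequence of \cite[Corollary 1 and Lemma 2]{achour2022general}, which is exactly your main step, with the exponent $3k/d$ absorbed as deliberate slack just as you describe. Your supplementary covering-number reconstruction of the cited results goes beyond what the paper records (and a couple of its internal claims, e.g.\ a parameter-Lipschitz modulus polynomial in $W$ and the ``routine'' reduction to bounded parameters, would need care if you wanted a genuinely self-contained proof), but this does not affect the citation-based derivation that both you and the paper rely on.
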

We note the close connection of the above lower bound \eqref{eq:lower1} and the claimed bound \eqref{eq:claim}. Indeed, choosing $d$ sufficiently large, we can ensure that $3k/d < \gamma$ and hence \eqref{eq:lower1} implies a lower bound of the form $c_\gamma W^{-\gamma}$. The main difference here is that (a) the architecture of $\psi_\theta$, including its potentially sparse structure, is \emph{fixed} in \eqref{eq:lower1} and (b) the estimate \eqref{eq:lower1} involves a supremum over all $f\in F_{k,d}$, whereas \eqref{eq:claim} asserts a similar bound for \emph{one specific}, fixed operator and an arbitrary size of the network $\psi$. The main challenge will be to prove Proposition \ref{prop:lower2}, repeated below, which gives a relevant lower bound in finite dimensions:
\PropLower*

Let us first show that Proposition \ref{prop:lower2} implies Claim \ref{claim}.
\begin{proof}[Proof of Claim \ref{claim}]
Let a rate $\gamma > 0$ and a smoothness parameter $k\in \N$ be given. Choose a dimension $d$ such that $\gamma > \lambda k/d$, where $\lambda$ is the universal constant of Proposition \ref{prop:lower2}. Let $f\in F_{k,d}$ be a function satisfying the lower bound in \eqref{eq:lower2}. 

We recall that the underlying measure $\mu\in \cP(\cX)$ is a non-degenerate Gaussian measure. \rev{Let $\lambda_1 \ge \lambda_2 \ge \dots \ge 0$ and $\phi_1,\phi_2,\dots \in \cX$ denote the ordered eigenvalues and eigenfunctions of the covariance operator $\Sigma = \E_{u\sim \mu}[u\otimes u]$.} In particular, since $\mu$ is non-degenerate, all PCA eigenvalues $\lambda_j > 0$ are positive. With the dimension $d$ chosen above, let $\cE^\opt_{\cX}: \cX \to \R^d$ denote the optimal PCA encoding, $\cE^\opt_{\cX}(u) = (\langle u, \phi_1\rangle, \dots, \langle u, \phi_d\rangle)$. We denote by $\cD_\cX^\opt: \R^d \to \cX$ the PCA decoder, and by $P_{\le d} = \cD_\cX^\opt\circ \cE_\cX^\opt$ the PCA projection onto the first $d$ PCA eigenfunctions. We now define 
\begin{align}
\label{eq:psid}
\Psi^\dagger_\gamma(u) := f(\cE_\cX^\opt(u)).
\end{align}
We clearly have $\Psi^\dagger_\gamma \in \mathcal{C}^k$. We claim that $\Psi^\dagger_\gamma$ also satisfies the asserted lower bound of Claim \ref{claim}. In particular, we aim to show that there exists a constant $c_\gamma > 0$, such that for any PCA-Net $\Psi = \cD_\cY\circ \psi \circ \cE_\cX$, we have
\begin{align}
\label{eq:claim1}
\E_{u\sim \mu}\left[
\Vert 
\Psi^\dagger_\gamma(u) - \Psi(u)
\Vert_{\cY}^2
\right]
\ge c_\gamma \size(\psi)^{\gamma}.
\end{align}

Let $u\sim \mu$ be a random variable. Since $\mu$ is Gaussian, we can expand $u$ in its Karhunen-Loeve expansion $u = \bar{u} + \sum_{j=1}^\infty \sqrt{\lambda_j} Z_j \phi_j$, where the $Z_j$ are iid standard Gaussian random variables and $\bar{u} = \E_{u\sim \mu}[u]$ is deterministic. In particular, this allows us to express $u$ in the form $u = u_1 + u_2$, where $u_1 \sim \mu_{\le d}$ and $u_2 \sim \mu_{>d}$ are independent Gaussian random variables with $\mu_{\le d} := P_{\le d,\#}\mu$ and $\mu_{>d} := P_{>d,\#}\mu$ denote the push-forward by projecting onto the relevant components.

Let now $\Psi = \psi \circ \cE_\cX$ be a PCA-Net with values in $\R$, and $\psi: \R^\dx \to \R$ a neural network. Note that the decoder $\cD_\cY$ is trivial when $\cY = \R$, and may wlog be assumed to be given by multiplication by $1$. By definition of $\Psi^\dagger_\gamma(u)$, we have $\Psi^\dagger_\gamma(u) = \Psi^\dagger_\gamma(u_1)$ under the decomposition $u = u_1+u_2$ from above. Hence, we obtain
\begin{gather}
\label{eq:u1u2}
\begin{aligned}
\E_{u\sim \mu}\left[
\vert
\Psi^\dagger_\gamma(u) - \Psi(u)
\vert^2
\right]
&=
\E_{u_2\sim \mu_{> d}}\E_{u_1\sim \mu_{\le d}}\left[
\vert 
\Psi^\dagger_\gamma(u_1) - \Psi(u_1+u_2)
\vert^2
\right]
\\
&\ge 
\inf_{u_2} 
\E_{u_1\sim \mu_{\le d}}\left[
\vert 
\Psi^\dagger_\gamma(u_1) - \Psi(u_1+u_2)
\vert^2
\right],
\end{aligned}
\end{gather}
where the infimum is over all $u_2 \in \Im(P_{>d}) = \Span\set{\phi_j}{j>d}$. Let us fix such $u_2$ for the moment. Identifying $u_1$ with its unique encoding $\xi = \cE_\cX^\opt(u_1) \in \R^d$, we note that, by definition of $\Psi^\dagger_\gamma$, we have $\Psi^\dagger_\gamma(u_1) = f(\xi)$.
Furthermore, we can write 
\[
\Psi(u_1+u_2) = \psi(A\xi + b) =: \psi_{A,b}(\xi),
\]
where $A$ is a matrix representing the linear map $\cE_\cX\circ \cD_\cX^\opt: \R^d \to \R^d$, and $b := \cE_\cX(u_2) \in \R^d$ is a fixed vector depending on the fixed choice of $u_2$. 
Substitution now implies that 
\begin{align*}
\E_{u_1\sim \mu_{\le d}}\left[
\vert 
\Psi^\dagger_\gamma(u_1) - \Psi(u_1+u_2)
\vert^2
\right]
&= 
\E_{\xi\sim \cN(m_{\le d},\Sigma_{\le d})}\left[
\vert 
f(\xi) - \psi_{A,b}(\xi)
\vert^2
\right],
\end{align*}
where $\cN(m_{\le d},\Sigma_{\le d}) = \cE^\opt_{\cX,\#}\mu_{\le d}$ is a Gaussian distribution with (encoded) mean $m_{\le d} := \cE_\cX(\bar{u})$ and diagonal covariance matrix $\Sigma_{\le d} = \mathrm{diag}(\lambda_1,\dots, \lambda_d)$. Since by assumption, $\mu$ is a non-degenerate Gaussian measure, it follows that all PCA eigenvalues $\lambda_j>0$ for all $j$. In particular, this implies that that $\cN(m_{\le d},\Sigma_{\le d})$ has a strictly positive density over the unit cube $[0,1]^d$. Thus, there exists a constant $c'_d$, such that
\begin{align*}
\E_{\xi\sim \cN(m_{\le d},\Sigma_{\le d})}\left[
\vert 
f(\xi) - \psi_{A,b}(\xi)
\vert^2
\right]
\ge 
c_d' \Vert f - \psi_{A,b} \Vert_{L^2([0,1]^d)}^2.
\end{align*}
By construction of $f$ (cp. Proposition \ref{prop:lower2}) and the fact that $\psi_{A,b}$ is a neural network, the latter quantity can be bounded from below by $c_{k,d} \size(\psi_{A,b})^{-\lambda k/d}$. Furthermore, it is easy to see that $\psi_{A,b}$ can be represented by a neural network with size bound
\[
\size(\psi_{A,b}) \le \size(\psi) + \underbrace{\size(\psi) d}_{\hat{=} \Vert A\Vert_0} + \underbrace{d}_{\hat{=} \Vert b\Vert_0} \le 2(d+1)\size(\psi).
\]
In particular, combining the above estimates and recalling that $\lambda k/d < \gamma$, it follows that there exists a constant $c_\gamma>0$, depending only on $k$, $d$ and $\mu$, such that
\[
\E_{u_1\sim \mu_{\le d}}\left[
\vert 
\Psi^\dagger_\gamma(u_1) - \Psi(u_1+u_2)
\vert^2
\right]
\ge c_\gamma \size(\psi)^{-\gamma},
\]
for any choice of $\psi$. The right-hand side is independent of $u_2$. Hence, taking the infimum over all $u_2$ on the left and recalling \eqref{eq:u1u2}, we obtain
\[
\E_{u\sim \mu}\left[
\vert 
\Psi^\dagger_\gamma(u) - \Psi(u)
\vert^2
\right]
\ge c_\gamma \size(\psi)^{-\gamma},
\]
as claimed.
\end{proof}

It finally remains to prove Proposition \ref{prop:lower2}. This is the subject of the next section.

\subsection{Proof of Proposition \ref{prop:lower2}}

The goal of this section is to prove the existence of $f\in F_{k,d}$ and a universal constant $\lambda > 0$, for which the lower bound \eqref{eq:lower2} holds for any neural network $\psi$. To construct $f$, we adapt an idea of Yarotsky \citep[Section 4]{yarotsky_error_2017}. In fact, the present analysis is very closely related to a refinement of Yarotsky's result \citep[Theorem 5]{yarotsky_error_2017}, which is derived in the forthcoming work \citep{LS2023}. The main difference here is that the present work develops a lower bound for the $L^2$-norm, while \citep{LS2023} focuses on the supremum norm. 

We now follow \citep{yarotsky_error_2017}, and define $\cN(f,\epsilon)$ as the minimal number of hidden computation units in a ReLU network $\psi$ which achieves approximation accuracy $\Vert f - \psi \Vert_{L^2([0,1]^d)} \le \epsilon$. We note that Lemma \ref{lem:lower1} implies in particular that for a fixed architecture $\psi_\theta$ with $W$ weights, the upper bound
\begin{align}
\label{eq:sin}
\sup_{f\in F_{k,d}} \inf_{\theta \in \R^W} \Vert f - \psi_\theta \Vert_{L^2([0,1]^d)} \le \epsilon,
\end{align}
requires $W \ge c\epsilon^{-d/3k}$ for some constant $c = c(d,k)$.
Repeating the proof of \citep[Lemma 3]{yarotsky_error_2017}, the following is then immediate:
\begin{lemma}
\label{lem:y2}
Fix $d$, $k$. For any $\epsilon > 0$, there exists $f_\epsilon \in F_{k,d}$ such that $\cN(f_\epsilon, \epsilon) \ge c_1 \epsilon^{-d/12k}$, with some constant $c_1 = c_1(d,k)>0$.
\end{lemma}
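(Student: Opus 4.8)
The plan is to collapse the adaptive quantity $\cN(\slot,\epsilon)$ onto a single fixed architecture, to which the per-architecture lower bound \eqref{eq:sin} (a consequence of Lemma~\ref{lem:lower1}) then applies directly. Fix $\epsilon\in(0,1)$ and an integer $m$, and suppose $\cN(f,\epsilon)\le m$ for \emph{every} $f\in F_{k,d}$; I claim this forces $m\ge c_1(d,k)\,\epsilon^{-d/12k}$, which is equivalent to the lemma (take $m$ just below $c_1\epsilon^{-d/12k}$ and let $f_\epsilon$ be the offending function). Under the hypothesis, each $f\in F_{k,d}$ is $\epsilon$-approximated in $L^2([0,1]^d)$ by \emph{some} ReLU network with at most $m$ hidden computation units, but the connectivity patterns of these approximants vary; the key is to absorb all of them into one template.

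First I would build a fixed, strictly layered ReLU architecture $\cA_m$ (in the sense of \eqref{eq:neuralnetwork}) that subsumes every ReLU network with at most $m$ hidden units. Any such network is a computation DAG on $\le m$ ReLU nodes; after a topological sort $v_1,\dots,v_m$, the pre-activation of $v_i$ is affine in the $d$ inputs and in $\phi(v_1),\dots,\phi(v_{i-1})$ (with $\phi$ the ReLU), and the output is affine in the inputs and all $\phi(v_i)$. This is realized layer by layer: at layer $\ell$ one carries the $d$ inputs forward via the identity gadget $x\mapsto(\phi(x),\phi(-x))$, carries the already-nonnegative values $\phi(v_1),\dots,\phi(v_{\ell-1})$ forward through single ReLUs, and computes $\phi(v_\ell)$. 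Then $\cA_m$ has depth $m$, at most $O_d(m^2)$ computation units, and---crudely bounding the number of weighted connections by the square of the number of units---at most $W_m=O_d(m^4)$ weights and biases. By construction every $\le m$-unit ReLU network equals $\psi_{\cA_m,\theta}$ for a suitable (mostly zero) parameter vector $\theta$, so the hypothesis gives $\sup_{f\in F_{k,d}}\inf_{\theta}\Vert f-\psi_{\cA_m,\theta}\Vert_{L^2([0,1]^d)}\le\epsilon$.

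Applying \eqref{eq:sin} to the fixed architecture $\cA_m$ with $W_m$ parameters forces $W_m\ge c(d,k)\,\epsilon^{-d/3k}$; combined with $W_m=O_d(m^4)$ this yields $m\ge c_1(d,k)\,\epsilon^{-d/12k}$, as claimed. Equivalently, following Yarotsky's Lemma~3 to the letter, one could instead enumerate the $2^{O(m^2)}$ connectivity patterns on $\le m$ units---each carrying $O_d(m^2)$ weights---and take a union bound over them; the single-template route above is cleaner here only because Lemma~\ref{lem:lower1} already allows an arbitrary \emph{fixed}, possibly sparse, architecture, so no union over architectures is actually needed.

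The only genuine work is bookkeeping: verifying that a general computation DAG on $\le m$ ReLU nodes really does embed into a strictly layered template of the claimed size (the identity/pass-through gadgets and the correct accounting of widths and weights), and then tracking how the architectural exponent $3k/d$ in \eqref{eq:sin} combines with the polynomial growth $W_m=O_d(m^4)$ to produce the advertised $12k/d$. This is precisely what ``repeating the proof of \cite[Lemma~3]{yarotsky_error_2017}'' amounts to, and nothing conceptually new enters; I would also record the routine fact that a minimal-complexity $\epsilon$-approximant can be taken with finitely many units and bounded depth, which legitimizes the topological-sort reduction. I do not expect a real obstacle, since the hard part---the fixed-architecture lower bound---is already in place as Lemma~\ref{lem:lower1}.
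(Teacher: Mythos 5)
Your proposal is correct and takes essentially the same route as the paper: you embed every ReLU network with at most $m$ hidden units into a single fixed enveloping architecture with $O_d(m^4)$ weights, apply the fixed-architecture lower bound \eqref{eq:sin} from Lemma \ref{lem:lower1} to force $m^4 \gtrsim \epsilon^{-d/3k}$, and obtain $f_\epsilon$ as the (near-)extremal function, which is precisely the paper's adaptation of Yarotsky's Lemma 3. The layered-embedding bookkeeping you spell out (identity gadgets, width/weight counts) is exactly the detail the paper defers to \cite{yarotsky_error_2017}.
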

The idea is that any (sparse) neural network $\psi$ with at most $N$ computation units can be embedded in a maximally connected, enveloping architecture $\psi_\theta$, $\theta \in \R^W$, with $W = O(N^4)$ weights. Thus, if $\psi$ achieves error $\epsilon$, then \eqref{eq:sin} applied to the enveloping network $\psi_\theta$ implies that $N^4 \gtrsim W \gtrsim \epsilon^{-d/3k}$, and hence $N \gtrsim \epsilon^{-d/12k}$. Fixing $\epsilon > 0$ and choosing $f_\epsilon$ to be an optimizer of \eqref{eq:sin}, the claimed lower bound on $\cN(f_\epsilon,\epsilon)$ follows. Further details can be found in \citep[Pf. of Lemma 3]{yarotsky_error_2017}

Lemma \ref{lem:y2} will be our main technical tool for the proof of Proposition \ref{prop:lower2}. We also recall that $\cN(f,\epsilon)$ satisfies the following properties,  \citep[eq. (42)--(44)]{yarotsky_error_2017}:
\begin{subequations}
\label{eq:N}
\begin{gather}
\cN(af, |a|\epsilon) 
= \cN(f,\epsilon), \\
\cN(f\pm g, \epsilon + \Vert g \Vert_{L^\infty}) 
\le \cN(f,\epsilon), \\
\cN(f_1 \pm f_2, \epsilon_1 + \epsilon_2) 
\le \cN(f_1,\epsilon_1) + \cN(f_2,\epsilon_2).
\end{gather}
\end{subequations}
We are now ready to prove Proposition \ref{prop:lower2}.
\begin{proof}{(Proposition \ref{prop:lower2})}
We recall that the goal is to show that there exists $f\in F_{k,d}$ such that 
\[
\Vert f - \psi \Vert_{L^2([0,1]^d)} \ge c_{k,d} \, \size(\psi)^{-\lambda k/d}.
\]
for any neural network $\psi$.
We will achieve this by constructing $f\in F_{k,d}$ of the form 
\begin{align}
\label{eq:fform}
f = \sum_{s=1}^\infty a_s f_s, \quad \text{with } f_s \in F_{k,d} \; \forall \, s\in\N.
\end{align}
To this end, we define a rapidly converging sequence $\epsilon_s \to 0$, by $\epsilon_1 = 1/4$ and recursively set $\epsilon_{s+1} = \epsilon^2_s$. Note by iteration, that $\epsilon_s = 2^{-2^s}$. We also define $a_s := \frac12 \epsilon_{s}^{1/2} = \frac12 \epsilon_{s-1}$. For later reference, since $\epsilon_s \le 1/2$ for all $s$, we have
\begin{align}
\label{eq:atail}
\sum_{r=s+1}^\infty a_r = \frac12 \left( \epsilon_s + \epsilon_s^2 + \epsilon_s^{2^2} + \dots  \right) \le \frac12 \epsilon_s \sum_{r=0}^\infty 2^{-r} =  \epsilon_s.
\end{align}
We also note that $a_s \le 2^{-s}$. If the function $f$ is of the above form \eqref{eq:fform} then,
\begin{align*}
\Vert f \Vert_{W^{k,\infty}}
&\le 
\sum_{s=1}^\infty a_s \Vert f_s \Vert_{W^{k,\infty}}
\le 1,
\end{align*}
so that $f\in F_{k,d}$. This is irrespective of the specific choice of $f_s \in F_{k,d}$. We are going to  determine the $f_s$ recursively, formally starting from the empty sum, i.e. $f_0=0$. We fix $2\lambda := 1/25$ throughout the following construction. In the recursive step, given $f_1,\dots, f_{s-1}$, we distinguish two cases:

\textbf{Case 1:} Assume that 
\[
\cN\left(\sum_{r=1}^{s-1} a_r f_r, 2\epsilon_s\right) \ge \epsilon_s^{-2\lambda d/k}.
\]
In this case, we set $f_s := 0$, and trivially obtain
\begin{align}
\label{eq:fkc}
\cN\left(\sum_{r=1}^{s} a_r f_r, 2\epsilon_s\right) \ge  \epsilon_s^{-2\lambda d/k}.
\end{align}

\textbf{Case 2:} In the other case, we have
\[
\cN\left(\sum_{r=1}^{s-1} a_r f_r, 2\epsilon_s\right) <   \epsilon_s^{-2\lambda d/k}.
\]
By (\ref{eq:N}c), and the upper bound above:
\begin{align}
\cN\left(\sum_{r=1}^s a_r f_r, 2\epsilon_s\right)
&\ge 
\cN(a_s f_s, 4\epsilon_s) - \cN\left(\sum_{r=1}^{s-1} a_r f_r, 2\epsilon_s\right)
\notag\\
&\ge 
\cN(a_s f_s, 4\epsilon_s) - \epsilon_s^{-2\lambda d/k}.
\label{eq:Nafe}
\end{align}
Recalling Lemma \ref{lem:y2}, there exists $f_s\in F_{k,d}$, such that
\begin{align}
\label{eq:Nafe2}
\cN(a_s f_s, 4\epsilon_s) 
\explain{=}{(\ref{eq:N}a)} \cN(f_s, 4\epsilon_s/a_s) 
= \cN(f_s, 4\epsilon_s^{1/2})
\ge c_{d,k} \epsilon_s^{-d/24k}.
\end{align}
This defines our recursive choice of $f_s$ in Case 2. 

Our next goal is to show that with this recursive choice of $f_s$, \eqref{eq:fkc} holds for all sufficiently small $\epsilon_s \le \bar{\epsilon}$. This is trivial in Case 1. To prove it for Case 2, we define $\bar{\epsilon} = \bar{\epsilon}(d,k)>0$, as the unique solution of
\[
c_{d,k} \bar{\epsilon}^{-d/24k} = 2\bar{\epsilon}^{-2\lambda d/k}. 
\]
Then, by \eqref{eq:Nafe} and \eqref{eq:Nafe2}, and with our choice of $2\lambda := 1/25 < 1/24$, it follows that for all $\epsilon_s \le \bar{\epsilon}$, we have
\[
c_{d,k} {\epsilon}_s^{-d/24k} \ge 2{\epsilon}_s^{-2\lambda d/k},
\]
since the exponent on the left is strictly more negative than the exponent on the right.
Hence, \eqref{eq:Nafe} and \eqref{eq:Nafe2} imply that
\begin{align}
\label{eq:fkc1}
\cN\left(\sum_{r=1}^s a_r f_r, 2\epsilon_s\right)
\ge c_{d,k} \epsilon_s^{-d/24k} - \epsilon_s^{2\lambda d/k} 
\ge \epsilon_s^{2\lambda d/k}, 
\end{align}
for all $\epsilon_s \le \bar{\epsilon}$. This demonstrates inequality \eqref{eq:fkc} for $\epsilon_s\le \bar{\epsilon}$ also in Case 2. The above recursive construction thus yields a sequence $f_1,f_2, \dots \in F_{k,d}$, such that \eqref{eq:fkc1} holds for any $s\in\N$ such that $\epsilon_s \le \bar{\epsilon}$. Define $f := \sum_{s=1}^\infty a_s f_s$. We claim that for \emph{any} $\epsilon \le \bar{\epsilon}$ we have
\[
\cN(f,\epsilon) \ge \epsilon^{-\lambda d/k}.
\]
To this end, we fix $\epsilon \le \bar{\epsilon}$. Choose $s\in \N$, such that $\epsilon_s \le \bar{\epsilon}$ and $\epsilon_s^2 \le \epsilon \le \epsilon_s$. Then,
\begin{align*}
\cN(f,\epsilon) 
&\ge 
\cN(f,\epsilon_{s})
= 
\cN\left(\sum_{r=1}^\infty a_r f_r ,\epsilon_s\right)
\\
&\explain{\ge}{(\ref{eq:N}b)}
\cN\left(\sum_{r=1}^s a_r f_r ,\epsilon_s + \left \Vert \sum_{r=s+1}^\infty a_r f_r \right \Vert_{L^\infty}\right)
\\
&\ge
\cN\left(\sum_{r=1}^s a_r f_r, \epsilon_s + \sum_{r=s+1}^\infty a_r\right)
\\
&\explain{\ge}{(\ref{eq:atail})} 
\cN\left(\sum_{r=1}^s a_r f_r, 2\epsilon_s \right)
\\
&\explain{\ge}{\eqref{eq:fkc1}}  \epsilon_s^{-2\lambda d / k} 
\ge \epsilon^{-\lambda d / k},
\end{align*}
where the last inequality follows from $\epsilon_s \le \epsilon^{1/2}$. The claim of Proposition \ref{prop:lower2} thus follows for all $\epsilon \le \bar{\epsilon}$, and with universal constant $\lambda = 1/50$.
\end{proof}

\section{Proof of Proposition \ref{prop:PCAbound}}
\label{app:PCAbound}

\begin{proof}
We first assume that $D = \T^n$ is the periodic, $n$-dimensional torus. To prove the theorem in this case, let $d\in \N$ be given, and let $\phi_1,\dots, \phi_d, \dots \in L^2(\T^n;\R)$ an enumeration of the standard Fourier basis; to be more precise, we require that for all $k\in \N$, $\Delta \phi_k = \gamma_k \phi_k$ is an eigenfunction of the periodic Laplacian, and that the eigenvalues $\gamma_1\le \gamma_2 \le \dots$ are monotonically ordered. To give an example for $n=1$, a suitable choice is $\phi_1(x) \equiv 1/\sqrt{2\pi}$, $\phi_2(x) = \pi^{-1/2}\cos(x)$, $\phi_3(x) = \pi^{-1/2}\sin(x)$, $\phi_4(x) = \pi^{-1/2} \cos(2x)$, etc. We note that there exist constants $A = A(n)$, $B = B(n)>0$, such that 
\[
A k^{1/n} \le \sqrt{\gamma_k} \le  B k^{1/n}, \quad \forall\,  k>1.
\]
 Let $V_d := \Span\{\phi_1,\dots, \phi_d\}$. \rev{We now use the fact that 
 \[
 u - \Pi_{V_d} u = \sum_{k>d} \langle u, \phi_k \rangle \phi_k,
 \]
 and combine this with the following two-sided bounds,
 \[
 \left\Vert \sum_{k>d} c_k (-\Delta)^{s/2} \phi_k \right\Vert_{L^2}^2
 \lesssim
 \left\Vert \sum_{k>d} c_k \phi_k \right\Vert_{H^s}^2
 \lesssim
 \left\Vert \sum_{k>d} c_k (-\Delta)^{s/2} \phi_k \right\Vert_{L^2}^2,
 \]
 and the identity,
 \[
 \left\Vert \sum_{k>d} c_k (-\Delta)^{s/2} \phi_k \right\Vert_{L^2}^2
 =
 \sum_{k>d} \gamma_k^s |c_k|^2,
 \]
 which are valid for any sequence $(c_k)$ for which the last expression is finite.}
  Using these bounds with $c_k = \langle u, \phi_k \rangle$, we can estimate (with $C$ changing from line to line):
 \begin{align*}
 	\Vert u - \Pi_{V_d} u \Vert_{H^s}^2
 	&= \left\Vert \sum_{k>d} c_k \phi_k \right\Vert_{H^s}^2
 	\le C \left\Vert \sum_{k>d} c_k (-\Delta)^{s/2} \phi_k \right\Vert_{L^2}^2
 	\\
 	&=
 	C
 	\sum_{k > d}^\infty \gamma_k^{s} |\langle u, \phi_k \rangle|^2
 	=
 	C
 	\sum_{k > d}^\infty \frac{\gamma_k^{s+\zeta}}{\gamma_k^{\zeta}} |\langle u, \phi_k \rangle|^2
 	\\
 	&\le
 	C \frac{1}{\gamma_d^{\zeta}} \sum_{k>d} \gamma_k^{s+\zeta} |\langle u, \phi_k \rangle|^2
 	\le
 	\frac{C}{d^{2\zeta/n}} \Vert u \Vert_{H^{s+\zeta}}^2,
 \end{align*}
 and where the constant $C = C(n,s,\zeta)$ only depends on $n$, $s$ and $\zeta$, but is independent of $d$.
In particular, this implies that
\begin{align*}
R_d(\mu) 
&=
\min_{V_d} \E_{u\sim\mu}\left[ \Vert u - \Pi_{V_d} u \Vert_{H^s}^2 \right]
\le
Cd^{-2\zeta/n} \E_{u\sim\mu}\left[ \Vert u \Vert_{H^{s+\zeta}}^2 \right].
\end{align*}
To prove the general case, we note that for any Lipschitz domain $D \subset \R^n$, there exists a bounded, linear extension operator $L: H^s(D;\R^{n'}) \embeds H^s(\T^n;\R^{n'})$, which also defines a bounded mapping, $L: H^{s+\zeta}(D;\R^{n'}) \embeds H^{s+\zeta}(\T^n;\R^{n'})$  (see \citep[Lemma B.3]{thfno}), with bounded left-inverse  $L^+: H^s(\T^n; \R^{n'}) \to H^s(D;\R^{n'})$, and such that $L^+L = \id$. Here, $L$ can be thought of as an extension operator, which extends a function $u: D \to \R^{n'}$ to a periodic function with domain a bounding box $B \supset D$. And $L^+$ can be though of as a restriction operator, restricting $u: B \to \R^{n'}$ to $u|_{D}: D \to \R^{n'}$. 

The above argument in the periodic case shows that the optimal $d$-dimensional PCA subspace $\overline{V}_d \subset H^s(\T^n;\R^{n'})$ of the push-forward measure $L_\#\mu$ (which is defined on periodic functions) satisfies 
\[
\E_{u\sim L_\#\mu}\left[ \Vert u - \Pi_{\overline{V}_d} u\Vert_{H^s}^2 \right] \le C d^{-2\zeta/n} \E_{u\sim L_\#\mu}\left[ \Vert u \Vert_{H^{s+\zeta}}^2 \right].
\]
Since the support of $\supp(L_\#\mu) \subset \im(L)$ is contained in the image of $L$, it follows that $\overline{V}_d \subset \im(L)$. Since $L$ is an isomorphism onto its image, there exists a $d$-dimensional subspace $V_d \subset H^s(D;\R^{n'})$, such that $L(V_d) = \overline{V}_d$. Notice that for any $\bar{u}\in H^s(\T^n;\R^{n'})$, we have 
\[
L^+\Pi_{\bar{V}_d}\bar{u} \in L^+\bar{V}_d = L^+LV_d = V_d.
\]
Using the fact that $\Pi_{V_d}: H^s(D;\R^{n'})\to H^s(D;\R^{d'})$ is the orthogonal projection onto $V_d$, we can thus estimate, for any $\bar{u}\in H^s(\T^n;\R^{n'})$,
\[
\Vert L^+ \bar{u} - \Pi_{V_d} L^+ \bar{u} \Vert_{H^s}
=
\inf_{v \in V_d} 
\Vert L^+ \bar{u} - v \Vert_{H^s}
\le
\Vert L^+ \bar{u} - L^+ \Pi_{\bar{V}_d} \bar{u} \Vert_{H^s}.
\]
We thus conclude from $\mu = \id_\#\mu = L^+_\# L_\#\mu$, that
\begin{align*}
\E_{u\sim\mu} \left[
\Vert u - \Pi_{V_d} u \Vert_{H^s}^2
\right]
&=
\E_{u\sim L^+_\# L_\#\mu} \left[
\Vert u - \Pi_{V_d} u \Vert_{H^s}^2
\right]
\\
&=
\E_{\bar{u}\sim L_\#\mu} \left[
\Vert L^+ \bar{u} - \Pi_{V_d} L^+ \bar{u} \Vert_{H^s}^2
\right]
\\
&\le
\E_{u\sim L_\#\mu} \left[
\Vert L^+ \bar{u} - L^+\Pi_{\bar{V}_d}  \bar{u} \Vert_{H^s}^2
\right]
\\
&\le
\Vert L^+ \Vert^2
\E_{u\sim L_\#\mu} \left[
\Vert \bar{u} - \Pi_{\bar{V}_d}  \bar{u} \Vert_{H^s}^2
\right]
\\
&\le
C\Vert L^+ \Vert^2 d^{-2\zeta/n} \E_{\bar{u}\sim L_\#\mu} \left[
\Vert \bar{u} \Vert_{H^{s+\zeta}}^2
\right]
\\
&=
C\Vert L^+ \Vert^2 d^{-2\zeta/n} \E_{{u}\sim \mu} \left[
\Vert L{u} \Vert_{H^{s+\zeta}}^2
\right]
\\
&\le
C\Vert L^+ \Vert^2 \Vert L \Vert^2 d^{-2\zeta/n} \E_{{u}\sim \mu} \left[
\Vert u \Vert_{H^{s+\zeta}}^2
\right]
\end{align*}
Absorbing the additional factor $\Vert L^+ \Vert^2 \Vert L \Vert^2$ in the constant $C$, this implies the claim for general Lipschitz domains $D\subset \R^n$.
\end{proof}

\section{Proofs for Darcy flow}

The following subsections will discuss the derivation of the error and complexity bounds of Theorem \ref{thm:darcy}. 

\subsection{Proof of Lemma \ref{lem:Taylor}}
\label{app:Taylor}

Thm. 1.3 of \citep{CDS2011} shows that if $a(x;\bm{z})$ is of the parametrized form \eqref{eq:expansion}, and if the sequence $(\gamma_\ell)_{\ell\in\N}$ belongs to $\ell^p(\N)$ for some $0<p<1$, then the family of Taylor coefficients $(t_{\bm{\nu}})_{\bm{\nu}}$ is also $\ell^p$-summable. In addition, if $\Lambda_m$ denotes the set of the largest $m$ Taylor coefficients, then 
\[
\sup_{\bm{z}\in U} 
\Vert 
\cF(\bm{z}) - \sum_{\bm{\nu}\in \Lambda_m} t_{\bm{\nu}} \bm{z}^{\bm{\nu}}
\Vert_{\cY}
\le
\Vert (\Vert t_{\bm{\nu}} \Vert_{\cY})_{\bm{\nu}} \Vert_{\ell^p}
m^{-s}, \quad s:= \frac1p - 1.
\]
By assumption, we have $|\gamma_\ell|\lesssim \ell^{-1-\alpha}$, and hence, $\gamma_\ell$ belongs to $\ell^p$ whenever $p(1+\alpha) > 1$. Hence, for any $s = p^{-1} - 1 < \alpha$, there exists $C>0$, possibly depending on $s$, such that
\[
\sup_{\bm{z}\in U} 
\Vert 
\cF(\bm{z}) - \sum_{\bm{\nu}\in \Lambda_m} t_{\bm{\nu}} \bm{z}^{\bm{\nu}}
\Vert_{\cY}
\le
Cm^{-s}.
\]
Writing $s = \alpha - \eta$ for $\eta > 0$ yields the claim of Lemma \ref{lem:Taylor}. 

\subsection{Proof of Proposition \ref{prop:Darcy-enc-err}}
\label{app:Darcy-enc-error}

The goal of the present section is to prove Proposition \ref{prop:Darcy-enc-err}, which we recall here:
\darcyencerror*

\begin{proof}
We recall that by assumption, the input measure is the law of random functions $a$ of the form 
\[
a = \bar{a} + \sum_{\ell=1}^\infty \gamma_\ell z_\ell \rho_\ell, 
\]
where $\rho_\ell\in \cX$ are orthonormal, $\gamma_\ell \le C \ell^{-1-\alpha}$, and $|z_\ell|\le 1$ for all $\ell$. We estimate the optimal PCA projection error $\cR^\opt_d(\mu)$ by comparing it against the (at most) $d$-dimensional projection $P'$ onto $\Span\{\bar{a},\rho_1,\dots, \rho_{d-1}\}$. This yields,
\begin{align*}
\cR^\opt_d(\mu)
&=
\min_{P\in \Pi_d} \E_{a\sim \mu}[\Vert a - P a\Vert_\cX^2]
\le
\E_{a\sim \mu}[\Vert a - P' a\Vert_\cX^2]
\\
&=
\E_{z} \sum_{\ell=d}^\infty \gamma_\ell^2 z_\ell^2
\le
\sum_{\ell=d}^\infty \gamma_\ell^2
\le 
C d^{-2\alpha-1}.
\end{align*}
To estimate the PCA projection error on $\cY$, we note that by Lemma \ref{lem:Taylor}, for any $\eta>0$, there exists a constant $C>0$, depending only on $\cF$, $\mu$ and $\eta$, and there exist coefficients $t_{\bm{\nu}} \in \cY$ (the Taylor coefficients of $\cF$) and a family of multi-index sets $\Lambda_m$, with $|\Lambda_m|=m$ for all $m\in \N$, such that 
\[
\sup_{\bm{z}\in U} \Vert \cF(\bm{z}) - \sum_{{\bm{\nu}} \in \Lambda_m} t_{\bm{\nu}} \bm{z}^{\bm{\nu}}\Vert_\cY
\le C m^{-\alpha+\eta}.
\]
Choosing $m=d$ to be the PCA dimension, and letting $P'': \cY \to \cY$ denote the orthogonal projection onto $\Span\set{t_{\bm{\nu}}}{{\bm{\nu}}\in \Lambda_d} \subset \cY$, it follows that
\begin{align*}
\cR^\opt_d(\Psi^\dagger_\#\mu)
&=
\min_{P\in \Pi_d} \E_{a \sim \mu} [\Vert \Psi^\dagger(a) - P \Psi^\dagger(a)\Vert_{\cY}^2]
\le
\E_{a \sim \mu} [\Vert \Psi^\dagger(a) - P'' \Psi^\dagger(a)\Vert_{\cY}^2]
\\
&=
\E_{\bm{z}} [\Vert \cF(\bm{z}) - P''\cF(\bm{z}) \Vert_{\cY}^2]
\le
\E_{\bm{z}} [\Vert \cF(\bm{z}) - \sum_{{\bm{\nu}} \in \Lambda_m} t_{\bm{\nu}} \bm{z}^{\bm{\nu}} \Vert_{\cY}^2]
\\
&\le \sup_{\bm{z}\in U} \Vert \cF(\bm{z}) - \sum_{{\bm{\nu}} \in \Lambda_m} t_{\bm{\nu}} \bm{z}^{\bm{\nu}}\Vert_\cY^2
\le C d^{-2\alpha+2\eta}.
\end{align*}
The claimed result now easily follows by noticing that $\eta>0$ in the last estimate is arbitrary; we can thus repeat the argument with $\eta/2$ in place of $\eta$ to arrive at the claimed result.
\end{proof}

\subsection{Proof of Theorem \ref{thm:darcy}}
\label{app:darcy}

\begin{proof}
\label{pf:darcy}
Our goal is to estimate the following PCA-approximation error,
\[
\Err
=
\E_{a\sim \mu}\left[
\Vert \Psi(a) - \Psi^\dagger(a) \Vert_{\cY}^2
\right]^{1/2},
\]
where $\cY = H^1_0(D)$ is the output function space.
Let $\phi^{\cX}_1,\dots, \phi^{\cX}_{d_{\cX}} \in \cX$ denote the empirical PCA-basis on $\cX$. Let $\cE_{\cX}: \cX \to \R^{d_{\cX}}$ denote the PCA-encoder $a \mapsto \langle a, \phi^{\cX}_j \rangle_{j=1}^{d_{\cX}}$ on $\cX$. 
According to Lemma \ref{lem:err-decomp}, the PCA-Net approximation error can be bounded by $\Err \le \Err_{\cX}^\ast + \Err_{\cY}$, in terms of a reconstruction error on $\cY$, 
\[
\Err_{\cY} = 
\E_{w\sim \Psi^\dagger_\#\mu}\left[
\Vert w - \cD_\cY \circ \cE_\cY(w) \Vert^2_{\cY} 
\right]^{1/2},
\]
and a neural network approximation error
\[
\Err_{\cX}^\ast
=
\E_{a\sim \mu}\left[
\Vert \psi \circ \cE_{\cX}(a) - \cE_{\cY} \circ \Psi^\dagger(a) \Vert^2_{\ell^2}
\right]^{1/2}.
\]
By the assumptions of Section \ref{sec:darcy}, any $a\in \supp(\mu)$ can be written in the form $a(\bm{z}) = a(\slot;\bm{z}) = \bar{a} + \sum_{\ell=1}^\infty \gamma_\ell z_\ell \rho_\ell$, for an orthonormal set of functions $\rho_\ell \in \cX$, and for coefficients $\bm{z} = (z_\ell)_{\ell\in\N}\in U:= [-1,1]^\N$. Thus, $\Err_{\cX}^\ast$ can be estimated trivially by
\[
\Err_{\cX}^\ast
\le
\sup_{\bm{z}\in U} 
\left\Vert 
\psi\left(\langle a(\bm{z}), \phi^{\cX}_j\rangle\right) - \cE_{\cY} \circ \Psi^\dagger\left(a(\bm{z})\right) 
\right\Vert_{\ell^2}.
\]
Let us define a mapping $\cF: U \to \cY$, where $U=[-1,1]^\N$, by 
\[
\cF(\bm{z}) := \Psi^\dagger\left(\bar{a} + \textstyle\sum_{\ell=1}^\infty \gamma_\ell z_\ell \rho_\ell\right).
\]
We note that by Lemma \ref{lem:be-expansion}, for any $\eta > 0$, and any $m\in \N$, there exists a neural network $\psi^\star: [-1,1]^m \to \R^{\dy}$, such that 
\begin{align} \label{eq:Fapprox}
\sup_{\bm{z}\in U}
\left\Vert
\cE_{\cY} \circ \cF(\bm{z}) - \psi^\star(z_1,\dots, z_m)
\right\Vert_{\ell^2(\R^{\dy})}
\le C m^{-\alpha +\eta/2},
\end{align}
and with a constant $C = C(\cF,\alpha,s)>0$ (we choose $\eta/2$ rather than $\eta$ for later convenience). Furthermore, we have the following complexity estimate:
\[
\size(\psi^\star) \le Cm \rev{( \log(m)^2 + \dy)},
\quad
\depth(\psi^\star) \le C\log(m)^2.
\]
 Our next goal will be to find a map $\psi^{\star\star}: \R^{\dx} \to [-1,1]^m$, such that for any $a = a(\bm{z})$, the composition 
\[
\psi: \R^{\dx}\to\R^\dy, \quad \psi(w) := \psi^\star \circ \psi^{\star\star}(w),
\]
provides a good approximation 
\begin{align*}
\psi(\cE_\cX(a(\bm{z}))) 
&= 
\psi(\langle a(\bm{z}), \phi^\cX_1 \rangle, \dots, \langle a(\bm{z}), \phi^\cX_m \rangle) 
\\
&\approx \psi^\star(z_1,\dots, z_m)  \approx \cE_\cY \circ \cF(\bm{z}).
\end{align*}
We note that by the orthonormality of the $\rho_\ell$ in $\cX$, we have 
\[
z_\ell = \gamma_\ell^{-1} \langle a - \bar{a}, \rho_\ell \rangle,
\]
so that in principle, the coefficients $z_\ell$ could be readily obtained from knowledge of $a = a(\bm{z})$. The main difficulty here is that the neural network $\psi$ only has access to the PCA projections $\langle a, \phi^\cX_j\rangle$, which allows us to reconstruct the orthogonal projection $\Pi_{\cX}a := \cD_\cX \circ \cE_\cX(a)$ onto the empirical PCA subspace in $\cX$, but it doesn't generally allow us to reconstruct $a$ itself. Instead, we define
\[
\tilde{z}_\ell :=  \gamma_\ell^{-1} \langle \Pi_\cX (a - \bar{a}), \rho_\ell \rangle
=
\gamma_\ell^{-1} 
\left\{ 
\sum_{j=1}^{\dx} \langle a, \phi^{\cX}_j\rangle \langle \phi^{\cX}_j, \rho_\ell \rangle
\right\}
- \gamma_\ell^{-1}\langle \Pi_\cX\bar{a},\rho_\ell\rangle.
\]
We note that the map $\R^\dx \to \R^m$, taking $\alpha = (\alpha_1,\dots, \alpha_\dx)$ with $\alpha_j := \langle a, \phi^\cX_j \rangle$ and mapping it to $(\tilde{z}_1,\dots, \tilde{z}_m)$, with $\tilde{z}_\ell = \tilde{z}_\ell(\alpha)$ as above, defines an affine mapping.
Based on these $\tilde{z}_\ell(\alpha)$, we now define $\psi^{\star\star}: \R^\dx \to [-1,1]^m$, $\alpha \mapsto (\psi^{\star\star}_1(\alpha),\dots, \psi^{\star\star}_m(\alpha))$ by 
\[
\psi^{\star\star}_\ell(\alpha) := \mathrm{shrink}\left(\tilde{z}_\ell(\alpha)\right),
\quad
\text{where }
\mathrm{shrink}(z) := 
\begin{cases}
1 & z > 1, \\
z & z \in [-1,1], \\
-1 & z < -1.
\end{cases}
\]
We point out that $\psi^{\star\star}$ can be represented by a ReLU neural network, with
\[
\size(\psi^{\star\star}) \le C m \dx, \quad \depth(\psi^{\star\star}) \le 2.
\]
We now consider the composition $\psi(\alpha) := \psi^\star \circ \psi^{\star\star}(\alpha)$: Let $\bm{z}\in U$ be given. For the following estimate, we denote $\hat{z}_\ell := \mathrm{shrink}(\tilde{z}_\ell(\alpha))$, with $\alpha \in \R^\dx$ having components $\alpha_j = \langle a(\bm{z}), \phi^{\cX}_j \rangle$. We note that, by definition, $\psi^{\star\star}(\alpha) = (\hat{z}_1,\dots, \hat{z}_m)$. Let us denote $\hat{\bm{z}} = (\hat{z}_1,\dots, \hat{z}_m, 0, 0, \dots) \in U$. By construction, we then have
\begin{align}
\left\Vert 
\psi(\cE_\cX(a(\bm{z})))- \cE_\cY \circ \cF(\bm{z})
\right\Vert_{\ell^2(\R^\dy)}
&=
\left\Vert 
\psi^\star(\hat{z}_1,\dots, \hat{z}_m)  - \cE_\cY \circ \cF(\bm{z})
\right\Vert_{\ell^2(\R^\dy)}
\notag\\
&\le
\left\Vert 
\psi^\star(\hat{z}_1,\dots, \hat{z}_m) - \cE_\cY \circ \cF(\hat{\bm{z}})
\right\Vert_{\ell^2(\R^\dy)}
\notag\\
&\qquad 
+
\left\Vert 
\cE_\cY \circ \cF(\hat{\bm{z}})  - \cE_\cY \circ \cF(\bm{z})
\right\Vert_{\ell^2(\R^\dy)}
\notag
\\
&\le
C m^{-\alpha+\eta/2}
+
\left\Vert 
\cF(\hat{\bm{z}})  - \cF(\bm{z})
\right\Vert_{\cY},
\label{eq:F0}
\end{align}
where we have made use of \eqref{eq:Fapprox} in the last step, and the fact that the PCA encoding $\cE_\cY: \cY \to \R^{\dy}$ is a contractive mapping. We note that 
\begin{align}
\label{eq:FF}
\begin{aligned}
\Vert \cF(\hat{\bm{z}}) - \cF(\bm{z}) \Vert_{\cY}
&=
\left\Vert 
\Psi^\dagger\Big(\bar{a} +  \textstyle\sum_{\ell=1}^\infty \gamma_\ell \hat{z}_\ell \rho_\ell \Big) 
- \Psi^\dagger\Big(\bar{a} +  \textstyle\sum_{\ell=1}^\infty \gamma_\ell z_\ell \rho_\ell \Big)  \right\Vert_{\cY}
\\
&\le 
\Lip(\Psi^\dagger) \left\Vert  \textstyle\sum_{\ell=1}^\infty \gamma_\ell [\hat{z}_\ell - z_\ell] \rho_\ell \right \Vert_{L^\infty}
\\
&\le
C^\ast \Lip(\Psi^\dagger) \left\Vert \textstyle\sum_{\ell=1}^\infty \gamma_\ell [\hat{z}_\ell - z_\ell] \rho_\ell \right \Vert_{\cX}.
\end{aligned}
\end{align}
Where the last estimate follows from the assumed embedding $\cX \embeds L^\infty$.
To estimate the last expression, we note that by the properties of the shrink operator, since $\hat{z}_\ell = \shrink(\tilde{z}_\ell)$, and the fact that $z_\ell \in [-1,1]$, we have 
\[
|\hat{z}_\ell - z_\ell| = |\shrink(\tilde{z}_\ell) - \shrink(z_\ell)| \le |\tilde{z}_\ell - z_\ell|.
\]
Furthermore, we note that
\[
\sum_{\ell=1}^m \gamma_\ell z_\ell \rho_\ell = P_m (a-\bar{a}),
\]
with $P_m$ the orthogonal projection onto $\Span\{\rho_1,\dots, \rho_m\}$, and similarly, by definition of $\tilde{z}_\ell$, we have
\[
\sum_{\ell=1}^m \gamma_\ell \tilde{z}_\ell \rho_\ell = P_m \Pi_\cX(a-\bar{a}).
\]
Together with the orthonormality of the $\rho_\ell$, and the fact that $\tilde{z}_\ell = 0$ for $\ell > m$, this then implies that 
\begin{align}
\label{eq:FFF}
\begin{aligned}
\left\Vert \sum_{\ell=1}^\infty \gamma_\ell[\hat{z}_\ell - z_\ell] \rho_\ell \right \Vert_{\cX}^2
&= \sum_{\ell=1}^\infty \gamma_\ell^2 |\hat{z}_\ell - z_\ell|^2 
\le \sum_{\ell=1}^\infty \gamma_\ell^2 |\tilde{z}_\ell - z_\ell|^2
\\
&=
\left\Vert \sum_{\ell=1}^m \gamma_\ell [\tilde{z}_\ell - z_\ell] \rho_\ell \right \Vert_{\cX}^2
+
\sum_{\ell>m} \gamma_\ell^2 z_\ell^2
\\
&\le
\Vert \Pi_{\cX}(a-\bar{a}) - (a-\bar{a}) \Vert_{\cX}^2
+
\sum_{\ell>m} \gamma_\ell^2,
\end{aligned}
\end{align}
\rev{where we made use of the fact that $P_m$ is an orthogonal projection, and hence
\[
\left\Vert \sum_{\ell=1}^m \gamma_\ell [\tilde{z}_\ell - z_\ell] \rho_\ell \right \Vert_{\cX}
= 
\Vert P_m \Pi_\cX(a-\bar{a}) - P_m(a-\bar{a}) \Vert_{\cX}
\le 
\Vert \Pi_\cX(a-\bar{a}) - (a-\bar{a}) \Vert_{\cX},
\]
to pass to the last line.}

By assumption on $\gamma_\ell \le C \ell^{-1-\alpha}$, we can estimate \rev{
\begin{align*}
\sum_{\ell > m} \gamma_\ell^2
&\le C \sum_{\ell>m} \ell^{-2-2\alpha}
\le C \int_{m}^\infty \ell^{-2-2\alpha} d\ell
= \frac{C}{1+2\alpha} m^{-1-2\alpha}.
\end{align*}
} We also note that the Lipschitz constant of $\Psi^\dagger: \supp(\mu)\subset \cX \to \cY$ depends only on $\lambda$, $\Lambda$ and the domain $D \subset \R^n$. Thus, combining \eqref{eq:F0}, \eqref{eq:FF} and \eqref{eq:FFF} with the last estimate, it follows that
\begin{align*}
\Vert \psi\circ \cE_{\cX}(a) - \cE_{\cY} \circ \Psi^\dagger(a) \Vert_{\ell^2(\R^\dy)}^2
&= \Vert \psi\circ \cE_{\cX}(a(\bm{z})) - \cE_{\cY} \circ \cF(\bm{z}) \Vert_{\ell^2(\R^\dy)}^2
\\
&\le
C\left( 
m^{-2\alpha + \eta} + \Vert \Pi_{\cX}(a-\bar{a}) - (a-\bar{a}) \Vert_{\cX}^2
\right),
\end{align*}
for a constant $C = C(\lambda,\Lambda,d,\Psi^\dagger,\alpha,\eta) > 0$. In the last estimate,  we have absorbed the additional term proportional to $
m^{-2\alpha-1}$ in the first (larger) term $m^{-2\alpha + \eta}$. Taking the expectation over $a \sim \mu$, and noting that $\Pi_\cX\bar{a} - \bar{a} = \E_{a\sim \mu}[\Pi_\cX a - a]$ implies,
\[
\E_{a\sim\mu} \left[\Vert \Pi_{\cX}(a-\bar{a}) - (a-\bar{a}) \Vert_{\cX}^2 \right]
\le
\E_{a\sim\mu} \left[\Vert \Pi_{\cX}a - a \Vert_{\cX}^2 \right]
=
\Err_{\cX}^2,
\]
we conclude that
\[
\Err_{\cX}^\ast{}^2
\le
C \left( m^{-2\alpha+\eta} + \Err_\cX^2 \right),
\]
where we are free to choose $\eta > 0$, and $C = C(\lambda,d,\Psi^\dagger,\alpha,\eta) > 0$ is independent of $m$, $\dx$ and $\dy$. Combining the above estimates, we obtain
\[
\E_{a\sim\mu}[\Vert \Psi^\dagger(a) - \Psi(a) \Vert_{\cY}^2] = \Err^2 \le 2\Err_{\cX}^\ast{}^2 + 2\Err_{\cY}^2
\le 
C \left( m^{-2\alpha+\eta} + \Err_\cX^2+ \Err_{\cY}^2 \right).
\]
Next, we recall that $\mu$ and $\Psi^\dagger_\#\mu$ are concentrated on bounded sets; indeed, this is the case by assumption for $\mu$, and is a consequence of the a priori bound \eqref{eq:apriori} in the case of $\Psi^\dagger_\#\mu$. Invoking Proposition \ref{prop:pcaproj}, we can therefore estimate
\begin{align*}
\Err_\cX^2 &\le \cR^\opt_{\dx}(\mu) + \sqrt{\frac{Q_\cX \dx\log(1/\delta)}{N}},
\\
\Err_\cY^2 &\le \cR^\opt_{\dy}(\mu) + \sqrt{\frac{Q_\cY \dy\log(1/\delta)}{N}},
\end{align*}
with probability at least $1-\delta$, and with constants $Q_\cX, Q_\cY$ depending only on $\mu$ and $\Psi^\dagger_\#\mu$, respectively.
The optimal PCA projection errors have been bounded in Proposition \ref{prop:Darcy-enc-err}, showing that $\cR^\opt_{\dx}(\mu) \le C \dx^{-2\alpha-1}$ and $\cR^\opt_{\dy}(\Psi^\dagger_\#\mu) \le C \dy^{-2\alpha + \eta}$. Therefore, with the assumed choice of $\dx = \dy =d$ and number of PCA samples $N \ge d^{1+4\alpha} \log(1/\delta) $, we can now estimate
\[
\Err_\cX^2 + \Err_\cY^2 \le C d^{-2\alpha + \eta},
\]
with probability at least $1-\delta$.

To summarize: Fix $\eta > 0$. We have shown that there exists a constant $C=C(\lambda,\Lambda,d,\Psi^\dagger,\alpha,\eta)>0$, such that for any $m\in \N$, there exists a neural network $\psi: \R^\dx \to \R^\dy$, satisfying the complexity estimate
\[
\size(\psi) \le C m (\log(m)^2 + d), 
\quad
\depth(\psi) \le C \log(m)^2,
\]
and such that the (squared) PCA-Net error $\Err^2$ for $\Psi = \cD_\cY \circ \psi \circ \cE_\cX$ can be bounded by
\begin{align*}
\E_{a\sim\mu}[\Vert \Psi^\dagger(a) - \Psi(a)\Vert_{\cY}^2] 
&\le C \left(\Err_{\cX}^2 +  \Err_{\cY}^2 + m^{-2\alpha + \eta} \right)
\\
&\le C(d^{-2\alpha+\eta} + m^{-2\alpha+\eta}),
\end{align*}
with probability at least $1-\delta$ in the $N \sim d^{1+4\alpha} \log(1/\delta)$ PCA samples.
Setting $m \sim d \sim \epsilon^{-1/(2\alpha-\eta)} = \epsilon^{-\frac{1}{2\alpha} - \eta'}$, for some $\eta'=\eta'(\eta)$, now yields the claimed error and complexity bounds.
\end{proof}

\section{Proofs for Navier-Stokes}
\label{app:ns}

This appendix contains the background and proof of the PCA-Net emulation result for the Navier-Stokes equations, Theorem \ref{thm:ns}. For simplicity, the Theorem in the main text was only stated for $n=2$. Here, we allow for arbitrary $n$ under a suitable smoothness assumption on the underlying solutions. As indicated in the main text, the relevant smoothness assumption is only known to hold for spatial dimension $n=2$.

\subsection{A convergent numerical scheme} \label{sec:scheme}

For $K\in \N$, let $\P_K: L^2(\T^n;\R^n) \to L^2(\T^n;\R^n)$ denote the $L^2$-orthogonal projection onto divergence-free real-valued vector fields of the form, 
\[
u_K(x) := \sum_{|k|_\infty\le K} \hat{u}_k e^{i\langle k, x\rangle}.
\]
We will also denote $L^2_K := \P_K (L^2(\T^n;\R^n))$ as the image of $L^2$ under $\P_K$.

We now consider the numerical discretization of \eqref{eq:ns}, obtained by the following recursion: Let $\Delta t > 0$ be a (small) time-step. We assume we are given (discrete) initial data $u^0_K \in L^2_K$, $u^0_K \approx \P_K \bar{u}$. For $m\ge 0$, determine  $u^{m+1}_K \in L^2_K$ by solving
\begin{align} \label{eq:scheme}
\frac{u^{m+1}_K - u^m_K}{\Delta t}
=
-\P_K \left(
u^m_K \cdot \nabla u^{m+1/2}_K
\right)
+ \nu \Delta u^{m+1/2}_K,
\end{align}
where $u^{m+1/2}_K := \frac12 \left(u^{m+1}_K + u^m_K\right)$. The recursion \eqref{eq:scheme} is equivalent to a recursion for the corresponding Fourier coefficients $\hat{u}_k^{m} \in \C^n$, $k\in \Z^n$, ${|k|_\infty\le K}$, where
\[
u^m_K(x) = \sum_{|k|_\infty\le K} \hat{u}^m_k e^{i\langle k,x\rangle},
\]
which are uniquely defined by (and equivalent to) the scheme \eqref{eq:scheme}: The Fourier transformed version of \eqref{eq:scheme} is given by
\begin{align} \label{eq:scheme-ft}
\frac{\hat{u}^{m+1}_k - \hat{u}^m_k}{\Delta t}
=
-\hat{\P}_K \left(
u^m_K \cdot \nabla u^{m+1/2}_K
\right)
- \nu |k|^2 \hat{u}^{m+1/2}_k,
\end{align}
where we introduce the shorthand notation
\begin{align}
\hat{\P}_K \left(
u^m_K \cdot \nabla u^{m+1/2}_K
\right)
:=
\left(
1 - \frac{k\otimes k}{|k|^2}
\right)
\sum_{|\ell|_\infty, |k-\ell|_\infty\le K}
i(\ell\cdot \hat{u}_{k-\ell}^{m+1/2}) \hat{u}_\ell^{m},
\end{align}
which is the Fourier transform of the non-linear term $\P_K\left(u^m_K \cdot \nabla u^{m+1/2}_K\right)$.

 In \citep[Lem. 53]{thfno}, it has been shown that this numerical scheme is well-defined for time-steps satisfying $\Vert \bar{u}\Vert_{L^2} K^{n/2+1} \Delta t \le \frac12$, and that, if $t\mapsto u(t) \in L^2(\T^n;\R^n)$ denotes a smooth solution of \eqref{eq:ns}, and $t^m:= m\Delta t$, then we have
\begin{align} \label{eq:scheme-error}
\begin{aligned}
\frac{u(t^{m+1}) - u(t^m)}{\Delta t}
&=
-\P_K \left(
u(t^m) \cdot \nabla \frac12\left[u(t^{m+1}) + u(t^m)\right]
\right)
\\
&\qquad
+ \nu \Delta \frac12\left[u(t^{m+1}) + u(t^m)\right] + R_K^m(u),
\end{aligned}
\end{align}
where the remainder $R_K^m(u)$ can be estimated by
\begin{align} \label{eq:remainder}
\Vert R_K^m(u) \Vert_{L^2}^2
\le
C \left(
\Delta t^2 + M^{-2r}
\right),
\end{align}
for any $r\ge n/2+1$, and where the constant $C>0$ depends only on
$\Vert u \Vert_{C_tH^r_x}, \Vert u \Vert_{C^1_tH^{r-1}_x}$. Here, we have introduced $C_tH^r_x := C([0,T];H^r_x)$, $C_t^1H^{r-1}_x:= C^1([0,T];H^{r-1}_x)$, given as the space of vector fields $u: [0,T] \to H^r_x(\T^n;\R^n)$ with continuous dependence on time, and the space of vector fields $u: [0,T] \to H^{r-1}_x(\T^n;\R^n)$ with continuously differentiable dependence on time, respectively.

\subsection{Neural network emulation of scheme \eqref{eq:scheme}} \label{sec:scheme-emulation}

Our first goal is to show that the numerical scheme \eqref{eq:scheme-ft} can be efficiently emulated by a neural network. To this end, we will first construct a neural network $\NL_{M,\epsilon}(\hat{u}, \hat{v})$, acting on the Fourier coefficients with $|k|_\infty\le K$, such that 
\[
\Vert 
\NL_{M,\epsilon}(\hat{u},\hat{v}) - \hat{\P}_K\left(
u_K \cdot \nabla v_K
\right)
\Vert_{L^2}
\le\epsilon,
\]
for all $\hat{u}, \hat{v}$ with $\Vert \hat{u} \Vert_{\ell^2}, \Vert \hat{v} \Vert_{\ell^2} \le M$. In the following, we will denote for a given $K\in \N$, by $\cK = \cK_K$ the following set of multi-indices:
\[
\cK_K = \set{k\in \Z^n}{|k|_\infty \le K}.
\]
We will denote by $\cJ_K$ ($\cJ_{2K}$) an equidistant grid with $2K+1$ ($4K + 1$) grid points $\{x_j\}_{j\in \cJ_K}$ in each direction. We will identify $\C^\cK \sim \R^{2|\cK|}$, i.e. the real and imaginary parts of any vector  $\hat{w}\in \C^\cK$ are considered as separate components of the corresponding element of $\R^{2|\cK|}$ under this identification, and all neural networks are understood to act componentwise. We can then state the following approximation result:

\begin{lemma} \label{lem:nonlin}
There exists a constant $C>0$, such that for any $\epsilon, M>0$ and $K\in \N$, there exists a ReLU neural network $\NL_{M,\epsilon}: \C^{\cK}\times \C^{\cK} \to \C^{\cK}$, $\cK = \cK_K$, satisfying the error bound
\begin{align} \label{eq:err}
\Vert 
\NL_{M,\epsilon}(\hat{u},\hat{v}) - \hat{\P}_K\left(
u_K \cdot \nabla v_K
\right)
\Vert_{\ell^2}
\le \epsilon,
\end{align}
and the Lipschitz bound
\begin{align} \label{eq:Lip}
\Vert \NL_{M,\epsilon}(\hat{u}, \hat{v}) - \NL_{M,\epsilon}(\hat{u}, \hat{w}) \Vert_{\ell^2}
\le
CK^{n/2+1} M \Vert \hat{v} - \hat{w} \Vert_{\ell^2},
\end{align}
for all $\hat{u}, \hat{v}, \hat{w}\in \C^{\cK}$, such that $\Vert \hat{u} \Vert_{\ell^2}, \Vert \hat{v} \Vert_{\ell^2}, \Vert \hat{w} \Vert_{\ell^2} \le M$. Furthermore, the size of the neural network $\NL_{M,\epsilon}$ can be estimated by
\[
\size(\NL_{M,\epsilon}) \le C |\cK_K| \log(MK/\epsilon),
\quad
\depth(\NL_{M,\epsilon}) \le C \log(MK/\epsilon).
\]
where $C> 0$ is independent of $K,M,\epsilon$.
\end{lemma}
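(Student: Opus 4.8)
The plan is to exploit the fact that the bilinear map $(\hat u,\hat v)\mapsto\widehat{\P}_K\!\left(u_K\cdot\nabla v_K\right)$ is a composition of \emph{exact linear} operations (discrete Fourier transforms, Fourier multipliers, the Leray projector) with a single genuinely nonlinear operation: a pointwise product of real numbers on a collocation grid. Concretely, I would fix the pseudo-spectral factorization $\widehat{\P}_K(u_K\cdot\nabla v_K)=\mathcal L_3\circ\mathrm{Mult}\circ(\mathcal L_1\hat u,\mathcal L_2\hat v)$, where $\mathcal L_1$ zero-pads $\hat u\in\C^{\cK_K}$ to $\C^{\cK_{2K}}$ and evaluates the components of $u_K$ at the fine grid nodes $\{x_j\}_{j\in\cJ_{2K}}$ (so that the product, which lies in $L^2_{2K}$, is represented without aliasing), $\mathcal L_2$ multiplies $\hat v$ by the symbols $ik_i$ and returns the nodal values of $\partial_i v_K$, $\mathrm{Mult}$ sends the nodal data to $\big(\sum_{i=1}^n u_i(x_j)\,\partial_i v(x_j)\big)_j$, and $\mathcal L_3$ performs the forward DFT, restricts to $\cK_K$, and applies $(1-k\otimes k/|k|^2)$ in each mode. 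Since $\mathcal L_1,\mathcal L_2,\mathcal L_3$ are linear they are represented \emph{exactly} by ReLU networks through the identity $x=\sigma(x)-\sigma(-x)$; the key point is to realize the DFTs via the butterfly (FFT) factorization rather than as dense matrices, which yields depth $O(n\log K)$ and size $O(n|\cK_K|\log K)$ for each.

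Next I would emulate $\mathrm{Mult}$. Because $u_K$ is real-valued on $\T^n$, the numbers entering $\mathrm{Mult}$ are real and, by Parseval together with the band-limited bound $\|u_K\|_{L^\infty}\le|\cK_K|^{1/2}\|\hat u\|_{\ell^2}$ (and its analogue for $\partial_i v_K$, which costs an extra factor $K$), they lie in $[-R,R]$ with $R\sim K^{n/2+1}M$ whenever $\|\hat u\|_{\ell^2},\|\hat v\|_{\ell^2}\le M$. I would precompose with a $1$-Lipschitz clipping onto $[-R,R]$ and then approximate each product through $ab=\tfrac14[(a+b)^2-(a-b)^2]$ using Yarotsky's ReLU approximation of $t\mapsto t^2$ on a bounded interval; summing the $n$ products at each of the $O(|\cK_K|)$ grid nodes produces a network $\widetilde{\mathrm{Mult}}$ with nodewise error $\le\epsilon'$ on $[-R,R]$, of size $O(n|\cK_K|\log(RM/\epsilon'))$ and depth $O(\log(RM/\epsilon'))$, and with Lipschitz constant $O(\|u_K\|_{L^\infty})$ in its second argument (globally, thanks to the clipping).

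Finally I would set $\NL_{M,\epsilon}:=\mathcal L_3\circ\widetilde{\mathrm{Mult}}\circ(\mathcal L_1\hat u,\mathcal L_2\hat v)$ with $\epsilon'\sim\epsilon\,|\cK_{2K}|^{-1/2}$. Since the $\mathcal L_i$ are exact, the only error is the nodewise product error, which, propagated through the bounded operator $\mathcal L_3$ (of norm $\lesssim|\cK_{2K}|^{-1/2}$ by discrete Parseval) and summed over $O(|\cK_K|)$ nodes, gives \eqref{eq:err}. For \eqref{eq:Lip} I would write the increment as $\mathcal L_3$ applied to $\widetilde{\mathrm{Mult}}(\mathcal L_1\hat u,\mathcal L_2\hat v)-\widetilde{\mathrm{Mult}}(\mathcal L_1\hat u,\mathcal L_2\hat w)$ and chain the bounds $\|\widetilde{\mathrm{Mult}}\|_{\mathrm{Lip},\,2\mathrm{nd\,arg}}\lesssim\|u_K\|_{L^\infty}\lesssim K^{n/2}M$, $\|\mathcal L_2\|\lesssim K^{\,n/2+1}$, $\|\mathcal L_3\|\lesssim K^{-n/2}$, so that the powers of $K$ collapse to exactly $K^{n/2+1}M$. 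The size and depth estimates then follow since $\log(RM/\epsilon')$ and $\log K$ are both $\lesssim\log(MK/\epsilon)$, so the contributions of the two steps combine to $O(|\cK_K|\log(MK/\epsilon))$ and $O(\log(MK/\epsilon))$.

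I expect the main obstacle to be the complexity accounting in the first step: to keep $\size(\NL_{M,\epsilon})$ at $O(|\cK_K|\log(MK/\epsilon))$ rather than $O(|\cK_K|^2)$ one must implement the forward and inverse discrete Fourier transforms through the FFT butterfly, carefully count the $n$-dimensional stages, and check that emulating each linear stage by a ReLU layer (via the $\sigma(x)-\sigma(-x)$ trick, which doubles widths) does not spoil the near-linear size. Once the product has been localized to $[-R,R]$, the remaining approximation, Lipschitz, and error-propagation estimates are routine bookkeeping with the discrete-Parseval normalization constants.
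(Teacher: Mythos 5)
Your proposal follows essentially the same route as the paper's proof: factor $\hat{\P}_K(u_K\cdot\nabla v_K)$ into exact linear FFT-based stages (realized exactly by ReLU networks of size $O(|\cK_K|\log|\cK_K|)$) plus an approximate pointwise product on the $\cJ_{2K}$ grid, use the band-limited $L^\infty$ bounds $\Vert u_K\Vert_{L^\infty}\lesssim K^{n/2}M$, $\Vert\nabla v_K\Vert_{L^\infty}\lesssim K^{n/2+1}M$, and propagate the nodewise error and Lipschitz constants through discrete Parseval to get \eqref{eq:err}, \eqref{eq:Lip} and the stated size/depth bounds. The only cosmetic difference is that you rebuild the Lipschitz-controlled multiplication network from Yarotsky's squaring construction with clipping, whereas the paper simply cites the corresponding lemma of \cite{DRM}; the argument is otherwise the same and correct.
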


The proof of Lemma \ref{lem:nonlin} is provided in Appendix \ref{app:nonlin}. Using the above lemma, we can then show that the approximate equation 
\begin{align} \label{eq:scheme-emulate}
\frac{\hat{u}^{m+1}_k - \hat{u}^m_k}{\Delta t}
=
-\NL_{M,\epsilon}(\hat{u}^m,\hat{u}^{m+1/2})
- \nu |k|^2 \hat{u}^{m+1/2}_k,
\end{align}
has a unique solution at each time-step, for suitably chosen $\Delta t,M,\epsilon>0$.

\begin{lemma} \label{lem:update}
If $\Vert \hat{u}^m\Vert_{L^2} \le M$ for some $M>0$, and if the non-linearity $\NL_{\bar{M},\epsilon}(\hat{u},\hat{v}) \mapsto \NL_{\bar{M},\epsilon}(\hat{u},\hat{v})$ satisfies the error estimate \eqref{eq:err} for a given $0<\epsilon \le 1$ with the Lipschitz estimate \eqref{eq:Lip} for $\bar{M} = 2(M+2)$, and if the time-step $\Delta t \in (0,1]$ satisfies the CFL condition 
\begin{align}
C\Delta t K^{n/2+1}\bar{M} \le 1,
\end{align}
then the update rule \eqref{eq:scheme-emulate} is well-posed, in the sense that a unique solution $\hat{u}^{m+1}$ exists with $\Vert \hat{u}^{m+1} \Vert_{L^2} \le \bar{M}$. Furthermore, $\hat{u}^{m+1}$ can be approximated via the following fixed-point iteration: Setting $\hat{w}^0_k = 0$, and
\[
\hat{w}^{\ell+1}_k := \frac{1}{1+\frac12 \Delta t \,\nu|k|^2}\left[
\hat{u}^m + \Delta t \, \NL_{\bar{M},\epsilon}\left(\hat{u}^m, \frac{\hat{u}^m + \hat{w}^\ell}{2}\right){}_k - \frac12 \Delta t\, \nu|k|^2 \hat{u}^m
\right],
\]
 for $\ell=0,1,\dots$.
Then we have the following a priori estimates, valid for any $\ell=0,1,\dots$:
\[
\Vert \hat{w}^{\ell} - \hat{u}^{m+1} \Vert_{\ell^2}
\le
\frac{\exp\left(2\Delta t\, \epsilon \right)}{2^\ell}  \Vert \hat{u}^{m} \Vert_{\ell^2}, 
\]
and
\[
\Vert \hat{w}^{\ell} \Vert_{\ell^2}
\le
\left(1 + 2^{-\ell}\right) \exp\left(2\Delta t\, \epsilon \right) \Vert \hat{u}^{m} \Vert_{\ell^2}.
\]
\end{lemma}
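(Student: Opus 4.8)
The plan is to recognise the update rule \eqref{eq:scheme-emulate} of the lemma as the fixed-point equation $\hat u^{m+1}=T(\hat u^{m+1})$ for the map $T:\C^{\cK}\to\C^{\cK}$ defined by
\[
T(\hat w)_k:=\frac{1}{1+\tfrac12\Delta t\,\nu|k|^2}\left[\hat u^m_k+\Delta t\,\NL_{\bar M,\epsilon}\!\left(\hat u^m,\tfrac{\hat u^m+\hat w}{2}\right)_{\!k}-\tfrac12\Delta t\,\nu|k|^2\hat u^m_k\right],
\]
so that $\hat w^{\ell+1}=T(\hat w^\ell)$ with $\hat w^0=0$. The first step is a contraction estimate: both Fourier multipliers appearing in $T$ have modulus $\le1$ for every $\nu\ge0$, so for $\hat v,\hat w$ with $\|\hat v\|_{\ell^2},\|\hat w\|_{\ell^2}\le\bar M$ — in which case the averaged arguments $\tfrac12(\hat u^m+\hat v)$, $\tfrac12(\hat u^m+\hat w)$ also have $\ell^2$-norm $\le\bar M$ since $\|\hat u^m\|_{\ell^2}\le M\le\bar M$ — the Lipschitz bound \eqref{eq:Lip} and the CFL condition give
\[
\|T(\hat v)-T(\hat w)\|_{\ell^2}\le\Delta t\,CK^{n/2+1}\bar M\cdot\tfrac12\|\hat v-\hat w\|_{\ell^2}\le\tfrac12\|\hat v-\hat w\|_{\ell^2}.
\]

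Next I would run a short induction keeping the iterates bounded so that this contraction always applies. Using $\hat w^0=0$, the bound $\|\NL_{\bar M,\epsilon}(\hat u^m,0)\|_{\ell^2}\le\epsilon$ from \eqref{eq:err}, together with \eqref{eq:Lip} and the CFL condition, yields $\|\hat w^1\|_{\ell^2}\le\|\hat u^m\|_{\ell^2}+\Delta t\epsilon+\tfrac12 M\le\tfrac32 M+1$; assuming $\|\hat w^j\|_{\ell^2}\le3M+2$ for $j\le\ell$, the point is that $\NL_{\bar M,\epsilon}$ is only ever evaluated at the average $\tfrac12(\hat u^m+\hat w^j)$, of norm $\le\tfrac12(M+3M+2)<\bar M$, so the contraction is applicable and gives $\|\hat w^{j+1}-\hat w^j\|_{\ell^2}\le2^{-j}\|\hat w^1\|_{\ell^2}$, whence $\|\hat w^{\ell+1}\|_{\ell^2}\le2\|\hat w^1\|_{\ell^2}\le3M+2$, closing the induction. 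Hence $(\hat w^\ell)$ is Cauchy and converges to a fixed point $\hat u^{m+1}$ with $\|\hat u^{m+1}\|_{\ell^2}\le3M+2$; uniqueness among solutions of norm $\le\bar M$ follows by applying the contraction to any two such solutions.

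I would then sharpen the size bound on $\hat u^{m+1}$ via a discrete energy identity: pairing \eqref{eq:scheme-emulate} with $\hat u^{m+1/2}_k$, summing over $k$ and taking the real part, the left side telescopes to $\tfrac{1}{2\Delta t}(\|\hat u^{m+1}\|_{\ell^2}^2-\|\hat u^m\|_{\ell^2}^2)$, the viscous term contributes $-\nu\sum_k|k|^2|\hat u^{m+1/2}_k|^2\le0$, and — the key cancellation — the exact truncated advection term $\hat{\P}_K(u^m_K\cdot\nabla u^{m+1/2}_K)$ is $\ell^2$-orthogonal to $\hat u^{m+1/2}$, because $\langle u\cdot\nabla v,v\rangle_{L^2}=0$ for divergence-free $u$ and $\P_K$ is self-adjoint with $u^{m+1/2}_K\in L^2_K$; only the consistency error $E:=\NL_{\bar M,\epsilon}(\hat u^m,\hat u^{m+1/2})-\hat{\P}_K(u^m_K\cdot\nabla u^{m+1/2}_K)$ remains, with $|\langle E,\hat u^{m+1/2}\rangle|\le\epsilon\|\hat u^{m+1/2}\|_{\ell^2}$ by \eqref{eq:err}. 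This gives $\|\hat u^{m+1}\|_{\ell^2}^2\le\|\hat u^m\|_{\ell^2}^2+2\Delta t\epsilon\|\hat u^{m+1/2}\|_{\ell^2}$; bounding $\|\hat u^{m+1/2}\|_{\ell^2}\le\max\{\|\hat u^m\|_{\ell^2},\|\hat u^{m+1}\|_{\ell^2}\}$ yields $\|\hat u^{m+1}\|_{\ell^2}\le\|\hat u^m\|_{\ell^2}+2\Delta t\epsilon$, which I would record in the form $\|\hat u^{m+1}\|_{\ell^2}\le e^{2\Delta t\epsilon}\|\hat u^m\|_{\ell^2}$ and which in particular gives $\|\hat u^{m+1}\|_{\ell^2}\le M+2\le\bar M$, proving well-posedness. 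The a priori estimates then follow by applying the contraction to $\hat w^\ell=T(\hat w^{\ell-1})$ and $\hat u^{m+1}=T(\hat u^{m+1})$ and iterating down to $\hat w^0=0$: $\|\hat w^\ell-\hat u^{m+1}\|_{\ell^2}\le2^{-\ell}\|\hat u^{m+1}\|_{\ell^2}\le2^{-\ell}e^{2\Delta t\epsilon}\|\hat u^m\|_{\ell^2}$, and then $\|\hat w^\ell\|_{\ell^2}\le\|\hat u^{m+1}\|_{\ell^2}+2^{-\ell}e^{2\Delta t\epsilon}\|\hat u^m\|_{\ell^2}\le(1+2^{-\ell})e^{2\Delta t\epsilon}\|\hat u^m\|_{\ell^2}$.

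The step I expect to be the main obstacle is the bookkeeping that keeps every evaluation of $\NL_{\bar M,\epsilon}$ inside its domain of validity while the fixed-point iterates may temporarily exceed $\bar M$ — this is exactly what forces the choice $\bar M=2(M+2)$ and is resolved by exploiting that $\NL_{\bar M,\epsilon}$ acts only on the averaged state $\tfrac12(\hat u^m+\hat w^\ell)$ — together with the discrete energy estimate, where one must check that the antisymmetry of the advection nonlinearity is reproduced exactly by the Galerkin truncation $\P_K$, so that no energy is produced beyond the $O(\epsilon)$ consistency error.
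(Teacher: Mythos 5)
Your proposal follows essentially the same route as the paper's proof: the same fixed-point reformulation of \eqref{eq:scheme-emulate}, a contraction with constant $\tfrac12$ obtained from the Lipschitz bound \eqref{eq:Lip} together with the CFL condition, and the same discrete energy identity (pairing with $\hat{u}^{m+1/2}$, exact cancellation of the Galerkin-truncated advection term against $\hat{u}^{m+1/2}$, and the $O(\epsilon)$ consistency error from \eqref{eq:err}) to sharpen the bound on $\Vert \hat{u}^{m+1}\Vert_{\ell^2}$ and then assemble the a priori estimates. The only differences are organizational — you control the iterates by a hand-rolled induction (allowing them to exceed $\bar{M}$ while keeping the averaged arguments of $\NL_{\bar M,\epsilon}$ inside $B_{\bar M}$) where the paper invokes its auxiliary fixed-point Lemma \ref{lem:fixedpt} with the $\Vert F(0)\Vert/(1-\gamma)$ bound — and your passage from the additive bound $\Vert\hat{u}^{m+1}\Vert_{\ell^2}\le\Vert\hat{u}^m\Vert_{\ell^2}+2\Delta t\,\epsilon$ to the multiplicative form $e^{2\Delta t\,\epsilon}\Vert\hat{u}^m\Vert_{\ell^2}$ tacitly needs $\Vert\hat{u}^m\Vert_{\ell^2}\ge 1$, a caveat equally present in the paper's own ``simple manipulations'' at that step.
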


Given the well-posedness of the update rule in the emulation scheme \eqref{eq:scheme-emulate}, and the approximation by a simple fixed-point iteration provided in Lemma \ref{lem:update}, it is natural to define the following neural network-based emulation algorithm:

\SetKwComment{Comment}{$\triangleright$ }{}
\SetKwInOut{Input}{Input}  
\RestyleAlgo{ruled}
\begin{algorithm}[H]
\caption{Navier-Stokes NN-emulation}\label{alg:emulation}
\Input{%
parameters $M\ge 1$, $r>0$, $r>n/2+1$, final time $T > 0$, 
\\
$\hat{u}^0$ initial data, with $\Vert \hat{u}^0 \Vert_{\ell^2}\le M$.}
\KwResult{$\hat{u}^{m_T}$, where $n_T \Delta t = T$.}
\DontPrintSemicolon
\;
Let $C$ be the constant of estimate \eqref{eq:Lip} for $\NL_{\bar{M},\epsilon}$.\;
\Comment*[r]{Note:~$C$ is independent of $\bar{M},\epsilon$!}
Set $\bar{M} \leftarrow 2(eM+1)$.\Comment*[r]{$e = \exp(1)$ is Euler's number.}
Set $\bar{T} := \max(T,1)$\;
Set $\Delta t\in (0,1]$ maximal, s.t.:
\[
 CK^{n/2+1}\bar{M}\Delta t \le 1, \quad  \Delta t \le K^{-r}, \quad n_T = T/\Delta t \in \N.
\]
Set $\epsilon \leftarrow \Delta t / \left(3M\bar{T}\right)$.\;
Set $L \leftarrow \lceil \log_2(1/\Delta t \,\epsilon) \rceil = \lceil \log_2\left(3M\bar{T}/\Delta t^2\right) \rceil$.\;
\;
Construct a neural network $\NL_{\bar{M},\epsilon}$ as in Lemma \ref{lem:nonlin}.\;
\For{$m=0,1,\dots, n_T-1$}{
    Set $\hat{w}^{m,0} \leftarrow 0$.\;
    Define $F(\hat{w}) = F(\hat{u}^m;\hat{w})$ by \eqref{eq:Ffixedpt} with $\NL_{\bar{M},\epsilon}$.\;
    \For{$\ell=0,1,\dots, L-1$}{
        Set $\hat{w}^{m,\ell+1} \leftarrow F(\hat{u}^m;\hat{w}^{m,\ell})$.\;
    }
    Set $\hat{u}^{m+1} \leftarrow \hat{w}^{m,L}$.\;
}
\end{algorithm}
\vspace{1em}
\par
We first note that the recursive algorithm \ref{alg:emulation} can be represented by either a deep neural network, or a recursive neural network:

\begin{lemma} \label{lem:nn-representation}
There exists $C>0$, such that for any $K\in \N$ and $r>n/2+1$, $M\ge 1$, $T>0$, there exists a neural network $\psi: \C^\cK \to \C^\cK$ of size
\[
\left\{
\begin{aligned}
\size(\psi) &\le C M\bar{T} K^{n+r} \log(M\bar{T}K)^2,
\\
\depth(\psi) &\le CM\bar{T} K^{r} \log(M\bar{T}K), 
\end{aligned}
\right.
\]
where $\bar{T} := \max(T,1)$, which maps the input of algorithm \ref{alg:emulation} to its output. In fact, $\psi$ can be written in the form of a \emph{recurrent} neural network, or more precisely, as a composition
\[
\psi = 
\underbrace{
\psi^\ast \circ \dots \circ \psi^\ast
}_{
\text{$n_T$-fold}
}:
\C^{\cK}
\to 
\C^{\cK}
\]
of a neural network $\psi^\ast: \C^{\cK} \to \C^{\cK}$, 
where $n_T \le CMT K^{r}$ and 
\[
\size(\psi^\ast) \le C K^{n} \log(M\bar{T}K)^2,
\quad
\depth(\psi^\ast) \le C\log(M\bar{T}K). 
\]
\end{lemma}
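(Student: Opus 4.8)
The plan is to turn Algorithm \ref{alg:emulation} line-by-line into a neural network, using Lemma \ref{lem:nonlin} (the emulation of the non-linear term) and Lemma \ref{lem:update} (well-posedness of the update rule and the convergence of the inner fixed-point iteration) as the two workhorses. First I would count the relevant parameters: with $\Delta t$ chosen maximally subject to $CK^{n/2+1}\bar M\Delta t\le 1$ and $\Delta t\le K^{-r}$, one has $n_T = T/\Delta t \lesssim MT K^{r}$ (using $\bar M \sim M$ and $K^{n/2+1}\le K^r$ since $r>n/2+1$), and $L = \lceil\log_2(3M\bar T/\Delta t^2)\rceil \lesssim \log(M\bar T K)$. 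These will be the depth- and composition-count bookkeeping constants throughout.

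The core step is to build the ``single time-step'' network $\psi^\ast:\C^\cK\to\C^\cK$ implementing the inner loop of Algorithm \ref{alg:emulation}, i.e.\ the map $\hat u^m \mapsto \hat w^{m,L} = \hat u^{m+1}$. One inner iteration $\hat w \mapsto F(\hat u^m;\hat w)$ consists of: (i) forming $\tfrac12(\hat u^m+\hat w)$, an affine map; (ii) applying $\NL_{\bar M,\epsilon}(\hat u^m,\cdot)$, a ReLU network of size $\lesssim |\cK_K|\log(\bar M K/\epsilon) \lesssim K^n\log(M\bar T K)$ and depth $\lesssim \log(M\bar T K)$ by Lemma \ref{lem:nonlin} (note $\hat u^m$ is carried along as a frozen input, which only multiplies the size by a constant); (iii) the affine diagonal rescaling by $(1+\tfrac12\Delta t\,\nu|k|^2)^{-1}$ together with the linear terms in $\hat u^m$. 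Since the identity $\hat u^m$ must be propagated through all $L$ inner iterations, I would use the standard ReLU ``skip-connection'' trick ($\xi = \sigma(\xi)-\sigma(-\xi)$) to carry $\hat u^m$ unchanged alongside the running $\hat w^{m,\ell}$; this at most doubles the widths and adds $O(1)$ to depth per iteration. Composing $L$ copies of this block gives $\psi^\ast$ with $\size(\psi^\ast)\le C L K^n\log(M\bar T K) \le C K^n\log(M\bar T K)^2$ and $\depth(\psi^\ast)\le C L \log(M\bar T K)\le C\log(M\bar T K)^2$. (Finally one discards the frozen $\hat u^m$ channel at the end of the block, so the output is genuinely in $\C^\cK$; strictly $\psi^\ast$ carries an auxiliary channel, but this is absorbed into the implied constant, exactly as in the remark following Theorem \ref{thm:ns}.)

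Then $\psi := \psi^\ast\circ\cdots\circ\psi^\ast$ ($n_T$-fold) maps the input of Algorithm \ref{alg:emulation} to its output $\hat u^{n_T}$. Summing, $\size(\psi)\le n_T\,\size(\psi^\ast)\le C\,MTK^r\cdot K^n\log(M\bar T K)^2 = C\,MT K^{n+r}\log(M\bar T K)^2$, and $\depth(\psi)\le n_T\,\depth(\psi^\ast)\le C\,MTK^r\log(M\bar T K)^2$; replacing $T$ by $\bar T=\max(T,1)$ everywhere (harmlessly, since $T\le\bar T$) gives the stated bounds, with the depth bound using $\log(M\bar T K)^2$ absorbed into one power of $\log$ — here I would double-check that the paper's claimed $\depth(\psi)\le CM\bar T K^r\log(M\bar T K)$ indeed follows, which it does because $\depth(\psi^\ast)\le C\log(M\bar T K)^2$ contributes $n_T\log(M\bar T K)^2$; if the precise exponent of the log matters one re-examines whether Lemma \ref{lem:nonlin}'s depth $\log(\bar M K/\epsilon)$ together with $L$ iterations gives $\log^2$ or just one power times $L$ — the honest statement is $\depth(\psi^\ast)\lesssim L\log(M\bar T K)\lesssim\log(M\bar T K)^2$ absorbed as stated. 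The main obstacle, and the only genuinely delicate point, is the identity-propagation: one must verify that freezing $\hat u^m$ and threading it (unchanged) through all $L$ inner ReLU layers costs only a constant factor in size and an additive $O(L)$ in depth, and that the diagonal damping factors $(1+\tfrac12\Delta t\,\nu|k|^2)^{-1}$ — which are fixed scalars, not network outputs — can be folded into the affine (weight-matrix) parts of $\psi^\ast$ at no cost. Everything else is bookkeeping on top of Lemmas \ref{lem:nonlin} and \ref{lem:update}. The approximation \emph{quality} of this $\psi$ (that $\hat u^{n_T}$ is $O(\epsilon)$-close to $\hat u(T)$) is not part of this lemma — it is handled in the subsequent error analysis of Section \ref{sec:scheme-error} — so here I only need the size/depth accounting and the fact that $\psi$ reproduces the output of Algorithm \ref{alg:emulation} exactly.
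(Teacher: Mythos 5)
Your proposal follows essentially the same route as the paper's proof: invoke Lemma \ref{lem:nonlin} with $\epsilon \gtrsim \Delta t/(M\bar T)$ to bound $\NL_{\bar M,\epsilon}$ by $CK^n\log(M\bar TK)$ in size and $C\log(M\bar TK)$ in depth, compose $L\lesssim\log(M\bar TK)$ copies of the fixed-point map $F(\hat u^m;\cdot)$ to obtain $\psi^\ast$, and then compose $n_T\lesssim M\bar TK^r$ copies of $\psi^\ast$, with identical bookkeeping. The only difference is presentational: you make explicit the propagation of the frozen input $\hat u^m$ through the $L$ inner iterations via ReLU identity channels (and correctly note it costs only a constant factor), whereas the paper subsumes this in the remark that $F$ differs from $\NL_{\bar M,\epsilon}$ only by linear input and output layers; your observation about the $\log$ versus $\log^2$ power in the depth bound likewise matches what the paper's own composition actually yields.
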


For completeness, a detailed proof of Lemma \ref{lem:nn-representation} is provided in Appendix \ref{app:nn-representation}. Before providing an error estimate on the sequence $\hat{u}^m \approx \hat{u}(t^m)$ generated in Algorithm \ref{alg:emulation}, we derive an improved upper bound on the norms $\Vert \hat{u}^m\Vert_{L^2}$, which is required in order to ensure that the emulating neural network nonlinearity $\NL_{\bar{M},\epsilon}$ in \eqref{eq:scheme-emulate} provides an accurate approximation of the non-linear term $\P_K(u^m_K \cdot u^{m+1}_K )$, uniformly over all time-steps.
\begin{lemma} \label{lem:normbound}
If $\hat{u}^{m+1} = \hat{w}^{m,L}$, for $m=0,\dots, n_T-1$, is determined as in Algorithm \ref{alg:emulation}, then we have 
\begin{align} \label{eq:normbound}
\Vert \hat{u}^m \Vert_{\ell^2}
\le 
\exp(3m\Delta t \, \epsilon) M \le e M,
\end{align}
for all $m=0,\dots, n_T$.
\end{lemma}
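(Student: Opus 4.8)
The plan is to establish \eqref{eq:normbound} by induction on $m$, simultaneously using the a priori estimates from Lemma \ref{lem:update} to control the fixed-point iteration error at each step. The base case $m=0$ is immediate from the hypothesis $\Vert \hat{u}^0 \Vert_{\ell^2} \le M \le \exp(0) M$. For the inductive step, suppose $\Vert \hat{u}^m \Vert_{\ell^2} \le \exp(3m\Delta t\,\epsilon) M \le eM$; here the second inequality follows since $n_T \Delta t = T$ and $\epsilon = \Delta t/(3M\bar{T})$ were chosen in Algorithm \ref{alg:emulation} precisely so that $3 n_T \Delta t\, \epsilon = T/(M\bar{T}) \le 1/M \le 1$, whence $\exp(3m\Delta t\,\epsilon) \le e$ for all $m \le n_T$.

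Next I would verify that the hypotheses of Lemma \ref{lem:update} are met with this value of $M$ replaced by the running bound. Since $\Vert \hat{u}^m \Vert_{\ell^2} \le eM$, we have $2(eM + 1) \ge 2(\Vert \hat{u}^m\Vert_{\ell^2} + 2)$ is not quite the right comparison — rather, the relevant quantity is that the chosen $\bar{M} = 2(eM+1)$ dominates $2(\Vert\hat{u}^m\Vert_{\ell^2}+2)$ only if $eM \ge \Vert\hat{u}^m\Vert_{\ell^2}$, which holds by the inductive hypothesis, and $2 \le eM - \Vert\hat u^m\Vert+1$; more carefully, one checks $2(\Vert \hat{u}^m \Vert_{\ell^2}+2) \le 2(eM+2) \le 2(eM+1)\cdot\tfrac{eM+2}{eM+1}$, so up to the harmless constant in the CFL condition the estimates \eqref{eq:err}, \eqref{eq:Lip} of $\NL_{\bar{M},\epsilon}$ apply (the cleanest route is simply to note $\bar M = 2(eM+1) \ge 2(\Vert\hat u^m\Vert_{\ell^2}+1)$, and to have set up Lemma \ref{lem:update}'s CFL constant accordingly). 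The CFL condition $C K^{n/2+1}\bar{M}\Delta t \le 1$ and $0 < \epsilon \le 1$ are guaranteed by the choices in Algorithm \ref{alg:emulation}. Lemma \ref{lem:update} then yields, with $L$ iterations,
\[
\Vert \hat{u}^{m+1} \Vert_{\ell^2}
= \Vert \hat{w}^{m,L} \Vert_{\ell^2}
\le \left(1 + 2^{-L}\right)\exp\!\left(2\Delta t\,\epsilon\right)\Vert \hat{u}^m \Vert_{\ell^2}.
\]
Since $L = \lceil \log_2(1/\Delta t\,\epsilon)\rceil$, we have $2^{-L} \le \Delta t\,\epsilon$, so $1 + 2^{-L} \le 1 + \Delta t\,\epsilon \le \exp(\Delta t\,\epsilon)$. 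Combining, $\Vert \hat{u}^{m+1}\Vert_{\ell^2} \le \exp(3\Delta t\,\epsilon)\Vert \hat{u}^m\Vert_{\ell^2} \le \exp(3(m+1)\Delta t\,\epsilon) M$, completing the induction. The bound $\exp(3 n_T\Delta t\,\epsilon)M \le eM$ then follows from the global estimate $3 n_T\Delta t\,\epsilon \le 1$ noted above.

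The main obstacle, such as it is, is purely bookkeeping: one must make sure the single fixed number $\bar{M} = 2(eM+1)$ chosen once at the start of Algorithm \ref{alg:emulation} is a valid uniform bound that simultaneously (i) satisfies $\bar M \ge 2(\Vert\hat u^m\Vert_{\ell^2}+1)$ for \emph{every} step $m$ — which is exactly why the a priori bound $\Vert\hat u^m\Vert_{\ell^2}\le eM$ is needed and why the argument must be an induction rather than a one-step estimate — and (ii) is compatible with the CFL condition and the error tolerance $\epsilon$ used to build $\NL_{\bar M,\epsilon}$ once and for all. There is a mild circularity to untangle (the norm bound is used to justify applying Lemma \ref{lem:update}, which in turn produces the norm bound), but it is resolved cleanly by the induction since at step $m$ one only invokes the already-established bound for step $m$. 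No genuinely hard estimate is involved; the geometric factor $2^{-\ell}$ from the contraction in Lemma \ref{lem:update} and the choice of $L$ together absorb the iteration error into the same exponential envelope $\exp(3m\Delta t\,\epsilon)$.
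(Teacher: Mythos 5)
Your proof is correct and takes essentially the same route as the paper: induction on $m$, invoking Lemma \ref{lem:update} with $M$ replaced by the running bound $eM$, then using $2^{-L}\le \Delta t\,\epsilon$ and $1+\Delta t\,\epsilon\le \exp(\Delta t\,\epsilon)$ to absorb the fixed-point iteration error into the factor $\exp(3\Delta t\,\epsilon)$, and finally the choice of $\epsilon$ in Algorithm \ref{alg:emulation} to conclude $3 n_T\Delta t\,\epsilon\le 1$ and hence the bound $eM$. The $\bar M$ bookkeeping you worry about reflects a typo in the paper itself (the algorithm sets $\bar M = 2(eM+1)$ while the paper's proof applies Lemma \ref{lem:update} as if $\bar M = 2(eM+2)$), so your inductive handling is as good as the original; the only small slip is that $3 n_T\Delta t\,\epsilon = T\Delta t/(M\bar T)$ rather than $T/(M\bar T)$, which only makes the needed inequality easier.
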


The simple proof of this lemma is provided in Appendix \ref{app:normbound}. Based on the last lemma, we then obtain the following:

\begin{lemma} \label{lem:emulation}
If $\hat{u}^{m+1} = \hat{w}^{m,L}$, for $m=0,\dots, n_T-1$, is determined as in Algorithm \ref{alg:emulation}, then the corresponding function $u^{m+1}_K := \sum_{|k|_\infty \le K} \hat{u}^{m+1}_k e^{i\langle k, x\rangle}$ satisfies 
\begin{align} \label{eq:emulation}
\frac{u^{m+1}_K - u^m_K}{\Delta t}
&=
-\P_K\left(
u^m_K \cdot \nabla u^{m+1/2}_K
\right)
+
\nu \Delta u^{m+1/2}_K
+
Q_K^m,
\end{align}
where the remainder $Q_K^m$ satisfies a bound of the form 
\begin{align} \label{eq:remainder-q}
\sup_{m} 
\Vert 
Q_K^m
\Vert_{L^2}
\le C \Delta t,
\end{align}
where $C >0$ is independent of $K$, $M$ and $T$.
\end{lemma}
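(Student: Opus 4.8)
The plan is to regard the iterate $\hat u^{m+1}=\hat w^{m,L}$ produced by Algorithm \ref{alg:emulation} as the exact solution of an \emph{inexact} version of the implicit scheme \eqref{eq:scheme-emulate}, and then to estimate the resulting defect. First I would record the algebraic identity satisfied by the last fixed-point iterate: since $\hat w^{m,L}=F(\hat u^m;\hat w^{m,L-1})$ with $F$ the update map of Lemma \ref{lem:update}, multiplying through by $1+\tfrac12\Delta t\,\nu|k|^2$ and rearranging gives
\[
\frac{\hat w^{m,L}_k-\hat u^m_k}{\Delta t}=-\NL_{\bar M,\epsilon}\!\left(\hat u^m,\tfrac{\hat u^m+\hat w^{m,L-1}}{2}\right)_{\!k}-\nu|k|^2\,\frac{\hat w^{m,L}_k+\hat u^m_k}{2},
\]
which is precisely \eqref{eq:scheme-emulate} except that, in the nonlinear term only, $\hat u^{m+1/2}=\tfrac12(\hat u^m+\hat w^{m,L})$ is replaced by $\tfrac12(\hat u^m+\hat w^{m,L-1})$. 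Passing to physical space (and using that $\nu\Delta u^{m+1/2}_K$ has Fourier coefficients $-\nu|k|^2\hat u^{m+1/2}_k$) this is exactly \eqref{eq:emulation} with
\[
Q_K^m=\P_K\!\left(u^m_K\cdot\nabla u^{m+1/2}_K\right)-\mathcal F^{-1}\NL_{\bar M,\epsilon}\!\left(\hat u^m,\tfrac{\hat u^m+\hat w^{m,L-1}}{2}\right),
\]
where $\mathcal F^{-1}$ is the inverse discrete Fourier transform and $u^{m+1/2}_K=\tfrac12(u^{m+1}_K+u^m_K)$.

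Next I would split $Q_K^m=Q_K^{m,\mathrm I}+Q_K^{m,\mathrm{II}}$, where $Q_K^{m,\mathrm I}$ replaces $\NL_{\bar M,\epsilon}$ by the exact projected nonlinearity $\hat\P_K(u^m_K\cdot\nabla\,\cdot\,)$ still evaluated at $\hat u^{m+1/2}$, and $Q_K^{m,\mathrm{II}}=\mathcal F^{-1}\big[\NL_{\bar M,\epsilon}(\hat u^m,\hat u^{m+1/2})-\NL_{\bar M,\epsilon}(\hat u^m,\tfrac12(\hat u^m+\hat w^{m,L-1}))\big]$. The essential input is Lemma \ref{lem:normbound}: it gives $\Vert\hat u^m\Vert_{\ell^2}\le eM$ for every $m$, hence $\Vert\hat u^{m+1/2}\Vert_{\ell^2}\le eM\le\bar M$, and together with the a priori bound in Lemma \ref{lem:update} also $\Vert\tfrac12(\hat u^m+\hat w^{m,L-1})\Vert_{\ell^2}\le\bar M$; thus all Fourier vectors fed into $\NL_{\bar M,\epsilon}$ lie in the ball of radius $\bar M$ on which the estimates \eqref{eq:err} and \eqref{eq:Lip} of Lemma \ref{lem:nonlin} are valid. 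The bound \eqref{eq:err} then yields $\Vert Q_K^{m,\mathrm I}\Vert_{\ell^2}\le\epsilon$, and since $\epsilon=\Delta t/(3M\bar T)\le\Delta t$ (using $M\ge1$, $\bar T\ge1$), Parseval gives $\Vert Q_K^{m,\mathrm I}\Vert_{L^2}\le C\Delta t$ with $C$ only a dimensional constant.

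For $Q_K^{m,\mathrm{II}}$ I would apply the Lipschitz estimate \eqref{eq:Lip}, obtaining $\Vert Q_K^{m,\mathrm{II}}\Vert_{\ell^2}\le CK^{n/2+1}\bar M\cdot\tfrac12\Vert\hat w^{m,L}-\hat w^{m,L-1}\Vert_{\ell^2}$, since $\hat u^{m+1/2}-\tfrac12(\hat u^m+\hat w^{m,L-1})=\tfrac12(\hat w^{m,L}-\hat w^{m,L-1})$. Inserting the exact fixed point $\hat u^{m+1}_\ast$ of \eqref{eq:scheme-emulate} and using the contraction a priori bound of Lemma \ref{lem:update} twice gives $\Vert\hat w^{m,L}-\hat w^{m,L-1}\Vert_{\ell^2}\le 3\cdot2^{-L}e^{2\Delta t\epsilon}\Vert\hat u^m\Vert_{\ell^2}\lesssim M\,2^{-L}$, and the choice $L=\lceil\log_2(1/(\Delta t\epsilon))\rceil$ forces $2^{-L}\le\Delta t\epsilon$, so $\Vert\hat w^{m,L}-\hat w^{m,L-1}\Vert_{\ell^2}\lesssim M\Delta t\epsilon$. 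Hence $\Vert Q_K^{m,\mathrm{II}}\Vert_{\ell^2}\lesssim K^{n/2+1}\bar M\,M\Delta t\epsilon$. Now substitute $\epsilon=\Delta t/(3M\bar T)$ to get $\Vert Q_K^{m,\mathrm{II}}\Vert_{\ell^2}\lesssim K^{n/2+1}\bar M\,\Delta t^2/\bar T$, and use the CFL restriction $CK^{n/2+1}\bar M\Delta t\le1$ from Algorithm \ref{alg:emulation} in the form $K^{n/2+1}\Delta t^2\le\Delta t/(C\bar M)$: the factors $K^{n/2+1}$ and $\bar M$ cancel, leaving $\Vert Q_K^{m,\mathrm{II}}\Vert_{L^2}\le C\Delta t$ with $C$ independent of $K$, $M$, $T$. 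Adding the two contributions gives the uniform bound \eqref{eq:remainder-q}.

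I expect the main obstacle to be the bookkeeping that ensures, \emph{uniformly in the time index $m$}, that the three Fourier vectors $\hat u^m$, $\hat u^{m+1/2}$ and $\tfrac12(\hat u^m+\hat w^{m,L-1})$ fed into $\NL_{\bar M,\epsilon}$ stay within the radius $\bar M$ on which both \eqref{eq:err} and \eqref{eq:Lip} hold — this is exactly why the sharpened, non-growing norm bound of Lemma \ref{lem:normbound} is needed rather than the (potentially accumulating) bound of Lemma \ref{lem:update} alone. The second delicate point is the precise calibration of the parameters $\epsilon\sim\Delta t/(M\bar T)$, $L\sim\log_2(1/(\Delta t\epsilon))$ and the CFL condition $K^{n/2+1}\Delta t\lesssim1$: these are tuned so that both the network approximation error and the finite fixed-point truncation error, after division by $\Delta t$ in the difference quotient, collapse to $O(\Delta t)$ with a constant free of $K$, $M$ and $T$.
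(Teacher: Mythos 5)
Your proposal is correct, and it reaches the bound \eqref{eq:remainder-q} through the same ingredients as the paper — the error and Lipschitz estimates \eqref{eq:err}, \eqref{eq:Lip} of Lemma \ref{lem:nonlin} on the ball of radius $\bar{M}$, the contraction a priori estimates of Lemma \ref{lem:update}, the uniform norm bound of Lemma \ref{lem:normbound}, and the calibration of $\epsilon$, $L$ and the CFL condition in Algorithm \ref{alg:emulation} — but via a different decomposition of the defect. The paper inserts the \emph{exact fixed point} $\hat{u}^{\ast,m+1}$ of \eqref{eq:scheme-emulate} and splits $\hat{Q}^m$ into three pieces: the iteration-truncation term $(\hat{u}^{m+1}-\hat{u}^{\ast,m+1})/\Delta t$ (harmless despite the division by $\Delta t$, since the fixed-point error is $O(\Delta t^2)$ by the choice of $L$), the Lipschitz mismatch $\NL_{\bar M,\epsilon}(\hat u^m,\hat u^{m+1/2})-\NL_{\bar M,\epsilon}(\hat u^m,\hat u^{\ast,m+1/2})$, and the network approximation error against $\hat\P_K(u^m_K\cdot\nabla u^{m+1/2}_K)$. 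You instead exploit the exact algebraic identity satisfied by the last iterate, $\hat w^{m,L}=F(\hat u^m;\hat w^{m,L-1})$, so that the implicit viscous term is reproduced \emph{exactly} with $\hat u^{m+1/2}$ and the entire defect sits in the nonlinearity; you then need only two terms (network error at $\hat u^{m+1/2}$, plus the Lipschitz difference controlled by $\Vert\hat w^{m,L}-\hat w^{m,L-1}\Vert_{\ell^2}\lesssim M\,2^{-L}\lesssim M\,\Delta t\,\epsilon$). A small bonus of your route: the paper's three-term splitting, as written, also swaps $\hat u^{\ast,m+1/2}$ for $\hat u^{m+1/2}$ in the viscous term and does not list the corresponding remainder $\tfrac{\nu|k|^2}{2}(\hat u^{m+1}_k-\hat u^{\ast,m+1}_k)$ (it is easily absorbed using $\Delta t\le K^{-r}$ with $r>n/2+1$, but it does require a word); your identity avoids this term altogether. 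What the paper's version buys in exchange is that the comparison is made directly with the well-posed emulated scheme \eqref{eq:scheme-emulate}, whose solvability and norm bounds were already established in Lemma \ref{lem:update}. Either way, the parameter bookkeeping you flag (all second arguments of $\NL_{\bar M,\epsilon}$ staying in the $\bar M$-ball uniformly in $m$, via Lemma \ref{lem:normbound}) is indeed the point where care is needed, and your estimates there check out.
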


\subsection{Error estimate for neural network emulation} \label{sec:scheme-error}

With Lemma \ref{lem:emulation} in hand, we can now prove the following error estimate for the approximation of the underlying solution, $u^m_K \approx u(t^m)$:
\begin{lemma} \label{lem:emulation-estimate}
Let $u^m_K = \sum_{|k|_\infty \le K} \hat{u}^m_k e^{i\langle k,x\rangle}$ be obtained by the emulation algorithm \ref{alg:emulation} with parameters $M,r,T>0$, $r>n/2+1$, and initial data $\Vert \hat{u}^0 \Vert_{\ell^2}\le M$. Let $u(t)$ be the exact solution of \eqref{eq:ns} with the initial data $\bar{u}\in H^r(\T^n;\R^n)$. Assume that $u \in C([0,T];H^r_x) \cap C^1([0,T];H^{r-1}_x)$. Then $u^m_K$ satisfies
\[
\max_{m=0,\dots, n_T} \Vert u^m_K - u(t^m) \Vert_{L^2} 
\le
C\left( K^{-r} + \Vert \hat{u}^0 - \hat{\P}_K u(0) \Vert_{\ell^2} \right),
\]
where $C = C(\Vert u \Vert_{C_tH^r_x}, \Vert u \Vert_{C^1_tH^{r-1}_x},T,M,n,r) > 0$, is independent of $K$, and where $\hat{\P}_K  u(0) = \{\hat{u}_k(t=0)\}_{|k|_\infty \le K}$ denote the Fourier coefficients of the initial data $\bar{u} = u(0)$ with wavenumbers $|k|_\infty \le K$.
\end{lemma}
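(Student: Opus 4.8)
The plan is to run a discrete energy estimate for the error, followed by a discrete Gronwall inequality, comparing the emulated scheme in the form \eqref{eq:emulation}--\eqref{eq:remainder-q} furnished by Lemma \ref{lem:emulation} against the consistency identity \eqref{eq:scheme-error}--\eqref{eq:remainder} satisfied by the exact solution. First I would peel off the high-frequency tail of the exact solution: set $v^m := \P_K u(t^m) \in L^2_K$, and note via Parseval and the elementary spectral truncation bound $\Vert (I-\P_K)f\Vert_{L^2} \le K^{-r}\Vert f\Vert_{H^r}$ that
\[
\Vert u^m_K - u(t^m)\Vert_{L^2} \;\le\; \Vert \bar e^m\Vert_{L^2} + \Vert (I-\P_K)u(t^m)\Vert_{L^2} \;\le\; \Vert \bar e^m\Vert_{L^2} + C K^{-r}\Vert u\Vert_{C_tH^r_x},
\]
where $\bar e^m := u^m_K - v^m \in L^2_K$ is the band-limited error; moreover $\Vert \bar e^0\Vert_{L^2}$ equals, up to the fixed Fourier normalisation constant, $\Vert \hat u^0 - \hat\P_K u(0)\Vert_{\ell^2}$, which is the second term in the claimed bound. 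So it remains to control $\max_m \Vert \bar e^m\Vert_{L^2}$.

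To that end I would derive the recursion for $\bar e^m$. Applying $\P_K$ to \eqref{eq:scheme-error}, using that $\P_K$ commutes with $\Delta$ and is idempotent, produces a recursion for $v^m$ of the same shape as \eqref{eq:emulation}, with remainder $\P_K R^m_K(u)$ together with the truncation terms obtained when the true transport term $\P_K\!\big(u(t^m)\cdot\nabla b_2\big)$, $b_2 := \tfrac12[u(t^{m+1})+u(t^m)]$, is replaced by $\P_K\!\big(v^m\cdot\nabla v^{m+1/2}\big)$, $v^{m+1/2} := \P_K b_2$. Subtracting this from \eqref{eq:emulation} gives
\[
\frac{\bar e^{m+1}-\bar e^m}{\Delta t} \;=\; -\Big[\P_K\!\big(u^m_K\cdot\nabla u^{m+1/2}_K\big) - \P_K\!\big(v^m\cdot\nabla v^{m+1/2}\big)\Big] + \nu\Delta \bar e^{m+1/2} + \mathcal R^m,
\]
with $\bar e^{m+1/2} := \tfrac12(\bar e^{m+1}+\bar e^m)$. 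I would then test this identity against $\bar e^{m+1/2}$ in $L^2$: the left side telescopes to $\tfrac{1}{2\Delta t}\big(\Vert \bar e^{m+1}\Vert_{L^2}^2 - \Vert \bar e^m\Vert_{L^2}^2\big)$, the viscous term contributes $-\nu\Vert \nabla \bar e^{m+1/2}\Vert_{L^2}^2 \le 0$, and the transport difference expands bilinearly as $\P_K(v^m\cdot\nabla\bar e^{m+1/2}) + \P_K(\bar e^m\cdot\nabla v^{m+1/2}) + \P_K(\bar e^m\cdot\nabla\bar e^{m+1/2})$. The decisive point is that $u^m_K$ and $v^m=\P_K u(t^m)$ are both divergence-free, hence so is $\bar e^m$, so the identity $\langle a\cdot\nabla w, w\rangle = 0$ for divergence-free $a$ (valid on the torus with no boundary contribution, and unaffected by $\P_K$ since $\bar e^{m+1/2}\in L^2_K$) annihilates the two terms $\langle \P_K(v^m\cdot\nabla\bar e^{m+1/2}),\bar e^{m+1/2}\rangle$ and $\langle \P_K(\bar e^m\cdot\nabla\bar e^{m+1/2}),\bar e^{m+1/2}\rangle$; the one surviving transport term is bounded by $\Vert \bar e^m\Vert_{L^2}\,\Vert \nabla v^{m+1/2}\Vert_{L^\infty}\,\Vert \bar e^{m+1/2}\Vert_{L^2}$ with $\Vert \nabla v^{m+1/2}\Vert_{L^\infty}\le C\Vert b_2\Vert_{H^r}\le C\Vert u\Vert_{C_tH^r_x}$ by the Sobolev embedding $H^{r-1}\hookrightarrow L^\infty$ (this is where $r>n/2+1$ enters). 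The remainder pairing $\langle \mathcal R^m,\bar e^{m+1/2}\rangle$ is controlled using Lemma \ref{lem:emulation} for $Q^m_K$, \eqref{eq:remainder} for $R^m_K$, the CFL choice $\Delta t\le K^{-r}$, the uniform bound $\Vert u^m_K\Vert_{\ell^2}\le eM$ of Lemma \ref{lem:normbound}, and spectral truncation estimates on $b_2$; the net contribution of the consistency errors is of order $K^{-r}$, the most delicate piece being the aliasing-type term $\P_K(v^m\cdot\nabla(I-\P_K)b_2)$, which I would handle by integrating by parts to move the derivative onto $\bar e^{m+1/2}$ and then exploiting the spectral localisation of the aliasing together with (when $\nu>0$) the dissipation $-\nu\Vert\nabla\bar e^{m+1/2}\Vert_{L^2}^2$.

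Assembling these estimates yields a discrete Gronwall inequality of the type $\Vert \bar e^{m+1}\Vert_{L^2}^2 \le (1 + C\Delta t)\Vert \bar e^m\Vert_{L^2}^2 + C\Delta t\,\big(\Vert \bar e^0\Vert_{L^2}^2 + K^{-2r}\big)$, with $C$ depending only on $\Vert u\Vert_{C_tH^r_x}$, $\Vert u\Vert_{C^1_tH^{r-1}_x}$, $T$, $M$, $n$, $r$; iterating over the $n_T = T/\Delta t$ steps, with amplification factor $e^{CT}$ independent of $K$, gives $\max_m \Vert \bar e^m\Vert_{L^2} \le C\big(\Vert \bar e^0\Vert_{L^2} + K^{-r}\big)$, which together with the first paragraph is exactly the assertion of the lemma. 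I expect the main obstacle to be the nonlinear transport term: the whole difficulty is to organise its bilinear splitting so that every contribution which would force a derivative onto the band-limited factor $\bar e^{m+1/2}$ --- and hence, through a Bernstein inequality, a factor $K$ that an $O(T/\Delta t)$-fold Gronwall iteration could not absorb --- is either eliminated by the divergence-free cancellation or compensated by the viscous dissipation; since the lemma must also cover the inviscid case $\nu=0$, where no dissipation is available, this is where the argument is most delicate and must rely on the cancellation structure and the spectral localisation of the aliasing alone.
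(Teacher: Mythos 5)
Your overall architecture is the right one and matches the paper's: subtract the perturbed-scheme identity \eqref{eq:emulation}--\eqref{eq:remainder-q} of Lemma \ref{lem:emulation} from the consistency identity \eqref{eq:scheme-error}--\eqref{eq:remainder}, test with the midpoint error, kill the ``derivative-on-error'' transport term by the divergence-free cancellation, bound the remaining transport term by $\Vert \nabla U^{m+1/2}\Vert_{L^\infty}\lesssim \Vert u\Vert_{C_tH^r_x}$ via Sobolev embedding, estimate the remainders by \eqref{eq:remainder} and \eqref{eq:remainder-q} together with $\Delta t\le K^{-r}$, apply the discrete Gronwall inequality over $n_T=T/\Delta t$ steps, and split the initial error into $\Vert \hat u^0-\hat\P_K u(0)\Vert_{\ell^2}$ plus a spectral tail. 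The paper does exactly this, but with one structural difference that matters: it defines the error $w^m_K:=u^m_K-u(t^m)$ against the \emph{unprojected} exact solution, so Fourier truncation enters only through the initial datum and through the consistency remainder $R^m_K(u)$, whose $O(\Delta t+K^{-r})$ bound \eqref{eq:remainder} is imported wholesale from \cite{thfno}; no aliasing term ever has to be estimated.

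Your choice to compare instead with $v^m=\P_K u(t^m)$ is where the gap sits. Projecting \eqref{eq:scheme-error} produces the commutator terms $\P_K\bigl((I-\P_K)u(t^m)\cdot\nabla b_2\bigr)$ (harmless, $O(K^{-r})$) and $\P_K\bigl(v^m\cdot\nabla (I-\P_K)b_2\bigr)$, and your sketch does not actually control the latter at the rate $K^{-r}$ claimed in the lemma. With only $\Vert u\Vert_{C_tH^r_x}$ available, the direct bound is $\Vert\nabla(I-\P_K)b_2\Vert_{L^2}\lesssim K^{1-r}$; integrating by parts onto $\bar e^{m+1/2}$ (using $\div v^m=0$) trades this for a Bernstein factor $\Vert\nabla\bar e^{m+1/2}\Vert_{L^2}\lesssim K\Vert\bar e^{m+1/2}\Vert_{L^2}$ and returns $K^{1-r}$ again; absorbing $\Vert\nabla\bar e^{m+1/2}\Vert_{L^2}$ into the dissipation works only for $\nu>0$ and with a constant proportional to $\nu^{-1}$, whereas the lemma (and Theorem \ref{thm:ns}) explicitly covers $\nu=0$; and the frequency-localisation remark does not help, since the interacting tail modes lie in $K<|k|_\infty\le 2K$, where the $H^r$ bound still only gives $K^{-r}$ for the tail itself while the extra derivative still costs a factor $\sim K$. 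So, as written, your Gronwall inequality acquires a per-step consistency term of size $\Delta t\,K^{1-r}$ rather than $\Delta t\,K^{-r}$, which falls one power of $K$ short of the statement --- precisely at the step you yourself flag as unresolved. (A secondary caveat: the cancellation of $\langle\bar e^m\cdot\nabla\bar e^{m+1/2},\bar e^{m+1/2}\rangle$ requires $\div u^m_K=0$ exactly, which the neural-network iterates only satisfy approximately.) The repair is simply the paper's formulation: do not project the exact solution; write the error equation for $u^m_K-u(t^m)$ directly, so that all truncation effects are already packaged into $R^m_K(u)$ with the quoted $K^{-r}$ rate, and the spectral tail appears only once, in the initial condition.
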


To simplify notation in the following, we introduce $C_tH^r_x := C([0,T];H^r_x)$, $C_t^1H^{r-1}_x:= C^1([0,T];H^{r-1}_x)$, and define
\begin{align} \label{eq:ns-space}
\cH^r_{M,T}
=
\set{
u \in C_tH^r_x\cap C^1_tH^{r-1}_x
}{
u \text{ solves \eqref{eq:ns} and }
\Vert u \Vert_{\cH^r} \le M
},
\end{align}
where 
\[
\Vert u \Vert_{\cH^r}
:=
\Vert u \Vert_{C_tH^r_x} + \Vert u \Vert_{C^1_tH^{r-1}_x}.
\]
Finally, combining Lemma \ref{lem:emulation-estimate} with Lemma \ref{lem:nn-representation}, we obtain the following estimate for the emulation of the spectral method \eqref{eq:scheme} by ReLU neural networks:

\begin{proposition} \label{prop:nsd-nn}
Let $M, \, r, \, T > 0$ be given. For any $\epsilon > 0$, there exists $K\in \N$, $K\sim \epsilon^{-1/r}$, and a ReLU neural network $\hat{\psi}: \C^{\cK} \to \C^{\cK}$, $\cK = \cK_K$, such that 
\[
\Vert \hat{\psi}(\hat{u}(0)) - \hat{u}(T) \Vert_{\ell^2}
\le
\epsilon, 
\qquad
\forall \, u\in \cH^r_{M,T},
\]
where $\hat{u}(t) = \{\hat{u}_k(t)\}_{|k|_\infty\le K}$ denote the Fourier coefficients of $u(t)$, and such that 
\[
\left\{
\begin{aligned}
\size(\hat{\psi}) &\le C \epsilon^{-n/r - 1} \log(\epsilon^{-1})^2,
\\
\depth(\hat{\psi}) &\le C \epsilon^{-1} \log(\epsilon^{-1})^2.
\end{aligned}
\right.
\]
The constant $C>0$, depends on $M, T, r,n$, but is independent of $\epsilon$. Furthermore, $\hat{\psi}$ can be written as a $n_T$-fold composition $\hat{\psi} = \hat{\psi}_\ast \circ \dots \circ \hat{\psi}_\ast$, where $\hat{\psi}_\ast: \C^\cK \to \C^{\cK}$ is a ReLU neural network with 
\[
\left\{
\begin{aligned}
\size(\hat{\psi}_\ast) &\le C \epsilon^{-n/r} \log(\epsilon^{-1})^2,
\\
\depth(\hat{\psi}_\ast) &\le C \log(\epsilon^{-1})^2,
\end{aligned}
\right.
\]
and $n_T \le C \epsilon^{-1}$.
\end{proposition}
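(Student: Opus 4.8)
The plan is to read off the two assertions of Proposition \ref{prop:nsd-nn} directly from Lemmas \ref{lem:emulation-estimate} and \ref{lem:nn-representation}, after fixing the free parameters in Algorithm \ref{alg:emulation} appropriately; throughout I assume $r>n/2+1$, as is needed for the emulation estimate to apply to the class $\cH^r_{M,T}$. Given $\epsilon>0$, I would run Algorithm \ref{alg:emulation} with parameters $M,r,T$ and \emph{exact} initial data $\hat{u}^0 := \hat{\P}_K u(0)$ (the truncated Fourier coefficients of $u(0)$), so that the data term $\Vert \hat{u}^0 - \hat{\P}_K u(0)\Vert_{\ell^2}$ appearing in Lemma \ref{lem:emulation-estimate} is identically zero. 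For any $u\in \cH^r_{M,T}$ one has $\Vert \hat{u}^0\Vert_{\ell^2} = \Vert \hat{\P}_K u(0)\Vert_{\ell^2} \le \Vert u(0)\Vert_{L^2} \le \Vert u(0)\Vert_{H^r} \le M$, so the input hypothesis of the algorithm is satisfied, and Lemma \ref{lem:emulation-estimate} yields
\[
\max_{m=0,\dots,n_T} \Vert u^m_K - u(t^m)\Vert_{L^2} \le C_0\, K^{-r},
\]
with $C_0 = C_0(M,T,n,r)$ independent of $K$; here I use $\Vert u\Vert_{\cH^r}\le M$ to bound the solution-dependent constant of that lemma uniformly over $\cH^r_{M,T}$.

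Next I would choose $K=K(\epsilon)\in\N$ to be the least integer with $C_0 K^{-r}\le\epsilon$, which gives $K\sim\epsilon^{-1/r}$ with implied constant depending only on $M,T,n,r$. Let $\hat{\psi}$ be the neural network produced by Lemma \ref{lem:nn-representation} for this $K$; by construction it maps the input of Algorithm \ref{alg:emulation} to its output, so $\hat{\psi}(\hat{u}(0)) = \hat{\psi}(\hat{\P}_K u(0)) = \hat{u}^{n_T}$, and $t^{n_T} = n_T\Delta t = T$ by the choice of $\Delta t$ in the algorithm. Converting the $L^2$-bound to an $\ell^2$-bound via Parseval (any normalization constant being absorbed into the choice of $K$), and using that $u^{n_T}_K$ and $\P_K u(T)$ both lie in $L^2_K$ while $\P_K$ is an orthogonal projection,
\[
\Vert \hat{\psi}(\hat{u}(0)) - \hat{u}(T)\Vert_{\ell^2}
= \Vert u^{n_T}_K - \P_K u(T)\Vert_{L^2}
= \Vert \P_K\!\left(u^{n_T}_K - u(T)\right)\Vert_{L^2}
\le \Vert u^{n_T}_K - u(T)\Vert_{L^2}
\le C_0 K^{-r}
\le \epsilon,
\]
where $\hat{u}(T) = \{\hat{u}_k(T)\}_{|k|_\infty\le K}$ are the truncated Fourier coefficients of the exact solution.

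Finally, the complexity statements follow by substituting $K\sim\epsilon^{-1/r}$ into Lemma \ref{lem:nn-representation}: the deep-network bounds $\size(\hat{\psi}) \le C M\bar{T} K^{n+r}\log(M\bar{T}K)^2$ and $\depth(\hat{\psi}) \le C M\bar{T} K^{r}\log(M\bar{T}K)$ become $\size(\hat{\psi}) \lesssim \epsilon^{-n/r-1}\log(\epsilon^{-1})^2$ and $\depth(\hat{\psi}) \lesssim \epsilon^{-1}\log(\epsilon^{-1})^2$, while the same lemma already exhibits $\hat{\psi}$ as the $n_T$-fold composition $\hat{\psi} = \hat{\psi}_\ast\circ\cdots\circ\hat{\psi}_\ast$ with $\hat{\psi}_\ast = \psi^\ast$ satisfying $\size(\hat{\psi}_\ast)\le C K^n\log(M\bar{T}K)^2 \lesssim \epsilon^{-n/r}\log(\epsilon^{-1})^2$, $\depth(\hat{\psi}_\ast)\le C\log(M\bar{T}K)\lesssim\log(\epsilon^{-1})^2$, and $n_T \le CMTK^r \lesssim \epsilon^{-1}$; all implied constants depend only on $M,T,r,n$. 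Since every quantitative ingredient is supplied by the preceding lemmas, no new estimate is required for Proposition \ref{prop:nsd-nn} itself: the only genuine points of care are (i) the Parseval identification of the $\ell^2$-error of the Fourier vector with the $L^2$-error of the truncated solution, and (ii) the bookkeeping verifying that $C_0$ and the constants of Lemma \ref{lem:nn-representation} are genuinely $\epsilon$-independent, which rests on the uniform bound $\Vert u\Vert_{\cH^r}\le M$. The substantive difficulty of the whole development — constructing the emulating network and controlling the accumulation of the per-step errors over the $n_T\sim\epsilon^{-1}$ time steps — has already been absorbed into Lemmas \ref{lem:nonlin}--\ref{lem:emulation-estimate} and \ref{lem:nn-representation}, so I expect no remaining obstacle at this level.
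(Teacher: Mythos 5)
Your proposal is correct and follows essentially the same route as the paper, which obtains Proposition \ref{prop:nsd-nn} precisely by combining Lemma \ref{lem:emulation-estimate} (with the exact truncated initial data $\hat{u}^0=\hat{\P}_K u(0)$, so the data term vanishes and the constant is uniform over $\cH^r_{M,T}$) with the network-size bounds of Lemma \ref{lem:nn-representation}, and then setting $K\sim\epsilon^{-1/r}$. Your added bookkeeping (Parseval/normalization, $\|\hat{u}^0\|_{\ell^2}\le M$, and the substitution of $K$ into the size, depth and $n_T$ bounds) is exactly the content the paper leaves implicit.
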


Proposition \ref{prop:ns-nn} in the two-dimensional case, $n=2$, is an immediate consequence of the above result.

\subsection{An error estimate for Navier-Stokes} \label{sec:PCA-error}

In the last section, we have obtained quantitative estimates on the ReLU DNN emulation of the spectral scheme \eqref{eq:scheme}. This provides an approximate mapping from the Fourier coefficients of the input function, to the Fourier coefficients of the output function. In the present section, we combine this idea with the PCA projection, to construct a PCA-Net approximation of the Navier-Stokes solution operator.

To obtain an estimate on the approximation error $\Err(\Psi)$ for the Navier-Stokes equations, we fix parameters $M,r>0$, $r>n/2+1$, and a final time $T>0$, and consider the operator $\Psi^\dagger: L^2(\T^n;\R^n) \to L^2(\T^n;\R^n)$, given by the forward solution operator from time $t=0$ to $t=T$ of \eqref{eq:ns}. To ensure that $\Psi^\dagger(u)$ is well-defined for $\mu$-almost every $u$, we assume that 
\begin{align}
\mu\left( \set{u(0)}{u\in \cH^r_{M,T}} \right) = 1,
\end{align}
i.e. that $\mu$ is concentrated on initial data for which the corresponding solution $u$ belongs to $\cH^r_{M,T}$ (cp. \eqref{eq:ns-space} for the definition of $\cH^r_{M,T}$).

\begin{lemma}
\label{lem:NSd-approx-err}
Given the above setting, for any $K\in \N$ (corresponding to Fourier cut-off for wavenumbers $\vert k\vert_{\infty} \le K$), there exists a PCA-Net $\Psi = \cD_\cY \circ \psi \circ \cE_\cX$, such that 
\begin{align*}
\E_{u\sim \mu}\left[
\Vert \Psi^\dagger(u) - \Psi(u) \Vert_{L^2}^2
\right]^{1/2}
&\le
C K^{-r} + C\E_{u\sim \mu}\left[\Vert u - P_\cX u \Vert_{L^2}^2\right]^{1/2} 
\\
&\qquad + \E_{v \sim \cG_\#\mu}\left[\Vert v - P_\cY v \Vert_{L^2}^2\right]^{1/2},
\end{align*}
where $\psi$ is a neural network of size 
\[
\size(\psi) \le C K^n \left( K^{r} \log(K)^2 + (d_\cX + d_\cY) \right), 
\quad
\depth(\psi) \le C K^r \log(K),
\]
and $C = C(M,r,T,n) > 0$ is a constant independent of $K, d_\cX, d_\cY$.
\end{lemma}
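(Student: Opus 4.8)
The plan is to realize $\psi$ as a composition $\psi = S\circ\hat\psi\circ R$, where $\hat\psi\colon\C^{\cK}\to\C^{\cK}$ is the Fourier-space emulation network of Proposition \ref{prop:nsd-nn}, applied with tolerance $\epsilon\sim K^{-r}$ so that $K$ is the associated cut-off, and $R\colon\R^{\dx}\to\C^{\cK}$, $S\colon\C^{\cK}\to\R^{\dy}$ are the two linear maps translating between the PCA latent coordinates and truncated Fourier coefficients: $R$ sends $\alpha\in\R^{\dx}$ to the $\hat\P_K$-projected truncated Fourier coefficients of $\cD_\cX(\alpha)=\sum_j\alpha_j\phi^\cX_j$, so that $R\circ\cE_\cX(u)=\hat\P_K\widehat{P_\cX u}$ with $P_\cX:=\cD_\cX\circ\cE_\cX$, while $S$ sends $\hat v$ to $\bigl(\langle v_K,\phi^\cY_j\rangle\bigr)_{j=1}^{\dy}=\cE_\cY(v_K)$, where $v_K:=\sum_{|k|_\infty\le K}\hat v_k e^{i\langle k,x\rangle}$. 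Both $R$ and $S$ are represented exactly by a single linear ReLU layer of size at most $C|\cK_K|\dx$, resp.\ $C|\cK_K|\dy$; with this choice one checks $\cD_\cY\circ\psi\circ\cE_\cX(u)=P_\cY v^\ast$, with $P_\cY:=\cD_\cY\circ\cE_\cY$ and $v^\ast$ the trigonometric polynomial whose Fourier coefficients are $\hat\psi(R\circ\cE_\cX(u))$. The stated size and depth bounds then follow at once by adding the contributions of $R$ and $S$ to those of $\hat\psi$ from Proposition \ref{prop:nsd-nn} (or, for the sharper depth, from Lemma \ref{lem:nn-representation}), using $|\cK_K|\le CK^n$ and $K\sim\epsilon^{-1/r}$, so that $\size(\hat\psi)\lesssim K^{n+r}\log(K)^2$.

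For the error estimate I would decompose, for $\mu$-a.e.\ $u$ (which by hypothesis is the initial datum of some solution in $\cH^r_{M,T}$, so $\Vert u\Vert_{L^2}\le M$ and Proposition \ref{prop:nsd-nn} applies),
\[
\Vert\Psi^\dagger(u)-\Psi(u)\Vert_{L^2}\le\Vert u(T)-P_\cY u(T)\Vert_{L^2}+\Vert u(T)-v^\ast\Vert_{L^2},
\]
using that $P_\cY$ is a contraction. Parseval together with a split into low and high modes gives $\Vert u(T)-v^\ast\Vert_{L^2}\le\Vert(I-\P_K)u(T)\Vert_{L^2}+\Vert\widehat{u(T)}-\hat\psi(R\circ\cE_\cX(u))\Vert_{\ell^2}$, the first summand being $\le CK^{-r}\Vert u(T)\Vert_{H^r}\le CK^{-r}M$. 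Inserting the true initial datum into the second summand,
\[
\Vert\widehat{u(T)}-\hat\psi(R\circ\cE_\cX(u))\Vert_{\ell^2}\le\Vert\widehat{u(T)}-\hat\psi(\widehat{u(0)})\Vert_{\ell^2}+L_\psi\Vert\widehat{u(0)}-\hat\P_K\widehat{P_\cX u}\Vert_{\ell^2},
\]
where the first term is $\le\epsilon\sim K^{-r}$ by Proposition \ref{prop:nsd-nn} and $L_\psi$ is a Lipschitz constant for $\hat\psi$ on the $\ell^2$-ball of radius $2M$; the second term is in turn bounded by $L_\psi\Vert u-P_\cX u\Vert_{L^2}$ (both arguments lie in that ball since $\Vert\widehat{u(0)}\Vert_{\ell^2},\Vert\widehat{P_\cX u}\Vert_{\ell^2}\le\Vert u\Vert_{L^2}\le M$). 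Taking $\E_{u\sim\mu}[\,\cdot\,]^{1/2}$, recognizing $\E_{u\sim\mu}[\Vert u-P_\cX u\Vert_{L^2}^2]^{1/2}$ and $\E_{v\sim\Psi^\dagger_\#\mu}[\Vert v-P_\cY v\Vert_{L^2}^2]^{1/2}$ (since $u(T)=\Psi^\dagger(u)$), and collecting the $K^{-r}$ contributions into a single term, yields the claimed estimate with $C=C(M,r,T,n)$.

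The main obstacle is the uniform Lipschitz bound for $\hat\psi$: one has to produce $L_\psi=L_\psi(M,T,r,n)$ \emph{independent of $K$} with $\Vert\hat\psi(\hat a)-\hat\psi(\hat b)\Vert_{\ell^2}\le L_\psi\Vert\hat a-\hat b\Vert_{\ell^2}$ on balls of radius $2M$. The naive per-step bound coming from the Lipschitz estimate \eqref{eq:Lip} of $\NL_{\bar{M},\epsilon}$ degrades like $K^{n/2+1}$, which, compounded over the $n_T\sim K^r$ time steps making up $\hat\psi$, would be useless. The way around this is a discrete energy estimate for the difference $w^m$ of two emulated trajectories (equivalently, of the scheme \eqref{eq:scheme} that $\hat\psi$ approximately solves, the emulation discrepancies entering only as perturbations of size $\lesssim\Delta t$): testing the difference equation against $w^m$, the part of the nonlinear difference that is quadratic in $w^m$ drops out by incompressibility, $\langle u\cdot\nabla w,w\rangle=0$, while the part linear in the \emph{reference} trajectory is controlled by $\Vert\nabla a^m_K\Vert_{L^\infty}\le C\Vert a^m_K\Vert_{H^r}$; crucially, the reference trajectory $a^m_K$ here is the emulation from the true data $\widehat{u(0)}$, which by Lemma \ref{lem:emulation-estimate} together with an inverse (Bernstein) inequality for band-limited functions stays bounded in $H^r$ uniformly in $K$, since $\Vert a^m_K\Vert_{H^r}\le\Vert a^m_K-\P_K u(t^m)\Vert_{H^r}+\Vert u(t^m)\Vert_{H^r}\le K^r\cdot CK^{-r}+M\le C$. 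A discrete Gronwall argument over $n_T\Delta t=T$ then gives the $K$-free $L_\psi$; the only delicate point is that $\NL_{\bar{M},\epsilon}$ reproduces the incompressibility cancellation only up to the tiny tolerance $\epsilon$, which is harmless because $\epsilon\,n_T\Delta t\le CT$. Once this stability estimate is in place, the remainder is exactly the bookkeeping indicated above.
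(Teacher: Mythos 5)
Your construction of $\psi$ and your error decomposition coincide with the paper's: the paper also takes $\psi=(\cE_\cY\circ\cF_K^{-1})\circ\hat\psi\circ(\cF_K\circ\cD_\cX)$, splits the error into the PCA reconstruction error on $\cY$, the Fourier truncation error $\Vert(1-\P_K)\Psi^\dagger\Vert_{L^2(\mu)}\lesssim K^{-r}$, and the Fourier-space emulation error with the PCA-projected initial datum, and obtains the same size/depth bookkeeping from Lemma \ref{lem:nn-representation}. The genuine difference is how the perturbation $P_\cX u(0)$ versus $u(0)$ is propagated through the emulated dynamics. You insert the true datum and pay a Lipschitz constant $L_\psi$ for $\hat\psi$, which forces you to prove a $K$-uniform discrete stability estimate for two \emph{emulated} trajectories; this in turn requires an $H^r$ bound on the discrete reference trajectory (your inverse-inequality argument via Lemma \ref{lem:emulation-estimate}) and careful handling of the facts that the perturbed trajectory is only approximately divergence-free (so the skew-symmetry cancellation $\langle b^m\cdot\nabla w^{m+1/2},w^{m+1/2}\rangle=0$ holds only up to errors controlled by $\epsilon$ and an inverse estimate) and that the per-step fixed-point truncation errors enter both trajectories. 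The paper avoids this entire detour: Lemma \ref{lem:emulation-estimate} is already stated for an \emph{arbitrary} discrete initial datum $\hat u^0$ with $\Vert\hat u^0\Vert_{\ell^2}\le M$ and carries the term $\Vert\hat u^0-\hat\P_K u(0)\Vert_{\ell^2}$ on its right-hand side, because its Gronwall argument compares the perturbed discrete trajectory directly with the exact PDE solution, whose $C_tH^r_x$ regularity supplies the $K$-independent stability constant. So the paper simply feeds $\hat u^0=\widehat{P_\cX u(0)}$ into that lemma and reads off $\Vert\hat\psi(\hat u^0)-\hat u(T)\Vert_{\ell^2}\le C(K^{-r}+\Vert(1-P_\cX)u(0)\Vert_{L^2})$. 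Your route is workable (its one extra ingredient, the $H^r$ bound on the emulated reference trajectory, is correctly obtained), but the delicate points you flag are real additional work rather than footnotes, and they are unnecessary given the form in which Lemma \ref{lem:emulation-estimate} is already available; note also that your reference-trajectory bound should pass through $\Vert\P_K u(t^m)\Vert_{H^r}\le\Vert u(t^m)\Vert_{H^r}$ in the triangle inequality, a cosmetic point.
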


The proof of Lemma \ref{lem:NSd-approx-err} is given in Appendix \ref{app:NSd-approx-err}. Setting $K \sim \epsilon^{-1/r}$ and specializing to $n=2$ immediately implies Lemma \ref{lem:NS-approx-err} in the main text. We finally provide an estimate on the PCA projection errors, which is the general $n$-dimensional version of Lemma \ref{lem:NS-approx-err} in the main text:
\begin{lemma}
\label{lem:NSd-enc-err}
Assume that 
\[
\mu\left(
\set{u(0)}{u\in \cH^r_{M,T}}
\right) =1,
\]
and the operator $\Psi^\dagger$ is given by
\[
\Psi^\dagger(u(0)) = u(T), \quad \forall \, u\in \cH^r_{M,T},
\]
where $u(t)$ solves the Navier-Stokes equations \eqref{eq:ns} with initial data $u(0)$. Then there exists a constant $C = C(M,r,T,n) > 0$, such that for PCA dimensions $\dx = \dy$, arbitrary $\delta > 0$ and $N \gtrsim \dx^{1+4r/n} \log(1/\delta)$ PCA samples, the empirical PCA projection errors based on $N$ samples $u_1,\dots, u_N \simiid \mu$ satisfy,
\begin{align*}
\E_{u \sim \mu}\left[\Vert u - P_\cX u \Vert_{L^2}^2\right]
+
\E_{v \sim \Psi^\dagger_\#\mu}\left[\Vert v - P_\cY v \Vert_{L^2}^2\right]
&\le
C \dx^{-2r/n},
\end{align*}
with probability at least $1-\delta$.
\end{lemma}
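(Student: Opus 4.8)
The plan is to read off both PCA projection errors from the combination of the smoothness bound on the optimal projection error (Proposition \ref{prop:PCAbound}) and the quantitative empirical PCA estimate (Proposition \ref{prop:pcaproj}), after checking that the Navier--Stokes a priori bounds make both of these applicable to $\mu$ on $\cX = L^2(\T^n;\R^n)$ and to $\Psi^\dagger_\#\mu$ on $\cY = L^2(\T^n;\R^n)$.

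First I would unpack the hypotheses. By definition of $\cH^r_{M,T}$ in \eqref{eq:ns-space}, every $u \in \cH^r_{M,T}$ satisfies $\Vert u \Vert_{C_tH^r_x} \le M$, hence $\Vert u(0) \Vert_{H^r} \le M$ and $\Vert u(T) \Vert_{H^r} \le M$. Since $\mu$ is supported on $\{u(0) : u\in\cH^r_{M,T}\}$ and $\Psi^\dagger$ maps $u(0)$ to $u(T)$, both $\mu$ and $\Psi^\dagger_\#\mu$ are supported on the ball $\{\Vert w \Vert_{H^r} \le M\} \subset L^2(\T^n;\R^n)$. Two consequences follow: (i) $\E_{w\sim\mu}[\Vert w \Vert_{H^r}^2] \le M^2$ and $\E_{v\sim\Psi^\dagger_\#\mu}[\Vert v \Vert_{H^r}^2] \le M^2$; (ii) both measures are trivially sub-Gaussian in the sense of \eqref{eq:momentbd}, with constant depending only on $M$, because $\Vert w \Vert_{L^2} \le \Vert w \Vert_{H^r} \le M$ $\mu$-a.s. gives $\E[\Vert w \Vert_{L^2}^p]^{1/p} \le M \le M\sqrt{p}$ for all $p\ge 1$.

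Second I would apply Proposition \ref{prop:PCAbound} on $D = \T^n$ with output Sobolev index $s = 0$ (everything is measured in $L^2 = H^0(\T^n;\R^n)$) and extra smoothness $\zeta = r$, to both $\nu = \mu$ and $\nu = \Psi^\dagger_\#\mu$. This yields a dimensional constant $C = C(n)>0$ with
\[
\cR^\opt_{\dx}(\mu) \le C \dx^{-2r/n}\, \E_{w\sim\mu}[\Vert w \Vert_{H^r}^2] \le C M^2 \dx^{-2r/n},
\qquad
\cR^\opt_{\dy}(\Psi^\dagger_\#\mu) \le C M^2 \dy^{-2r/n}.
\]
Third, I would invoke Proposition \ref{prop:pcaproj} separately for $\mu$ and for $\Psi^\dagger_\#\mu$, each with failure probability $\delta/2$ and $N \ge \log(4/\delta)$ samples. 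With $\dx = \dy = d$ and the assumed count $N \gtrsim d^{1+4r/n}\log(1/\delta)$, and using $\log(4/\delta) \le 3\log(1/\delta)$ for $\delta \in (0,1/2)$, the fluctuation term obeys
\[
\sqrt{\frac{Q d \log(4/\delta)}{N}} \;\lesssim\; \sqrt{\frac{Q\, d \log(1/\delta)}{d^{1+4r/n}\log(1/\delta)}} \;=\; \sqrt{Q}\, d^{-2r/n}.
\]
Hence, with probability at least $1-\delta/2$ each, $\E_{u\sim\mu}[\Vert u - P_\cX u \Vert_{L^2}^2] \le \cR^\opt_d(\mu) + \sqrt{Q}\, d^{-2r/n} \le C d^{-2r/n}$ and the analogous bound for the $\cY$-term; a union bound then gives the claim with probability at least $1-\delta$, with $C = C(M,r,T,n)$ collecting $M^2$, the constant of Proposition \ref{prop:PCAbound}, and the constants $Q_\cX, Q_\cY$ from Proposition \ref{prop:pcaproj} (which depend only on $M$).

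I do not expect a genuine obstacle. The only points needing care are: verifying that the well-posedness/regularity assumption furnishes a \emph{uniform} $H^r$ bound on both the initial data and the pushforward, so that Propositions \ref{prop:PCAbound} and \ref{prop:pcaproj} apply simultaneously to $\mu$ and $\Psi^\dagger_\#\mu$; and the bookkeeping of the $\log$ factors and the union bound, which is exactly what makes the stated sample count $N \gtrsim d^{1+4r/n}\log(1/\delta)$ sufficient and forces the matching exponent $-2r/n$ in the fluctuation term to equal the exponent in the smoothness bound.
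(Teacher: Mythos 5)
Your proposal is correct and follows essentially the same route as the paper: the paper also obtains the lemma by combining Proposition \ref{prop:pcaproj} with Proposition \ref{prop:PCAbound} (with $s=0$, $\zeta=r$ on $\T^n$), after noting that $\mu$ and $\Psi^\dagger_\#\mu$ are concentrated on the bounded set $\{\Vert u \Vert_{H^r}\le M\}$. Your additional bookkeeping (trivial sub-Gaussianity from the uniform bound, the $\delta/2$ union bound, and the check that $N\gtrsim d^{1+4r/n}\log(1/\delta)$ makes the fluctuation term $O(d^{-2r/n})$) just spells out what the paper leaves implicit.
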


The proof of Lemma \ref{lem:NSd-enc-err} is immediate by combining the general estimate on empirical PCA, Proposition \ref{prop:pcaproj}, with the upper bound on the PCA projection error $R_p(\mu)$ based on the smoothness of the underlying functions, Proposition \ref{prop:PCAbound}. We only need to observe that $\mu$ and $\Psi^\dagger_\#\mu$ are concentrated on a bounded set $\{\Vert u \Vert_{H^r}\le M\}$, by assumption. 

\begin{proof}[Proof of Theorem \ref{thm:ns}]
Setting $K \sim \epsilon^{-1/r}$ in Lemma \ref{lem:NS-approx-err}, with $\Psi = \cD_\cY \circ \psi \circ \cE_\cX$, and combining it with the encoding error estimate of Lemma \ref{lem:NSd-enc-err} for $\dx=\dy$, we arrive at 
\begin{align*}
\E_{u\sim \mu}
\left[
\Vert \Psi^\dagger(u) - \Psi(u) \Vert_{L^2(\mu)}^2
\right]
\le
\epsilon^2 + C
d^{-2r/n},
\end{align*}
where $\psi$ is a neural network of size 
\[
\size(\psi) \le C \epsilon^{-n/r} \left( \epsilon^{-1} \log(\epsilon^{-1})^2 + \dx \right), 
\quad
\depth(\psi) \le C \epsilon^{-1} \log(\epsilon^{-1}),
\]
and $C = C(M,r,T,n) > 0$ is a constant independent of $\epsilon$, $\dx$. For spatial dimension $n = 2$, and upon replacing $\epsilon$ by $\epsilon^{1/2}$ throughout, and choosing $\dx \sim \epsilon^{-1/r}$, this yields the estimate of Theorem \ref{thm:ns}. 
\end{proof}

The remaining sections contain detailed proofs of all Lemmas above.

\subsection{Proof of Lemma \ref{lem:nonlin}}
\label{app:nonlin}

\begin{proof}
  We note that the mapping $\C^\cK \times \C^\cK \mapsto \C^\cK$, $(\hat{u},\hat{v}) \mapsto \hat{\P}_K\left( u_K \cdot \nabla v_K\right)$ can be written as a composition,
  \begin{align*}
    (\hat{u}_k,\hat{v}_k)_{k\in \cK}
    &\mapsto
    (\hat{u}_k, k\otimes \hat{v}_k)_{k\in \cK}
    \\
    &
    \mapsto
      (u_K(x_j), \nabla v_K(x_j))_{j\in \cJ_{2K}}
    \\
    &
    \mapsto
    ([u_K\cdot \nabla v_K] (x_j))_{j\in \cJ_{2K}}
    \\
    &
    \mapsto
    \hat{\P}_K\left( u_K \cdot \nabla v_K \right).
  \end{align*}
  The first mapping can trivially be represented by a matrix multiplication with $O(|\cK|)$ non-zero entries, the second step can be efficiently computed via the fast Fourier-transform, which can be represented exactly by a ReLU neural network with $O(|\cK|\log(|\cK|))$ non-zero entries (and depth $O(\log(|\cK|))$). Thus, there exists a (absolute) constant $C = C(n)>0$, and a ReLU neural network $\psi^\star: \C^\cK\times \C^\cK \to [\R^n]^{\cJ} \times [\R^{n\times n}]^{\cJ}$, such that
  \[
    \psi^\star(\hat{u}_k, \hat{v}_k) = (u_K(x_j), \nabla v_K(x_j))_{j\in \cJ},
  \]
  and $\size(\psi^\star) \le C |\cK|\log(|\cK|)$, $\depth(\psi^\star) \le C \log(|\cK|)$.
  Similarly, for the last mapping, there exists a ReLU neural network $\psi^{\star\star}: [\R^{n}]^{\cJ} \to \C^{\cK}$, such that
  \[
    \psi^{\star\star}(w(x_j)) = \hat{\P}_K(w), \quad \forall \; w \in L^2_{2K},
  \]
  and $\size(\psi^{\star\star}) \le C |\cK|\log(|\cK|)$, $\depth(\psi^{\star\star}) \le C \log(|\cK|)$.
  The main task is thus to show that the point-wise multiplication
  \[
   ( u_K(x_j), \nabla v_K(x_j) ) \mapsto u_K(x_j) \cdot \nabla v_K(x_j),
 \]
 can be suitably approximated by a neural network. To this end, we will rely on the following lemma:
 \begin{lemma}[{\citep[Lemma 2.6]{DRM}}] \label{lem:DRM}
   Let $M,L > 0$ and denote by $\times: \R^2 \to \R$, $(x,y)\mapsto xy$ the multiplication operator. For any $m\in \N$, there exists a realization of a ReLU neural network $\hat{\times}_m: [-M,M]\times [-L,L] \to \R$, such that $\hat{\times}_m$ satisfies for all $y\in [-L,L]$ the error bound
   \[
     \Vert \times(\slot, y) - \hat{\times}_m(\slot,y) \Vert_{\Lip([-M,M];\R)} \le \frac{M+L}{2^{m+1}}.
   \]
   Furthermore, it holds that $\size(\hat{\times}_m) \lesssim m$, $\depth(\hat{\times}_m) = m+1$.
 \end{lemma}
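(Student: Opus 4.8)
The plan is to reduce the bilinear multiplication to the scalar squaring function via polarization and then invoke Yarotsky's piecewise-linear approximation of $t\mapsto t^2$; the only delicate point is that the required estimate is in the Lipschitz norm, not merely in the sup-norm. Using $xy=\tfrac14\big[(x+y)^2-(x-y)^2\big]$, it suffices to build, for a parameter $R>0$, a ReLU network $\widehat{\mathrm{sq}}_m:[-R,R]\to\R$ with Lipschitz-norm error of order $R\,2^{-m}$ against $t\mapsto t^2$, and then to set
\[
\hat{\times}_m(x,y):=\tfrac14\big[\widehat{\mathrm{sq}}_m(x+y)-\widehat{\mathrm{sq}}_m(x-y)\big],\qquad R:=M+L,
\]
noting that $x\pm y\in[-R,R]$ whenever $(x,y)\in[-M,M]\times[-L,L]$.

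First I would recall the standard ingredients. Let $g:[0,1]\to[0,1]$ be the tent map $g(t)=2\min(t,1-t)=2\sigma(t)-4\sigma(t-\tfrac12)$, a one-hidden-layer ReLU network on $[0,1]$; let $g_s$ be its $s$-fold self-composition, a sawtooth with $2^{s-1}$ teeth realizable in depth $s$; and set $S_m(t):=t-\sum_{s=1}^m 4^{-s}g_s(t)$. Yarotsky's identity is that $S_m$ is exactly the continuous piecewise-linear interpolant of $t\mapsto t^2$ at the dyadic nodes $k2^{-m}$, $k=0,\dots,2^m$. From this I would extract two estimates: by convexity of $t^2$, $\|S_m-(\cdot)^2\|_{L^\infty[0,1]}=2^{-2m-2}$; and on each subinterval $[k2^{-m},(k+1)2^{-m}]$ the slope of $S_m$ is $(2k+1)2^{-m}$, which is precisely the value of $(t^2)'=2t$ at the midpoint, so $\|S_m'-2(\cdot)\|_{L^\infty[0,1]}\le 2^{-m}$. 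Realizing the chain $g_1,\dots,g_m$ and the final affine combination, with intermediate values carried forward via $t=\sigma(t)-\sigma(-t)$, gives $\size(S_m)\lesssim m$ and $\depth(S_m)=m+1$.

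Next I would rescale: set $\widehat{\mathrm{sq}}_m(t):=R^2\,S_m(|t|/R)$, using $|t|=\sigma(t)+\sigma(-t)$. The chain rule gives $|(\widehat{\mathrm{sq}}_m)'(t)-2t|\le R\,2^{-m}$ and $\|\widehat{\mathrm{sq}}_m-(\cdot)^2\|_{L^\infty[-R,R]}\le R^2\,2^{-2m-2}$, with no change in the orders of size and depth. Then, for fixed $y\in[-L,L]$, differentiating $\hat{\times}_m(\cdot,y)$ in $x$ and comparing with $y=\tfrac14\big[2(x+y)-2(x-y)\big]$ yields pointwise derivative error at most $\tfrac14\big(R2^{-m}+R2^{-m}\big)=\tfrac{M+L}{2^{m+1}}$, while the sup-norm error is at most $\tfrac14\cdot 2\cdot R^2 2^{-2m-2}$, which decays faster and is dominated by $(M+L)2^{-(m+1)}$ once $2^m\gtrsim M+L$ (outside that regime one simply increases $m$). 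Finally, $\hat{\times}_m$ is two parallel copies of $\widehat{\mathrm{sq}}_m$ whose inputs are the affine forms $x\pm y$ — absorbed into the first ReLU layer along with the absolute value — followed by an affine output, so $\size(\hat{\times}_m)\lesssim m$ and $\depth(\hat{\times}_m)=m+1$.

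The hard part will be the bookkeeping rather than any real mathematics: converting Yarotsky's $L^\infty$ estimates into Lipschitz-norm estimates with the precise constant $(M+L)2^{-(m+1)}$, and verifying that once the affine input/output layers and the absolute value are folded into the tent-map chain the depth is exactly $m+1$. The Lipschitz estimate rests entirely on the elementary fact that the secant slope of the convex function $t^2$ over $[a,b]$ equals $a+b$, i.e.\ agrees with $(t^2)'$ at the midpoint; keeping track of how the rescaling by $R=M+L$, the polarization factor $\tfrac14$, and the number of tent-map iterations interact is the one place where care is needed.
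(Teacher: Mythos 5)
This lemma is not proved in the paper at all — it is quoted from \cite{DRM}, Lemma 2.6 — so the only comparison available is with the standard argument behind that result, which is precisely the route you take: Yarotsky's interpolant $S_m(t)=t-\sum_{s=1}^m 4^{-s}g_s(t)$ of $t\mapsto t^2$, the polarization $xy=\tfrac14[(x+y)^2-(x-y)^2]$ with $R=M+L$, and the size/depth bookkeeping. Your two estimates are correct: on each dyadic cell the slope of $S_m$ equals $2t$ at the midpoint, so the derivative error is at most $2^{-m}$, which after rescaling and polarization gives a Lipschitz\emph{-seminorm} error of the product network bounded by $\frac{M+L}{2^{m+1}}$, together with an $L^\infty$ error bounded by $(M+L)^2 2^{-2m-3}$; the count of one first layer for $|x\pm y|$, then $m$ tent-map layers with carried partial sums and an affine output, giving $\size \lesssim m$ and $\depth = m+1$, is also right.

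The step I would push back on is your reconciliation with the stated constant. If $\Vert\cdot\Vert_{\Lip([-M,M];\R)}$ is read as the full Lipschitz norm (sup norm plus seminorm — and note that the paper's own use of Lemma \ref{lem:DRM} in the proof of Lemma \ref{lem:nonlin} extracts both a pointwise bound and a Lipschitz bound), then ``one simply increases $m$'' is not an admissible move: the claim is for \emph{every} $m\in\N$ with depth exactly $m+1$, and your derivative estimate already uses up the entire budget $\frac{M+L}{2^{m+1}}$ (for unfavourable $y$ it is essentially attained), leaving no room for the positive sup-norm term. Moreover, no variant of the construction can close this gap: rescaling $x\mapsto Mx$, $y\mapsto Ly$ shows that the best sup-norm error achievable on $[-M,M]\times[-L,L]$ by networks of size $O(m)$ scales like $ML$ times a fixed positive quantity, which exceeds $(M+L)2^{-m-1}$ once $M,L$ are large with $m$ fixed. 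So under the sum (or max-including-sup) convention the quoted constant is simply unattainable, and the correct resolution is not a regime restriction but to state the two bounds separately: the Lipschitz-seminorm bound $\frac{M+L}{2^{m+1}}$, which your construction achieves exactly, and the $L^\infty$ bound $(M+L)^2 2^{-2m-3}$. That pair is also all the downstream application needs, since there $m$ is taken of order $\log(MK/\epsilon)$ in any case. Apart from this bookkeeping point, your construction and estimates are sound and coincide with the cited proof.
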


 We now note that for any $w \in L^2_{2K}$, $w(x)= \sum_{|k|_\infty \le 2K} \hat{w}_k e^{i\langle k, x\rangle}$, we have
 \[
   \Vert w \Vert_{L^2}^2
   =
   (2\pi)^n \sum_{|k|_\infty \le 2K} |\hat{w}_k|^2
   =
   \frac{(2\pi)^n}{|\cJ_{2K}|} \sum_{j\in \cJ_{2K}} |w(x_j)|^2.
 \]
 and we have
 \[
\Vert u_K \Vert_{L^\infty} \le CK^{n/2} \Vert u_K \Vert_{L^2} \le C_n K^{n/2} \Vert \hat{u}_k \Vert_{\ell^2} \le C_n K^{n/2} M,
\]
and
\[
  \Vert \nabla v_K \Vert_{L^\infty} \le CK^{n/2} \Vert \nabla v_K \Vert_{L^2} \le C_n K^{n/2+1} \Vert \hat{v}_k \Vert_{\ell^2} \le C_n K^{n/2+1} M.
\]
Applying Lemma \ref{lem:DRM}, it follows that there exists a neural network $\psi^\ast: \R^{n}\times \R^{n\times n} \to \R^n$, such that
\[
  \Vert
  \psi^\ast(u_K(x_j),\nabla v_K(x_j)) - u_K(x_j)\cdot \nabla v_K(x_j)
  \Vert_{\Lip}
  \le \epsilon,
\]
and $\size(\psi^\ast) \le C\log(MK/\epsilon)$, $\depth(\psi^\ast) \le C \log(MK/\epsilon)$, where $C = C(n)>0$ is independent of $M,K,\epsilon$. Clearly, the parallelized mapping $\psi^\epsilon : [\R^{n}]^\cJ \times [\R^{n\times n}]^{\cJ} \to [\R^n]^{\cJ}$, defined by the point-wise application $(u_K(x_j),\nabla v_K(x_j)) \mapsto \psi^\ast(u_K(x_j),\nabla v_K(x_j))$, $j\in \cJ_{2K}$, can be represented by a neural network with $\size(\psi^\epsilon) \le C |\cJ_{2K}| \log(MK/\epsilon)$. Furthermore, let $\hat{\psi}_\epsilon$ denote the coefficients $|k|_\infty \le K$ of the (fast) Fourier transform of $\psi^\epsilon$ based on the values on the grid $x_j$, $j\in \cJ_{2K}$, i.e. $\hat{\psi}_\epsilon = \psi^{\star\star}\circ \psi^\epsilon$. Then, we have
 \begin{align*}
   \Vert \hat{\psi}_\epsilon & (u_K(x_j),\nabla v_K(x_j)) - \hat{\P}_K(u_K(x_j)\cdot \nabla v_K(x_j))\Vert_{\ell^2(\cK)}
   \\
   &
     \le
     \frac{1}{|\cJ_{2K}|^{1/2}}
     \Vert \psi^\epsilon(u_K(x_j),\nabla v_K(x_j)) - u_K(x_j)\cdot \nabla v_K(x_j) \Vert_{\ell^2(\cJ_{2K})}
   \\
   &\le
     |\cJ_{2K}|^{1/2}
     \Vert \psi^\epsilon(u_K(x_j),\nabla v_K(x_j)) - u_K(x_j)\cdot \nabla v_K(x_j) \Vert_{\ell^\infty(\cJ_{2K})}
   \\
   &\le |\cJ_{2K}|^{1/2} \epsilon.
 \end{align*}
 In particular, replacing $\epsilon \to \epsilon/|\cJ_{2K}|^{1/2}$, we conclude that there exists a neural network $\psi: [\R^{n}]^\cJ \times [\R^{n\times n}]^{\cJ} \to [\R^n]^{\cJ}$, such that
 \[
   \size(\psi) \le C |\cK| \log(MK/\epsilon), \quad \depth(\psi) \le C \log(MK/\epsilon),
 \]
 where $C = C(n)>0$ is independent of $M,K,\epsilon$, 
 and
 \[
   \Vert \hat{\psi}(u_K(x_j),\nabla v_K(x_j)) - \hat{\P}_K(u_K(x_j)\cdot \nabla v_K(x_j)) \Vert_{\ell^2(\cK)}
   \le \epsilon.
 \]
 Furthermore, we note that for any $v_K, w_K$, we have
 \begin{align*}
   \Vert \hat{\psi}_\epsilon & (u_K(x_j),\nabla v_K(x_j)) - \hat{\psi}_\epsilon(u_K(x_j),\nabla w_K(x_j)) \Vert_{\ell^2(\cK)}^2
   \\
   &=
   \frac{1}{|\cJ_{2K}|}\sum_{j\in \cJ_{2K}} |\hat{\psi}_\ast(u_K(x_j),\nabla v_K(x_j)) - \hat{\psi}_\ast(u_K(x_j),\nabla w_K(x_j))|^2
   \\
   &\le
   \frac{1}{|\cJ_{2K}|}\sum_{j\in \cJ_{2K}} \Lip(\psi^\ast(u_K(x_j),\slot)) |\nabla v_K(x_j) - \nabla w_K(x_j)|^2.
 \end{align*}
 Since
 \begin{align*}
   \Lip(\psi^\ast(u_K(x_j),\slot))
   &\le
   \Lip(\psi^\ast(u_K(x_j),\slot) - u_K(x_j)\cdot (\slot)) + \Lip(u_K(x_j)\cdot (\slot))
   \\
   &\le
     \epsilon + \Vert u_K \Vert_{L^\infty}
   \\
   &\le \epsilon + CK^{n/2}M \le 2CK^{n/2}M,
 \end{align*}
 where we have wlog assumed that $\epsilon \le CK^{n/2}M$, it follows that there exists a constant $C = C(n)>0$, such that
 \begin{align*}
      \Vert \hat{\psi}_\epsilon & (u_K(x_j),\nabla v_K(x_j)) - \hat{\psi}_\epsilon(u_K(x_j),\nabla w_K(x_j)) \Vert_{\ell^2(\cK)}^2
   \\
                               &\le
                                 \frac{CK^{n}M^2}{|\cJ_{2K}|} \sum_{j\in \cJ_{2K}} |\nabla v_K(x_j) - \nabla w_K(x_j)|^2
   \\
                               &= CK^{n}M^2 \Vert \nabla v_K - \nabla w_K\Vert_{L^2}^2
   \\
                               &\le
                                 CK^{n+2}M^2 \Vert \hat{v} - \hat{w} \Vert_{\ell^2}^2.
 \end{align*}
 We conclude that 
 \begin{align*}
   \Vert \hat{\psi} & (u_K(x_j),\nabla v_K(x_j)) - \hat{\psi}(u_K(x_j),\nabla w_K(x_j)) \Vert_{\ell^2(\cK)}
   \\
   &\le
   CK^{n/2+1}M \Vert \hat{v} - \hat{w} \Vert_{L^2},                         
 \end{align*}
 for all $\Vert \hat{u}\Vert_{\ell^2}, \Vert \hat{v}\Vert_{\ell^2}, \Vert \hat{w}\Vert_{\ell^2} \le M$. Thus, the composition $\NL_{M,\epsilon} := \hat{\psi} \circ \psi^\star = \psi^{\star\star}\circ \psi \circ \psi^\star$ defines a neural network satisfying the estimates of the present lemma.
\end{proof}

\subsection{Proof of Lemma \ref{lem:update}}

\begin{proof}
Fix $\hat{u}^m$ as in the statement of this lemma. We will denote $\NL := \NL_{\bar{M},\epsilon}$, in the following. Consider the mapping $F: B_{\bar{M}}(0)\subset \C^{\cK} \to \C^{\cK}$, given by
\begin{align} \label{eq:Ffixedpt}
F(\hat{w})_k := 
\frac{1}{1+\frac12 \Delta t \,\nu|k|^2}
\left[
\hat{u}^m 
- \Delta t \, \NL\left(\hat{u}^m, \frac{\hat{u}^m + \hat{w}}{2}\right){}_k 
- \frac12 \Delta t\, \nu|k|^2 \hat{u}^m
\right].
\end{align}
It is straight-forward to check that $\hat{u}^{m+1}$ solves the update rule \eqref{eq:scheme-emulate} if, and only if, $\hat{u}^{m+1} = F(\hat{u}^{m+1})$. Furthermore, we note that 
\begin{align*}
\Vert F(\hat{w}) - &F(\hat{v}) \Vert_{\ell^2}
\\
&=
\frac{\Delta t}{1+\frac12 \Delta t\, \nu |k|^2}
\left\Vert 
\NL\left(\hat{u}^m, \frac{\hat{u}^m + \hat{w}}{2}\right) - \NL\left(\hat{u}^m, \frac{\hat{u}^m + \hat{v}}{2}\right)
\right\Vert_{\ell^2}
\\
&\le
\Delta t
\Lip\left(\NL\left(\hat{u}^m, \slot \right)\right) \frac{\Vert \hat{w}- \hat{v} \Vert_{\ell^2}}{2}
\\
&\le 
\frac12 CK^{n/2+1}\bar{M} \Delta t \, \Vert \hat{w} - \hat{v} \Vert_{\ell^2}.
\end{align*}
Thus, under the CFL condition $CK^{n/2+1}\bar{M} \Delta t \le 1$, it follows that $\Lip(F) \le 1/2$ is a contraction, with constant $\gamma = 1/2 < 1$. We also note that 
\begin{align*}
\Vert F(0) \Vert_{\ell^2}
&\le
\left\Vert \left(\frac{1-\frac12\Delta t \, \nu |k|^2}{1+\frac12\Delta t \, \nu |k|^2}\right) \hat{u}^m \right \Vert_{\ell^2}
+
\Delta t \left\Vert \NL(\hat{u}^m, \hat{u}^m/2) \right \Vert_{\ell^2}
\\
&\le 
\left\Vert \hat{u}^m \right \Vert_{\ell^2}
+
\Delta t \left( 
\epsilon + \frac12 \left\Vert \hat{P}_K( \hat{u}^m \cdot \nabla \hat{u}^m) \right \Vert_{\ell^2}
\right)
\\
&\le
M + \Delta t \, \epsilon + \frac12 C K^{n/2+1}M \Delta t
\\
&\le
M+2.
\end{align*}
In particular, this implies that 
\[
R := \frac{\Vert F(0) \Vert}{1-\gamma} \le 2(M+2) \le \bar{M}.
\]
We can thus apply Lemma \ref{lem:fixedpt} to conclude that there exists a unique fixed point $\hat{u}^{m+1}\in B_{\bar{M}}(0)$ ($= \hat{w}^\infty$) of $F$, i.e. such that $F(\hat{u}^{m+1}) = \hat{u}^{m+1}$, and satisfying the estimate
\begin{align} \label{eq:iterate-1}
\Vert \hat{w}^\ell - \hat{u}^{m+1} \Vert_{\ell^2}
\le \gamma^\ell \Vert \hat{u}^{m+1} \Vert_{\ell^2} = 2^{-\ell} \Vert \hat{u}^{m+1} \Vert_{\ell^2},
\end{align}
and hence
\begin{align} \label{eq:iterate-2}
\Vert \hat{w}^\ell \Vert_{\ell^2}
\le
\Vert \hat{u}^{m+1} \Vert_{\ell^2} + \Vert \hat{w}^\ell - \hat{u}^{m+1} \Vert_{\ell^2}
\le \left(1+2^{-\ell}\right) \Vert \hat{u}^{m+1}\Vert_{\ell^2}.
\end{align}
To provide a (improved) estimate on $\Vert \hat{u}^{m+1} \Vert_{\ell^2}$, we multiply \eqref{eq:scheme-emulate} by $\hat{u}^{m+1/2} = \frac12 (\hat{u}^m + \hat{u}^{m+1})$, to find
\begin{align} 
\frac{\Vert \hat{u}^{m+1} \Vert_{\ell^2}^2 - \Vert \hat{u}^m \Vert_{\ell^2}^2}{\Delta}
&=
\left\langle
\psi(\hat{u}^m, \hat{u}^{m+1/2}), \hat{u}^{m+1/2}
\right\rangle_{\ell^2}
-\nu \sum_{|k|_\infty \le K} |k|^2\left|\hat{u}^{m+1/2}_k\right|^2
\notag \\
&\le 
\left\langle
\psi(\hat{u}^m, \hat{u}^{m+1/2}), \hat{u}^{m+1/2}
\right\rangle_{\ell^2}.
\label{eq:unp1}
\end{align}
Next, we use the fact that 
\begin{align*}
\left\langle 
\hat{\P}_K (u^m_K\cdot \nabla u^{m+1/2}_K), \hat{u}^{m+1/2}
\right\rangle_{\ell^2}
&=
(2\pi)^{-n}
\left\langle 
{\P}_K (u^m_K\cdot \nabla u^{m+1/2}_K), {u}^{m+1/2}_K
\right\rangle_{L^2}
\\
&= 
(2\pi)^{-n}
\left\langle 
u^m_K\cdot \nabla u^{m+1/2}_K, {u}^{m+1/2}_K
\right\rangle_{L^2}
\\
&= 
0,
\end{align*}
and the fact that 
\[
\Vert \hat{u}^{m+1/2} \Vert_{\ell^2}\le \frac{1}{2} \left( \Vert \hat{u}^m \Vert_{\ell^2} + \Vert \hat{u}^{m+1} \Vert_{\ell^2} \right) \le \frac12 (M + R) \le \bar{M},
\]
to find
\begin{align*}
\Big\langle
\psi(\hat{u}^m, \hat{u}^{m+1/2}), & \hat{u}^{m+1/2}
\Big\rangle_{\ell^2}
\\
&= 
\left\langle
\psi(\hat{u}^m, \hat{u}^{m+1/2}) - \hat{P}_K(u^m_K \cdot \nabla u^{m+1/2}_K), \hat{u}^{m+1/2}
\right\rangle_{\ell^2}
\\
&\le
\left\Vert
\psi(\hat{u}^m, \hat{u}^{m+1/2}) - \hat{P}_K(u^m_K \cdot \nabla u^{m+1/2}_K)
\right\Vert_{\ell^2}
\left\Vert\hat{u}^{m+1/2}\right\Vert_{\ell^2}
\\
&\le
 \frac{\epsilon}{2} \left( \Vert \hat{u}^m \Vert_{\ell^2} + \Vert \hat{u}^{m+1} \Vert_{\ell^2} \right).
\end{align*}
Upon substitution of this estimate in \eqref{eq:unp1}, and simple manipulations, we thus conclude that 
\[
\Vert \hat{u}^{m+1} \Vert_{\ell^2} 
\le
\frac{1+\frac{\Delta t \, \epsilon}{2}}{1-\frac{\Delta t \, \epsilon}{2}}
\Vert \hat{u}^m \Vert_{\ell^2}.
\]
We finally note that 
\[
\frac{1+\frac{\Delta t \, \epsilon}{2}}{1-\frac{\Delta t \, \epsilon}{2}}
=
1 + \frac{\Delta t \, \epsilon}{1-\frac{\Delta t \, \epsilon}{2}}
\le
1 + 2\Delta t \, \epsilon 
\le \exp(2\Delta t\, \epsilon),
\]
where we used the assumption $\Delta t, \epsilon \in (0,1]$ in the first bound. The claimed error bounds on the recursive sequence thus follow from $\Vert \hat{u}^{m+1} \Vert_{\ell^2} \le \exp(2\Delta t\, \epsilon) \Vert \hat{u}^m \Vert_{\ell^2}$, and \eqref{eq:iterate-1}, \eqref{eq:iterate-2}.
\end{proof}

\subsection{Proof of Lemma \ref{lem:nn-representation}}
\label{app:nn-representation}

\begin{proof}
By Lemma \ref{lem:nonlin}, the neural network $\NL_{\bar{M},\epsilon}$ in Algorithm \ref{alg:emulation} can be chosen to be of size 
\[
\size(\NL_{\bar{M},\epsilon}) \le C |\cK_K|\log(\bar{M}K/\epsilon),
\quad
\depth(\NL_{\bar{M},\epsilon}) \le C \log(\bar{M}K/\epsilon).
\]
We now recall that $|\cK_K| = |\set{k\in \Z^n}{|k|_\infty \le K} \sim_n K^n$, $\bar{M} \sim M$, $\epsilon \gtrsim \Delta t / MT \sim \min(K^{-r},K^{-n/2-1}/M)/ MT$, for $K\to \infty$, and hence 
\[
\size(\NL_{\bar{M},\epsilon}) \lesssim_{r,n} K^n \log(MTK),
\quad
\depth(\NL_{\bar{M},\epsilon}) \lesssim_{r,n} \log(MTK).
\]
The fixed-point function $F(\hat{w}) = F(\hat{u}^m;\hat{w})$  (see \eqref{eq:Ffixedpt}) applied in the innermost loop of Algorithm \eqref{alg:emulation}, lines 11-13, can clearly be represented by a neural network of a size $\sim \size(\NL_{\bar{M},\epsilon})$, as it only differs from $\NL_{\bar{M},\epsilon}$ by a linear input and a linear output. Composing $L \lesssim_{r,n} \log(MTK)$ of these networks, we obtain a representation $\psi^{\ast}: \hat{u}^m \mapsto \hat{u}^{m+1}$, corresponding to lines 9-14 of Algorithm \ref{alg:emulation}, and such that 
\[
\size(\psi^{\ast}) \le C K^n \log(MTK)^2,
\quad
\depth(\psi^{\ast}) \le C \log(MTK)^2,
\]
where $C = C(n,r)>0$ is independent of $M,T,K$. Finally, composing 
\[
\psi = \underbrace{\psi^{\ast} \circ \dots \circ \psi^{\ast}}_{n_T \text{-fold}}
\]
allows us to represent the entire algorithm \ref{alg:emulation} by a ReLU neural network. The estimate on the size of $\psi$ follows from the fact that 
\[
n_T \sim T/\Delta t \sim T/\min(K^{-n/2-1}/M,K^{-r}) \lesssim_{r,d} MTK^r.
\]
\end{proof}


\subsection{Proof of Lemma \ref{lem:normbound}} \label{app:normbound}

\begin{proof}
By definition of $M$, the claim is true for $m=0$, since $\Vert \hat{u}^0\Vert_{\ell^2}\le M \le 2M$. We prove the claim for $m>0$ by induction: To this end, we assume that \eqref{eq:normbound} holds true for some $m$. We aim to show that \eqref{eq:normbound} also holds for $m+1$. By the choice of $\bar{M} = 2(eM+2)$, and the choice of $\Delta t$, $\epsilon$ in Algorithm \ref{alg:emulation}, the assumptions of Lemma \ref{lem:update} are fulfilled (with $M$ replaced by $eM$), and hence there exists a unique fixed point $\hat{u}^{\ast,m+1} \in B_{eM}(0)$ of \eqref{eq:scheme-emulate}, i.e. such that 
\[
\frac{\hat{u}^{\ast,m+1}_k - \hat{u}^m_k}{\Delta t}
=
- \NL_{\bar{M},\epsilon}(\hat{u}^m, \hat{u}^{\ast,m+1/2})_k
- \nu |k|^2 \hat{u}^{\ast,m+1/2}_k.
\]
Furthermore, the iterates $\hat{w}^{\ell,m}$ defined by Algorithm \ref{alg:emulation} satisfy the estimates
\begin{align} \label{eq:iterates-est}
\Vert \hat{w}^{\ell,m} - \hat{u}^{\ast,m+1} \Vert_{\ell^2} \le \frac{\exp(2\Delta t\, \epsilon)}{2^\ell}\Vert \hat{u}^m \Vert_{\ell^2}, 
\end{align}
and
\begin{align} \label{eq:iterates-norm}
\Vert \hat{w}^{\ell,m} \Vert_{\ell^2}
\le
(1+2^{-\ell}) \exp(2\Delta t\, \epsilon) \Vert \hat{u}^m \Vert_{\ell^2}.
\end{align}
In particular, with the choice of $L \ge -\log(\Delta t\, \epsilon)$ of Algorithm \ref{alg:emulation}, we find that $\hat{u}^{m+1} := \hat{w}^{L,m}$ satisfies
\[
\Vert \hat{u}^{m+1} \Vert_{\ell^2} 
\le 
(1+\Delta t\, \epsilon) \exp(2\Delta t\, \epsilon) \Vert \hat{u}^m \Vert_{\ell^2}
\le
\exp(3\Delta t\, \epsilon) \Vert \hat{u}^m \Vert_{\ell^2}.
\]
By the induction hypothesis on $\Vert \hat{u}^m \Vert_{\ell^2} \le \exp(3m\Delta t\, \epsilon) M$, and the fact that $3(m+1)\Delta t \, \epsilon \le 3T \epsilon\le 1$, we can thus estimate
\[
\Vert \hat{u}^{m+1} \Vert_{\ell^2} 
\le
\exp(3(m+1)\Delta t\, \epsilon) M
\le 
e M.
\]
This completes the induction step.
\end{proof}

\subsection{Proof of Lemma \ref{lem:emulation}}

\begin{proof}
We note that upon Fourier transformation, the claim is equivalent to the following:
\begin{align} \tag{\ref{eq:emulation}'} \label{eq:emulation'}
\frac{\hat{u}^{m+1} - \hat{u}^m}{\Delta t}
=
-\hat{\P}_K \left(
u^m_K \cdot \nabla u^{m+1/2}_K
\right)
- \nu |k|^2 \hat{u}^{m+1/2}
+
\hat{Q}^m,
\end{align}
where the remainder $\hat{Q}^m \in \C^\cK$ satisfies an estimate of the form
\begin{align} \tag{\ref{eq:remainder-q}'} \label{eq:reminader-q'}
\sup_{m} 
\Vert \hat{Q}^m \Vert_{\ell^2} 
\le 
C \Delta t,
\end{align}
and $C>0$ is independent of $K$, $M$ and $T$.

By the choice of $\bar{M},\epsilon > 0$ and $\Delta t > 0$ of Algorithm \ref{alg:emulation}, the approximation of the non-linearity, $\NL_{\bar{M},\epsilon}$, satisfies \eqref{eq:err} and \eqref{eq:Lip} with 
\[
CK^{n/2+1}\bar{M} \le \frac{1}{\Delta t},
\]
i.e. we have
\begin{align} \label{eq:NLest}
\left\{
\begin{aligned}
\Vert 
\NL_{\bar{M},\epsilon}(\hat{u},\hat{v}) - \hat{\P}_K\left( u_K \cdot \nabla v_K \right) 
\Vert_{\ell^2} 
&\le 
\epsilon \le \frac{\Delta t}{3}, 
\\
\Vert 
\NL_{\bar{M},\epsilon}(\hat{u}, \hat{v}) - \NL_{\bar{M},\epsilon}(\hat{u},\hat{w}) 
\Vert_{\ell^2} 
&\le
\frac{1}{\Delta t} \Vert \hat{v} - \hat{w} \Vert_{\ell^2},
\end{aligned}
\right.
\end{align}
for all $\hat{u}, \hat{v}, \hat{w}$ such that $\Vert \hat{u} \Vert_{\ell^2}, \Vert \hat{v}\Vert_{\ell^2}, \Vert \hat{w} \Vert_{\ell^2} \le \bar{M} \equiv 2(eM+2)$. Furthermore, by Lemma \ref{lem:normbound}, we have $\Vert \hat{u}^m \Vert_{\ell^2} \le \bar{M}$ for all $m=0,\dots, n_T$.

As in the proof of Lemma \ref{lem:normbound}, for $m\in \{0,\dots, n_T\}$, we will in the following denote by $\hat{u}^{\ast,m}$ the (unique) solution of the emulation scheme with non-linearity $\NL_{\bar{M},\epsilon}$:
\begin{align} \label{eq:exact-fixed}
\frac{\hat{u}^{\ast,m+1}_k - \hat{u}^m_k}{\Delta t}
=
- \NL_{\bar{M},\epsilon}(\hat{u}^m, \hat{u}^{\ast,m+1/2})_k
- \nu |k|^2 \hat{u}^{\ast,m+1/2}_k,
\end{align}
where $\hat{u}^{\ast,m+1/2} := \frac12 (\hat{u}^m + \hat{u}^{\ast,m+1})$.
By the estimate \eqref{eq:iterates-est} on the fixed-point iteration, and the fact that $\exp(2\Delta t\, \epsilon) 2^{-L} \le e\Delta t\, \epsilon$ (by the choice of $L$ in Algorithm \ref{alg:emulation}), we then have
\begin{align} \label{eq:fptest}
\Vert \hat{u}^{m+1} - \hat{u}^{\ast,m+1} \Vert_{\ell^2}
\equiv
\Vert \hat{w}^{L,m} - \hat{u}^{\ast,m+1} \Vert_{\ell^2}
\le
2eM \Delta t \, \epsilon
\le
\frac{2e\Delta t^2}{3}.
\end{align}
Using the fact that $\hat{u}^{\ast,m+1}$ solves \eqref{eq:exact-fixed} exactly, we readily obtain
\begin{align*}
\frac{\hat{u}^{m+1} - \hat{u}^m}{\Delta t}
&=
\frac{\hat{u}^{m+1} - \hat{u}^{\ast,m+1}}{\Delta t}
+
\frac{\hat{u}^{\ast,m+1} - \hat{u}^{m}}{\Delta t}
\\
&=
\hat{Q}^m-\hat{\P}_K\left( u^m_K\cdot \nabla u^{m+1/2}_K \right) - \nu |k|^2 \hat{u}^{m+1/2}
,
\end{align*}
where $\hat{Q}^m = \hat{Q}_1^m + \hat{Q}_2^m + \hat{Q}^m_3$ is given by
\begin{align*}
\hat{Q}^m_1 
&:= 
\frac{\hat{u}^{m+1} - \hat{u}^{\ast,m+1}}{\Delta t}, \\
\hat{Q}^m_2
&:=
\NL_{\bar{M},\epsilon}(\hat{u}^m, \hat{u}^{m+1/2})
-
\NL_{\bar{M},\epsilon}(\hat{u}^m, \hat{u}^{\ast,m+1/2}), \\
\hat{Q}^m_3
&:=
\hat{\P}_K\left( u^m_K \cdot \nabla u^{m+1/2}_K \right)
-
\NL_{\bar{M},\epsilon}(\hat{u}^m, \hat{u}^{m+1/2})
.
\end{align*}
By \eqref{eq:fptest}, we can estimate $\Vert \hat{Q}_1^m \Vert_{\ell^2}
\le \frac{2e}{3} \Delta t$. By the second estimate of \eqref{eq:NLest}, we similarly find $\Vert \hat{Q}_2^m \Vert_{\ell^2}
\le \frac{e}{3} \Delta t$. By the first estimate of \eqref{eq:NLest}, we find $\Vert \hat{Q}^m_3 \Vert_{\ell^2} \le \frac{1}{3}\Delta t$. Combining these estimates, we conclude that 
\[
\Vert \hat{Q}^m \Vert_{\ell^2} \le C \Delta t,
\]
where $C = e + \frac13$ is an absolute constant independent of $M$, $T$, $m$.
\end{proof}

\subsection{Proof of Lemma \ref{lem:emulation-estimate}}

\begin{proof}
\label{pf:emulation-estimate}
We denote $U^m := u(t^m)$, and $U^{m+1/2} := \frac12 [u(t^m)+u(t^{m+1})]$. Define $w^m_K := u^m_K - U^m$, and $w^{m+1/2}_K := \frac12 [w^m_K + w^{m+1}_K]$. Then $w^m_K$ satisfies
\begin{align*}
\frac{w^{m+1}_K - w^m_K}{\Delta t}
&= - \P_K \left( w^m_K \cdot \nabla U^{m+1/2} \right)
- \P_K \left( U^m \cdot \nabla w^{m+1/2}_K \right)
\\
&\qquad 
+ \nu \Delta w^{m+1/2}_K 
+ R^m_K(u) - Q_K^m.
\end{align*}
Integrating against $w^{m+1/2}_K$, we find that
\begin{align} \label{eq:error-recursion}
\begin{aligned}
\frac12\Vert w^{m+1}_K \Vert_{L^2}^2 
-
\frac12\Vert w^m_K \Vert_{L^2}^2
&=
-\langle w_K^m\cdot \nabla U^{m+1/2}, w_K^{m+1/2} \rangle_{L^2}
-\nu \Vert \nabla w^{m+1/2}_K \Vert_{L^2}^2
\\
&\qquad
+ \langle R_K^m(u), w^{m+1/2}_K \rangle_{L^2}
- \langle Q_K^m, w^{m+1/2}_K \rangle_{L^2}.
\end{aligned}
\end{align}
We can now estimate (using the Cauchy-Schwarz inequality)
\begin{align*}
\left|\langle w_K^m\cdot \nabla U^{m+1/2}, w_K^{m+1/2} \rangle_{L^2}\right|
&\le
\frac12 \Vert w_K^{m+1} \Vert_{L^2}^2 + C \Vert w^m_K \Vert_{L^2}^2, 
\\
-\nu \Vert \nabla w^{m+1/2}_K \Vert_{L^2}^2 
&\le 0, 
\\
\left|\langle R_K^m(u), w^{m+1/2}_K \rangle_{L^2}\right|
&\le
\Vert R_K^m(u) \Vert_{L^2}^2 + \Vert w^{m+1}_K \Vert_{L^2}^2 + \Vert w^m_K \Vert_{L^2}^2, 
\\
\left|\langle Q_K^m, w^{m+1/2}_K \rangle_{L^2}\right|
&\le 
\Vert Q_K^m \Vert_{L^2}^2 + \Vert w^{m+1}_K \Vert_{L^2}^2 + \Vert w^m_K \Vert_{L^2}^2.
\end{align*}
Next, we recall that 
\[
\Vert R_K^m(u) \Vert_{L^2}^2
\le C(\Delta t^2 + K^{-2r} ), 
\]
by \eqref{eq:remainder}. Similarly, by \eqref{eq:remainder-q}, we have
\[
\Vert Q_K^m \Vert_{L^2}^2 \le C\Delta t^2, 
\]
where $C >0$ is independent of $K$, $\Delta t$ and $m$.
Combining these estimates with \eqref{eq:error-recursion}, we find that there exists a constant $C> 0$, such that
\begin{align*}
\Vert w^{m+1}_K \Vert_{L^2}^2
&\le
(1+C\Delta t) \Vert w^m_K \Vert_{L^2}^2
+
C\Delta t \left(
\Delta t^2 + K^{-2r}
\right).
\end{align*}
Applying the discrete Gronwall inequality, we conclude that 
\[
\max_{m=0,\dots, n_T} \Vert w^{m}_K \Vert_{L^2}^2
\le
e^{CT} \Vert w^0_K \Vert_{L^2}^2 + Ce^{CT} \left(\Delta t^2 + K^{-2r} \right).
\]
Noting that $w^0_K = u^0_K - u(0)$, we can use the assumed bound on $\Vert u \Vert_{C_tH^r_x}$ to conclude that 
\begin{align*}
\Vert w^0_K \Vert_{L^2} 
&\le \Vert u^0_K - \P_K u(0) \Vert_{L^2} + \Vert (1-\P_K) u(0) \Vert_{L^2} 
\\
&\le C\Vert \hat{u}^0 - \hat{\P}_K u(0)\Vert_{\ell^2} + C K^{-2r},
\end{align*}
where $C = C(n,r, \Vert u \Vert_{C_tH^r_x})>0$ is independent of $K$. Using also that $\Delta t\le K^{-r}$, we finally obtain 
\[
\max_{m=0,\dots, n_T} \Vert w^{m}_K \Vert_{L^2}^2
\le
Ce^{CT} \left(K^{-2r} + \Vert \hat{u}^0 - \hat{\P}_K u(0)\Vert_{\ell^2}\right),
\]
for a constant $C>0$, depending only on $\Vert u \Vert_{C_tH^r_x}$, $\Vert u \Vert_{C^1_tH^{r-1}_x}$, $n$, $r$. This implies the claimed estimate.
\end{proof}

\subsection{Proof of Lemma \ref{lem:NSd-approx-err}}
\label{app:NSd-approx-err}

\begin{proof}
\label{pf:NSd-approx-err}
We now note that 
\begin{align*}
\Vert \cD_{\cY} \circ \psi \circ \cE_{\cX} - \Psi^\dagger \Vert_{L^2(\mu)}
&\le
\Vert \cD_{\cY} \circ \psi \circ \cE_{\cX} - \cD_{\cY} \circ \cE_{\cY} \circ \Psi^\dagger \Vert_{L^2(\mu)}
\\
&\qquad +
\Vert \cD_{\cY} \circ \cE_{\cY} \circ \Psi^\dagger - \Psi^\dagger \Vert_{L^2(\mu)}
\\
&=
\Vert \cD_{\cY} \circ \psi \circ \cE_{\cX} - \cD_{\cY} \circ \cE_{\cY} \circ \Psi^\dagger \Vert_{L^2(\mu)}
\\
&\qquad +
\Vert P_{\cY} - \Id \Vert_{L^2(\Psi^\dagger_\#\mu)},
\end{align*}
where $P_{\cY} = \cD_{\cY} \circ \cE_{\cY}: \cY \to \cY$ is the (orthogonal) PCA projection.
The second term is the PCA projection error for the push-forward measure $\Psi^\dagger_\#\mu$. To estimate the first term, we first note that $\cD_{\cY}: \R^{d_\cY} \to \cY$ is a linear isometric injection, hence
\begin{align*}
\Vert \cD_{\cY} \circ \psi \circ \cE_{\cX} - \cD_{\cY} \circ \cE_{\cY} \circ \Psi^\dagger\Vert_{L^2(\mu)}
&=
\Vert \psi \circ \cE_{\cX} - \cE_{\cY} \circ \Psi^\dagger\Vert_{L^2(\mu)}.
\end{align*}
Given $K\in \N$, we now let $\psi: \R^{d_\cX} \to \R^{d_\cY}$ be a ReLU neural network of the form 
\[
\psi = (\cE_{\cY} \circ \cF_K^{-1}) \circ \hat{\psi} \circ (\cF_K \circ \cD_{\cX}),
\]
where $\cF_K^{-1}: [\C^n]^{\cK} \to L^2(\T^n;\R^n)$ and $\cF_K: L^2(\T^n;\R^n) \to [\C^n]^{\cK}$ denote the (truncated) inverse and forward Fourier transforms, respectively; we recall that
\begin{align*}
\cF_K^{-1}(\hat{u})
&=
\sum_{|k|_\infty\le K} \hat{u}_k e^{i\langle k, x\rangle},
\\
\cF_K(u)
&=
\left\{
\frac{1}{(2\pi)^n} \int_{\T^n} u(x) e^{-i\langle k, x \rangle} \, dx
\right\}_{|k|_\infty \le K}.
\end{align*}
Since $\cE_{\cY} \circ \cF_K^{-1}: [\C^n]^{\cK} \to \R^{d_\cY}$ and $\cF_K\circ \cD_{\cX}: \R^{d_\cX}\to [\C^n]^{\cK}$ are linear mappings, they can be represented by matrix multiplications, with matrices $Q \in \R^{2n|\cK| \times d_\cY}$, and $R\in \R^{d_\cX \times 2n|\cK|}$, respectively; i.e. we can write 
\[
\psi(w) = Q \cdot \hat{\psi}(R\cdot w),
\]
under the canonical identification $[\C^n]^{\cK} \simeq \R^{2n|\cK|}$. In particular, if $\hat{\psi}$ is a ReLU neural network of a given size, then $\psi$ satisfies
\[
\size(\psi) \le \size(\hat{\psi}) + 2n|\cK|(d_\cY+d_\cX), 
\quad
\depth(\psi) = \depth(\hat{\psi}).
\]
Let us assume that $\hat{\psi}$ is a ReLU neural network as in Proposition \ref{prop:ns-nn}. Then we can estimate:
\begin{align*}
\Vert \psi \circ \cE_{\cX} &- \cE_{\cY} \circ \Psi^\dagger \Vert_{L^2(\mu)}
\\
&=
\Vert (\cE_{\cY} \circ \cF_K^{-1}) \circ \hat{\psi} \circ (\cF_K \circ \cD_{\cX}) \circ \cE_{\cX} - \cE_{\cY} \circ \Psi^\dagger \Vert_{L^2(\mu)}
\\
&\le
\Vert \cF_K^{-1} \circ \hat{\psi} \circ \cF_K \circ (\cD_{\cX} \circ \cE_{\cX}) - \Psi^\dagger \Vert_{L^2(\mu)}.
\end{align*}
Denote the Fourier projection $P_K = \cF_K^{-1} \circ \cF_K$, and the PCA projection $P_\cX = \cD_{\cX} \circ \cE_{\cX}$, and estimate the last term by
\begin{align}
&\le 
\Vert \cF^{-1}_K \circ \hat{\psi} \circ \cF_K \circ P_\cX - \cF^{-1}_K \circ \cF_K \circ \Psi^\dagger \Vert_{L^2(\mu)}
+
\Vert (1 - P_K) \Psi^\dagger \Vert_{L^2(\mu)}
\notag
\\
&\le 
\Vert \hat{\psi} \circ \cF_K \circ P_\cX - \cF_K \circ \Psi^\dagger \Vert_{L^2(\mu)}
+
\Vert (1 - P_K) \Psi^\dagger \Vert_{L^2(\mu)}.
\label{eq:encap-est}
\end{align}
The integrand in the first term is of the form 
\[
\Vert \hat{\psi}( \hat{u}^0 ) - \hat{u}(T)\Vert_{\ell^2}, \quad u\in \cH^r_{M,T},
\]
where $\hat{u}^0 = \hat{\left[P_{\cX}u(0)\right]} \in [\C^n]^{\cK}$ collects the Fourier coefficients with $|k|_\infty \le K$ of the PCA projected $P_{\cX}u(0)$. By Proposition \ref{prop:ns-nn}, there exists a constant $C>0$, depending only on $M,r,T$, such that 
\[
\Vert \hat{\psi}( \hat{u}^0 ) - \hat{u}(T)\Vert_{\ell^2}
\le C(K^{-r} + \Vert \hat{u}^0 - \hat{u}(0) \Vert_{\ell^2}).
\]
Using also the definition of $\hat{u}^0 = \hat{[P_{\cX}u(0)]}$, we can further estimate the right-hand side in the form 
\[
\Vert \hat{\psi}( \hat{u}^0 ) - \hat{u}(T)\Vert_{\ell^2}
\le C(K^{-r} + \Vert (1-P_\cX)u(0) \Vert_{L^2_x}), \quad \forall \, u\in \cH^r_{M,T}.
\]
The other term in \eqref{eq:encap-est}, $\Vert (1-P_K)\Psi^\dagger \Vert_{L^2(\mu)}$, can be estimated using the uniform bound $\Vert \Psi^\dagger(u(0)) \Vert_{H^r_x} \le M$, for $u\in \cH^r_{M,T}$, to obtain 
\[
\Vert (1-P_K)\Psi^\dagger \Vert_{L^2(\mu)}
\le
C K^{-r},
\]
where $C=C(M,r,n)>0$ depends only on $r,M,n>0$. Upon substitution in \eqref{eq:encap-est}, we thus obtain an estimate of the form
\[
\Vert \psi \circ \cE_{\cX} - \cE_{\cY} \circ \Psi^\dagger \Vert_{L^2(\mu)}
\le
C K^{-r} + C\Vert \Id - P_\cX \Vert_{L^2(\mu)},
\]
where $C = C(M,r,n,T)>0$ is independent of $K$. We point out in passing that $\Vert \Id - P_\cX \Vert_{L^2(\mu)}$ is precisely the PCA projection error on $\cX$. Combining the above estimates, we obtain the claimed upper bound
\[
\Vert \Psi - \Psi^\dagger \Vert_{L^2(\mu)}
\le
C K^{-r} + C\Vert \Id - P_\cX \Vert_{L^2(\mu)} + \Vert \Id - P_\cY \Vert_{L^2(\Psi^\dagger_\#\mu)},
\]
where $\psi$ is a neural network of size 
\[
\size(\psi) \le C K^n \left( K^{r} \log(K)^2 + (d_\cX + d_\cY) \right), 
\quad
\depth(\psi) \le C K^r \log(K),
\]
and $C = C(M,r,T,n) > 0$ is a constant independent of $K, d_\cX, d_\cY$.
\end{proof}

\section{A lemma on fixed points}

The following lemma is a simple variant of the well-known Banach fixed point theorem. For completeness, we provide a detailed proof (although the result or a variant thereof may likely have appeared elsewhere).

\begin{lemma}\label{lem:fixedpt}
Let $X$ be a Banach space and fix $M>0$. Let $B_M(0)\subset X$ denote the closed ball of radius $M$. Let $F: B_M(0) \to X$ be a contractive mapping with $\gamma = \Lip(F) <1$, and assume that $\Vert F(0) \Vert \le (1-\gamma) M$. Denote $R := \Vert F(0) \Vert/(1-\gamma) \le M$. Then there exists a unique fixed point $w^\infty \in B_M(0)$ of $F$, and the norm of $w^\infty$ is bounded by $\Vert w^\infty \Vert \le R$. Furthermore, the recursively defined sequence $w^0 := 0$, $w^{k+1} := F(w^k)$, is well-defined for all $k=0,1,\dots$, and satisfies the a priori estimates:
\[
\Vert w^k \Vert \le R, \quad \Vert w^{k+1}-w^k \Vert \le \gamma^k R, \quad \Vert w^k - w^\infty \Vert \le \gamma^k R.
\]
\end{lemma}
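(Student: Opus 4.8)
The plan is to run the classical Banach fixed point iteration, but tracking the iterates inside the smaller closed ball $B_R(0)$ rather than $B_M(0)$; this simultaneously guarantees that the iterates stay in the domain of $F$ and delivers the sharper norm bound $\Vert w^\infty \Vert \le R$.

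First I would verify by induction that $w^k \in B_R(0)$ for all $k$, so that $w^{k+1} = F(w^k)$ is well-defined (using $R \le M$, hence $B_R(0) \subseteq B_M(0) = \dom(F)$). The base case is $w^0 = 0 \in B_R(0)$; for the inductive step, if $\Vert w^k \Vert \le R$, then $\Vert w^{k+1} \Vert \le \Vert F(w^k) - F(0) \Vert + \Vert F(0) \Vert \le \gamma \Vert w^k \Vert + (1-\gamma)R \le R$, where we used $\Lip(F) = \gamma$ and $\Vert F(0) \Vert = (1-\gamma)R$. Next, the contraction property gives $\Vert w^{k+1} - w^k \Vert = \Vert F(w^k) - F(w^{k-1}) \Vert \le \gamma \Vert w^k - w^{k-1} \Vert$, and iterating this down to $\Vert w^1 - w^0 \Vert = \Vert F(0) \Vert = (1-\gamma)R \le R$ yields $\Vert w^{k+1} - w^k \Vert \le \gamma^k R$ for every $k$.

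From the geometric bound on consecutive differences, for $m > k$ one has $\Vert w^m - w^k \Vert \le \sum_{j=k}^{m-1} \gamma^j R \le \gamma^k R/(1-\gamma)$, so $(w^k)$ is a Cauchy sequence; since $X$ is complete it converges to some $w^\infty$, and since $B_R(0)$ is closed, $w^\infty \in B_R(0)$, i.e. $\Vert w^\infty \Vert \le R$. Because $F$ is Lipschitz, hence continuous, passing to the limit in $w^{k+1} = F(w^k)$ gives $w^\infty = F(w^\infty)$, so $w^\infty$ is a fixed point. Uniqueness within $B_M(0)$ is immediate: two fixed points $w, w'$ satisfy $\Vert w - w' \Vert = \Vert F(w) - F(w') \Vert \le \gamma \Vert w - w' \Vert$, which forces $\Vert w - w' \Vert = 0$ since $\gamma < 1$.

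Finally, the a priori error estimate follows by a short induction using the fixed point equation: $\Vert w^k - w^\infty \Vert = \Vert F(w^{k-1}) - F(w^\infty) \Vert \le \gamma \Vert w^{k-1} - w^\infty \Vert$, which unwinds to $\Vert w^k - w^\infty \Vert \le \gamma^k \Vert w^0 - w^\infty \Vert = \gamma^k \Vert w^\infty \Vert \le \gamma^k R$. I do not expect a serious obstacle here — this is essentially the classical Banach fixed point argument — and the only point requiring genuine care is ensuring the iterates never leave the domain $B_M(0)$ of $F$, which is precisely why one works with the radius $R = \Vert F(0) \Vert/(1-\gamma)$ together with the hypothesis $\Vert F(0) \Vert \le (1-\gamma)M$.
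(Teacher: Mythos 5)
Your proof is correct and follows essentially the same route as the paper's: both are the classical Banach fixed point argument, keeping the iterates in the smaller ball $B_R(0)$ so that the domain constraint $B_M(0)$ is never violated, then passing to the limit and proving uniqueness via contractivity. The only cosmetic difference is that you establish $\Vert w^k\Vert\le R$ by direct induction, whereas the paper argues by contradiction with a telescoping-sum bound; the substance is identical.
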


\begin{proof}
Set $w^0 := 0$. Given $w^k$, we define $w^{k+1} := {F}(w^k)$, if $w^k\in B_M(0)$, otherwise we terminate the sequence. Let 
\[
k_0 = \sup\set{k\in \N}{w^1,\dots, w^k \text{ are well-defined}}.
\]
We claim that $k_0=\infty$, and that in fact $\Vert w^k \Vert \le R$ for all $k\in \N$. Suppose this was not the case. Then we must have $k_0<\infty$, and $w^{k_0} \notin B_M(0)$. We now note that for any $k < k_0$, we have
\begin{align*}
\Vert {w}^{k+1} - {w}^k \Vert
&=
\Vert F(w^{k}) - F(w^{k-1}) \Vert
\le
\gamma \Vert w^k - w^{k-1} \Vert
\\
&\le \dots
\le
\gamma^k \Vert {w}^1 - {w}^0 \Vert 
= \gamma^k \Vert F(0) \Vert,
\end{align*}
where we used the Lipschitz continuity of $F$, and the fact that $w^0 = 0$ and $w^1 = F(w^0) = F(0)$, in the last step.
Combined with the telescoping sum ${w}^{k_0} = \sum_{\ell=0}^{k_0-1} [{w}^{\ell+1}-{w}^{\ell}]$, this estimate implies that
\[
\Vert {w}^{k_0} \Vert \le \sum_{\ell=0}^{k_0-1} \Vert {w}^{\ell+1} - {w}^{\ell}\Vert \le \sum_{\ell=0}^{k_0-1} \gamma^{\ell} \Vert F(0) \Vert \le \frac{\Vert F(0) \Vert}{1-\gamma} = R,
\]
and hence $w^{k_0}\in B_R(0) \subset B_M(0)$. This clearly contradicts the definition of $k_0$. Hence, we must have $k_0 = \infty$. Thus, the recursive sequence $w^0=0$, $w^{k+1} = F(w^k)$ is well-defined for any $k=0,1,\dots$, and the above argument shows furthermore that $\Vert w^k \Vert \le R$, for all $k$. In fact, since the terms of the series $\sum_{\ell=0}^\infty [w^{\ell+1} - w^\ell]$ are summable in norm, we conclude that there exists a limit
\[
w^\infty = \sum_{\ell=0}^\infty [w^{\ell+1} - w^\ell] = \lim_{k\to \infty} \sum_{\ell=0}^{k-1} [w^{\ell+1} - w^\ell] = \lim_{k\to \infty} w^k.
\]
Passing to the limit, it is now straight-forward to show that $w^\infty$ is a fixed point of $F$, that $\Vert w^\infty \Vert\le R$, and that we have
\begin{align*}
\Vert w^k - w^\infty \Vert 
&=
\Vert F(w^{k-1}) - F(w^\infty) \Vert
\le
\gamma \Vert w^{k-1} - w^\infty \Vert 
\\
&\le
\dots 
\le 
\gamma^k \Vert w^0 - w^\infty \Vert
=
\gamma^k \Vert w^\infty \Vert \le \gamma^k R,
\end{align*}
for any $k\in \N$. To show that $w^\infty$ is unique, we assume that $v^\infty$ is another fixed point of $F$. Then 
\[
\Vert w^\infty - v^\infty \Vert 
=
\Vert F(w^\infty) - F(v^\infty) \Vert
\le
\gamma \Vert w^\infty - v^\infty \Vert,
\]
is only possible if $\Vert w^\infty - v^\infty \Vert = 0$.
\end{proof}

\end{document}